\def\equationautorefname~#1\null{(#1)\null}
\setlist[enumerate]{leftmargin=.5in}
\setlist[itemize]{leftmargin=.5in}
\newtheorem{definition}{Definition}
\newtheorem{lemma}{Lemma}
\newtheorem{corollary}{Corollary}
\newtheorem{theorem}{Theorem}
\newtheorem{proof}{Proof}
\title{Adversarial robustness of sparse local Lipschitz predictors 
}
\author{Ramchandran Muthukumar\thanks{Department of Computer Science \& Mathematical Institute for Data Science, Johns Hopkins University (\texttt{rmuthuk1@jhu.edu}).}
\and Jeremias Sulam\stepcounter{footnote}\stepcounter{footnote}\stepcounter{footnote}\stepcounter{footnote}\stepcounter{footnote}\thanks{Department of Biomedical Engineering  \& Mathematical Institute for Data Science,  Johns Hopkins University (\texttt{jsulam1@jhu.edu}).}
}
\date{}
\newcommand*{\addFileDependency}[1]{
  \typeout{(#1)}
  \@addtofilelist{#1}
  \IfFileExists{#1}{}{\typeout{No file #1.}}
}
\newcommand{\addition}[1]{#1}
\begin{document}

\maketitle

\begin{abstract}
This work studies the adversarial robustness of parametric functions composed of a linear predictor and a non-linear representation map. 
Our analysis relies on \emph{sparse local Lipschitzness} (SLL),
an extension of local Lipschitz continuity 
that better captures the stability and reduced effective dimensionality of predictors upon local perturbations.  
SLL functions preserve a certain degree of structure, given by the sparsity pattern in the representation map, and include several popular hypothesis classes, such as piece-wise linear models, Lasso and its variants, and deep feed-forward \relu networks. 
We provide a tighter robustness certificate on the minimal energy of an adversarial example, as well as tighter data-dependent non-uniform bounds on the robust generalization error of these predictors.  We instantiate these results for the case of deep neural networks and provide numerical evidence that supports our results,  shedding new insights into natural regularization strategies
to increase the robustness of these models.
\end{abstract}

\section{Introduction}\label{sec: introduction}
During the past decade, deep learning has proven a successful model for a variety of real-world data-driven tasks such as 
image classification \cite{he2016deep}, 
language modeling \cite{Devlin2019BERTPO},
and more.
Modern deep learning architectures compose a learned representation map with a linear classifier, where the former can be feedforward, convolutional, recurrent, or attention maps, sequentially combined with nonlinear activation functions.
Despite the strong empirical success of these models,
a complete understanding of important properties,
such as generalization \cite{Jiang2020FantasticGM} and robustness \cite{Madry2018TowardsDL}, is lacking. 
Importantly, state-of-the-art deep learning models are vulnerable to adversarially crafted small perturbations to input, termed 
\emph{adversarial examples}  \cite{Szegedy2014IntriguingPO}.
This vulnerability limits 
the deployment of these models in safety-critical tasks such as autonomous driving \cite{Cao2019AdversarialSA} and healthcare \cite{Kermany2018IdentifyingMD}.

Adversarial examples are easy to generate, hard to detect \cite{Goodfellow2015ExplainingAH, Carlini2017AdversarialEA},
can be deployed in the physical world \cite{Eykholt2018PhysicalAE, Kurakin2017AdversarialEI}, and are often transferable across predictors for the same task \cite{Liu2017DelvingIT,Papernot2016TransferabilityIM}.
This has led to significant empirical research to defend models against attacks 
\cite{pmlr-v70-cisse17a, pmlr-v80-wong18a} 
as well subsequent work on improving these attacks to compromise the performance of defended models \cite{Athalye2018ObfuscatedGG, Brendel2018DecisionBasedAA, moosavi2016deepfool}. 
Several works have explored strategies to either improve or evaluate the robustness of modern deep learning models.
For instance, \textit{adversarial training} improves robustness by injecting adversarial attacks during the training phase 
\cite{Madry2018TowardsDL,  Salman2019ProvablyRD}, while other works focus on certifying the level of corruption a model can withstand \cite{Fazlyab2019EfficientAA, Hein2017FormalGO,  Raghunathan2018CertifiedDA, Sinha2017CertifiableDR, Weng2018TowardsFC, Zhang2018EfficientNN}.  

Amidst the rapidly evolving empirical insights, there has been concurrent research aimed at providing theoretical guarantees on adversarial robustness for different hypothesis classes \cite{Attias2019ImprovedGB, Awasthi2020AdversarialLG, khim2018adversarial, yin2018rademacher}. Some of these works study the computational and statistical limits of adversarial attacks \cite{Bubeck2019AdversarialEF, Fawzi2018AdversarialVF, mahloujifar2018curse}. Others study trade-offs between robustness and natural (or benign) performance \cite{Chen2020MoreDC, Tsipras2019RobustnessMB,  pmlr-v97-zhang19p}, provable guarantees for adversarial training \cite{AllenZhu2020FeaturePH, Zhang2020OverparameterizedAT}, or analyze optimal levels of provable adversarial defenses \cite{pmlr-v97-cohen19c, Shafahi2019AreAE}. 

In this work, we focus on two central questions of adversarial robustness: \textit{certified robustness} and \textit{robust generalization}. 
Our analysis for both of these questions will rely on the sensitivity of the model to changes in both its input and its parameters, a quantity that is naturally characterized by its Lipschitz constant. 
This view  can be quite limited, however: 
for general non-linear functions,
such sensitivity to perturbations can greatly vary across the input space (for different samples),
and across the hypothesis space (for different predictors). 
In this work, we show that local measures of sensitivity that additionally account for structural invariance in the outputs lead to tighter stability bounds and more informative results.

\subsection{Outline}\label{subsec: outline} The paper is organized as follows. 
We elaborate on the formal task of supervised learning and adversarial robustness in \Cref{sec: robust-supervised-learning}. 
Our contributions are summarized in \Cref{sec: contrib}. 
The next two sections collect our main results, \Cref{sec:cert-rob} for certified robustness and \Cref{sec: rob-gen} for robust generalization. 
Finally, we demonstrate experimental results in \Cref{sec:experimental} and conclude with future directions in \Cref{sec:conclusions}.

\subsection{Notation}\label{subsec: notation} 
Throughout this work, scalar quantities are denoted by lower or upper case (not bold) letters, and vectors with bold lower case letters.
Matrices are denoted by upper case letters: $\mt{W}$ is a matrix with \textit{rows} $\vc{w}_i$. 
The Frobenius and operator norms are denoted by $\norm{\cdot}_F$ and $\norm{\cdot}_2$, respectively. 
For any matrix $\W \in \mathbb{R}^{p \times d}$ with \emph{rows} $\vc{w}_i$, for $u,v \geq 1$, the group $(u,v)$ norm\footnote{Defined over rows rather than columns.} is defined as
$\norm{\W}_{u,v} := \norm{~ (\norm{\vc{w}_1}_u, \ldots, \norm{\vc{w}_p}_u)~}_v.$
\addition{We informally refer to the Euclidean norm of a vector as its \textit{energy}.
We denote by $\succeq$ the element-wise $\geq$ operator for vectors.}
Sets and spaces are denoted by capital (and often calligraphic) letters, with the exception of the set $[p] = \{1,\dots,p\}$. 
For a Banach space $\cW$ embedded with norm $\norm{\cdot}_\cW$, 
we denote by $\cB^{\cW}_{r}(\w)$, a bounded ball centered around point $\w$ with radius $r$.
When describing a composition of affine functions, such as deep neural networks, 
$\mt{W}^k$ refers to the parameters corresponding to layer $k$. 
More generally, outside of norms, superscripts indicate layer index. 
We denote by $\mathcal{P}_I$, the index selection operator that restricts an input to the coordinates specified in the set $I$. For a vector $\x\in\mathbb{R}^d$ and $I \subset [d]$, $\mathcal{P}_I:\mathbb R^d \to \mathbb R^{|I|}$ is defined as $\mathcal{P}_I(\x) := \x[I]$. For a matrix $\W\in\mathbb{R}^{p\times d}$ and $I\subset [p]$, $\mathcal{P}_I(\W) \in\mathbb{R}^{|I|\times d}$ restricts $\W$ to the \emph{rows} specified by $I$. 
For row and column index sets $I \subset [p]$ and $J \subset [d]$, $\mathcal{P}_{I,J}(\W) \in \mathbb{R}^{|I| \times |J|}$ restricts $\W$ to the corresponding sub-matrix. 

\section{Robust Supervised Learning}
\label{sec: robust-supervised-learning}
Consider the task of multi-class classification 
with a bounded input space $\cX = \{\x \in \mathbb{R}^d ~|~ \norm{\x}_2 \leq 1\}$ 
and labels $\cY = \{1,\ldots, C\}$ 
from an unknown distribution $\cD_\cZ$ over $\cZ := (\cX \times \cY)$.
We search for a hypothesis in $\mathcal{H} := \{h : \cX \rightarrow \cYd \}$ 
that is an accurate predictor of label $y$ given input $\x$. Note that $\cY$ and $\cYd$ need not be the same. In this work, we consider $\cYd = \mathbb{R}^C$, and consider the predicted label of the hypothesis $h$ as $\hat{y}(\x) :=\argmax_{j} [h(\x)]_j$\footnote{The $\argmax$ here breaks ties deterministically.}. 
Throughout this work, the quality of a predictor $h \in \cH$ 
at a sampled data point  $\vc{z} = (\x, y) \in \cZ$  
is measured by a $b$-bounded $\mathsf{L}_{\mathrm{loss}}$-Lipschitz loss function 
$\ell : (\cH \times \cZ) \rightarrow \left[0,b\right]$. With these elements, the population risk of a hypothesis $R : \cH \rightarrow [0,b]$ is the expected loss it incurs on a randomly sampled data point, 
$R(h) := \expect_{\vc{z} \sim \cD_\cZ} \Big[ \ell \big(h, \vc{z}\big )\Big].$ 
The goal of supervised learning is to obtain a hypothesis with low risk.
While the true distribution $\cD_\cZ$ is unknown, we assume access to a training set $\samp_T =  \{\vc{z}_1,\ldots,\vc{z}_m \}$ such that $\z_i = (\x_i,y_i) \overset{\textrm{i.i.d}}{\sim} \cD_\cZ$, and 
we instead minimize the \textit{empirical risk}, i.e.
the average loss on the training sample $\samp_T$, i.e. 
$\hat{R} (h) :=
\frac{1}{m}\sum_{i=1}^m \ell \left(h, (\x_{i},y_{i})\right)$.
\addition{
We note two canonical choices of loss function for classification tasks: the zero-one loss $\ell^{[0/1]}$, and the margin loss $\ell^{\gamma}$ with threshold $\gamma > 0$ \cite{mohri2018foundations}. 
The zero-one loss is $1$ for incorrect prediction and zero otherwise. 
The margin loss $\ell^\gamma$ is based on a margin operator $\mathcal{M} : \cY' \times \cY \rightarrow \mathbb{R}$, $\mathcal{M}(\vt, y) := [\vt]_{y} - \max_{j \neq y} [\vt]_j$\footnote{The predicted label is correct if $\mathcal{M}(h(\x),y) \geq 0$.},
\begin{equation*}
\ell^\gamma (h, \z) := \min
\left\{1, \max\left\{0, 1 - \frac{\mathcal{M}(h(\x), y) }{\gamma}\right\}\right\}. 
\end{equation*}
The margin loss is $0$ only for correct prediction with sufficient margin $\mathcal{M}(h(\x), y) \geq \gamma$. 
The margin loss with threshold $\gamma>0$ is $\frac{2}{\gamma}$-Lipschitz w.r.t  
change in predictor output \cite{mohri2018foundations}.
}

The sensitivity of a predictor to changes in inputs or parameters is characterized by their global Lipschitz constants. 
For a predictor $h \in \cH$, we let $\mathsf{L}_{\mathrm{inp}}$ denote the maximal change in the output of the predictor $h$ upon a change in its input. 
Similarly, for a suitable norm defined on the hypothesis class $\cH$, we denote by $\mathsf{L}_{\text{par}}$ the global Lipschitz constant measuring the sensitivity of the output to changes in the parameters of the predictor. 
Formally, for all  $\x, \xtil \in \cX$ and all $\hat{h},h \in \cH$, we have that
\[\norm{h(\xtil) - h(\x)}_2 \leq \mathsf{L}_{\mathrm{inp}}\norm{\xtil - \x}_2, \qquad \norm{\hat{h}(\x) - h(\x)}_2 \leq \mathsf{L}_{\text{par}} \norm{\hat{h} - h}_\cH.\]

\subsection{Adversarial Robustness}\label{subsec: adv-rob}
To evade test-time adversarial attacks,
we seek predictors that are 
robust to adversarial corruptions in the bounded set $\cB^{\cX}_{\nu}(\mathbf{0}) := \{\bdel \in \cX ~|~ \norm{\bdel}_{2}\leq \nu\}$.
The robust loss $\ell_{\mathrm{rob}}  (h, \z) := \max_{\bdel \in \cB^{\cX}_{\nu}(\mathbf{0})}  ~ \ell \big(h, (\x+\bdel,y) \big)$, 
captures the quality of a predictor $h$ under an attack.
We term the population (resp. empirical) risk evaluated on the robust loss as the robust population (resp. empirical) risk,  
\begin{equation*}
R_{\mathrm{rob}}(h) := \expect_{\vc{z} \sim \cD_\cZ} \Big[ \ell_{\mathrm{rob}} \big(h, \vc{z}\big )\Big], 
\quad 
\hat{R}_{\mathrm{rob}} (h) :
= \frac{1}{m}\sum_{i=1}^m \ell_{\mathrm{rob}} \Big(h, \z_i\Big).
\end{equation*}
In this case, the 
\emph{robust} global Lipschitz constant $\Lparnu$ measures parameter sensitivity on corrupted inputs, 
\begin{equation*}
\forall~h, \hat{h} \in \cH, ~\forall~\x \in \cX, \quad  \max_{\bdel \in \cB^{\cX}_{\nu}(\mathbf{0})}~\norm{\hat{h}(\x+\bdel) - h(\x+\bdel)} \leq \mathsf{L}_{\mathrm{par},\nu} \norm{\hat{h} - h}.
\end{equation*}
\addition{Note since $(\x+\bdel)$ might not be in the original input domain $\cX$, $\Lparnu$ can differ from $\Lpar$.}

In this work, we focus on two central problems: \emph{Certified Robustness}, providing a guarantee that a predictor $h$, that correctly classifies an input $\x$, 
will not be changed if contaminated with an adversarial perturbation of bounded norm $\norm{\bdel}_2\leq \bar{r}(\x)$; and \emph{Robust Generalization}, seeking to understand when a predictor $h$ learned on a collection of samples $\samp$ can generalize to corrupted unseen data, i.e. when $R_{\mathrm{rob}}(h)$ is low. \addition{Lastly, while we focus on the widely studied constraint set of $\ell_2$-bounded perturbations \cite{Szegedy2014IntriguingPO}, 
most of the derived analysis is directly extendable to general $\ell_p$ norms, and we will comment on these whenever relevant.}


\subsection{Representation-Linear Hypothesis class} \label{subsec: rep-linear-hyp}
In this work we consider a class of structured hypotheses $\cH_{\cA,\cW}$ called \emph{representation}-\emph{linear} hypotheses, with parameters $(\A,\W)$,
where classification weights $\A \in \cA \subset \mathbb{R}^{C \times p}$ act upon
a learned representation map ${\Phi}_{\W} : \cX \rightarrow \mathbb{R}^p$
with representation weights $\W \in \cW$,
\begin{equation*}
\cH_{\cA, \cW} := \{h_{\A,\W} : \cX \rightarrow \mathbb{R}^C, \;  h_{\A,\W}(\x) := \A \Phi_{\W}(\x), \; \forall \; \A \in \cA \text{ and } \W 
\in \cW\}.
\end{equation*}
The parameters of each hypothesis $(\A,\W)$ are learned based on sample data $\samp_T$. 
We assume that the representation space (image of $\Phi_{\W}$) is embedded with the Euclidean norm $\norm{\cdot}_{2}$\footnote{We assume that the norm $\norm{\cdot}_\cA$ is sub-multiplicative and consistent with the Euclidean vector norm $\norm{\cdot}_2$.}.
Naturally, each choice of a representation map $\Phi$ results in a corresponding hypothesis class. 
\addition{
For the discussion in this paper, we assume a consistent choice of parameterizing functions in $\cH$, thus functions with different parameters are considered to be different.} 
For the sake of simplicity, we denote $\cH = \cH_{\cA,\cW}$, $h(\x) = h_{\A,\W}(\x)$ and $\Phi(\x) = \Phi_{\W}(\x)$
when clear from the context. 
We discuss common representation maps.

\paragraph{Linear Representations} 
The simplest case is that of representation maps that are linear; i.e. $\Phi(\x) = \W\x$, for some $\W\in\cW \subset \mathbb R^{p\times d}$. 
One could require additional structure,  such as taking $p<d$ and taking $\cW$ as the set of projection matrices, thus computing low dimensional projections of the data. 
Linear low dimensional representations have been shown beneficial in the context of adversarial robustness \cite{Awasthi2020AdversarialRV}. 

\paragraph{Supervised Sparse Coding} 
Here for a dictionary $\W \in \cW_{\mathrm{rip}}$\footnote{$\cW_{\text{rip}}$ is the oblique manifold of matrices with unit-norm columns 
satisfying the restricted isometry property. A matrix $\W$ is RIP with constant $\eta_s$ if this is the smallest value so that, for any $s$-sparse vector $\bm{\alpha} \in \mathbb{R}^p : \|\bm{\alpha}\|_0=s$, $\W$ is close to an isometry, i.e. $(1-\eta_s)\norm{\bm{\alpha}}^2_2 \leq \norm{\W \bm{\alpha}}^2_2 \leq (1+\eta_s)\norm{\bm{\alpha}}_2^2$.
}, the representation map $\Phi$ encodes input $\x$ as 
$\Phi (\x) \in \underset{\bm{\alpha} \in \mathbb{R}^p}{\arg \min}~ \frac{1}{2}\norm{\x - \W \bm{\alpha}}^2_2 + \lambda \norm{\bm{\alpha}}_1$, the solution to a LASSO problem \cite{tibshirani1996regression}. 
In this case, the representation $\Phi$ is nonlinear and encourages sparsity.
This hypothesis class denoted $\cHssc$, is termed Task Driven Dictionary Learning in \cite{mairal2011task}, and frequently used in computer vision \cite{diamant2016task}, and analyzed in the context of adversarial robustness in \cite{sulam2020adversarial}.

\paragraph{Feedforward Neural Networks}
The representation map implemented by a depth-$(K+1)$ neural network, $\Phi^{[K]}$, is a sequence of $K$ affine maps
composed with a nonlinear activation function. The most common choice for this activation is the Rectifying Linear Unit, or ReLU, $\sigma : \mathbb{R} \to \mathbb{R}^{\geq 0}$, defined by  $\sigma(x) = \max\{x,0\}$, acting entry wise in an input vector.
Each layer\footnote{Recall that superscripts in parameters and variables for neural network classes index the layer.} has a weight $\W^k \in \cW^k \subset \mathbb{R}^{d^k\times d^{k-1}}$ and bias $\vb^k \in \mathfrak{B}^k \subset \mathbb{R}^{d^k}$. 
For this family of functions, the representation map $\Phi^{[K]}$ has parameters $\{\W^{k}, \vb^k\}_{k=1}^K$
and is formally given by 
\[
\Phi^{[K]}(\x) 
:= \act{\W^K \act{\W^{K-1} \cdots \act{\W^1\x + \vb^1} \cdots +\vb^{K-1}} + \vb^K}.
\]
We denote the hypothesis class $\HNNK$ with parameter space $\cA\times \prod_{k=1}^K (\cW^k\times \mathfrak{B}^k)$.


\section{Contributions}\label{sec: contrib}
In this work we present results for robustness certificates and generalization guarantees based on a novel tool we term \textit{sparse local Lipschitzness} (SLL).
SLL measures the \emph{local} sensitivity of a predictor $h$, while additionally requiring the preservation of sparsity patterns in the representation $\Phi$ within a \emph{sparse local radius}. 
The additional structural constraint enables us to express any SLL predictor by an equivalent \emph{simpler} function with fewer \emph{active} degrees of freedom, or parameters. Importantly, our definition of SLL is flexible, allowing for any degree of sparsity levels and for a controlled, ``tunable'' trade-off between sparsity and local sensitivity, as measured by the sparse local radius. SLL predictors are a subclass of local Lipschitz predictors and include common representation-linear hypothesis classes such as the ones mentioned previously. 

We present a certified radius for any SLL predictor w.r.t. input that improves on standard local (and global) analysis (see \Cref{lemma:cert-rob-gen-loc} and \Cref{corollary:cert-rob-fnn-loc}). 
\addition{Compared to traditional Lipschitz analysis,} we then demonstrate a tighter data-dependent non-uniform bound on the robust generalization error for predictors that are (robust) SLL w.r.t. parameters (see \Cref{thm: generalization-sll}, \Cref{app: thm: robust-generaalization-sll}, \Cref{corollary: nonuniformriskriskMNN}).
Our bounds depend mildly on the power of the adversary: the adversarial energy \addition{$\nu$} only impacts the \textit{fast} term in the upper bound \addition{i.e. $\mathcal{O}\left(\frac{\nu}{m}\right)$}, an improvement from recent results \cite{Awasthi2020AdversarialLG, yin2018rademacher} \addition{which are $\mathcal{O}\left(\frac{\nu}{\sqrt{m}}\right)$}. 

We instantiate these results for deep neural networks as a particular case. 
\Cref{fig:500_500_activity_levels} shows that for a trained neural network, at any test input, the number of active neurons in each layer is \emph{at most} half the width. Here, as the corruption to inputs increases in energy, the number of neurons that flip activation states increases. Our analysis quantifies the stability of the strongly inactive neurons at each input to corruptions in input and or parameters. For both certified robustness and robust generalization, this highlights the reduced dimensionality of the predictor to small corruptions. 
\begin{figure}[]
    \centering
    \includegraphics[height=2.6in]{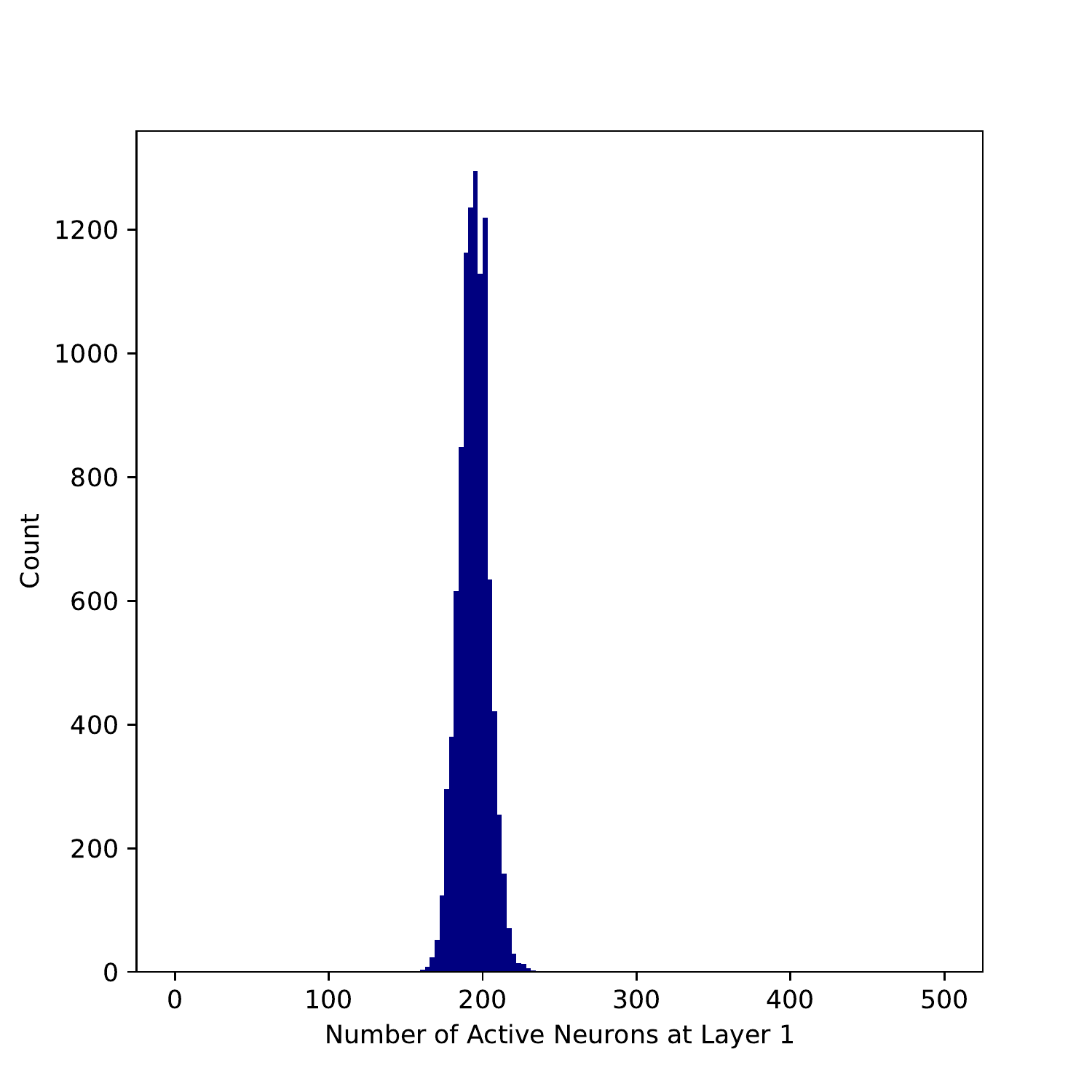}
    \includegraphics[height=2.6in]{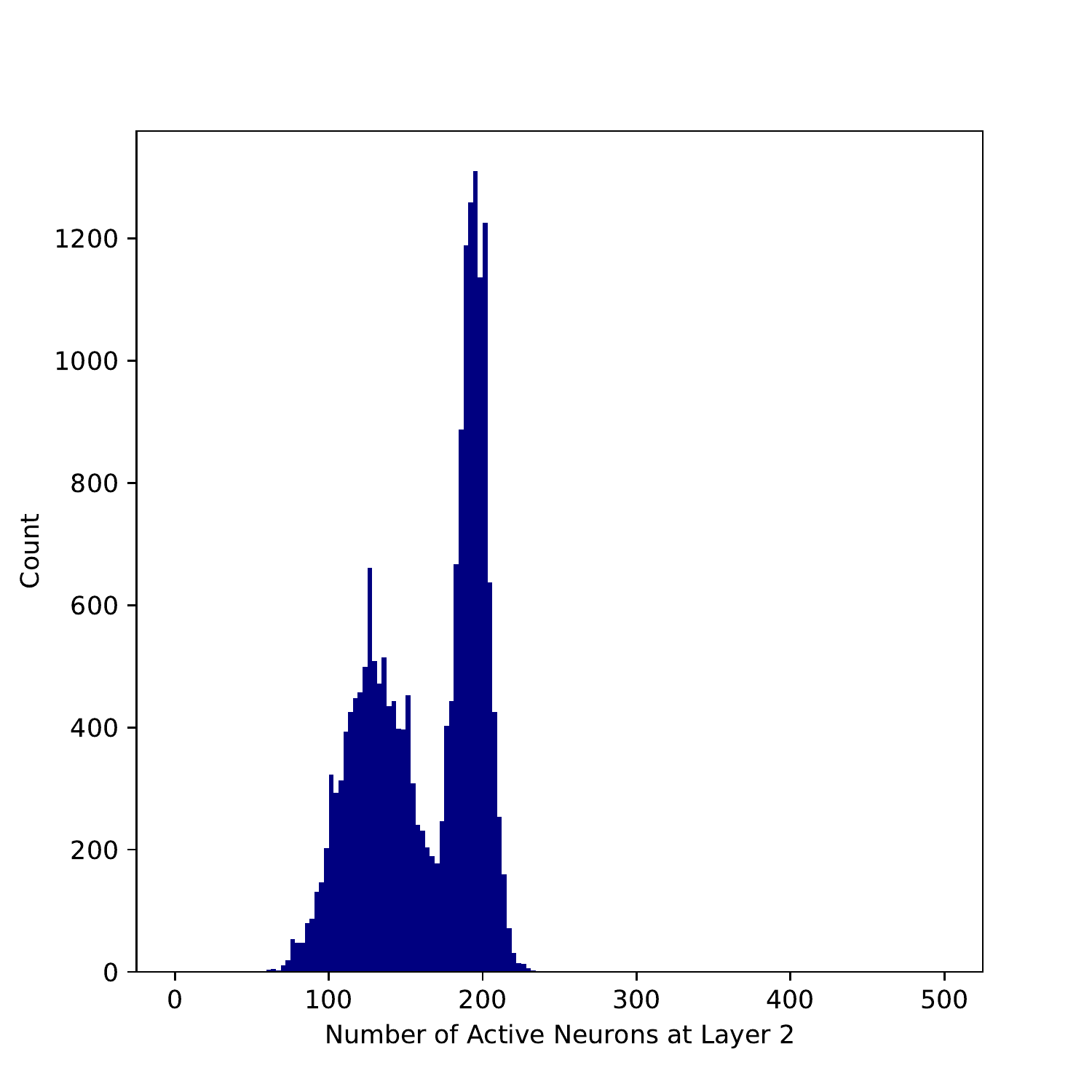}
    \caption{Distribution of active neurons \addition{across validation data} in each layer of a 2-layer \addition{feedforward} network trained on MNIST.}
    \label{fig:500_500_activity_levels}
\end{figure}
We show that the sparse local radius at each layer, for either input or parameter sensitivity, depends on the alignment between the pre-activation layer inputs and the rows of the layer weight matrix.  
For input sensitivity, we show that the sparse local Lipschitz scale of a depth-$(K+1)$ network is the product of operator norms of reduced linear maps at each layer, resulting in $2-3$ fold improvement over the global Lipschitz constant for typical settings (see \Cref{sec:experimental}).
In turn, for parameter sensitivity, we demonstrate that the sparse local Lipschitz scale is given by an upper bound on the operator norms of any reduced linear maps by incorporating a measure of coherence between rows. This sparse local Lipschitz scale has thus a better dependence on depth than the global Lipschitz constant. \addition{Finally, we note that while other neural network architectures are part of the Representation-Linear Hypothesis class, and can be shown to be sparse local Lipschitz functions (e.g. for convolutional networks), we refrain from instantiating our results for these architectures in this work and defer these interesting questions to future work.}


\section{Certified Robustness}
\label{sec:cert-rob}
For a fixed predictor $h$ (potentially data dependent), input $\x$ and perturbation $\bdel$, we denote by $\hat{y}(\x)$ and $\hat{y}(\x+\bdel)$ the predicted labels before and after corruption, respectively.
A point-wise certified radius function $r_{\mathrm{cert}}: \cX \rightarrow \mathbb{R}^{\geq 0}$ is a guarantee that for any bounded perturbation $\bdel$ the predicted label remains unchanged, that is
\[
\norm{\bdel}_2 \leq r_{\mathrm{cert}}(\x) \implies \hat{y}(\x+\bdel) = \hat{y}(\x).
\]
We develop a point-wise certified radius that relies on the local sensitivity of the predictor $h$.

\subsection{Sparse Local Lipschitz w.r.t. Inputs}\label{subsec:SLL-input}

We start by characterizing a class of functions that preserve sparsity in their output for bounded perturbations to inputs. 
\addition{Throughout this manuscript, we refer to sparsity as \textit{the number of zero entries} of a vector. 
} We say that a vector $\vt \in\mathbb R^d$ is $s$-sparse if it has an inactive set $I$ of size $s$; that is, if there exists $I$ so that $\mathcal{P}_{I}(\vt) = \mathbf{0} \in \mathbb R^{s}$.
Naturally, $\vt$ is $s$-sparse only when $s\leq d-\|\vt\|_0$. 
With these elements, we are now ready to define the sparse local Lipschitzness of a function.

\begin{definition}\text{(Sparse Local Lipschitzness {\it w.r.t.\;}Input)}
\\
\label{def: lip-input}
Let $\x$ and $\Phi(\x)$ be $s_{in}$ and $s_{out}$ sparse, respectively, and let $\vc{s} = (s_{in}, s_{out})$. The representation map $\Phi$ is $\vc{s}$-\textit{sparse local Lipschitz at $\x$ (w.r.t. inputs)}  if there exists an index set $I_{out}$ of size $s_{out}$, a local radius $r \geq 0$ and Lipschitz scale $l\geq 0$ such that for any perturbed input $\xtil \in \cB^{\cX}_{r}(\x)$ with a common inactive set $I_{in}$ of size $s_{in}$, i.e. $\mathcal{P}_{I_{in}}(\xtil) = \mathcal{P}_{I_{in}}(\x) = \mathbf{0}$, one has that 
\[\norm{\Phi(\xtil) - \Phi(\x)}_2 \leq l \norm{\xtil-\x}_2 ~ \land ~ \mathcal{P}_{I_{out}}(\Phi(\xtil)) = \mathcal{P}_{I_{out}}(\Phi(\x)) = \mathbf{0}.\]
\end{definition}

In words, $\Phi$ is sparse local Lipschitz at $\x$ if, for points in a neighborhood of $\x$ that preserve a certain input sparsity pattern, the function is local Lipschitz \emph{and} preserves a certain representation sparsity pattern. 
It is important to note that the sparse local sensitivity \addition{i.e. the trio of index set $I_{\text{out}}$, radius $r$ and Lipschitz scale $l$} are dependent on the specific input $\x$ as well as the sparsity levels $\vc{s}$ which can range through all possible sizes of index sets in the inputs and in the representations, i.e. 
$\vc{s} \in \mathcal{S} := [d] \times [p]$. However, $\Phi$ will only be SLL at $\x$ for $\vc{s} \in \{0,\dots, d-\|\x\|_0\} \times \{0,\dots, p-\|\Phi(\x)\|_0\}$. 
More generally, for a fixed representation map $\Phi$ we can define a local radius function $r_{\mathrm{inp}} : \cX \times \mathcal{S} \rightarrow \mathbb{R}^{\geq 0}$ and local Lipschitz scale functions\footnote{
For any input $\x$ and a choice of sparsity levels $\vc{s} = (s_{in}, s_{out}) \in \mathcal{S}$ such that $s_{in} > d-\norm{\x}_0$ or $s_{out} > p-\norm{\Phi(\x)}_0$, for simplicity we let the corresponding radius $\rinp(\x, \vc{s}) := 0$ and local Lipschitz scale $\linp(\x, \vc{s}):=\infty$. 
}, 
$l_{\mathrm{inp}} : \cX \times \mathcal{S} \rightarrow \mathbb{R}^{\geq 0}.$ 
and extend our definition. 

\begin{definition}{(Sparse Local Lipschitz function)}\\
\label{def: sll-fun}
We say that the representation map $\Phi$ is \textit{sparse local Lipschitz w.r.t. inputs} if for any $\x \in \cX$, for all appropriate\footnote{Valid sparsity levels are in $\{0,\dots, d-\|\x\|_0\} \times \{0,\dots, p-\|\Phi(\x)\|_0\}$.} sparsity levels $\vc{s}$,
$\Phi$ is $\vc s$-sparse local Lipschitz at $\x$ with associated radius $\rinp(\x,\vc{s})$ and local Lipschitz scale $\linp(\x,\vc{s})$.
\end{definition}
In this way, if $\Phi$ is sparse locally Lipschitz w.r.t inputs, for an appropriate $\xtil$,
\[
\norm{\xtil - \x}_2 \leq \rinp(\x, \vc{s}) \implies \norm{\Phi(\xtil) - \Phi(\x)}_2 \leq \linp(\x, \vc{s}) \norm{\xtil - \x}_2.
\]
Note that SLL representations $\Phi$ are also local Lipschitz with radius $\rinp(\x, \vc{0})$ and local Lipschitz scale $\linp(\x, \vc{0})$. 
Additionally, if $\rinp(\x, \vc{0}) = \infty$ for all $\x \in \cX$, then the representation is global Lipschitz with constant $\max_{\x \in \cX} \linp(\x, \vc{0})$.  
\addition{There could be multiple radius functions $\rinp$ and $\linp$ that meet the requirements of \Cref{def: lip-input,def: sll-fun}. For each hypothesis class, we assume a fixed choice of these functions.} 
For representation-linear hypothesis classes $\mathcal H$, it is natural to couple this notion of sensitivity with the classification weight $\A$. More precisely, under appropriate conditions on $\xtil$, we can have that
\[
\norm{h(\xtil)-h(\x)}_2 = \norm{\A \Phi(\xtil) - \A \Phi(\x)}_2 \leq \norm{\A}_2 l_{\mathrm{inp}}(\x, \vc{s}) \cdot \norm{\xtil -\x}_2.
\]
We are now ready to present our first result that develops a certified radius for SLL predictors.

\begin{theorem}\label{lemma:cert-rob-gen-loc}
Consider a predictor $h$ with classification weight $\A$, sparse local Lipschitz representation map $\Phi$ and a fixed sparsity level $\vc{s}=(0, s_{\mathrm{out}})$. 
Let $h$ classify an input $\x$ as label $\hat{y}(\x)$ with classification margin $\mathcal{M}(h(\x), \hat{y}(\x)) > 0$. 
Upon bounded perturbations $\bdel$, the predicted label of the classifier remains unchanged, i.e. $\hat{y}(\x+\bdel) = \hat{y}(\x)$ when 
the energy of the perturbation is below the certificate, $\norm{\bdel}_2 \leq \rcert(\x, \vc{s})$ where, 
\begin{equation*}
	\rcert(\x, \vc{s}) := \min \left\{r_{\mathrm{inp}}(\x,  \vc{s}) ,~  \frac{\mathcal{M}(h(\x), \hat{y}(\x))}{2 \norm{\A}_2 \cdot l_{\mathrm{inp}}(\x,\vc{s}) } \right \}.
\end{equation*}
\end{theorem}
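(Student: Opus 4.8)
The plan is to reduce the statement to the $2$-Lipschitzness of the margin operator, composed with the input sparse-local-Lipschitz bound on $\Phi$ and the operator-norm bound through $\A$. Write $\xtil := \x + \bdel$. If $\vc s = (0,s_{\mathrm{out}})$ is not a valid sparsity level at $\x$, i.e. $s_{\mathrm{out}} > p - \norm{\Phi(\x)}_0$, then $r_{\mathrm{inp}}(\x,\vc s) = 0$ by the stated convention, so $\rcert(\x,\vc s)=0$ and there is nothing to prove; hence I may assume $\vc s$ is valid. Since $s_{in}=0$, the inactive index set $I_{in}$ is empty and the requirement $\mathcal P_{I_{in}}(\xtil) = \mathcal P_{I_{in}}(\x) = \mathbf 0$ in \Cref{def: lip-input} is vacuous. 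Therefore, for \emph{every} $\xtil \in \cB^{\cX}_{r_{\mathrm{inp}}(\x,\vc s)}(\x)$ the sparse local Lipschitzness of $\Phi$ at $\x$ (\Cref{def: lip-input,def: sll-fun}) gives $\norm{\Phi(\xtil) - \Phi(\x)}_2 \le l_{\mathrm{inp}}(\x,\vc s)\,\norm{\xtil - \x}_2$, and this is available for our perturbed point because $\norm{\bdel}_2 \le \rcert(\x,\vc s) \le r_{\mathrm{inp}}(\x,\vc s)$.

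Next I would carry this bound through the classifier. Using that $\norm{\cdot}_2$ is consistent with the Euclidean vector norm (as already recorded in the display preceding the theorem),
\[
\norm{h(\xtil) - h(\x)}_2 = \norm{\A\big(\Phi(\xtil) - \Phi(\x)\big)}_2 \le \norm{\A}_2\, l_{\mathrm{inp}}(\x,\vc s)\,\norm{\bdel}_2 .
\]
Then I would invoke the elementary fact that $\mathcal M(\cdot, y)$ is $2$-Lipschitz with respect to $\norm{\cdot}_\infty$, and hence with respect to $\norm{\cdot}_2$: for any $\vt,\vt'$, decomposing $\mathcal M(\vt',y) - \mathcal M(\vt,y) = \big([\vt']_y - [\vt]_y\big) - \big(\max_{j\ne y}[\vt']_j - \max_{j\ne y}[\vt]_j\big)$ and bounding $|[\vt']_y - [\vt]_y| \le \norm{\vt'-\vt}_\infty$ and $|\max_{j\ne y}[\vt']_j - \max_{j\ne y}[\vt]_j| \le \max_{j\ne y}|[\vt']_j - [\vt]_j| \le \norm{\vt'-\vt}_\infty$ yields $|\mathcal M(\vt',y) - \mathcal M(\vt,y)| \le 2\norm{\vt'-\vt}_\infty \le 2\norm{\vt'-\vt}_2$. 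Applying this with $y=\hat y(\x)$, $\vt = h(\x)$, $\vt' = h(\xtil)$ and the previous display,
\[
\mathcal M\big(h(\xtil),\hat y(\x)\big) \ge \mathcal M\big(h(\x),\hat y(\x)\big) - 2\norm{\A}_2\, l_{\mathrm{inp}}(\x,\vc s)\,\norm{\bdel}_2 .
\]

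The second argument of the certificate is tailored so that $\norm{\bdel}_2 \le \rcert(\x,\vc s)$ forces $2\norm{\A}_2 l_{\mathrm{inp}}(\x,\vc s)\,\norm{\bdel}_2 \le \mathcal M(h(\x),\hat y(\x))$, whence $\mathcal M(h(\xtil),\hat y(\x)) \ge 0$; this says $[h(\xtil)]_{\hat y(\x)}$ attains $\max_j [h(\xtil)]_j$, so $\hat y(\xtil) = \hat y(\x)$ (with a strict inequality, hence no reliance on tie-breaking, whenever $\norm{\bdel}_2$ is strictly below the second term). I do not expect a genuine obstacle in this argument; the only two points requiring care are (i) observing that $s_{in}=0$ renders the input-sparsity hypothesis vacuous, so that the SLL inequality holds for all nearby in-domain inputs and not merely for sparsity-compatible ones, and (ii) getting the constant $2$ in the margin perturbation bound exactly right, since it is precisely this factor that appears in the denominator of $\rcert$. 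A minor technical caveat is that the argument is stated for $\xtil = \x+\bdel \in \cX$, as the SLL property is phrased over the ball $\cB^{\cX}_{r_{\mathrm{inp}}(\x,\vc s)}(\x)$ inside the domain.
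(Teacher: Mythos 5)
Your proof is correct and follows essentially the same route as the paper's: bound the margin deviation via the $2$-Lipschitzness of $\mathcal M(\cdot,y)$, pass to $\norm{h(\xtil)-h(\x)}_2 \le \norm{\A}_2\, l_{\mathrm{inp}}(\x,\vc s)\,\norm{\bdel}_2$ using the $s_{\mathrm{in}}=0$ observation that renders the input-sparsity hypothesis vacuous, and rearrange. The only cosmetic differences are that you prove the $2$-Lipschitz fact directly rather than citing it, and you explicitly dispose of the degenerate sparsity case, which the paper leaves implicit.
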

\addition{\begin{proof}
Observe that the predicted labels remain unchanged when
$\mathcal{M}(h(\x+\bdel), \hat{y}(\x)) \geq 0$.
Since the margin operator $\mathcal{M}(\cdot, j)$ is $2$-Lipschitz in $\cYd$, thus $\mathcal{M}\big(h(\x),\hat{y}(\x)\big) - \mathcal{M}\big(h(\x+\bdel), \hat{y}(\x)\big) \leq 2 \norm{h(\x+\bdel) - h(\x)}_p$.
On the other hand, note that for $\vc{s} = (0, s_{\mathrm{out}})$ there are no sparsity constraints on the perturbed input. 
Hence,
\[
\norm{\bdel}_2 \leq \rinp(\x,\vc{s})
\implies
\norm{h(\x+\bdel)-h(\x)}_2 \leq \norm{\A}_2 \cdot \linp(\x, \vc{s}) \norm{\bdel}_2.
\]
As a result, $\mathcal{M}(h(\x), \hat{y}(\x)) - \mathcal{M}\big(h(\x+\bdel), \hat{y}(\x)\big)
\leq 2 \norm{\A}_2 \cdot \linp(\x, \vc{s}) \norm{\bdel}_2.$
\end{proof}}

The first constraint on the certificate ensures 
that the perturbation is within the local radius. 
The second constraint, on the other hand, ensures that the effect of the perturbation does not exceed the classification margin in the representation space. In the second constraint, the distance between the original and perturbed representation is bounded using the local Lipschitz scale. The input sparsity level $s_{in}$ is naturally fixed to $0$ to ensure that all $\ell_2$ bounded perturbations are covered in the certified radius guarantee.
This theorem encompasses local or global Lipschitz representations as the special case when $\vc{s}=\mathbf{0}$ (thus, it is more flexible than global analyses), and it is a generalization of the result in \cite[Proposition 1]{tsuzuku2018lipschitz}. 

\subsubsection{Composition of sparse local Lipschitz functions}\label{subsubsec: composition}
Useful representation maps can often be obtained as the composition  of several intermediate maps, as in the case of feedforward neural networks and multi-layered sparse coding \cite{sulam2019multi}.
More generally, consider $K$ intermediate layer representation maps $\Phi^{(k)} : \mathbb{R}^{d^{k-1}} \rightarrow \mathbb{R}^{d^k}$ for $1\leq k \leq K$, which are then composed to obtain $\Phi^{[K]}$, 
\begin{equation}
\label{eq:multilayer-composition}
\Phi^{[K]}(\x) := \Phi^{(K)} \circ \Phi^{(K-1)} \circ \cdots \circ \Phi^{(1)} \left(\x\right).
\end{equation}
Let $(s^0, s^1, \ldots, s^K)$ denote an appropriate choice of sparsity level for each intermediate representation from $\x$ to $\Phi^{[k]}(\x)$\footnote{That is, one where $s^k \leq d^k-\|\Phi^{[k]}(\x)\|_0$.}.
By defining the layer-wise input-output sparsity levels $\vc{s}^{(k)} := (s^{k-1}, s^k)$, and the cumulative input-output sparsity levels $\vc{s}^{[k]} := (s^0, s^1, \ldots, s^k)$, we now show one can compose sparse local Lipschitz functions to obtain a function of the same class. 

\begin{lemma}
\label{lemma: sparse-local-lip-composition}
Assume that each $\Phi^{(k)}$ in \Cref{eq:multilayer-composition} is SLL w.r.t. inputs with local radius functions $\rinp^{(k)}$ and local Lipschitz scale $\linp^{(k)}$.
Then the composed representation maps $\Phi^{[k]}$ are also SLL w.r.t. inputs with local radius functions $\rinp^{[k]}$ and local Lipschitz scale $\linp^{[k]}$ given by
\begin{equation*}
    r_{\mathrm{inp}}^{[k]}\Big(\x,\;  \vc{s}^{[k]}\Big) := \min_{1\leq n \leq k}  \frac{r_{\mathrm{inp}}^{(n)}\Big(\Phi^{[n-1]}(\x),\; \vc{s}^{(n)} \Big) }{\displaystyle l_{\mathrm{inp}}^{[n-1]} \Big(\x,\;  \vc{s}^{[n-1]} \Big)},
    ~\quad~
    l_{\mathrm{inp}}^{[k]}\Big( \x,\; \vc{s}^{[k]} \Big) := \prod_{n=1}^k l_{\mathrm{inp}}^{(n)}\Big(\Phi^{[n-1]}(\x),\; \vc{s}^{(n)}\Big). 
\end{equation*}
\end{lemma}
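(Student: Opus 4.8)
The plan is to prove the claim by induction on $k$, showing at each step that $\Phi^{[k]}$ satisfies \Cref{def: lip-input} with the stated radius and Lipschitz scale, with the input sparsity level being $s^0$ and the output sparsity level being $s^k$. The base case $k=1$ is immediate since $\Phi^{[1]} = \Phi^{(1)}$ and the formulas reduce to $\rinp^{[1]}(\x,\vc{s}^{[1]}) = \rinp^{(1)}(\x,\vc{s}^{(1)})/\linp^{[0]}(\x,\vc{s}^{[0]})$ with $\linp^{[0]} \equiv 1$ (the empty product) and $\Phi^{[0]}(\x) = \x$, and similarly $\linp^{[1]} = \linp^{(1)}$.

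For the inductive step, assume $\Phi^{[k-1]}$ is SLL w.r.t.\ inputs with the stated radius $\rinp^{[k-1]}$ and scale $\linp^{[k-1]}$. Fix $\x$, fix appropriate cumulative sparsity levels $\vc{s}^{[k]}$, and take any $\xtil \in \cB^{\cX}_{r}(\x)$ with $r = \rinp^{[k]}(\x,\vc{s}^{[k]})$ that shares the inactive set $I_{in}$ of size $s^0$ with $\x$. First I would check that $\xtil$ lies in the inductive neighborhood for $\Phi^{[k-1]}$: since $r = \min_{1\leq n\leq k}(\cdots) \leq \min_{1\leq n \leq k-1}(\cdots) = \rinp^{[k-1]}(\x,\vc{s}^{[k-1]})$, the induction hypothesis applies, giving both the sparsity preservation $\mathcal{P}_{I^{k-1}}(\Phi^{[k-1]}(\xtil)) = \mathcal{P}_{I^{k-1}}(\Phi^{[k-1]}(\x)) = \mathbf{0}$ for the index set $I^{k-1}$ of size $s^{k-1}$, and the Lipschitz bound $\norm{\Phi^{[k-1]}(\xtil) - \Phi^{[k-1]}(\x)}_2 \leq \linp^{[k-1]}(\x,\vc{s}^{[k-1]}) \norm{\xtil-\x}_2$. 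The next step is to apply the SLL property of $\Phi^{(k)}$ at the point $\Phi^{[k-1]}(\x)$ with sparsity level $\vc{s}^{(k)} = (s^{k-1}, s^k)$: I need to verify that $\Phi^{[k-1]}(\xtil)$ lies in the ball $\cB_{\rinp^{(k)}(\Phi^{[k-1]}(\x),\vc{s}^{(k)})}(\Phi^{[k-1]}(\x))$ and shares the inactive set of size $s^{k-1}$. The latter is exactly what the induction hypothesis just gave; for the former, combine the Lipschitz bound with the definition of $r$: $\norm{\Phi^{[k-1]}(\xtil)-\Phi^{[k-1]}(\x)}_2 \leq \linp^{[k-1]}(\x,\vc{s}^{[k-1]}) \cdot r \leq \linp^{[k-1]}(\x,\vc{s}^{[k-1]}) \cdot \frac{\rinp^{(k)}(\Phi^{[k-1]}(\x),\vc{s}^{(k)})}{\linp^{[k-1]}(\x,\vc{s}^{[k-1]})} = \rinp^{(k)}(\Phi^{[k-1]}(\x),\vc{s}^{(k)})$, using the $n=k$ term in the minimum.

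Having checked the hypotheses of \Cref{def: lip-input} for $\Phi^{(k)}$, I conclude $\mathcal{P}_{I^k}(\Phi^{(k)}(\Phi^{[k-1]}(\xtil))) = \mathcal{P}_{I^k}(\Phi^{(k)}(\Phi^{[k-1]}(\x))) = \mathbf{0}$ for an index set $I^k$ of size $s^k$ (which gives the sparsity-preservation half of the conclusion for $\Phi^{[k]}$), and $\norm{\Phi^{(k)}(\Phi^{[k-1]}(\xtil)) - \Phi^{(k)}(\Phi^{[k-1]}(\x))}_2 \leq \linp^{(k)}(\Phi^{[k-1]}(\x),\vc{s}^{(k)}) \norm{\Phi^{[k-1]}(\xtil)-\Phi^{[k-1]}(\x)}_2$. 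Chaining this with the inductive Lipschitz bound gives $\norm{\Phi^{[k]}(\xtil) - \Phi^{[k]}(\x)}_2 \leq \linp^{(k)}(\Phi^{[k-1]}(\x),\vc{s}^{(k)}) \cdot \linp^{[k-1]}(\x,\vc{s}^{[k-1]}) \norm{\xtil-\x}_2 = \linp^{[k]}(\x,\vc{s}^{[k]}) \norm{\xtil-\x}_2$, which is exactly the claimed scale. This closes the induction.

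I do not expect a serious obstacle here — the argument is essentially bookkeeping of nested neighborhoods. The one point that needs care is the direction of the containment chain: the radius $\rinp^{[k]}$ is designed precisely so that (i) $\xtil$ stays inside every intermediate radius ball needed to invoke the induction hypothesis, and (ii) the image $\Phi^{[k-1]}(\xtil)$ lands inside the radius ball of $\Phi^{(k)}$ around $\Phi^{[k-1]}(\x)$ — the division by $\linp^{[n-1]}$ in the definition of $\rinp^{[n]}$ is exactly the ``deflation'' that accounts for expansion through the first $n-1$ layers. I would also note a mild subtlety: one should confirm that if the chosen sparsity levels are valid at $\x$ for $\Phi^{[k]}$ (i.e.\ $s^n \leq d^n - \|\Phi^{[n]}(\x)\|_0$ for all $n \le k$), then the per-layer levels $\vc{s}^{(n)}$ are valid for $\Phi^{(n)}$ at $\Phi^{[n-1]}(\x)$, so that the per-layer radii are positive and scales finite (otherwise the convention $\rinp = 0$, $\linp = \infty$ from the footnote makes the bound vacuously hold).
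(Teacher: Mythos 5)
Your proof is correct and follows the same inductive strategy as the paper's proof in Appendix~A.2: establish the base case, then at each step verify that $\xtilde$ stays within the inductive radius ball, propagate the inactive set and the Lipschitz bound through $\Phi^{[k-1]}$, check that the image $\Phi^{[k-1]}(\xtilde)$ falls within the local radius of $\Phi^{(k)}$, and chain the Lipschitz scales. The only superficial difference is the base-case indexing ($k=1$ vs.\ the paper's $k=2$) and your added remark on validity of the per-layer sparsity levels, which is a reasonable elaboration rather than a divergence.
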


\addition{\textit{Proof Sketch.} 
For the base case $k=1$, by definition, $r_{\mathrm{inp}}^{[1]}\Big(\x,\;  \vc{s}^{[1]}\Big)  = r_{\mathrm{inp}}^{(1)}\Big(\x,\;  \vc{s}^{(1)}\Big)$ and $l_{\mathrm{inp}}^{[1]}\Big( \x,\; \vc{s}^{[1]} \Big) = l_{\mathrm{inp}}^{(1)}\Big( \x,\; \vc{s}^{(1)} \Big)$.
Consider the case when $k=2$, $\Phi^{[2]}(\x) := \Phi^{(2)} \circ \Phi^{(1)}(\x)$. 
Consider a perturbation $\bdel$ in the initial input. By the definition of SLL, if $\norm{\bdel}_2 \leq r_{\mathrm{inp}}^{[1]}\Big(\x,\; \vc{s}^{[1]} \Big)$, then, $\norm{\Phi^{[1]}(\x+\bdel)-\Phi^{[1]}(\x)}_2 \leq l_{\mathrm{inp}}^{[1]}\Big(\x,\; \vc{s}^{[1]} \Big) \norm{\bdel}_2.$ Note that the perturbation in the first layer outputs $\Phi^{[1]}(\x+\bdel)-\Phi^{[1]}(\x)$ is a perturbation in the second layer inputs to the map $\Phi^{(2)}$. 
Hence, if 
$\norm{\Phi^{[1]}(\x+\bdel)-\Phi^{[1]}(\x)}_2 \leq
r_{\mathrm{inp}}^{(2)}\Big( \Phi^{[1]}(\x),\; \vc{s}^{(2)} \Big)$, then 
\begin{align*}
\norm{\Phi^{(2)} \circ \Phi^{[1]}(\x+\bdel)-\Phi^{(2)}\circ \Phi^{[1]}(\x)}_2 
&\leq l_{\mathrm{inp}}^{(2)}\Big(\Phi^{[1]}(\x),\; \vc{s}^{(2)} \Big) \norm{\Phi^{[1]}(\x+\bdel)-\Phi^{[1]}(\x)}_2 \\
&\leq l_{\mathrm{inp}}^{(2)}\Big(\Phi^{[1]}(\x),\; \vc{s}^{(2)} \Big) \cdot l_{\mathrm{inp}}^{(1)}\Big(\x,\; \vc{s}^{(1)} \Big) \norm{\bdel}_2. \\
&=: l_{\mathrm{inp}}^{[2]}\Big( \x,\; \vc{s}^{[2]} \Big) \norm{\bdel}_2.
\end{align*}
One can similarly extend this logic to the case $k > 2$.  
If each of the intermediate representation maps are SLL w.r.t. inputs, then by appropriately weaving the sparsity levels at each layer we can show that the composed representation map is also SLL. The complete detailed proof by induction can be found in \cref{app: lemma: sparse-local-lip-composition}. 
}

As before, local Lipschitz and global Lipschitz compositions are special cases of the result above. 
One can readily use this local Lipschitz scale function $\linp^{[K]}$ to obtain a certified radius as per \Cref{lemma:cert-rob-gen-loc} for functions that are compositions of sparse-local Lipschitz functions. 

\addition{
\subsubsection{Optimal Certified Radius}\label{subsubsec:optrad}
For any sparse local Lipschitz predictors, the certified radius in \Cref{lemma:cert-rob-gen-loc} at the trivial choice of sparsity $\rcert(\x,\vc{0})$ 
is computed using $\linp^{[K]}(\x,\mathbf{0})$, 
the global Lipschitz constant for $\Phi^{[K]}$. 
Increasing the sparsity vector $\vc{s}$ entry wise can result in a smaller local Lipschitz scale $\linp^{[K]}(\x, \vc{s})$ at the expense of a smaller local radius $\rinp^{[K]}(\x, \vc{s})$. 
Hence for a given input $\x$, the best robustness certificate $r^*(\x)$ is generated by a specific choice of  sparsity level $\vc{s}^*(\x)$ that achieves low Lipschitz scale in a sufficiently large neighborhood,
\begin{equation}
\label{eq:optimalsparse}
\vc{s}^*(\x) := \argmax_{\vc{s}}~ \rcert(\x, \vc{s}), \quad r^*(\x) := \rcert(\x, \vc{s}^*(\x)).
\end{equation}
The complexity of this optimization problem will depend on the specific hypothesis class, and on the function $r_\text{cert}$. For a composition of SLL predictors, the number of feasible sparsity levels in \Cref{eq:optimalsparse} is $\mathcal{O}\left(\prod_{k=1}^{K} d^k\right)$, exponential in the number of intermediate maps $K$. 
Thus, rather than search for the optimal sparsity vector $\vc{s}^*(\x)$, we propose to approximate this solution (sometimes exactly) by a binary search over the space of the certified radius instead. More precisely, it is easy to see that $r^*(\x)\in[0,\|\x\|_2]$. Thus, if one has access to an algorithm $\mathcal{A}(\x,\nu)$ that can return, for any given input and energy level $\nu$, an appropriate sparsity vector $\hat{\vc{s}}$ so that $\nu \leq \rinp^{[K]}(\x, \hat{\vc{s}})$, one can implement a binary search over $\nu\in[0,\|\x\|_2]$ by checking if 
\begin{equation}\label{eq:condition_safe}
    \quad\nu \leq  \frac{\mathcal{M}(h(\x), \hat{y}(\x))}{2 \norm{\A}_2 \cdot l^{[K]}_{\mathrm{inp}}(\x,\hat{\vc{s}})}.
\end{equation}
If this is satisfied, such a level of sparsity is safe. This allows us to carry out a binary refinement over $\nu$ until a tolerance level, $tol$, is satisfied, reducing the complexity of this search to $\mathcal{O}
\left(\log_2\left(\frac{\norm{\x}_2}{tol}\right)\right)$\footnote{Note we suppress the complexity of the algorithm $\mathcal{A}(\x, \nu)$, which will depend on the specific hypothesis class, and refer simply to the search complexity.}. Naturally, the quality of this solution depends on the algorithm $\mathcal{A}(\x,\nu)$. We will see that for functions that satisfy a notion of \textit{monotonicity}, such an algorithm can be easily instantiated and can be, in a specific sense, optimal. }
\addition{
\begin{definition}(Monotone Ordering)\label{def: monotone-ord}
A sparse local Lipschitz representation map $\Phi$ 
with radius function $\rinp$ and Lipschitz scale $\linp$
is said to have monotone ordering if 
\begin{align}
\text{Lipschitz condition :}~ & 
(s_1^{\text{in}}, s_1^{\text{out}}) 
\preceq 
(s_2^{\text{in}}, s_2^{\text{out}}) 
\implies
\linp(\x, (s_2^{\text{in}}, s_2^{\text{out}}) ) \leq \linp(\x, (s_1^{\text{in}}, s_1^{\text{out}}) ) \\
\text{Radius condition 1:}~~ & 
s_1^{\text{in}} \leq s_2^{\text{in}}, \; \forall\; s \in [d^{\text{out}}],~ \rinp(\x, (s_1^{\text{in}}, s)) \leq \rinp(\x, (s_2^{\text{in}}, s)) 
\\
\text{Radius condition 2:}~~ & 
s_1^{\text{out}} \leq s_2^{\text{out}}, \; \forall\; s \in [d^{\text{in}}],~ \rinp(\x, (s, s_1^{\text{out}})) \geq \rinp(\x, (s, s_2^{\text{out}})) 
\end{align}
\end{definition}
\Cref{alg: greedy-sparse} implements the rule $\mathcal A(\x,\nu)$ described above. This algorithm is correct, as now make precise.}
\begin{algorithm}
\caption{$\mathcal{A}(\x,\nu)$ to generate valid sparse vector $\hat{\vc{s}}$ for $\Phi^{[K]} = \Phi^{(K)} \circ \cdots \circ \Phi^{(1)}$.}
\label{alg: greedy-sparse}
\begin{algorithmic}
\State \textbf{Require} : Input $\x$ and perturbation energy level $\nu$. 
\State \textbf{Require} : Intermediate sparse local radius $\rinp^{(k)}$ and local Lipschitz scale $\linp^{(k)}$ $\forall\; k\in [K]$.
\State \textbf{Ensure} : Sparsity vector $\hat{\vc{s}}$ such that $\nu \leq \rinp^{[K]}(\x, \hat{\vc{s}})$.
\State \textbf{Initialize} : $\hat{\vc{s}} := \{0, \ldots, 0\} \in \mathbb{R}^{K+1}$.
\State \textbf{Initialize} : Perturbation level, $\hat{\nu}^0 := \nu$.
\For {k=1 to K}
\State $\hat{s}^k \gets$ maximum $s$ such that $\hat{\nu}^{k-1} \leq \rinp^{(k)}\left(\Phi^{[k-1]}(\x), \{\hat{s}^{k-1}, s\}\right)$.
\State $\hat{\nu}^{k} \gets \hat{\nu}^{k-1} \cdot \linp^{(k)}\left(\Phi^{[k-1]}(\x), \{\hat{s}^{k-1}, \hat{s}^k\}\right)$.
\EndFor
\State \textbf{Return} : $\hat{\vc{s}} = \{0, \hat{s}^1, \ldots, \hat{s}^K\}$.
\end{algorithmic}
\end{algorithm}
\addition{ \begin{lemma}
The sparsity vector $\hat{\vc{s}}$ generated by \Cref{alg: greedy-sparse} is correct, i.e. $\rinp^{[K]}(\x, \hat{\vc{s}}) \geq \nu$. 
If $\Phi^{[K]}$ is such that each intermediate map $\Phi^{(k)}$ has monotone ordering as per \Cref{def: monotone-ord} then, \Cref{alg: greedy-sparse} is maximal, i.e. for any sparsity vector $\bar{\vc{s}}$,
$
\nu \leq \rinp^{[K]}(\x, \bar{\vc{s}}) \implies 
\bar{\vc{s}} \preceq \hat{\vc{s}}
$. 
Additionally under monotone ordering, if $\nu$ is deemed unsafe, i.e. \cref{eq:condition_safe} is not satisfied, then there exists no sparsity vector $\bar{\vc{s}}$ such that $\nu \leq \rcert(\x, \bar{\vc{s}})$.
\end{lemma}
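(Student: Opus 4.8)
The plan is to prove the three claims in order: (i) correctness of the output of Algorithm~\ref{alg: greedy-sparse}, (ii) maximality of that output under monotone ordering, and (iii) the consequence that if $\nu$ is unsafe then no sparsity vector certifies $\nu$.

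\textbf{Correctness.} I would induct on the layer index $k$, proving the invariant that after iteration $k$ of the loop one has $\hat\nu^{k} = \nu \cdot \linp^{[k]}(\x,\hat{\vc{s}}^{[k]})$ and $\nu \le \rinp^{[k]}(\x,\hat{\vc{s}}^{[k]})$, where $\hat{\vc{s}}^{[k]} = \{0,\hat s^1,\dots,\hat s^k\}$. The base case $k=0$ is immediate since $\hat\nu^0=\nu$ and $\linp^{[0]}\equiv 1$. For the inductive step, the update $\hat\nu^{k} = \hat\nu^{k-1}\cdot\linp^{(k)}(\Phi^{[k-1]}(\x),\{\hat s^{k-1},\hat s^k\})$ gives the product formula for $\linp^{[k]}$ from Lemma~\ref{lemma: sparse-local-lip-composition} by telescoping. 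For the radius bound, by the choice of $\hat s^k$ in the algorithm, $\hat\nu^{k-1}\le \rinp^{(k)}(\Phi^{[k-1]}(\x),\{\hat s^{k-1},\hat s^k\})$; dividing both sides by $\linp^{[k-1]}(\x,\hat{\vc{s}}^{[k-1]})$ and using $\hat\nu^{k-1} = \nu\cdot\linp^{[k-1]}(\x,\hat{\vc{s}}^{[k-1]})$ shows that $\nu$ is at most the $n=k$ term of the minimum defining $\rinp^{[k]}$ in Lemma~\ref{lemma: sparse-local-lip-composition}; combined with the inductive hypothesis that $\nu$ is below all earlier terms ($n\le k-1$), we get $\nu\le\rinp^{[k]}(\x,\hat{\vc{s}}^{[k]})$. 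Taking $k=K$ gives the claim. (One subtlety: one must check that a valid $\hat s^k$ exists, i.e. that $s=0$ always satisfies the inequality; this follows because $\rinp^{(k)}(\cdot,\{s^{k-1},0\})$ is the ordinary local radius, and if even $s=0$ fails the algorithm simply cannot certify $\nu$ — I would note this corresponds exactly to the ``unsafe'' case and handle it consistently with part (iii).)

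\textbf{Maximality.} Fix any $\bar{\vc{s}} = \{0,\bar s^1,\dots,\bar s^K\}$ with $\nu\le\rinp^{[K]}(\x,\bar{\vc{s}})$; I want $\bar s^k\le\hat s^k$ for all $k$, proved again by induction on $k$. From $\nu\le\rinp^{[K]}(\x,\bar{\vc{s}})$ and the min-form of $\rinp^{[K]}$, every layerwise term is at least $\nu$: $\nu\cdot \linp^{[n-1]}(\x,\bar{\vc{s}}^{[n-1]}) \le \rinp^{(n)}(\Phi^{[n-1]}(\x),\{\bar s^{n-1},\bar s^n\})$ for all $n$. Suppose $\bar s^{j}\le\hat s^{j}$ for $j<k$. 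By the Lipschitz monotonicity condition, $\linp^{[k-1]}(\x,\bar{\vc{s}}^{[k-1]}) \ge \linp^{[k-1]}(\x,\hat{\vc{s}}^{[k-1]})$ (each factor only grows as sparsity shrinks), hence $\hat\nu^{k-1} = \nu\cdot\linp^{[k-1]}(\x,\hat{\vc{s}}^{[k-1]}) \le \nu\cdot\linp^{[k-1]}(\x,\bar{\vc{s}}^{[k-1]}) \le \rinp^{(k)}(\Phi^{[k-1]}(\x),\{\bar s^{k-1},\bar s^k\})$. Now I use the two radius monotonicity conditions: Radius condition~1 (monotone increasing in the input-sparsity argument) together with $\bar s^{k-1}\le\hat s^{k-1}$ lets me replace $\bar s^{k-1}$ by $\hat s^{k-1}$ to get $\hat\nu^{k-1}\le\rinp^{(k)}(\Phi^{[k-1]}(\x),\{\hat s^{k-1},\bar s^k\})$. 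But $\hat s^k$ is defined as the \emph{maximum} $s$ with $\hat\nu^{k-1}\le\rinp^{(k)}(\Phi^{[k-1]}(\x),\{\hat s^{k-1},s\})$, so $\bar s^k\le\hat s^k$, completing the induction. (Radius condition~2, monotonicity in the output-sparsity argument, is what guarantees the ``maximum $s$'' in the algorithm is well-defined as a genuine threshold — the feasible set of $s$ is a down-set — and I would invoke it there.)

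\textbf{Unsafe case.} Suppose $\nu$ fails \eqref{eq:condition_safe}, i.e. $\nu > \mathcal{M}(h(\x),\hat y(\x)) / (2\norm{\A}_2\linp^{[K]}(\x,\hat{\vc{s}}))$. Take any $\bar{\vc{s}}$ with $\nu\le\rinp^{[K]}(\x,\bar{\vc{s}})$ (if none exists, the claim is vacuous). By maximality, $\bar{\vc{s}}\preceq\hat{\vc{s}}$, and then by the Lipschitz monotonicity condition $\linp^{[K]}(\x,\hat{\vc{s}})\le\linp^{[K]}(\x,\bar{\vc{s}})$; monotonicity of the product follows factor-by-factor from Definition~\ref{def: monotone-ord} and the product form in Lemma~\ref{lemma: sparse-local-lip-composition}. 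Hence $\nu > \mathcal{M}(h(\x),\hat y(\x))/(2\norm{\A}_2\linp^{[K]}(\x,\bar{\vc{s}}))$, so the second term in $\rcert(\x,\bar{\vc{s}})$ (Theorem~\ref{lemma:cert-rob-gen-loc}) is strictly below $\nu$, giving $\rcert(\x,\bar{\vc{s}}) < \nu$. Since $\bar{\vc{s}}$ was arbitrary among vectors with $\nu\le\rinp^{[K]}$, and any vector with $\nu>\rinp^{[K]}$ trivially has $\rcert<\nu$ as well (the first term already fails), no sparsity vector achieves $\nu\le\rcert(\x,\bar{\vc{s}})$.

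\textbf{Main obstacle.} The routine parts are the two telescoping/induction arguments; the delicate point is making sure the three monotonicity conditions are each used in exactly the right place — Radius condition~1 to push $\bar s^{k-1}$ up to $\hat s^{k-1}$, Radius condition~2 to justify that ``the maximum feasible $s$'' is well-defined (feasibility is a down-closed condition in $s$), and the Lipschitz condition both to compare the accumulated scales $\hat\nu^{k-1}$ vs.\ $\nu\linp^{[k-1]}(\x,\bar{\vc{s}}^{[k-1]})$ in the maximality step and to transfer the unsafe inequality from $\hat{\vc{s}}$ to $\bar{\vc{s}}$. A secondary care point is the boundary/edge case where even $s=0$ is infeasible at some layer, which should be folded into the unsafe-case statement so that all three claims remain consistent.
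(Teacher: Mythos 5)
Your proposal is correct and takes essentially the same route as the paper: the same one-line correctness argument via the per-layer constraint and the composition formula for $\rinp^{[K]}$, the same induction for maximality chaining the Lipschitz condition (step (a) in the paper) with Radius condition~1 (step (c)) to reduce to the definition of $\hat{s}^k$ as a maximum, and the same appeal to maximality plus Lipschitz monotonicity of $\linp^{[K]}$ to kill the margin term in $\rcert$ for the unsafe case. Your handling is a bit more explicit in two minor spots — you separate the two branches of the min in $\rcert$ when arguing the unsafe case, and you flag the edge case where no feasible $\hat{s}^k$ exists — but these are refinements rather than a different argument (and your remark that Radius condition~2 is needed for the maximum to be ``well-defined'' is a slight overclaim: the max exists over any nonempty finite set; the down-set property is what makes the greedy choice sensible, not what makes it well-defined).
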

\begin{proof}
Define $\hat{\vc{s}}$ and $\hat{\nu}^k := \nu \cdot \linp^{[k]}(\x, \hat{\vc{s}})$ as in \Cref{alg: greedy-sparse}.
    For each layer $k$, the choice of $\hat{s}^k$ ensures that,
    $\hat{\nu}^{k-1} = \nu \cdot \linp^{[k-1]}(\x, \{0,\hat{s}^1, \ldots, \hat{s}^{k-1}\}) \leq \rinp^{[k]} (\x, \{0,\hat{s}^1, \ldots, \hat{s}^k\})$ thus ensuring correctness.
    Consider any sparsity vector $\bar{\vc{s}}$ such that $\nu \leq \rinp^{[K]}(\x, \bar{\vc{s}})$ and let $\bar{\nu}^k := \nu \cdot \linp^{[k]}(\x, \bar{\vc{s}})$. We aim to show that $\bar{\vc{s}} \leq \hat{\vc{s}}$ necessarily. 
    For $k=1$, $\nu \leq \rinp^{[1]}(\x, \{0, \bar{s}^1\})$,  and by construction, 
    \[
    \hat{s}^1 := \max s \text{ such that } \nu \leq \rinp^{(1)}(\x, \{0, s\}).
    \]    
    Hence $\bar{s}^1 \leq \hat{s}^1$. Furthermore, by the monotone ordering assumption, 
    \begin{equation}\label{eq: bar-s}
    \hat{\nu}^1\; \overset{(a)}{\leq} \;\bar{\nu}^1\; \overset{(b)}{\leq}\; \rinp^{(2)}(\Phi^{[1]}(\x), \{\bar{s}^1, \bar{s}^2\}) \;\overset{(c)}{\leq}\; \rinp^{(2)}(\Phi^{[1]}(\x), \{\hat{s}^1, \bar{s}^2\}) 
    \end{equation}
    In the above statement, (a) follows from monotone ordering of Lipschitz scale, (b) follows from assumption on $\bar{\vc{s}}$ and (c) follows from monotone ordering of the radius w.r.t the input sparsity level. 
    Further $\hat{s}^2$ is defined as the maximum sparsity level such that, 
    \[
    \hat{s}^2 := \max s \text{ such that } \hat{\nu}^1 \leq \rinp^{(2)}(\Phi^{[1]}(\x), \{\hat{s}^1, s\}).
    \]
    From \Cref{eq: bar-s}, $\bar{s}^2$ is also feasible for the above optimization and hence, $\bar{s}^2 \leq \hat{s}^2$. 
    One can repeat these arguments until layer $K$ to show that $\bar{\vc{s}} \leq \hat{\vc{s}}$. Hence \Cref{alg: greedy-sparse} chooses the maximal sparsity vector.     
    Now, consider the set of sparsity vectors $\mathcal{S}_{\mathrm{good}}$ such that $\nu \leq \rinp^{[K]}(\x, \vc{s})$.
    If there exists a witness vector $\vc{s}$ such that $\rcert(\x, \vc{s}) \geq \nu$, then certainly $\vc{s} \in \mathcal{S}_{\mathrm{good}}$.  
    We have just shown than for all $\vc{s} \in \mathcal{S}_{\mathrm{good}}$, we have $\vc{s} \preceq \hat{\vc{s}}$ and hence by the monotone ordering property, $\linp^{[K]}(\x, \hat{\vc{s}}) \leq \linp^{[K]}(\x, \vc{s})$ and thus, $\rcert(\x, \vc{s}) \leq \rcert(\x, \hat{\vc{s}})$. Therefore if $\rcert(\x, \hat{\vc{s}}) < \nu$ then for all $\vc{s} \in \mathcal{S}_{\mathrm{good}}$ we necessarily have $\rcert(\x, \vc{s}) <\nu$ and the conclusion follows.
\end{proof}
In what follows we will study a classes of functions that have monotone ordering.
}
\subsection{Certified Robustness for Feedforward Neural Networks}
\label{subsec: cert-rob-fnn}
For the remainder of this section, we will refer to feedforward neural networks exclusively as the representation map $\Phi^{[K]}$, given by the composition of $K$ piece-wise affine maps
$\Phi^{(k)}(\vc{t}) := \act{\W^k \vc{t}+\vb^k}$. 
The presence of the $\relu$ activation in each feedforward map naturally encourages some degree of sparsity at each layer. 
For each feedforward map $\Phi^{(k)}$, we denote the inactive set of the representation at any intermediate input $\vc{t}\in \mathbb{R}^{d^{k-1}}$ by $\mathcal{I}^k(\vc{t}):= \{j \in [d^k] ~|~ \Wvec{k}{j} \vc{t} + \vb^k_j \leq 0\}$, while its complement contains the support of layer $k$, i.e. $\mathcal{J}^k(\vc{t}) = \text{Supp}(\Phi^{(k)}(\vc{t})) = [d^k]\backslash\mathcal{I}^k(\vc{t})$.
In what follows, we define index sets $I^k$ as subsets of the full inactive set $\mathcal{I}^k(\vc{t})$, i.e. $I^k\subseteq \mathcal{I}^k(\vc{t})$, and the corresponding index set $J^k=(I^k)^C$ as supersets of the support set, i.e. $J^k \supseteq \mathcal{J}^k(\vc{t}) $\footnote{For ease of notation in the following discussion we denote $J^{0}:=[d^0]$.}. Figure \ref{fig:supports_diagram}(a) illustrates these sets for a given layer $\Phi^{(k)}(\vc{t})$.  Later, in Section \ref{sec:experimental}, we will demonstrate typical levels of sparsity achieved by common feedforward models.

\begin{figure}
    \centering
    \includegraphics[trim = 0 0 0 20,clip, width=\textwidth]{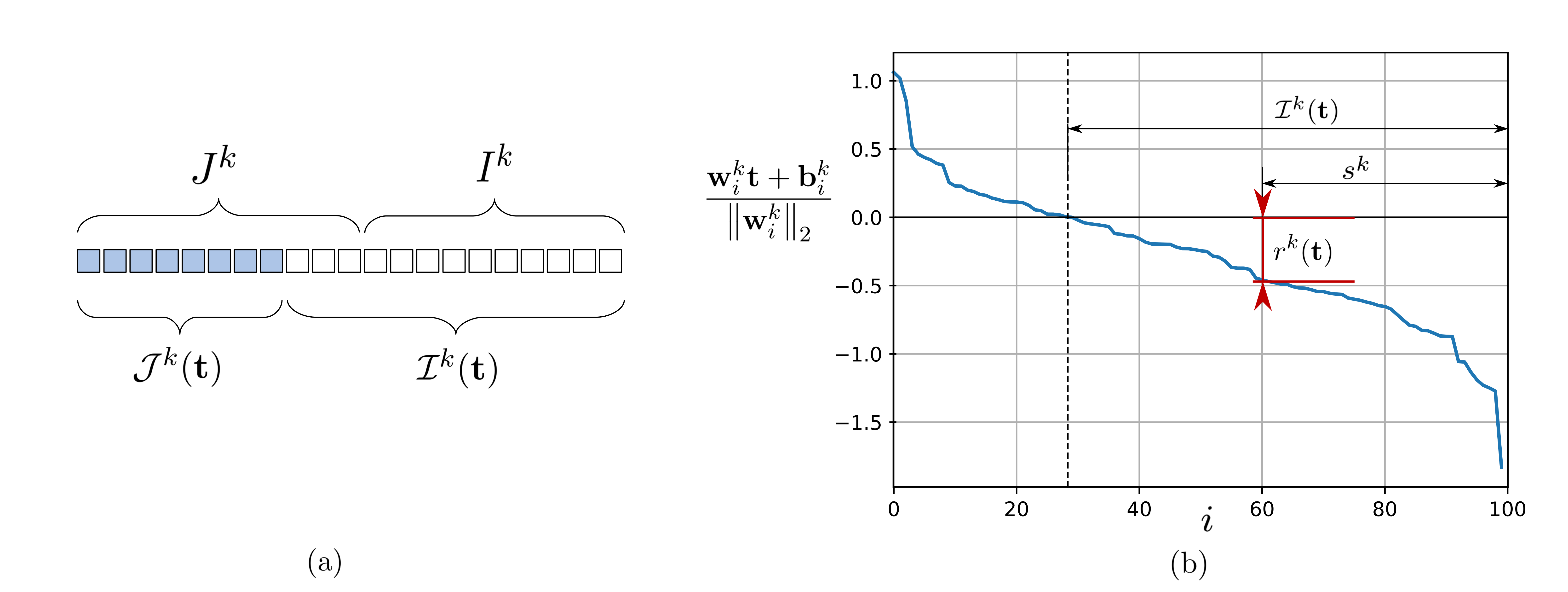}
    \caption{(a) Illustration of the sets $\mathcal{J}^k(\vc{t})$, $\mathcal{I}^k(\vc{t})$, as well as $I^k$ and $J^k$, for a given intermediate input $\sigma(\W^k \vc{t}+\vc{b}^k)$. Colored squares represent non-zero elements, ordered here without loss of generality. (b) Illustration of the radius $\rinp^{(k)}(\vc{t}, \vc{s}^{(k)})$ for a one layer neural network, given the (sorted) values of the normalized pre-activations.  }
    \label{fig:supports_diagram} 
\end{figure}


Since $\Phi^{(k)}$ is an affine map composed with a \relu operator, the map $\Phi^{(k)}$ is Lipschitz with constant $\|\W^k\|_2$. In the following lemma, we move beyond this global characterization and define its sparse local radius as the maximum energy of a perturbation $\bm{\gamma} \in \mathbb{R}^{d^{k-1}}$ under which there exists at least one common inactive index set $I^k$ of size $s^k$ for both $\Phi^{(k)}(\vc{t})$ and $\Phi^{(k)}(\vc{t}+\bm{\gamma})$. 
Along with the local radius function, we also define a specific inactive set in the output that is guaranteed to withstand any input perturbation.
 
\begin{lemma}
\label{lemma: fnn-layer-lip}
Let $\vc{s}^{(k)} = (s^{k-1}, s^k)$ denote the input-output sparsity levels for $\Phi^{(k)}$, so that  $s^{k-1}\leq d^{k-1}-\|\vc t\|_0$ and $s^{k}\leq d^k-\|\Phi^{(k)}(\vc t)\|_0$. The representation map $\Phi^{(k)}$ is SLL w.r.t. its input with stable inactive index set $I^k$, sparse local radius function $\rinp^{(k)}$ and sparse local Lipschitz scale function $\linp^{(k)}$ defined as, 
\begin{align*}
 \displaystyle I^{k}(\vt, \vc{s}^{(k)}) &:= \argmax_{\substack{I \subseteq \mathcal{I}^k(\vc{t}),\; |I| = s^k}}~ \min_{i \in I}~ \frac{\left \lvert \Wvec{k}{i} \vc{t} + \vb^k_i \right \rvert}{\norm{\Wvec{k}{i}}_2},  \quad 
 \rinp^{(k)}(\vc{t}, \vc{s}^{(k)}) := 
\min_{i \in I^k}~ \frac{\left \lvert \Wvec{k}{i} \vc{t} + \vb^k_i \right \rvert}{\norm{\Wvec{k}{i}}_2} \\
\linp^{(k)}(\vc{t}, \vc{s}^{(k)}) &:= \max_{\substack{I^{k-1} \subseteq \mathcal{I}^{k-1}(\vc{t}),\;\; |I^{k-1}|=s^{k-1}}}~ \norm{\mathcal{P}_{J^k, J^{k-1}}(\W^k)}_2.
  \end{align*}
where, $J^k = (I^k(\vt, \vc{s}^{(k)}))^c, J^{k-1} = (I^{k-1})^c$ are index sets of size $(d^k-s^k)$ and $(d^{k-1}-s^{k-1})$.
Further the feedforward map has monotone ordering. 
\end{lemma}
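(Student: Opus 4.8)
The plan is to verify \Cref{def: lip-input} directly for the layer map $\Phi^{(k)}(\vt)=\sigma(\W^k\vt+\vb^k)$, taking the output inactive set $I_{out}$ of the definition to be the proposed $I^k(\vt,\vc{s}^{(k)})$, the radius $r$ to be $\rinp^{(k)}(\vt,\vc{s}^{(k)})$, and the Lipschitz scale $l$ to be $\linp^{(k)}(\vt,\vc{s}^{(k)})$, and then to check the three inequalities of \Cref{def: monotone-ord}. Fix $\vt$ and a valid pair $\vc{s}^{(k)}=(s^{k-1},s^k)$; the hypotheses $s^{k-1}\le d^{k-1}-\norm{\vt}_0$ and $s^{k}\le d^{k}-\norm{\Phi^{(k)}(\vt)}_0$ make $\mathcal{I}^{k-1}(\vt)$ and $\mathcal{I}^k(\vt)$ large enough for the $\argmax$, $\min$, and $\max$ in the statement to be well-defined. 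It is convenient to resolve the $\argmax$ defining $I^k$ as a ``top-$s^k$'' selection: list the coordinates of $\mathcal{I}^k(\vt)$ in decreasing order of $\lvert\Wvec{k}{i}\vt+\vb^k_i\rvert/\norm{\Wvec{k}{i}}_2$, break ties by a fixed total order on $[d^k]$, and keep the first $s^k$. Then the sets $I^k(\vt,(s^{k-1},s^k))$ form a nested family as $s^k$ grows and do not depend on $s^{k-1}$, which is what drives the monotone ordering below.

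First I would establish the sparsity-preservation half of \Cref{def: lip-input}. Let $\bm{\gamma}\in\mathbb{R}^{d^{k-1}}$ satisfy $\norm{\bm{\gamma}}_2\le\rinp^{(k)}(\vt,\vc{s}^{(k)})$. For every $i\in I^k$ we have $\Wvec{k}{i}\vt+\vb^k_i\le0$, and since $\rinp^{(k)}$ is a minimum over $I^k$ we get $\norm{\bm{\gamma}}_2\le\lvert\Wvec{k}{i}\vt+\vb^k_i\rvert/\norm{\Wvec{k}{i}}_2$; combining with Cauchy--Schwarz gives $\lvert\Wvec{k}{i}\bm{\gamma}\rvert\le\norm{\Wvec{k}{i}}_2\norm{\bm{\gamma}}_2\le\lvert\Wvec{k}{i}\vt+\vb^k_i\rvert$, hence $\Wvec{k}{i}(\vt+\bm{\gamma})+\vb^k_i\le0$. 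Therefore $\mathcal{P}_{I^k}(\Phi^{(k)}(\vt+\bm{\gamma}))=\mathbf{0}=\mathcal{P}_{I^k}(\Phi^{(k)}(\vt))$, so both $\Phi^{(k)}(\vt)$ and $\Phi^{(k)}(\vt+\bm{\gamma})$ are supported on $J^k=(I^k)^c$. For the Lipschitz half, $\sigma$ acts entrywise and is $1$-Lipschitz, and the output difference vanishes off $J^k$, so
\[
\norm{\Phi^{(k)}(\vt+\bm{\gamma})-\Phi^{(k)}(\vt)}_2
=\norm{\mathcal{P}_{J^k}\big(\Phi^{(k)}(\vt+\bm{\gamma})-\Phi^{(k)}(\vt)\big)}_2
\le\norm{\mathcal{P}_{J^k}(\W^k\bm{\gamma})}_2 .
\]
If in addition $\bm{\gamma}$ shares a common inactive set $I^{k-1}\subseteq\mathcal{I}^{k-1}(\vt)$ of size $s^{k-1}$ with $\vt$ (the role of $I_{in}$ in \Cref{def: lip-input}), then $\bm{\gamma}$ is supported on $J^{k-1}=(I^{k-1})^c$ and $\mathcal{P}_{J^k}(\W^k\bm{\gamma})=\mathcal{P}_{J^k,J^{k-1}}(\W^k)\,\mathcal{P}_{J^{k-1}}(\bm{\gamma})$, so the right-hand side is at most $\norm{\mathcal{P}_{J^k,J^{k-1}}(\W^k)}_2\norm{\bm{\gamma}}_2$. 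As the admissible $I^{k-1}$ is not known in advance, maximizing over all of them produces exactly $\linp^{(k)}(\vt,\vc{s}^{(k)})\norm{\bm{\gamma}}_2$, which finishes the verification of \Cref{def: lip-input}.

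For the monotone ordering I would use one elementary fact --- shrinking either index set of a submatrix cannot increase its operator norm, i.e. $\norm{\mathcal{P}_{J,K}(M)}_2\le\norm{\mathcal{P}_{J',K'}(M)}_2$ whenever $J\subseteq J'$ and $K\subseteq K'$, since selecting a coordinate subset is a norm-one map --- together with the nested ``top-$s$'' structure of $I^k$. The radius conditions are then immediate: $\rinp^{(k)}(\vt,(s^{k-1},s^k))$ does not depend on $s^{k-1}$, so Radius condition~1 holds with equality; and increasing $s^k$ enlarges the nested set $I^k$, which can only shrink the minimum defining $\rinp^{(k)}$, giving Radius condition~2. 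For the Lipschitz condition, increasing $s^k$ enlarges $I^k$ and hence shrinks $J^k$; increasing $s^{k-1}$ shrinks every feasible $J^{k-1}$, and moreover any admissible $I^{k-1}$ of size $s_2^{k-1}$ contains an admissible subset of size $s_1^{k-1}<s_2^{k-1}$, so the maximum of $\norm{\mathcal{P}_{J^k,J^{k-1}}(\W^k)}_2$ over feasible $I^{k-1}$ can only decrease. Combined with the submatrix fact, $\linp^{(k)}$ is coordinatewise non-increasing in $\vc{s}^{(k)}$, which is the Lipschitz condition.

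I expect the main obstacle to be the index-set bookkeeping in the Lipschitz step: making the reduction to the output support $J^k$ and the factorization $\mathcal{P}_{J^k}(\W^k\bm{\gamma})=\mathcal{P}_{J^k,J^{k-1}}(\W^k)\,\mathcal{P}_{J^{k-1}}(\bm{\gamma})$ rigorous, and carefully tracking that the common inactive set an adversary may exploit can be \emph{any} size-$s^{k-1}$ subset of $\mathcal{I}^{k-1}(\vt)$ --- which is precisely why the $\max$ over $I^{k-1}$ must appear in $\linp^{(k)}$. For the monotone ordering the only delicate point is pinning the $\argmax$ down to a single nested family across all $s^k$; once that is granted, the submatrix monotonicity is routine and I would not belabor it.
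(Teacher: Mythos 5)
Your proof is correct and follows essentially the same route as the paper's: establish stability of the inactive set $I^k$ via the normalized pre-activation bound, reduce the output difference to the support $J^k$ using the entrywise $1$-Lipschitzness of $\sigma$, factor through $\mathcal{P}_{J^k,J^{k-1}}(\W^k)$ using the input sparsity, and then verify monotone ordering from the independence of $I^k$ on $s^{k-1}$ and the nesting of the ``top-$s^k$'' sets. Your treatment of the Lipschitz monotonicity via the submatrix-norm fact is in fact slightly more explicit than the paper's, whose displayed inequalities for that condition contain a transcription slip.
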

\addition{\begin{proof}
Fix any layer input $\vt$ and sparsity level $\vc{s}^{(k)}$. The set $I^k$ defined above is inactive for the representation $\Phi^{(k)}(\vt)$. Further by the definition of the local radius function $\rinp^{(k)}$, 
for each inactive index $i \in I^k$, $\Wvec{k}{i} \vc{t} + \vb^k_i \leq - \norm{\Wvec{k}{i}}_2\cdot \rinp^{(k)}(\vc{t}, \vc{s}^{(k)})$. 
Let $\vttil$ be a perturbed input that is within the sparse local radius $\|\vttil - \vt\|_2 \leq \rinp^{(k)}(\vc{t}, \vc{s}^{(k)})$ and further shares a sparsity pattern $\mathcal{P}_{I^{k-1}}(\vc t) = \mathcal{P}_{I^{k-1}}(\vttil) = \bm{0}$ for some $I^{k-1}$. 
We shall show that the set $I^k$ of size $s^k$ is also inactive for the perturbed representation $\Phi^k(\vttil)$, for any $i\in I^k$, 
\begin{equation*}
    \Wvec{k}{i} \vttil + \vb^k_i  
   = \big( \Wvec{k}{i}\vc{t} + \vb^k_i \big) + \Wvec{k}{i} (\vttil-\vt) \leq - \norm{\Wvec{k}{i}}_2\cdot \rinp^{(k)}(\vc{t}, \vc{s}^{(k)}) + \norm{\Wvec{k}{i}}_2 \norm{\vttil-\vt}_2 \leq 0.
\end{equation*}
Thus, we have shown the existence of a common inactive set for both the original and the perturbed representation. To bound the distance between the representations, note that 
\begin{equation*}
	\norm{\Phi^{(k)}(\vttil) - \Phi^{(k)}(\vc{t})}_2
	= \norm{ \act{\W^k\vttil + \vb^k} - \act{\W^k \vc{t}+\vb^k}}_2 
	\leq \norm{ \mathcal{P}_{J^k, J^{k-1}}(\W^k) \left(\vttil - \vc{t}\right)}_2. 
\end{equation*}
The second inequality above stems from ignoring common inactive sets.
To see that the $\rinp$ and $\linp$ have monotone ordering, for input sparsity levels $s_1^{k-1} \leq s_2^{k-1}$ and any fixed output sparsity level $s \in [d^k]$, we have, $I^k(\vt, (s_1^{k-1},s)) = I^k(\vt, (s_2^{k-1},s))$ by definition, and hence  
\begin{equation*}
\rinp(\vt, (s_1^{k-1},s)) = \rinp(\vt, (s_2^{k-1}, s)), \; \quad 
\linp(\vt, (s_1^{k-1},s)) \geq \linp(\vt,  (s_1^{k-1},s)). 
\end{equation*}
Similarly, for a fixed $s \in [d^{k-1}]$, and $s_1^{k} \leq s_2^{k}$, note that $I^k(\vt, (s,s_1^k)) \subseteq I^k(\vt, (s,s_2^k))$, and hence 
\begin{equation*}
\rinp(\vt, (s, s_1^{k})) \geq \rinp(\vt, (s, s_2^k)), \; \quad 
\linp(\vt, (s, s_1^{k})) \geq \linp(\vt,  (s, s_1^{k})). 
\end{equation*}
\end{proof}
}
It is worthwhile stressing the implications of the local radius function $\rinp^{(k)}$ for the intermediate feedforward layer $\Phi^{(k)}$. 
\addition{An analogous expression for this quantity can be obtained by considering the (normalized) vector $\vc{q}^k(\vt) := \left[ \frac{  \Wvec{k}{i} \vc{t} + \vb^k_i }{\norm{\Wvec{k}{i}}_2}  \right]_{i=1}^{d^k}$ of pre-activations,
\begin{equation}\label{eq: normalized-q}
I^k(\vt, \vc{s}^{(k)}) := \textsc{Top-k}(- \vc{q}^k(\vt), s^k), 
\quad \rinp^{(k)}(\vc{t}, \vc{s}^{(k)}) = \relu\left(\textsc{sort}(- \vc q^k(\vt), s^k)\right).
\end{equation}
Here for a vector $\vt$ and index $j$, $\textsc{Top-k}(\vt,j)$ is the index set of the top $j$ entries and $\textsc{sort}(\vt, j)$ is the $j^{th}$ largest entry in $\vt$. The above expression also reveals that the computation of the sparse local radius can be easily incorporated into the forward pass of neural networks (with an additional normalization and sort operation). 
The radius quantifies the minimal distance of a neuron in the ``most inactive set'' $I^k$ (of size $s^k$) to becoming active. As illustrated in \Cref{fig:supports_diagram}(b), 
$r^{(k)}_{\mathrm{inp}}(\vc{t}, \vc{s}^{(k)})$ 
is a \emph{decreasing} function of $s^k$. 
While \Cref{lemma: fnn-layer-lip} studies sensitivity to $\ell_2$ perturbations, one can easily extend this to any $\ell_p$-norm perturbation. To do so, it suffices to compute the vector $\vc{q}^k(\vt)$ 
by normalizing with a $\ell_q$-norm (with $1/q+1/p=1$). The corresponding Lipschitz scale is then $\norm{\mathcal{P}_{J^k, J^{k-1}}(\W^k)}_{p\rightarrow p}$. 
We now present the robustness certificate that combines \Cref{lemma:cert-rob-gen-loc,lemma: sparse-local-lip-composition,lemma: fnn-layer-lip}.} 
\begin{corollary}\label{corollary:cert-rob-fnn-loc}
Consider a trained depth-$K$ feedforward neural network $h \in  \HNNK$. 
Let $\vc{s}$  be a fixed choice of sparsity levels at each layer so that $s^{k}\leq d^k-\|\Phi^{(k)}(\vc t)\|_0$, and $\vc{s}^{(k)}$ be the corresponding layer-wise input-output sparsity levels. \addition{The cumulative sparse local radius $r^{[K]}$ and local Lipschitz scale $l^{[K]}$ 
are defined as,
\begin{equation*}
   \rinp^{[K]}(\x, \vc{s}) := \min_{1\leq k \leq K}
   \frac{
   \mathrm{ReLU}\left(\textsc{sort}(- \vc q^k(\Phi^{[k-1]}(\x) ), s^k)\right),
   }{\prod_{n=1}^k \norm{\mathcal{P}_{J^n, J^{n-1}}(\W^n)}_2}, \quad~ 
   \linp^{[K]}(\x, \vc{s}) := \prod_{k=1}^K 
   \norm{ \mathcal{P}_{J^k, J^{k-1}}(\W^k)}_2  
\end{equation*}
}
The predicted label remains unchanged, i.e. $\hat{y}(\x+\bdel) = \hat{y}(\x)$, whenever $\norm{\bdel}_2 \leq \rcert(\x, \vc{s})$,
\[
\rcert(\x,\vc{s}):= \min \left\{r^{[K]}_{\mathrm{inp}}(\x,  \vc{s}) ,~  \frac{\mathcal{M}(h(\x), \hat{y}(\x))}{2 \norm{\A}_2 \cdot l^{[K]}_{\mathrm{inp}}(\x,\vc{s}) } \right \}.
\]
\end{corollary}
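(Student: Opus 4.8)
The plan is to assemble \Cref{corollary:cert-rob-fnn-loc} directly from the three ingredients the excerpt has already provided, with essentially no new computation. First I would invoke \Cref{lemma: fnn-layer-lip} on each intermediate map $\Phi^{(k)}(\vc t) = \sigma(\W^k \vc t + \vb^k)$: this establishes that every layer is SLL w.r.t. its input, with the stable inactive set $I^k(\vt, \vc s^{(k)})$, the layer-wise radius $\rinp^{(k)}(\vt, \vc s^{(k)}) = \mathrm{ReLU}\!\left(\textsc{sort}(-\vc q^k(\vt), s^k)\right)$ (using the normalized-preactivation form from \Cref{eq: normalized-q}), and the layer-wise Lipschitz scale $\linp^{(k)}(\vt, \vc s^{(k)}) = \norm{\mathcal{P}_{J^k,J^{k-1}}(\W^k)}_2$. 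Then I would feed these into \Cref{lemma: sparse-local-lip-composition} with the layer input at step $n$ being $\Phi^{[n-1]}(\x)$: the composed map $\Phi^{[K]}$ is SLL w.r.t. inputs with
\[
\linp^{[K]}(\x, \vc s) = \prod_{k=1}^K \norm{\mathcal{P}_{J^k, J^{k-1}}(\W^k)}_2, \qquad
\rinp^{[K]}(\x, \vc s) = \min_{1\le k\le K} \frac{\rinp^{(k)}\!\left(\Phi^{[k-1]}(\x), \vc s^{(k)}\right)}{\prod_{n=1}^{k-1}\norm{\mathcal{P}_{J^n, J^{n-1}}(\W^n)}_2},
\]
which is exactly the displayed expression once the numerator is rewritten via \Cref{eq: normalized-q} and one notes $\linp^{[0]} \equiv 1$ (empty product), so the $k$-th denominator is $\prod_{n=1}^{k-1}\norm{\cdot}_2$, matching $\prod_{n=1}^{k}$ in the statement up to the convention that the $k$-th factor in that product is cancelled — I would double-check this indexing carefully since it is the one place a genuine off-by-one could hide.

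Next, since $h \in \HNNK$ has the representation-linear form $h(\x) = \A\,\Phi^{[K]}(\x)$ and $\Phi^{[K]}$ is now known to be SLL w.r.t. inputs, I would apply \Cref{lemma:cert-rob-gen-loc} with the sparsity vector $\vc s = (0, \vc s^{[K]})$ — i.e. $s^0 = 0$ so there is no sparsity constraint on the input perturbation, as \Cref{alg: greedy-sparse} also enforces by returning $\hat s^0 = 0$. That lemma immediately yields: whenever $\mathcal M(h(\x), \hat y(\x)) > 0$ and $\norm{\bdel}_2 \le \rcert(\x,\vc s)$ with
\[
\rcert(\x, \vc s) = \min\!\left\{\rinp^{[K]}(\x, \vc s),\ \frac{\mathcal M(h(\x), \hat y(\x))}{2\norm{\A}_2 \cdot \linp^{[K]}(\x, \vc s)}\right\},
\]
the predicted label is unchanged, which is precisely the claim. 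The only substantive point to verify is that the input-sparsity bookkeeping in \Cref{lemma: sparse-local-lip-composition} — which tracks a common inactive set $I^{k-1}$ at each intermediate layer — is compatible with imposing no constraint at the very first layer; this is fine because the intermediate inactive sets are produced internally by the $\relu$ maps rather than assumed on the external input, so the only externally-visible sparsity level that must be set to zero is $s^0$.

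The main obstacle, such as it is, is bookkeeping rather than mathematics: making sure the nested products in $\rinp^{[K]}$ and $\linp^{[K]}$ have consistent index ranges with \Cref{lemma: sparse-local-lip-composition}, that the ``layer input'' at which each $\rinp^{(k)}, \linp^{(k)}$ is evaluated is $\Phi^{[k-1]}(\x)$ (not $\Phi^{[k]}(\x)$), and that the index sets $J^n = (I^n(\Phi^{[n-1]}(\x), \vc s^{(n)}))^c$ appearing in the operator-norm factors are the ones selected by \Cref{lemma: fnn-layer-lip} at the correct layer input — with the convention $J^0 := [d^0]$ from the text. I would present this as a short proof that simply chains the three earlier results and then unwinds notation, rather than re-deriving any inequality; if desired one can add the remark that \Cref{alg: greedy-sparse} (correct and, by \Cref{lemma: fnn-layer-lip}, maximal since each feedforward map has monotone ordering) provides the constructive choice of $\vc s$ that, combined with the binary search of \Cref{subsubsec:optrad}, computes (an approximation to) the optimal certified radius $r^*(\x)$.
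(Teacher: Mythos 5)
Your proof is correct and follows the same chain as the paper's own appendix argument: \Cref{lemma: fnn-layer-lip} establishes SLL of each layer, \Cref{lemma: sparse-local-lip-composition} composes, and with $s^0=0$ (so $J^0=[d^0]$) the intermediate inactive sets $I^{k-1}$ are fixed by the previous layer's application of \Cref{lemma: fnn-layer-lip} rather than maximized over, giving the clean product $\linp^{[K]}(\x,\vc s)=\prod_{k=1}^K\norm{\mathcal{P}_{J^k,J^{k-1}}(\W^k)}_2$; the certificate then follows from \Cref{lemma:cert-rob-gen-loc}. Your off-by-one concern is well-founded and should not be waved away with a cancellation story: \Cref{lemma: sparse-local-lip-composition} puts $\linp^{[k-1]}(\x,\vc s^{[k-1]})=\prod_{n=1}^{k-1}\norm{\mathcal{P}_{J^n,J^{n-1}}(\W^n)}_2$ (empty product for $k=1$) in the $k$-th denominator, while the corollary as printed has $\prod_{n=1}^{k}$, and nothing in the numerator $\mathrm{ReLU}\!\left(\textsc{sort}(-\vc q^k(\Phi^{[k-1]}(\x)),s^k)\right) = \rinp^{(k)}(\Phi^{[k-1]}(\x),\vc s^{(k)})$ cancels the extra factor --- so the printed radius has one extra (positive) factor in the denominator and is strictly smaller than what \Cref{lemma: sparse-local-lip-composition} delivers, hence still a valid but unnecessarily conservative certificate; the paper's appendix proof derives $\linp^{[K]}$ explicitly but never restates $\rinp^{[K]}$, so this looks like an indexing slip that you should resolve explicitly rather than gloss over.
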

The proof can be found in \Cref{app: corollary:cert-rob-fnn-loc}. \addition{Since feedforward networks 
have monotone ordering, for each $\nu > 0$, 
\Cref{alg: greedy-sparse}  %
provides optimal sparsity vector $\hat{\vc{s}}$ for certification.} 
\subsection{Discussion}\label{subsec: cert-rob-further}
For each input $\x$, the inactive index sets $I^k$ of size $s^k$ in \Cref{lemma: fnn-layer-lip} are chosen to maximize the local radius $\rinp^{(k)}(\x)$.
The certificate in \Cref{corollary:cert-rob-fnn-loc} reflects a trade-off between the sparsity of the representations at each intermediate layer $s^k$ (via the reduced operator norms) and $\mathcal{M}(h(\x), \hat{y}(\x))$, the classification margin in feature space. 
Before moving on, we summarize a few key remarks about our approach.

\paragraph{Reduced Model Perspective}\label{subsubsec: cert-rob-reducedmodel}
At each input $\x$, for sparsity levels $s^k$  
and their chosen index sets
$J^k$ of size $d^k-s^k$, we can define a reduced neural network $\Phi^{[K]}_{\text{red}} : \mathbb{R}^{d^0} \rightarrow \mathbb{R}^{d^K-s^K}$, 
\begin{equation}\label{eq:reducedcertnetwork}
    \Phi^{[K]}_{\text{red}}(\x) :=  
     \act{\W^K_{\text{red}} \;\act{
     \W^{K-1}_{\text{red}}
     \cdots \act{ \W^1_{\text{red}} \;\mathcal{P}_{J^0}(\x) + \vb^1_{\text{red}}} \cdots 
     + \vb^{K-1}_{\text{red}}
     }
     + \vb^K_{\text{red}}} 
\end{equation}
where $\W^k_{\text{red}} \in \mathbb{R}^{(d^k-s^k)\times (d^{k-1}-s^{k-1})}$ are defined as the sub-matrices of the parameters of $\Phi^{(k)}$ at specific active sets, i.e. $\W^k_{\text{red}} = \mathcal{P}_{J^k, J^{k-1}}(\W^k)$ -- and similarly for the biases.
\cref{corollary:cert-rob-fnn-loc} essentially shows that, at each input $\x$, the feedforward neural network $\Phi^{[K]}$ is equivalent to a particular reduced feedforward neural network $\Phi^{[K]}_{\text{red}}$ in a local neighborhood around input $\x$.
That is, for all perturbations $\bdel$ such that $\norm{\bdel}_2 \leq \rinp^{[K]}(\x, \vc{s}^{[k]})$, the following holds, 
\begin{equation}
\label{eq:reduced_equivalence}
    \Phi^{[K]}(\x + \bdel) = \Phi^{[K]}_{\text{red}}(\x+\bdel).
\end{equation}
The reduction of the active weights in the network locally can be seen as a form of input-specific pruning of the neural network. 
Importantly, this observation goes beyond the statement that a feedforward neural network is locally \textit{linear} in a neighborhood of $\x$. Making this observation would be too stringent, as the size of this neighborhood could be arbitrarily small. Instead, the equivalence in Eq. \eqref{eq:reduced_equivalence} holds for the \emph{nonlinear} function $\Phi^{[K]}_{\text{red}}$. Within the specified neighborhood, the activation patterns might change, but only in the complement of the sets $I^k$. The definition of these sets provides a controllable knob via the sparsity requirements.

\paragraph{Comparison to Related Work}\label{subsubsec: cert-rob-relatedwork}
These observations are related to the analysis based on max-affine operators  in \cite{Balestriero2019TheGO}, providing a partitioning of the input space $\cX$ based on successive feedforward layers. 
That work shows an effective way to compute the distance to the partition boundary, and this can be seen as a version of the local radius function we have defined \emph{only} when the sparsity level is set to the exact number of inactive neurons $|\mathcal{I}^{k}(\x)|$ for each input in each layer. 
When there exists a row $\Wvec{k}{j}$ that is nearly active, i.e. $-\xi < \Wvec{k}{j}\Phi^{[k-1]}(\x) + \vb^k_j < 0$ for small $\xi \approx 0$,
the distance from $\x$ to the input partition is near zero. 
In this case, the flexibility for different levels of sparsity in our analysis is crucial, allowing us to expand the active set and increase the allowable radius. 

The work in \cite{pmlr-v70-cisse17a} makes the empirical observation that \textit{Parseval networks} --loosely speaking, those with operator norms close to 1-- result in better robustness. Our results show that, in fact, this is not necessary as long as the operator norms of the \emph{reduced} matrices are close to one.
In particular, increasing depth can have a dramatic impact on the local Lipschitz scale $\linp$ if the reduced linear map is contractive while the original map is not, {\it i.e.}  $\norm{\mathcal{P}_{J^k,J^{k-1}}(\W^k)} < 1 < \norm{\W^k}_2$. 
More generally, our results expose an inefficiency in approaches that directly compute the Lipschitz constant of the full feedforward network. Our measure of sensitivity that accounts for \emph{locality} and \emph{sparsity} are at least as good as global measures (and potentially much tighter). 
Note that one could use any algorithm for estimating the Lipschitz constant of a neural network \cite{ doi:10.1137/19M1272780, Fazlyab2019EfficientAA, Gmez2020LipschitzCE, Scaman2018LipschitzRO, Weng2018TowardsFC, Zhang2018EfficientNN} applied to the reduced model,  $\Phi^{[K]}_{\text{red}}$, in order to estimate $\linp^{[K]}(\x, \vc{s}^{[K]})$ efficiently. 
Lastly, the authors in \cite{Mehta2012OnTS, sulam2020adversarial} study the case of supervised sparse coding, performing a similar sensitivity analysis for the hypothesis class $\cHssc$ focusing on a local radius threshold (or ``\emph{encoder gap}'') that preserves the support, or sparsity level, of the representation obtained under corruption. 
The robustness certificate  developed in \cite[Theorem 5.1]{sulam2020adversarial} is equivalent to the application of \Cref{lemma:cert-rob-gen-loc} for the class $\cH_{\text{SSC}}$ (see \Cref{app: lemma-sulam-adv} for full details). Thus, our work generalizes results in \cite{sulam2020adversarial}. 

\paragraph{Dependence on Input}\label{subsubsec: Dependence on Input}
The results presented above are input-specific, and require the computation 
of the operator norms of the reduced sub-matrices.
In many settings, it might be more relevant to have a similar notion for a set of inputs instead. For each layer, this can be done by searching for the worst case sub-matrix of $\W^k$ of size $(d^k-s^k, d^{k-1}-s^{k-1})$ via an extension of the well-studied notion of 
\textit{babel function} \cite{tropp2003improved}.
\begin{definition} (Reduced Babel Function)
For any matrix $\W \in \mathbb{R}^{d_1 \times d_2}$, we define the reduced babel function at row sparsity level $s_1 \in \{0, \ldots, d_1-1\}$ and column sparsity level $s_2 \in \{0, \ldots, d_2-1\}$ as, 
\[
\mu_{s_1,s_2}(\W) := \underset{\substack{J_1 \subset [d_1],\\ |J_1|=d_1-s_1}}{\max} \;
\max_{j \in J_1} 
\Bigg[
\sum_{\substack{i \in J_1,\\ i \neq j}}\;
\underset{\substack{J_2 \subseteq [d_2]\\ |J_2| = d_2 - s_2}}{\max}
\frac{\lvert \mathcal{P}_{J_2}(\vc{w}_i) \mathcal{P}_{J_2}(\vc{w}_j)^T \rvert}{\norm{\mathcal{P}_{J_2}(\vc{w}_i)}_2\norm{\mathcal{P}_{J_2}(\vc{w}_j)}_2} \Bigg],
\]
the maximum cumulative mutual coherence between a reference row in $J_1$ of size $(d_1-s_1)$ and any other row in $J_1$, each restricted to any subset of columns $J_2$ of size\footnote{
When $s_1 = d_1-1, |J_1| = 1$, we simply define $\mu_{(s_1,s_2)}(\W) := 0$.
} $(d_2-s_2)$. 
\end{definition}
The reduced babel function is computationally tractable\footnote{Replacing the $\norm{\mathcal{P}_J(\vc{w}_i)}_2$ in the denominators of the definition with $\norm{\W}_{2,\infty}$ vastly reduces the complexity of evaluating the reduced babel function and all subsequent lemmas still hold.} albeit more expensive to compute than the babel function.
The additional flexibility is showcased in the following result.
\begin{lemma}
\label{lemma: bound-submatrix-norm}
For any matrix $\W \in \mathbb{R}^{d_1 \times d_2}$, 
the operator norm of any non-trivial\footnote{That is $0\leq s_1 \leq d_1-1$ and $0 \leq s_2 \leq d_2-1$.} sub-matrix indexed by sets $J_1 \subseteq [d_1]$ of size $(d_1-s_1)$ and $J_2\subseteq [d_2]$ of size $(d_2-s_2)$ can be bounded as, 
$\norm{\mathcal{P}_{J_1, J_2} (\W)}_2 \leq \sqrt{1+\mu_{s_1, s_2}(\W)} \cdot \norm{\W}_{2,\infty}.$
\end{lemma}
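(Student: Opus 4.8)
The plan is to bound the operator norm $\norm{\mathcal{P}_{J_1,J_2}(\W)}_2$ by the largest eigenvalue of the associated Gram matrix and then control that eigenvalue via a Gershgorin-type argument applied to the normalized rows. Write $\M := \mathcal{P}_{J_1,J_2}(\W) \in \mathbb{R}^{(d_1-s_1)\times(d_2-s_2)}$, so that $\norm{\M}_2^2 = \lambda_{\max}(\M \M^T)$. The entries of the Gram matrix $G := \M\M^T$ are $G_{ij} = \mathcal{P}_{J_2}(\vc{w}_i)\mathcal{P}_{J_2}(\vc{w}_j)^T$ for $i,j \in J_1$. It is cleaner to first normalize: let $D$ be the diagonal matrix with entries $\norm{\mathcal{P}_{J_2}(\vc{w}_i)}_2$ for $i \in J_1$, and work with $\tilde{G} := D^{-1} G D^{-1}$, whose diagonal entries are all $1$ and whose off-diagonal entries are the cosine similarities $\frac{\mathcal{P}_{J_2}(\vc{w}_i)\mathcal{P}_{J_2}(\vc{w}_j)^T}{\norm{\mathcal{P}_{J_2}(\vc{w}_i)}_2\norm{\mathcal{P}_{J_2}(\vc{w}_j)}_2}$.

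First I would apply the Gershgorin circle theorem to $\tilde{G}$: every eigenvalue lies in some disc centered at $1$ with radius $R_i := \sum_{j \in J_1, j\neq i} \bigl|\tilde{G}_{ij}\bigr|$, so $\lambda_{\max}(\tilde{G}) \leq 1 + \max_{i\in J_1} R_i$. The key step is then to bound $\max_{i\in J_1} R_i$ by $\mu_{s_1,s_2}(\W)$. This follows almost by definition: for each fixed $i$, $R_i = \sum_{j \in J_1, j\neq i} \frac{|\mathcal{P}_{J_2}(\vc{w}_i)\mathcal{P}_{J_2}(\vc{w}_j)^T|}{\norm{\mathcal{P}_{J_2}(\vc{w}_i)}_2\norm{\mathcal{P}_{J_2}(\vc{w}_j)}_2}$, and bounding each summand by the inner $\max$ over $J_2'$ of size $d_2-s_2$ (which is legitimate since our particular $J_2$ has exactly that size, hence is feasible) and then taking the outer $\max$ over the reference row $j$ — wait, one must be careful with the quantifier order: in the definition the inner maximization over $J_2$ is taken \emph{after} fixing the reference row, so each term in $R_i$ is at most $\max_{J_2': |J_2'|=d_2-s_2} \frac{|\mathcal{P}_{J_2'}(\vc{w}_i)\mathcal{P}_{J_2'}(\vc{w}_j)^T|}{\norm{\mathcal{P}_{J_2'}(\vc{w}_i)}_2\norm{\mathcal{P}_{J_2'}(\vc{w}_j)}_2}$, and summing gives $R_i \leq \sum_{j\in J_1, j\neq i}(\cdots) \leq \mu_{s_1,s_2}(\W)$ once we also note our $J_1$ is feasible for the outermost $\max$. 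Hence $\norm{\M}_2^2 = \lambda_{\max}(G) = \lambda_{\max}(D\tilde{G}D) \leq \norm{D}_2^2 \cdot \lambda_{\max}(\tilde{G})$ — no, that last step needs care too, since $D\tilde{G}D$ is not a similarity transform.

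To handle the denormalization cleanly I would instead bound directly: $\norm{\M}_2^2 = \sup_{\norm{\vc{u}}_2=1} \vc{u}^T G \vc{u}$, and writing $\vc{v} = D\vc{u}$ we have $\vc{u}^T G \vc{u} = \vc{v}^T \tilde{G} \vc{v} \leq \lambda_{\max}(\tilde{G}) \norm{\vc{v}}_2^2 = \lambda_{\max}(\tilde{G}) \sum_{i\in J_1} \norm{\mathcal{P}_{J_2}(\vc{w}_i)}_2^2 u_i^2 \leq \lambda_{\max}(\tilde{G}) \cdot \norm{\W}_{2,\infty}^2$, where the last inequality uses $\norm{\mathcal{P}_{J_2}(\vc{w}_i)}_2 \leq \norm{\vc{w}_i}_2 \leq \norm{\W}_{2,\infty}$ and $\norm{\vc{u}}_2=1$. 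Combining with $\lambda_{\max}(\tilde{G}) \leq 1 + \mu_{s_1,s_2}(\W)$ yields $\norm{\mathcal{P}_{J_1,J_2}(\W)}_2 \leq \sqrt{1+\mu_{s_1,s_2}(\W)}\cdot\norm{\W}_{2,\infty}$. I would also note the trivial edge case $s_1 = d_1-1$ (a single row), where $\mu = 0$ by convention and the bound reduces to $\norm{\vc{w}_i}_2 \leq \norm{\W}_{2,\infty}$, which holds. The only mild subtlety — and the part most worth writing carefully — is matching the nested $\max$/$\sum$ structure of the reduced babel function definition to the Gershgorin row sums without accidentally swapping a $\max$ and a $\sum$; everything else is a routine Gram-matrix estimate.
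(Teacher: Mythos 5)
Your proof is correct and takes essentially the same route as the paper: both reduce to the largest eigenvalue of the Gram matrix $\mathcal{P}_{J_1,J_2}(\W)\,\mathcal{P}_{J_1,J_2}(\W)^T$, apply the Gerschgorin disk theorem, and bound the disk center and radius using $\norm{\W}_{2,\infty}$ and the reduced babel function. The only cosmetic difference is that you normalize the Gram matrix before applying Gerschgorin and then re-scale (your worry was unfounded, incidentally: $\lambda_{\max}(D\tilde{G}D)\leq\norm{D}_2^2\lambda_{\max}(\tilde{G})$ does hold for PSD $\tilde{G}$, and your careful substitution $\vc{v}=D\vc{u}$ is exactly its proof), whereas the paper applies Gerschgorin to the unnormalized Gram matrix and absorbs the $\norm{\mathcal{P}_{J_2}(\w_i)}_2\norm{\mathcal{P}_{J_2}(\w_j)}_2$ factors into the $\norm{\W}_{2,\infty}^2$ bound afterward.
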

\begin{proof}\addition{
By the Gerschgorin Disk Theorem, for any eigenvalue $\lambda$ of $\mathcal{P}_{J_1, J_2}(\W)$, there exists index $j$ such that $\lambda_i$ lies in the Gerschgorin disks centered at $\langle\mathcal{P}_{J_2}(\Wvec{}{j}),\mathcal{P}_{J_2}(\Wvec{}{j}) \rangle$ with radius $\sum_{i \neq j} \langle\mathcal{P}_{J_2}(\Wvec{}{j}),\mathcal{P}_{J_2}(\Wvec{}{i}) \rangle$.
The conclusion follows by bounding the radius using the reduced babel function $\mu_{s_1, s_2}(\W)$. 
For a complete proof see \Cref{app: lemma: bound-submatrix-norm}.}
\end{proof}
A key feature of the above lemma is the bound on the operator norm of a sub-matrix that only depends on the size of the chosen index sets. 
\addition{For non-trivial sparsity levels, the proposed bound often improves on the naive bound $\norm{\mathcal{P}_{J_1, J_2}(\W)}_2 \leq \norm{\W}_2$.}
Similar notions have been proposed in \cite{aberdam2020and}.
The upper bounds from \Cref{lemma: bound-submatrix-norm} can be computed offline and used in place of $\norm{\mathcal{P}_{J^k,J^{k-1}}(\W^k)}_2$ for quick certification for a new sample with appropriate sparsity. 

\section{Robust Generalization}\label{sec: rob-gen}
In this section we move beyond deterministic robustness certificates and provide a generalization bound for the robust risk of SLL hypotheses that only has a mild dependence on the energy of the adversary. 
We do so by studying the sensitivity of a predictor to simultaneous changes in input \emph{and} parameters. 

Recall that a representation-linear hypothesis class $\cH$ contains predictors of the form $h(\cdot) = \A \Phi_\W (\cdot)$ where the parameters $(\A, \W) \in \cA \times \cW$. 
Until further instantiated, we assume that the parameter sets $\cA$ and $\cW$ 
are bounded with respect to embedded norms, 
such that $\norm{\A}_2 \leq \mathsf{M}_{\cA}$ and $\norm{\W}_\cW \leq \mathsf{M}_{\cW}$. 
We define the norm 
of a representation-linear predictor $h$ to be 
$\norm{h}_\cH := \max \left\{\frac{\norm{\A}_2}{\mathsf{M}_\cA}, \frac{\norm{\W}_\cW}{\mathsf{M}_\cW}\right\}.$
For any pair predictors $h$ and $\hat{h}$ with weights $(\A, \W)$ and $(\Ahat, \What)$, we let $\Phi$ and $\hat{\Phi}$ be their corresponding representation maps. 
The distance between predictors is measured by the induced distance metric\footnote{The induced distance metric is, 
$\norm{\hat{h}-h}_\cH := \max \left \{ \frac{\norm{\Ahat-\A}_2}{\mathsf{M}_\cA}, \frac{\norm{\What-\W}_\cW}{{\mathsf{M}_\cW}}\right\}$.}.
For a (globally) Lipschitz predictor $h$, one can obtain a uniform bound (see \Cref{app: thm: unif-gen-bound}) on the generalization error that is $\mathcal{O}\left(\sqrt{\frac{\ln(\mathcal{N}(\frac{1}{m},\cH)) + \ln(\frac{1}{\alpha})}{m}} + \frac{\Lpar}{m} \right)$, with probability $1-\alpha$, and $\mathcal{N}(\epsilon, \cH)$ is the proper covering number of $\cH$ w.r.t induced distance metric at resolution $\epsilon$. In the analysis that follows, we refine the second term by exploiting data-dependent properties of a trained predictor.

\subsection{Sparse local Lipschitz w.r.t. Parameters}\label{subsec: SLL-param}
We start by characterizing a class of functions that preserve sparsity in the representations.

\begin{definition}
\label{def: lip-param}
Let $h$ be a representation-linear hypothesis such that $\Phi(\x)$ is $s$-sparse at $\x$.
The hypothesis $h$ is $s$-sparse local Lipschitz w.r.t. parameters at $\x$ 
if there exists an inactive index set $I$ of size $s$, 
a  local radius $r\geq 0$
and a local Lipschitz scale $l\geq 0$
such that, for any perturbed predictor $\hat{h}\in\mathcal B^{\mathcal H}_r(h)$, 
\[
\mathcal{P}_{I}(\hat{\Phi}(\x)) = \mathcal{P}_{I}(\Phi(\x)) = \mathbf{0} \quad \land \quad 
\norm{\hat{h}(\x) - h(\x)}_2 \leq l \norm{\hat{h}-h}_\cH.
\]
Furthermore, $h$ is sparse local Lipschitz w.r.t. parameters if for every $\x\in\mathcal X$ and all appropriate sparsity levels $s$, $h$ is $s$-sparse local Lipschitz (w.r.t. parameters) at $\x$, with corresponding local radius $\rpar(\x,s)$ and local Lipschitz scale $\lpar(\x,s)$.
\end{definition}

As before, the local neighborhood is defined both in terms of a norm radius and a sparsity level, and both $r(\x,s)$ and $l(\x,s)$ decrease with increasing $s$.
Importantly, for Lipschitz hypotheses $h$, there always exist $s$ so that $h$ is $s$-sparse local Lipschitz. As the reader can likely foresee, the utility of \Cref{def: lip-param} is that, when $\rpar(\x, s) > 0$, the sparsity level $s$ indicates the existence of a stable inactive set of indices $I$ of size $s$, so that one can restrict the analysis to the (extended) active set $J = [d]\backslash I$ in such a way that
\begin{equation*}
\|\hat{h}-h\|_\cH \leq \rpar(\x, s) \implies \hat{h}(\x) = \Ahat \hat{\Phi}(\x) = \mathcal{P}_{[C], J }(\Ahat)\; \mathcal{P}_{J }(\hat{\Phi}(\x)).
\end{equation*}


\noindent The range of sparsity levels for each input is in $\{0,\ldots, p - \norm{\Phi(\x)}_0\}$, and this recovers Lipschitz functions at the trivial choice of sparsity level $s=0$ with $\rpar(\x,0)=\infty$ and 
$\lpar(\x, 0) := \Lpar$. 

In order to study generalization, one must extend this property over a finite set of samples $\cV \subset \cX$. 
For a certain radius threshold $\epsilon > 0$, among all feasible sparsity levels, we choose the optimal $s$ that minimizes the worst-case local Lipschitz scale across the set $\cV$ while guaranteeing a sufficiently large local radius, that is
\begin{equation}\label{eq:optimal_set_sparsity}
s^*(\cV,\epsilon) := \underset{s}{\argmin} \max_{\x\in\cV}     ~\lpar(\x, s)~~ \text{s.t.}~~\epsilon \leq \min_{\x\in\cV}
 \rpar(\x, s).
\end{equation}
Note that $s=0$ is always feasible for the optimization problem defined.
We can now define the \textit{sparse regularity} of a predictor $h$ w.r.t. reference set $\cV$ and a fixed radius threshold $\epsilon$ as
\[
\mathcal L(h,\cV,\epsilon) := \underset{\x \in \cX}{\max}~ \lpar(\x, s^{\star}(\cV, \epsilon))~~\text{s.t.}~~\epsilon \leq 
 \rpar\left(\x, s^{\star}(\cV, \epsilon)\right).
\]
This sparse regularity measures the worst-case local Lipschitz scale at any input in $\cX$ with a sufficiently large local radius at the reference sparsity level $s^*(\cV, \epsilon)$.
In the unfavorable (\emph{not sparse}) case $s^*(\cV, \epsilon) = 0$, the corresponding sparse regularity $\mathcal L(h,\cV,\epsilon) = \Lpar$, the global Lipschitz constant. 
Thus a generalization bound that relying on $\mathcal L(h,\cV,\epsilon)$ is, at worst, dependent on the global Lipschitz constant $\Lpar$, but potentially much tighter. 
We now present our generalization bound for SLL predictors, which makes use of \emph{unlabeled} samples, $\samp_U$, in addition to the training set $\samp_T$, both with $m$ samples. The former will be used to inform the sparse regularity of the predictor, while the latter is used to fit the parameters of the models. 

\begin{theorem}
\label{thm: generalization-sll}
With probability at least $(1-\alpha)$ over the choice of i.i.d training sample $\samp_T$ and unlabeled data $\samp_U$ each of size $m$, 
for any predictor $h \in \cH$ with parameters $(\A,\W)$,
the generalization error, \addition{with a $b$-bounded and  $\mathsf{L}_{\mathrm{loss}}$-Lipschitz loss}, is bounded by
\begin{equation}
R\left(h\right) \leq  
\hat{R}\left(h\right)
+ \mathcal{O}\left(
b \sqrt{\frac{\ln\left(\mathcal{N}(\frac{1}{m},\cH)\right) + \ln(\frac{2}{\alpha})}{2m}} 
+  \frac{\Lloss \cdot \mathcal{L}(h, \samp_T \cup \samp_U,\frac 1{2m})}{m}\right).
\end{equation}
\end{theorem}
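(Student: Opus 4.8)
The plan is to sharpen the textbook covering-number generalization bound, replacing its $\mathcal{O}(\Lpar/m)$ discretization term by the data-dependent sparse regularity $\mathcal{L}(h,\samp_T\cup\samp_U,\tfrac1{2m})$. Fix the resolution $\epsilon:=\tfrac1{2m}$ and let $\mathcal{C}\subset\cH$ be a proper $\epsilon$-cover of $\cH$ in the induced metric, which we may take of size $\mathcal{N}(\tfrac1m,\cH)$ (the discrepancy between $\tfrac1m$ and $\tfrac1{2m}$ is a convention in the covering number, immaterial to the stated $\mathcal{O}(\cdot)$). Since $\z\mapsto\ell(h_0,\z)$ lies in $[0,b]$ for each fixed center $h_0$, Hoeffding's inequality together with a union bound over $\mathcal{C}$ with failure budget $\alpha/2$ gives, with probability at least $1-\tfrac{\alpha}{2}$ over $\samp_T$,
\[
\sup_{h_0\in\mathcal{C}}\big(R(h_0)-\hat{R}(h_0)\big)\;\le\; b\sqrt{\frac{\ln\mathcal{N}(\tfrac1m,\cH)+\ln(2/\alpha)}{2m}}.
\]
For an arbitrary $h\in\cH$, picking $h_0\in\mathcal{C}$ with $\norm{h-h_0}_\cH\le\epsilon$, it then remains to control the two discretization errors in the decomposition $R(h)-\hat{R}(h)=\big(R(h)-R(h_0)\big)+\big(R(h_0)-\hat{R}(h_0)\big)+\big(\hat{R}(h_0)-\hat{R}(h)\big)$.

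These errors I would control through sparse local Lipschitzness w.r.t.\ parameters (\Cref{def: lip-param}) at the data-driven sparsity level $s:=s^{\star}(\samp_T\cup\samp_U,\epsilon)$. By \eqref{eq:optimal_set_sparsity}, $\rpar(\x,s)\ge\epsilon=\norm{h-h_0}_\cH$ for every $\x\in\samp_T\cup\samp_U$; hence for each training point $\z_i$ we have $h_0\in\mathcal{B}^{\cH}_{\rpar(\x_i,s)}(h)$, so $\norm{h(\x_i)-h_0(\x_i)}_2\le\lpar(\x_i,s)\,\epsilon\le\mathcal{L}(h,\samp_T\cup\samp_U,\epsilon)\,\epsilon$ since $\x_i$ lies in the feasible set of the maximum defining $\mathcal{L}$. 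With the $\Lloss$-Lipschitz loss this yields $|\hat{R}(h)-\hat{R}(h_0)|\le \Lloss\,\mathcal{L}(h,\samp_T\cup\samp_U,\epsilon)/(2m)$. For $|R(h)-R(h_0)|$ I would split $\cX$ into the ``good'' region $\{\x:\rpar(\x,s)\ge\epsilon\}$, on which the identical estimate applies, and the ``bad'' region $\cX_{\mathrm{bad}}:=\{\x\in\cX:\rpar(\x,s)<\epsilon\}$, on which the parameter-sparsity structure need not hold and I would only use $|\ell(h,\z)-\ell(h_0,\z)|\le b$. Taking expectations gives $R(h)-R(h_0)\le \Lloss\,\mathcal{L}(h,\samp_T\cup\samp_U,\epsilon)/(2m)+b\,\cD_\cX(\cX_{\mathrm{bad}})$, where $\cD_\cX$ is the marginal of $\cD_\cZ$ on $\cX$. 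Summing the three terms produces exactly the claimed bound, provided $\cD_\cX(\cX_{\mathrm{bad}})=\mathcal{O}(\ln(1/\alpha)/m)$, since then $b\,\cD_\cX(\cX_{\mathrm{bad}})$ is dominated by the Hoeffding term.

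The final step — bounding $\cD_\cX(\cX_{\mathrm{bad}})$ via the \emph{unlabeled} sample $\samp_U$ — is the reason $\samp_U$ enters the statement, and is the part I expect to be the main obstacle. For a \emph{fixed} level $v\in\{0,\dots,p\}$ the set $\cX_{\mathrm{bad}}(v):=\{\x:\rpar(\x,v)<\epsilon\}$ is sample-independent, $\cX_{\mathrm{bad}}(0)=\emptyset$ because $\rpar(\x,0)=\infty$, and the choice $s=s^{\star}(\samp_T\cup\samp_U,\epsilon)$ forces the i.i.d.\ $\x$-parts of $\samp_U$ to avoid $\cX_{\mathrm{bad}}(s)$; hence if $\cD_\cX(\cX_{\mathrm{bad}}(v))>t$ then $\samp_U$ misses $\cX_{\mathrm{bad}}(v)$ with probability at most $(1-t)^m\le e^{-tm}$, and a union bound over the $p+1$ possible levels with $t=\tfrac1m\ln\tfrac{2(p+1)}{\alpha}$ controls $\cD_\cX(\cX_{\mathrm{bad}}(s))$ with probability $\ge 1-\tfrac{\alpha}{2}$ over $\samp_U$. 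The delicate point is that $s$, and hence $\cX_{\mathrm{bad}}(s)$, depends on the predictor $h$, which ranges over the infinite class $\cH$; to make the estimate uniform I would combine the union over sparsity levels with a reduction to the finite cover $\mathcal{C}$, observing that the bad region of $h$ is, up to the $\tfrac1{2m}$ cover resolution, contained in that of its neighbour $h_0\in\mathcal{C}$, so it suffices to intersect the event over $h_0\in\mathcal{C}$ — replacing $\ln(p+1)$ by $\ln((p+1)\mathcal{N}(\tfrac1m,\cH))$ in $t$, still absorbed into the first term. Intersecting this $\samp_U$-event with the independent Hoeffding $\samp_T$-event, each of probability $\ge 1-\tfrac{\alpha}{2}$, yields the overall $1-\alpha$ guarantee.
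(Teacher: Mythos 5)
Your decomposition into the three terms (two discretization errors and a cover--concentration term) is exactly the paper's, and so are the deterministic bound on $\hat{R}(h)-\hat{R}(h_0)$ and the Hoeffding-plus-union-bound over the cover. Where you genuinely diverge is in bounding $\cD_\cX(\cX_{\mathrm{bad}}(s^\star))$: the paper defines the empirical CDF $F_U(t;s)=\frac1m\sum_{\x_i\in\samp_U}\mathbb{1}\{\rpar(\x_i,s)\le t\}$, applies a DKW/uniform-convergence inequality (uniform in $t$, for the fixed $h$ built from $\samp_T$), observes that $F_U(\epsilon;s^\star)=0$ by construction, and pays a rate of $\mathcal{O}(\sqrt{\ln m/m})$; your ``hitting-set'' argument instead notices that $\samp_U$ avoids $\cX_{\mathrm{bad}}(s^\star)$ by the very definition of $s^\star$, and a multiplicative Chernoff bound plus a union over the at most $p+1$ sparsity levels gives $\cD_\cX(\cX_{\mathrm{bad}}(s^\star))\le\frac1m\ln\frac{2(p+1)}{\alpha}$, a tighter $\mathcal{O}(\ln p/m)$ rate. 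Both rates are swallowed by the stated $\mathcal{O}(\cdot)$, but your route is the more elementary of the two and actually strictly sharper, since it avoids the $8\sqrt{2\ln m/m}$ contribution and needs no Glivenko--Cantelli machinery.

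The one place where your write-up introduces a real problem is the final paragraph. You flag that $s^\star$ and hence $\cX_{\mathrm{bad}}(s^\star)$ depend on $h$, and propose to restore uniformity over $\cH$ by a further union bound over the cover, ``observing that the bad region of $h$ is, up to the $\frac1{2m}$ cover resolution, contained in that of its neighbour $h_0\in\mathcal{C}$.'' That containment is not justified and is false in general: $\rpar(\cdot,s)$ is not a Lipschitz functional of $h$ in the $\|\cdot\|_\cH$ metric, so an arbitrarily small perturbation of the parameters can shrink the sparse local radius at some $\x$ below $\epsilon$ (concretely, in the feedforward case a neuron with a tiny angular gap can switch activity under an infinitesimal weight change). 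Fortunately this extra step is not needed. The paper (read its own proof of Term~I) never claims uniformity over all of $\cH$ for the $\samp_U$-based estimate; it uses the standard convention that $h$ is a measurable function of the training sample $\samp_T$ only, so that $\{\rpar(\x_i,s)\}_{\x_i\in\samp_U}$ is i.i.d.\ for that $h$, and then unions only over the $\mathcal{O}(p)$ sparsity levels. Your hitting-set argument works cleanly under the same convention: for each fixed $v$, $\cX_{\mathrm{bad}}(v)$ is a $\samp_U$-independent set, and since $\samp_U$ is guaranteed to avoid $\cX_{\mathrm{bad}}(s^\star)$ the union over $v$ already controls the relevant event. Drop the final cover-intersection step and your proof is correct, and in fact slightly cleaner than the paper's.
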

\addition{This result follows from standard arguments by constructing an $\epsilon$ cover of the hypothesis space and bounding the stability of the function's outputs on this cover, and can be considered a generalization of the bound presented in \cite{sulam2020adversarial}. The complete proof can be found in \Cref{app: thm: generalization-sll}. 
}
Note that to compute the sparse regularity of $h$, $\rpar$ must be easy to compute at each input, and $\lpar$ must be regular enough to optimize over. 
The requirement of additional unlabeled data 
can be seen as a limitation, however this dependence is mild as it incurs in a linear increase in the number of training samples.

\subsection{Robust Sparse Local Lipschitz}\label{subsec: rob-SLL-param}

To extend \Cref{thm: generalization-sll} to the robust setting one needs to  characterize the parameter sensitivity of the predictor under corrupted inputs. Let $\nu > 0$ be the adversarial energy, recall that $\Lparnu$ the global Lipschitz constant, 
\[
\forall~ \hat{h}, h, \x, \quad \max_{\bdel \in \cB^{\cX}_{\nu}(\mathbf{0})} \norm{\hat{h}(\x+\bdel) - h(\x+\bdel)}_2 \leq \Lparnu \norm{\hat{h}-h}_\cH.
\]
Based on this, a uniform generalization bound, analogous to \Cref{thm: unif-gen-bound} but for the robust risk $R_{\mathrm{rob}}(h)$ can be readily established with a dependence of $\mathcal{O}(\Lparnu/m)$. 
To move beyond the global analysis we extend the sparse local Lipschitz property.
\begin{definition}
\label{def: lip-param-rob}
Let $h$ be a representation-linear hypothesis so that $\Phi(\x)$ is $s$-sparse at $\x$. 
The hypothesis $h$ is robust $s$-sparse local Lipschitz w.r.t. parameters at $\x$, 
if there exists an inactive index set $I$ of size $s$, a local radius $r\geq 0$, and a local Lipschitz scale $l\geq0$,
such that,
for any perturbed predictor $\hat{h}\in\mathcal B^{\mathcal H}_r(h)$ and any corruption $\bdel \in \cB^{\cX}_\nu(\mathbf{0})$, the index set $I$ remains inactive after input and parameter perturbations, and the distance between the predictor outputs are bounded, that is
\[
\mathcal{P}_{I}(\hat{\Phi}(\x+\bdel)) = \mathcal{P}_{I}(\Phi(\x+\bdel)) = \mathbf{0} \quad \land \quad 
\norm{\hat{h}(\x+\bdel) - h(\x+\bdel)}_2 \leq l \norm{\hat{h}-h}_\cH.
\]


Additionally, the hypothesis $h$ is robust sparse local Lipschitz w.r.t. parameters if $h$ is sparse local Lipschitz (w.r.t. parameters) for every $\x\in\mathcal X$ and any appropriate sparsity level $s$, with corresponding local radius $\rparnu(\x,s)$ and local Lipschitz scale $\lparnu(\x,s)$.
\end{definition}
For a robust sparse local Lipschitz $h$, at any input $\x$ and sparsity level $s$ where the predictor has nontrivial robust local radius (i.e. $\rparnu(\x, s) \geq 0$), there exists an inactive index set $I$ of size $s$ for the representation $\Phi(\x)$ that withstands simultaneous perturbations to inputs and parameters. Note that the chief difference between this and \Cref{def: lip-param} is that here the sensitivity is evaluated at the point $(\x+\bdel)$ while being a property of $h$ at $\x$. Indeed, a sparse local Lipschitz predictor is also robust sparse local Lipschitz, 
\begin{equation*}
 \rparnu (\x, s)  = \min_{\bdel \in \cB^{\cX}_{\nu}(\mathbf{0})}~ \rpar(\x + \bdel, s),\quad \lparnu (\x,s)  = \max_{\bdel \in \cB^{\cX}_{\nu}(\mathbf{0})}~ \lpar(\x + \bdel, s).
\end{equation*}
At the trivial sparsity level $s=0$, we simply let $\rparnu(\x,0) = \infty$ and the $\lparnu(\x, 0) = \Lparnu$, for any Lipschitz $h$.
Leveraging the controllable trade-off between sparsity levels and the local sensitivity we define the robust optimal level $s^*_{\mathrm{rob}}(\cV,\epsilon)$ and robust sparse regularity, 
\begin{align}\label{eq:robustsparseregularity}
s_{\mathrm{rob}}^*(\cV,\epsilon) &:= \underset{s}{\argmin} \max_{\x\in\cV}     ~\lparnu(\x, \vc s)~~ \text{s.t.}~~\epsilon \leq \min_{\x\in\cV}
 \rparnu(\x, \vc s).\\
\nonumber \mathcal L_{\mathrm{rob}}(h,\cV,\epsilon) &:= \underset{\x \in \cX}{\max}~ \lparnu(\x, s^{\star}_{\mathrm{rob}}(\cV, \epsilon))~~\text{s.t.}~~\epsilon \leq 
 \rparnu\left(\x, s^{\star}_{\mathrm{rob}}(\cV, \epsilon)\right).
\end{align}
Using these ideas,  the result in \Cref{thm: generalization-sll} can be extended to the robust setting 
(see \Cref{app: thm: robust-generaalization-sll}). 
We omit this and move on to our analysis for feedforward neural networks.

\subsection{Robust Generalization for Feedforward Neural Networks}
\label{subsec: rob-gen-fnn}

For simplicity, we only consider networks with zero bias (as in \cite{Bartlett2017SpectrallynormalizedMB, Nagarajan2019DeterministicPG, Neyshabur2017ExploringGI, Neyshabur2015NormBasedCC})\footnote{Results for networks with non-zero bias can also be derived from our analysis.}.
We consider depth-$(K+1)$ feedforward neural networks where the representation map $\Phi$ has layer weights $\{\W^k\}_{k=1}^K$.  
For notational convenience, we denote the classification weight in the final linear layer as $\W^{K+1}$ (in lieu of $\A$). 
For the reminder of this section, we consider a fixed set of constants $\{\mathsf{M}^k_{\cW}, \{\mathsf{M}^k_{s}\}\}_{k=1}^{K+1}$ that defines a hypothesis space $\cH^{K+1}$ with parameters in $ \prod_{k=1}^{K+1} \cW^k$ where,  
\begin{equation*}
\cW^k := 
\Bigg\{
\W \in \mathbb{R}^{d^k \times d^{k-1}} ~\Big|~ \norm{\W}_{2,\infty} \leq \mathsf{M}^k_{\cW}, \;\;\forall~ (s^k, s^{k-1}), \;\;  \mu_{s^k, s^{k-1}}(\W) \leq \mathsf{M}^k_{s^k}
\Bigg\},
\end{equation*}
while $\cW^{K+1} := \Bigg\{\W \in \mathbb{R}^{d^K \times C} ~\Big|~ \norm{\W}_{2,\infty} \leq \mathsf{M}^{K+1}_{\cW},\; \; \forall~s^K, \;  \mu_{s^K, 0}(\W) \leq \mathsf{M}^{K+1}_{s^K} \Bigg\}$.
The final classification weight space\footnote{For convenience, $\cW^{K+1}$ has been defined as a subset of $\mathbb{R}^{d^K \times C}$ rather than $\mathbb{R}^{C\times d^K}$.} accounts for the sparsity in the representation output as opposed to the output of the predictor.
In this manner, a predictor $h \in \cH^{K+1}$ is defined as $h(\x) = (\W^{K+1})^T \Phi^{[K]}(\x)$, where 
the representation map $\Phi^{[K]}$ is the composition of $K$ feedforward maps, so that
$\Phi^{[K]}(\x)
=\act{\W^K \act{\W^{K-1} \cdots \act{\W^1\x }}}$. 
While the weight spaces are constrained in the group norm, we define the following scaled norm $\norm{\cdot}_{\cW^k}$ fit to the purpose of measuring parameter perturbations, 
$\norm{\W}_{\cW^k} := \frac{\sqrt{d^k}}{\mathsf{M}^k_{\cW}} \cdot \norm{\W}_{2,\infty}$ 
such that $\norm{\W^k}_{\cW^k} \leq \sqrt{d^k}$ for any $\W^k \in \cW^k$. 
Based on the scaled norms, we define the norm 
of any feedforward network,
$
\norm{h}_{\cH^{K+1}} := \displaystyle \max_{1 \leq k\leq K+1}~ \norm{\W^k}_{\cW^k} %
$.
Additionally, the predictors in $\cH^{K+1}$ are constrained by the reduced babel function\footnote{Naturally, we only consider constraints that match the properties of the reduced babel function: since by definition $\mu_{(d^k-1,s)}(\W)=0$, we require
$\mathsf{M}^k_{d^k-1}=0$ for all layers.
Furthermore, for any $a,b : a \geq b$, we require $\mathsf{M}^k_a \leq \mathsf{M}^k_b$ to mirror the fact that $\mu_{a,s}(\W^k) \leq \mu_{b,s}(\W^k)$.} at each layer for all appropriate sparsity levels. 
As before, at each layer $1\leq k \leq K$, $\mathcal{I}^k(\x) := \{j \in [d^k] : \w^k_j \Phi^{[k-1]}(\x) \leq 0\}$ and $\bar{s}^k(\x) := |\mathcal{I}^k(\x)|$, denote the index set of all inactive rows and their sizes, respectively. We further let $\bar{\vc{s}}(\x) := \{\bar{s}^1(\x), \ldots, \bar{s}^K(\x)\}$. 
Although \Cref{def: lip-param-rob} only requires a scalar sparsity level corresponding to the representation output, $\Phi(\x)$,  
for the case of multi-layered neural networks we will refine this  definition by a vector of layer-wise sparsity levels $\vc{s} = \{s^0, s^1,\ldots, s^K\}$ that achieve the same goal of sparsity in each layer representation at the level $s^k$. Note that for a representation at a given point, $\Phi(\vc x)$, this latter vector denotes potential sparsity levels, whereas the previous $\bar{\vc s}(\x)$ denotes the maximal possible sparsity; i.e. $s^k\leq\bar{s}^k(\x)$ for all $k$.

The intermediate sparsity levels can improve the robust properties further. 
To quantify this phenomenon, for any sparsity vector $\vc{s}$, we define $\zeta^0(\vc{s}) := 1$ and for $1\leq k \leq K+1$,
$\zeta^k(\vc{s}) := \prod_{n=1}^k \mathsf{M}^n_{\cW}\sqrt{1+\mathsf{M}^n_{s^n}}$. 
Indeed, as per \Cref{lemma: bound-submatrix-norm}, $\zeta^k(\vc{s})$ provides an upper bound on the product of operator norms of reduced linear maps, i.e.
\begin{equation}
\zeta^k(\vc{s}) 
\geq \prod_{n=1}^k \sup_{\W^n \in \cW^{n}}\; \norm{\W^n}_{2,\infty}\sqrt{1+\mu_{s^n,s^{n-1}}(\W^n)}    
\geq \prod_{n=1}^{k} 
\underset{\substack{\W^n \in \cW^{n}, \\|J^{n}| = d^n-s^{n},\\|J^{n-1}| = d^{n-1}-s^{n-1}}}{\sup} \norm{\mathcal{P}_{J^n,J^{n-1}}(\W^n)}_2. 
\end{equation}
For 
any $\tilde{\vc{s}} \succeq \vc{s} \succeq \mathbf{0}$, 
$\zeta^k(\tilde{\vc{s}}) \leq \zeta^k(\vc{s}) \leq \zeta^k(\vc{0})$, where $\zeta^k(\vc{0})$ is an upper bound on the product of operator norms of the full linear maps $\W^k$. 
For the induced distance metric\footnote{For any two networks $h$ and $\hat{h}$ with weights $\{\W^k\}_{k=1}^K$ and $\{\W^k\}_{k=1}^{K+1}$ respectively, 
\[\norm{\hat{h}-h}_{\cH^{K+1}} := \max_{1\leq k\leq K+1}~ \norm{\What^k-\W^k}_{\cW^k}.\]} corresponding to $\norm{\cdot}_\cH$, we note the following \emph{robust} (global) Lipschitz of a neural network. 
\begin{lemma}\label{lemma: robust-global-lip-fnn} 
\addition{For a fully connected neural network with $K$ layers, $\Phi^{[K]}(\x)$, its robust global Lipschitz constant can be upper bounded by $\Lparnu \leq (K+1) \zeta^{K+1}(\vc{0}) \cdot (1+\nu)$.} 
\end{lemma}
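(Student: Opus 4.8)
The plan is to establish this bound by a layer-wise \emph{hybrid} (telescoping) argument over the weight matrices, using \Cref{lemma: bound-submatrix-norm} at the trivial sparsity level $s_1=s_2=0$ to convert group norms into operator norms. Fix two networks $h,\hat h\in\cH^{K+1}$ with weights $\{\W^k\}_{k=1}^{K+1}$ and $\{\What^k\}_{k=1}^{K+1}$, a clean input $\x\in\cX$, and a corruption $\bdel\in\cB^\cX_\nu(\mathbf 0)$; set $\vc z:=\x+\bdel$, so that $\norm{\vc z}_2\le\norm{\x}_2+\norm{\bdel}_2\le 1+\nu$. For $0\le j\le K+1$ let $g_j$ be the predictor that uses $\What^1,\dots,\What^j$ in its first $j$ layers and $\W^{j+1},\dots,\W^{K+1}$ thereafter, so that $g_0=h$ and $g_{K+1}=\hat h$, and decompose
\[
\hat h(\vc z)-h(\vc z)=\sum_{j=1}^{K+1}\bigl(g_j(\vc z)-g_{j-1}(\vc z)\bigr).
\]

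First I would bound a single summand. The maps $g_j$ and $g_{j-1}$ agree on the first $j-1$ layers (common intermediate vector $\hat\Phi^{[j-1]}(\vc z)$ built from $\What^1,\dots,\What^{j-1}$) and on layers $j+1,\dots,K+1$ (each $\sigma(\W^n\cdot)$ being $\norm{\W^n}_2$-Lipschitz since $\sigma$ is $1$-Lipschitz entrywise, the last being the linear map $\W^{K+1}$), and diverge only at layer $j$, which gives
\[
\norm{g_j(\vc z)-g_{j-1}(\vc z)}_2\le\Bigl(\prod_{n=j+1}^{K+1}\norm{\W^n}_2\Bigr)\,\norm{\What^j-\W^j}_2\,\norm{\hat\Phi^{[j-1]}(\vc z)}_2 .
\]
Then I would estimate each factor. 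Using $\sigma(0)=0$ and $|\sigma(t)|\le|t|$ entrywise, $\norm{\hat\Phi^{[j-1]}(\vc z)}_2\le\bigl(\prod_{n=1}^{j-1}\norm{\What^n}_2\bigr)\norm{\vc z}_2$; for any matrix in $\cW^n$, \Cref{lemma: bound-submatrix-norm} at $s_1=s_2=0$ gives $\norm{\cdot}_2\le\sqrt{1+\mu_{0,0}(\cdot)}\,\norm{\cdot}_{2,\infty}\le\mathsf{M}^n_\cW\sqrt{1+\mathsf{M}^n_0}$, so the prefix product is $\le\zeta^{j-1}(\vc 0)$ and the suffix product is $\le\zeta^{K+1}(\vc 0)/\zeta^j(\vc 0)$; and since the scaled norm obeys $\mathsf{M}^j_\cW\norm{\cdot}_{\cW^j}=\sqrt{d^j}\,\norm{\cdot}_{2,\infty}\ge\norm{\cdot}_2$, one gets $\norm{\What^j-\W^j}_2\le\mathsf{M}^j_\cW\norm{\What^j-\W^j}_{\cW^j}\le\mathsf{M}^j_\cW\norm{\hat h-h}_{\cH^{K+1}}$. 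Multiplying these and using $\zeta^j(\vc 0)=\zeta^{j-1}(\vc 0)\,\mathsf{M}^j_\cW\sqrt{1+\mathsf{M}^j_0}$, so that $\zeta^{j-1}(\vc 0)\mathsf{M}^j_\cW/\zeta^j(\vc 0)=1/\sqrt{1+\mathsf{M}^j_0}\le1$, each summand is $\le\zeta^{K+1}(\vc 0)(1+\nu)\norm{\hat h-h}_{\cH^{K+1}}$; summing the $K+1$ terms and taking the supremum over $h,\hat h,\x,\bdel$ yields $\Lparnu\le(K+1)\zeta^{K+1}(\vc 0)(1+\nu)$.

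The routine parts are the homogeneity/$1$-Lipschitz estimates for $\sigma$ and the telescoping bookkeeping, including the boundary cases $j=1$ (empty prefix, $\hat\Phi^{[0]}(\vc z)=\vc z$, $\zeta^0(\vc 0)=1$) and $j=K+1$ (empty suffix, divergence at the linear layer). The step requiring the most care is the asymmetric norm conversion: \Cref{lemma: bound-submatrix-norm} — hence the improving factor $\sqrt{1+\mathsf{M}^n_0}$ — applies to the individual weights $\W^n,\What^n\in\cW^n$, but \emph{not} to the difference $\What^j-\W^j$, which need not satisfy the babel constraint, so there one must settle for the crude $\norm{\cdot}_2\le\sqrt{d^j}\,\norm{\cdot}_{2,\infty}$. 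It is exactly this mismatch that causes the stray $\mathsf{M}^j_\cW$ from the divergence layer to cancel one factor of $\zeta$, producing the clean depth-linear bound rather than one carrying an extra $\sqrt{d^j}$; getting this cancellation right is the crux.
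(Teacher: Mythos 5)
Your proof is correct, and it rests on the same technical ingredients as the paper's: the operator-norm bound $\norm{\W^n}_2 \le \mathsf{M}^n_{\cW}\sqrt{1+\mathsf{M}^n_0}$ for weights in $\cW^n$, the cruder bound $\norm{\What^j-\W^j}_2 \le \sqrt{d^j}\norm{\What^j-\W^j}_{2,\infty} = \mathsf{M}^j_\cW \norm{\What^j-\W^j}_{\cW^j}$ for the difference (which, as you correctly note, cannot invoke the reduced babel constraint), and the cancellation $\zeta^{j-1}(\vc{0})\mathsf{M}^j_\cW/\zeta^j(\vc{0}) = 1/\sqrt{1+\mathsf{M}^j_0} \le 1$ that keeps the per-layer contribution down to a single factor of $\zeta^{K+1}(\vc{0})(1+\nu)$. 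The only difference is organizational: the paper proves, by induction on $k$, the intermediate statement $\norm{\hat\Phi^{[k]}(\x+\bdel)-\Phi^{[k]}(\x+\bdel)}_2 \le k\,\zeta^k(\vc{0})(1+\nu)\norm{\hat h-h}_{\cH^{K+1}}$, where the two triangle-inequality terms at each induction step are exactly the ``propagate the previous error'' and ``new divergence at layer $k$'' pieces; your hybrid decomposition $\hat h - h = \sum_j (g_j - g_{j-1})$ is the fully unrolled version of that same induction, with the suffix Lipschitz product $\prod_{n>j}\norm{\W^n}_2$ playing the role that the accumulated $\norm{\What^k}_2$ factor plays step-by-step in the paper. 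Both give the identical bound; the telescoping write-up makes the $K+1$ additive terms and the source of each $\zeta^{K+1}(\vc{0})(1+\nu)$ factor a bit more transparent, at the cost of juggling the three-factor prefix/divergence/suffix product explicitly.
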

\addition{
\textit{Proof sketch.}
    The proof is a simple application of the definitions above and operator norm inequalities. Given predictors $h, \hat{h} \in {\cH^{K+1}}$ with weights $\{\W^k\}$ and $\{\What^k\}$, we note that for $1\leq k \leq K$, for any layer weight matrix $\W^k \in \cW^k$,  $\norm{\W^k}_2 \leq \sqrt{1+\mathsf{M}^k_{0}} \cdot \mathsf{M}^k_{\cW},$
and further, $\norm{\What^k-\W^k}_2 \leq \mathsf{M}^k_{\cW} \norm{\hat{h}-h}_{\cH^{K+1}}.$ Similar inequalities hold for $K+1$. 
We then show that, at any layer $k \leq K$, the distance between the perturbed representations are bounded, $\norm{\hat{\Phi}^{[k]}(\x+\bdel) - \Phi^{[k]}(\x+\bdel)} \leq k \zeta^{k}(\vc{0}) \cdot (1+\nu) \cdot \norm{\hat{h}-h}_{\cH^{K+1}}$. The final proof follows by employing the upper bound to the operator norms, given by $\zeta^k$, and this latter bound of the representations at every layer. The full proof can be found in \Cref{app: lemma: robust-global-lip-fnn}.
}

Note that $\Lparnu$ is exponential in the network's depth and captures the worst case interaction between layer matrices, inputs and adversarial perturbations.
Frequently generalization bounds measure the sensitivity of a hypothesis class using $\Lparnu$ and hence $\zeta^{K}(\vc{0})$ \cite{Bartlett2017SpectrallynormalizedMB, Neyshabur2017ExploringGI}. 
To instead measure sensitivity using $\zeta^k(\vc{s})$,  
we need to characterize the inactive set at each layer, and hence we identify a \emph{critical angle} between the rows of $\w^k$ and the layer input.  
\begin{definition} (Critical Angular Distance)\label{def: crit-angle} 
\addition{Let $\mathsf{M}_{\cW}$ and $\mathsf{M}_{\mathcal{T}}$ be domain hyper-parameters.} Consider a matrix $\W \in \cW \subset \mathbb{R}^{p \times q}$ such that $\norm{\W}_{2,\infty} \leq \mathsf{M}_\cW$, and a vector $\vt \in \mathcal{T} \subset \mathbb{R}^{q}$ such that $\norm{\vt}_2 \leq \mathsf{M}_{\mathcal{T}}$. 
The angular distance\footnote{The term ``distance'' here is an abuse of notation. More precisely, the vector $\beta(\W,\vt)$ contains a distance value in each component. } between the matrix $\W$ and vector $\vt$ is defined to be the vector function $\beta : \cW \times \mathcal{T} \rightarrow [0,1]^{p}$,
\[
[\beta(\W, \vt)]_i := \frac{1}{\pi} \cdot \arccos \Bigg( \frac{\langle \Wvec{}{i},\vt\rangle}{\mathsf{M}_\cW \mathsf{M}_{\mathcal{T}}}  \Bigg), \quad \forall~ i \in [p].
\]
The critical angular distance $\theta : \cW \times \mathcal{T} \times [p] \rightarrow [0,1]$ at sparsity level $s$ is the $s^{th}$-largest entry, 
\[
\theta(\W, \vt, s) := \textsc{sort} \left( \beta(\W, \vt) , s\right).
\]
\end{definition}

\begin{wrapfigure}{R}{.4\textwidth}
    \centering
    \includegraphics[trim=10 10 10 40, width = .35\textwidth]{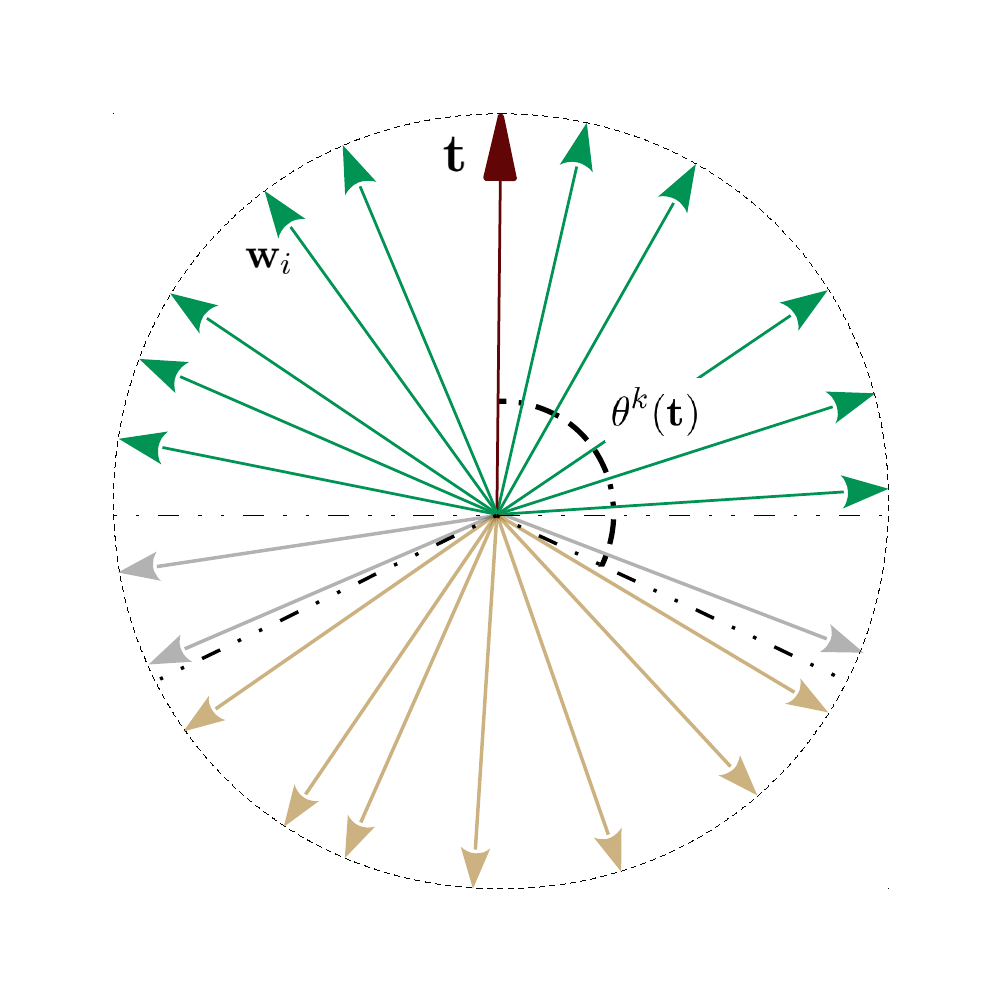}
    \caption{
    Illustration of the \emph{critical angular distance} for layer matrix $\W^k$ and input $\vc{t}$. Green weights are active 
    while 
    grey and orange ones are inactive. 
    The critical angle, $\pi\cdot \theta^k(\vc{t})$, is denoted by black dashed line.
    }
    \label{fig:crit_angle}
\end{wrapfigure}
Each component of $\beta(\W,\vt)$ quantifies the angular distance between a row $\Wvec{}{i}$ and $\vt$\footnote{Note that, naturally, $\langle \vc{w}_i,\vt\rangle = \mathsf{M}_\cW \mathsf{M}_{\cal T} \cos\left(\pi [\beta(\W, \vt)]_i\right)$. Furthermore, our definition represents a scaled version of the true angular distance since
\[
\Big\lvert\frac{\langle \vc{w}_i,\vt\rangle}{\mathsf{M}_{\cW} \mathsf{M}_{\mathcal{T}}}\Big\rvert \leq  \Big \lvert\frac{\langle \vc{w}_i,\vt\rangle}{\norm{\vc{w}_i}_2\norm{\vt}_2}\Big \rvert.
\]
}.
In turn, the critical angular distance $\theta(\W, \vt, s)$ represents the $s^{th}$-largest angular distance formed by any row of matrix $\W$ with the input $\vt$. 
A larger critical angular distance indicates that some set of $s$ inactive indices in $\sigma(\W \vt)$ is resilient to bounded perturbations in the input $\vt$ or weight vectors $\W$,
as illustrated in \cref{fig:crit_angle}.

For the case of multi-layered neural networks, a layer-wise angular distances and critical angular distance can be evaluated at each layer with domain hyper-parameters $\mathsf{M}^k_{\cW}$ and $\zeta^{k-1}(\mathbf{0})$, which we  
denote by $\beta^k(\x) := \beta(\W^k, \Phi^{[k-1]}(\x))$ and $\theta^k(\x, \vc{s}) := \textsc{sort}(\beta^k(\x), s^k)$.
Associated with the critical angular distance $\theta^k(\x, \vc{s})$, at layer $k$, there is an index set $I^k = \textsc{Top-k}(\beta^k(\x), s^k)$ such that, for each $i\in I^k$, 
\[
\frac{\Wvec{k}{i} \Phi^{[k-1]}(\x)}{\mathsf{M}^k_{\cW}\zeta^{k-1}\left({\vc{0}}\right)
} = 
\cos\left(\pi \cdot \beta^k(\x)_i \right)
\leq 
\cos{\left(\pi \cdot \theta^k(\x, \vc{s})\right)}.
\] 
Therefore, if $\cos{\left(\pi \theta^k(\x, \vc{s})\right)} < 0$ (i.e. if $\theta^k(\x, \vc{s}) \geq \frac{1}{2}$) the set $I^k$ is inactive for 
$\Phi^{[k]}(\x)$\footnote{
In the presence of appropriately scaled and nonzero bias $\vb^k$, the lower bound here becomes
\[
\theta^k(\x) \geq \frac{1}{\pi}\cdot \arccos\Bigg(\frac{-\vb^k_i}{\mathsf{M}^k_{\cW} \zeta^{k-1}\left({\vc{0}}\right)}\Bigg).
\]
}.
Thus, 
critical angular distance can capture the existence of a \emph{stable} inactive index sets. 

\begin{lemma}
\label{lemma: robust-local-lip-fnn}
A feedforward neural network $h \in \cH^{K+1}$ is $\vc s$-robust sparse local Lipschitz w.r.t. parameters, 
with scale $\lparnu(\x, \vc{s}) := (K+1) \zeta^{(K+1)}(\vc{s}) \cdot (1+\nu)$ and radius
\[\rparnu(\x, \vc{s}) := \min_{1\leq k \leq K} \frac{ \iota(\vc{s}) + 
\max\left \{0, -\cos(\pi\theta^k(\x,\vc{s}) - \nu\right\}}{k(1+\nu)}. \]
Here $\iota(\cdot)$ is the extended indicator\footnote{For all $\vc{s} \succ \mathbf{0}$, $\iota(\vc{s}) = 0$ and at the trivial choice sparsity levels, $\iota(\vc{0}) = \infty$.} 
 function of the positive orthant $\mathbb{R}^{K}_{+}$.
\end{lemma}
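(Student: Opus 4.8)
The plan is to establish the two required properties of \Cref{def: lip-param-rob}---stability of a common inactive set and boundedness of the predictor-output distance---under simultaneous perturbations to input and to parameters, by tracking how perturbations propagate layer by layer. Fix an input $\x$, a sparsity vector $\vc s = \{s^0, \ldots, s^K\}$, a perturbed predictor $\hat h$ with weights $\{\What^k\}$, and a corruption $\bdel$ with $\norm{\bdel}_2 \leq \nu$. At each layer $k$, let $I^k = \textsc{Top-k}(\beta^k(\x), s^k)$ be the candidate stable inactive set identified by the critical angular distance. The first step is to control the accumulated perturbation in the representation at layer $k$: define $e^k := \norm{\hat\Phi^{[k]}(\x+\bdel) - \Phi^{[k]}(\x)}_2$ and show by induction that $e^k \leq k\,\zeta^k(\vc s)\cdot(1+\nu)\cdot\norm{\hat h - h}_{\cH^{K+1}} + \zeta^k(\vc s)\cdot\nu$ or a comparable bound, using that each reduced layer map has operator norm at most $\mathsf M^k_{\cW}\sqrt{1+\mathsf M^k_{s^k}}$ (by \Cref{lemma: bound-submatrix-norm}), that $\norm{\What^k - \W^k}_2 \leq \mathsf M^k_{\cW}\norm{\hat h - h}_{\cH^{K+1}}$, and that $\sigma$ is $1$-Lipschitz. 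This mirrors the argument sketched for \Cref{lemma: robust-global-lip-fnn}, but restricted to the active sub-blocks $J^k = (I^k)^c$, which is what makes $\zeta^k(\vc s)$ rather than $\zeta^k(\vc 0)$ appear.

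The second step is the stability of the inactive sets. For $i \in I^k$, the normalized pre-activation of the clean predictor satisfies $\Wvec{k}{i}\Phi^{[k-1]}(\x)/(\mathsf M^k_{\cW}\zeta^{k-1}(\vc 0)) = \cos(\pi\beta^k(\x)_i) \leq \cos(\pi\theta^k(\x,\vc s))$. I would then bound the change in this pre-activation when we pass to $\What^k$ and $\hat\Phi^{[k-1]}(\x+\bdel)$: the change is at most $\norm{\What^k_i - \w^k_i}_2\cdot\norm{\hat\Phi^{[k-1]}(\x+\bdel)}_2 + \norm{\w^k_i}_2\cdot e^{k-1}$, and after normalizing this gives an additive term bounded (roughly) by $k(1+\nu)\,\norm{\hat h - h}_{\cH^{K+1}} + \nu$. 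Requiring the perturbed pre-activation to remain $\leq 0$ then forces $\norm{\hat h - h}_{\cH^{K+1}}$ to be no larger than the claimed radius, where $-\cos(\pi\theta^k(\x,\vc s)) - \nu$ is the ``budget'' available at layer $k$ after the adversary spends $\nu$, divided by $k(1+\nu)$; taking the minimum over $k$ gives $\rparnu(\x,\vc s)$, and the $\max\{0,\cdot\}$ and the extended indicator $\iota(\vc s)$ handle the degenerate cases ($\theta^k < 1/2$, or $\vc s = \vc 0$ which recovers the global statement of \Cref{lemma: robust-global-lip-fnn}). The Lipschitz scale $\lparnu(\x,\vc s) = (K+1)\zeta^{K+1}(\vc s)(1+\nu)$ then follows by applying the layerwise propagation bound all the way through layer $K+1$, now with the guarantee that all the $I^k$ are stably inactive so only the reduced blocks matter.

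I expect the main obstacle to be bookkeeping the normalization constants consistently across layers: the critical angular distance is defined with the \emph{worst-case} scale $\zeta^{k-1}(\vc 0)$ in the denominator (since it is a property of the domain, not of a single predictor), whereas the actual propagated representation $\hat\Phi^{[k-1]}(\x+\bdel)$ has norm controlled by $\zeta^{k-1}(\vc s)(1+\nu)$ or $\zeta^{k-1}(\vc 0)(1+\nu)$ depending on whether we have already certified the inactive sets at layers $<k$. Getting the induction to close---certifying the inactive set at layer $k$ requires the representation bound at layer $k-1$, which in turn relied on the inactive sets at layers $<k-1$---requires stating the inductive hypothesis carefully as a conjunction (``all of $I^1,\ldots,I^{k-1}$ are stably inactive \emph{and} $e^{k-1}$ satisfies the bound''), and verifying that the chosen radius $\rparnu$ is small enough to sustain every layer simultaneously, which is exactly why the $\min_{1\le k\le K}$ and the division by $k$ appear. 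The remaining steps are routine operator-norm manipulations. The full argument is deferred to the appendix.
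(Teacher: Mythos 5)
Your plan is essentially the paper's proof. The appendix (\Cref{app: fix1}) establishes the lemma by proving a stronger layer-indexed statement (\Cref{lemma: robust-local-lip-fnn-upto-k}) by induction, where the inductive hypothesis is exactly the conjunction you identify: simultaneous stability of the inactive sets $I^n$ for $n\leq k$ \emph{and} the representation-distance bound at layer $k$. The inactive-set stability is shown by bounding the perturbed pre-activations $\hat\w^k_i\hat\Phi^{[k-1]}(\x+\bdel)$ via a three-term split (clean weight on perturbed input, clean weight on the predictor difference, weight difference on perturbed representation), and the radius arises from requiring these to remain nonpositive. The operator-norm bounds on the reduced blocks come from \Cref{lemma: bound-submatrix-norm}, exactly as you propose. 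The ``main obstacle'' you anticipate---the mismatch between the $\zeta^{k-1}(\vc 0)$ appearing in the denominator of the angular distance and the $\zeta^{k-1}(\vc s)$ controlling the propagated representation norm---is resolved in the paper by observing that, precisely in the regime that matters, $\cos(\pi\theta^k(\x,\vc s))<0$, so that $\zeta^{k-1}(\vc 0)\cos(\pi\theta^k)\leq\zeta^{k-1}(\vc s)\cos(\pi\theta^k)$ because $\zeta^{k-1}(\vc 0)\geq\zeta^{k-1}(\vc s)$; this lets one uniformly factor out $\zeta^{k-1}(\vc s)$ and close the induction. One cosmetic difference: you track the cross-term $e^k = \norm{\hat\Phi^{[k]}(\x+\bdel)-\Phi^{[k]}(\x)}_2$, whereas the paper keeps the input-perturbation and parameter-perturbation contributions separate (bounding $\norm{\hat\Phi^{[k]}(\x+\bdel)-\Phi^{[k]}(\x+\bdel)}_2$ for the Lipschitz scale and handling input sensitivity as an additive term inside the pre-activation estimates). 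Your $e^k$ bookkeeping works for the inactive-set argument, but note that the final Lipschitz claim needs a bound of the form $\norm{\hat h(\x+\bdel)-h(\x+\bdel)}_2\leq\lparnu\norm{\hat h-h}_{\cH^{K+1}}$ with no additive $\nu$-dependent slack, so you will still need to isolate the parameter-only difference on perturbed inputs at the end rather than stop at $e^K$.
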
 
\addition{
\textit{Proof sketch.}  In a nutshell, this result can be shown by noting that for the defined radius $\rparnu(\x, \vc{s})$, 
the critical angular distance $\Theta^k(\x, \vc{s})$ at each layer is sufficiently large so that rows corresponding to 
strongly inactive set $I^k$ 
remain inactive upon perturbations to the model weights (by no more than $\rparnu(\x, \vc{s})$). One can then notice that since the strongly inactive sets at each layer are maintained, one can follow similar logic to \Cref{lemma: robust-global-lip-fnn} to obtain a Lipschitz scale dependent on the operator norms of the reduced weights at each every layer. The scaling factor $k \cdot (1+\nu)$ stems from requiring larger critical angle distance in last few layers to withstand the multiplicative effect of perturbation in composed predictors such as $\Phi^{[K]}$. 
The proof is simple, alas somewhat long, and so we defer the full version to 
\Cref{app: fix1}}

To interpret \Cref{lemma: robust-local-lip-fnn}, consider a fixed parameter radius $\epsilon$. If the angular distances of the farthest $s^k$ vectors at each layer $k$ is sufficiently large -- i.e., if the $s^k$ vectors have a sufficiently negative correlation with the input --  then they can withstand
input perturbation of magnitude $\nu$, 
and parameter perturbation of magnitude $\epsilon$,
to still remain inactive. 
If at input $\x$, the robust local radius $\rpar(\x, s) < \epsilon$ for all non-trivial sparsity levels $s>0$, then one cannot guarantee the preservation of a (non-trivial) inactive set at any layer. 
In this case, the distance between predictors outputs is bounded by 
$\Lparnu = \lparnu(\x, \vc{0})$. 

\paragraph{Reduced Model Perspective}\label{subsec: rob-gen-reduced-model}
If the robust sparse local radius $\rparnu(\x, \vc{s}) > 0$, 
\Cref{lemma: robust-local-lip-fnn} establishes 
the existence of stable inactivity of index sets $I^k = \textsc{Top-k}(\beta^k(\x), s^k)$ at each layer. 
In such an event, the effect of $\hat{h} \in \cH^{K+1}$ with perturbed parameters (within the radius) on any perturbed input $\x+\bdel$ can be reduced to a predictor $\hat{h}_{\mathrm{red}}(\x+\bdel)$ using only the reduced weight matrices $\What_{\mathrm{red}}^k  =\mathcal{P}_{J^k, J^{k-1}}(\What^k) \in \mathbb{R}^{(d^k-s^k) \times (d^{k-1}-s^{k-1})}$, 
\begin{equation*}
\hat{h}(\x +\bdel) 
= 
(\What^{K+1}_{\mathrm{red}})^T
  \act{\What^K_{\text{red}} \;
     \cdots \act{ \What^1_{\text{red}} \;(\x+\bdel) } \cdots 
     } 
=: \hat{h}_{\mathrm{red}}(\x+\bdel).
\end{equation*} 
As in \Cref{subsubsec: cert-rob-reducedmodel}, the reduced predictor is still nonlinear and
the index sets $J^k = (I^k)^c$ are determined by the input $\x$ and unperturbed predictor $h$. 
The above reduction is of course also true for the original predictor.
At a fixed sparsity vector $\vc{s}$, although the index sets vary across inputs, we can analyze and bound the Lipschitz constant of a worst-case such reduction. 
This explains the independence of the robust local Lipschitz scale on a specific input and the utilization of $\zeta^{[K+1]}(\vc{s})$, an upper bound on the worst-case product of operator norms of reduced weights.


As per \Cref{eq:optimal_set_sparsity} let the robust optimal sparsity level be $\vc{s} := {s}^*_{\mathrm{rob}}(\cV, \frac{1}{|\cV|})$\footnote{Note here the sparsity levels are vectors, and $s^{\star}_{\mathrm{rob}}(\cV, \epsilon)$ searches over layer-wise sparsity levels.}, the robust sparse regularity for a feedforward network predictor $h$ is,\vspace{-5pt}
\begin{equation}\label{eq:sparse-reg-fnn} \vspace{-5pt}
\mathcal{L}_{\mathrm{rob}}(h, \cV,\frac{1}{|\cV|}) = \lparnu(\x, \vc{s}) = (K+1)\zeta^{K+1}(\vc{s}) \cdot (1+\nu) 
\end{equation}
This quantifies the worst-case local Lipschitz scale of the predictor $h$ at any input $\x$ where it has a sufficiently large local radius, is a data-dependent norm-based regularity measure that can be much smaller than $\lparnu(\x, \mathbf{0})$, depending on the set $\cV$, and scales linearly with the adversarial energy $\nu$. 
We are finally ready to present the main result bounding the robust generalization error of feedforward networks. 

\begin{theorem}
\label{thm: nonuniformriskriskMNN}
With probability at least $(1-\alpha)$ over the choice of i.i.d training sample $\texttt{S}_{T}$ and unlabeled data $\texttt{S}_{U}$, each of size $m$, 
for any feedforward network predictor $h \in \cH^{K+1}$, 
\vspace{-5pt}
\begin{equation*}\vspace{-15pt}
R_{\mathrm{rob}}\left(h\right) 
\leq  
\hat{R}_{\mathrm{rob}}\left(h\right) + \mathcal{O}\left(
b \sqrt{\frac{ \mathcal{E}_{\mathcal{N}}(\mathcal{H},m) + \ln(\frac{2}{\alpha})}{2m}} 
+  \frac{\Lloss \cdot \mathcal{L}_{\mathrm{rob}}(h, \samp_T \cup \samp_U,\frac 1{2m})}{m (K+1)}\right),
\end{equation*}
where 
$\mathcal{E}_{\mathcal{N}}(\mathcal{H},m)=\ln\left(\mathcal{N}\left(\frac{1}{m(K+1)},\cH^{K+1}\right)\right)$ is the log of the covering number. 
\end{theorem}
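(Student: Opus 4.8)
The plan is to obtain this bound as the feedforward-network instantiation of the robust sparse-local-Lipschitz generalization bound, i.e.\ the robust analogue of \Cref{thm: generalization-sll} stated in \Cref{app: thm: robust-generaalization-sll}. Three ingredients go into this: (i) that $\cH^{K+1}$ is robust sparse local Lipschitz w.r.t.\ parameters, with the explicit radius and scale functions $\rparnu,\lparnu$ supplied by \Cref{lemma: robust-local-lip-fnn}; (ii) a bound on the proper covering number $\mathcal{N}(\cdot,\cH^{K+1})$ of the class in the induced metric $\norm{\cdot}_{\cH^{K+1}}$, which contributes the term $\mathcal{E}_{\mathcal{N}}(\mathcal{H},m)$; and (iii) the closed form of the robust sparse regularity, which by \Cref{eq:sparse-reg-fnn} collapses to the $\x$-independent constant $(K+1)\,\zeta^{K+1}(\vc{s})\,(1+\nu)$ at the optimal layer-wise sparsity $\vc{s}:=s^{\star}_{\mathrm{rob}}(\samp_T\cup\samp_U,\tfrac{1}{2m})$.

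The core argument follows the standard covering-plus-stability template. Fix the resolution $\epsilon:=\tfrac{1}{m(K+1)}$ and take a minimal proper $\epsilon$-cover $\mathcal{C}\subseteq\cH^{K+1}$ w.r.t.\ $\norm{\cdot}_{\cH^{K+1}}$. Since $\ell_{\mathrm{rob}}$ is $b$-bounded and the $m$ training points are i.i.d., Hoeffding's inequality together with a union bound over $\mathcal{C}$ gives, with probability at least $1-\alpha$, $R_{\mathrm{rob}}(h')-\hat{R}_{\mathrm{rob}}(h')\leq b\sqrt{(\ln|\mathcal{C}|+\ln(2/\alpha))/(2m)}$ simultaneously for all $h'\in\mathcal{C}$ (the factor $2$ in $\ln(2/\alpha)$ leaves room for a second event, should one be needed to guarantee the radius condition on the random reference set). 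For an arbitrary $h\in\cH^{K+1}$, pick $h'\in\mathcal{C}$ with $\norm{h-h'}_{\cH^{K+1}}\leq\epsilon$ and split $R_{\mathrm{rob}}(h)-\hat{R}_{\mathrm{rob}}(h)$ into $R_{\mathrm{rob}}(h)-R_{\mathrm{rob}}(h')$, the term above, and $\hat{R}_{\mathrm{rob}}(h')-\hat{R}_{\mathrm{rob}}(h)$. Using $\mathsf{L}_{\mathrm{loss}}$-Lipschitzness of $\ell_{\mathrm{rob}}$ in the output, the two outer terms reduce to bounds on $\max_{\bdel\in\cB^{\cX}_{\nu}(\mathbf{0})}\norm{h(\x+\bdel)-h'(\x+\bdel)}_2$ at a training input and at a distributional input, respectively. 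By \Cref{lemma: robust-local-lip-fnn} at sparsity $\vc{s}$, this gap is at most $\lparnu(\x,\vc{s})\,\norm{h-h'}_{\cH^{K+1}}$ whenever $\rparnu(\x,\vc{s})\geq\epsilon$; since $\epsilon\leq\tfrac{1}{2m}$ and, by the definition of $s^{\star}_{\mathrm{rob}}$, every point of $\samp_T\cup\samp_U$ meets the radius condition at level $\tfrac{1}{2m}$, the empirical term is at most $\mathsf{L}_{\mathrm{loss}}\cdot\mathcal{L}_{\mathrm{rob}}(h,\samp_T\cup\samp_U,\tfrac{1}{2m})\cdot\epsilon$; the population term obeys the same bound because $\mathcal{L}_{\mathrm{rob}}$ is the maximum of $\lparnu(\cdot,\vc{s})$ over all of $\cX$ subject to the radius condition, while $\lparnu(\x,\vc{s})=(K+1)\,\zeta^{K+1}(\vc{s})\,(1+\nu)$ does not depend on $\x$ for feedforward networks. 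Collecting the three terms, substituting $\epsilon=\tfrac{1}{m(K+1)}$, and cancelling the explicit $(K+1)$ in $\mathcal{L}_{\mathrm{rob}}$ against it delivers the claimed inequality.

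The delicate step is the population term: controlling $\expect_{\x}\big[\max_{\bdel}\norm{h(\x+\bdel)-h'(\x+\bdel)}_2\big]$ by $\mathcal{L}_{\mathrm{rob}}\cdot\epsilon$ for inputs that need not lie in the reference set, and hence need not satisfy $\rparnu(\x,\vc{s})\geq\epsilon$. The argument has to lean on the $\x$-independence of $\lparnu$ for feedforward networks — so that the constant returned by the constrained maximum defining $\mathcal{L}_{\mathrm{rob}}$ is a genuine uniform upper bound on the per-input reduced-model scale — and on tracking that the unlabeled sample $\samp_U$ enters only through the choice of $s^{\star}_{\mathrm{rob}}$, enlarging the set on which the radius constraint is imposed while leaving $\hat{R}_{\mathrm{rob}}$, evaluated on $\samp_T$ alone, unchanged. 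The remaining step, estimating $\mathcal{E}_{\mathcal{N}}(\mathcal{H},m)=\ln\mathcal{N}(\tfrac{1}{m(K+1)},\cH^{K+1})$ from the group-norm and reduced-babel constraints that define each $\cW^k$ (e.g.\ via a layerwise product cover), is routine.
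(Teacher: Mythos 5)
Your proposal follows the paper's overall template — a proper $\epsilon$-cover, the three-term decomposition of $R_{\mathrm{rob}}(h)-\hat R_{\mathrm{rob}}(h)$, Hoeffding over the cover, and deterministic control of the empirical sensitivity term via the robust sparse regularity — but it has a genuine gap precisely at the step you flag as delicate, and the fix you propose does not work.

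You claim that the population term $\mathbb{E}_{\x}\big[\max_{\bdel}\|h(\x+\bdel)-h'(\x+\bdel)\|_2\big]$ can be bounded by $\mathcal L_{\mathrm{rob}}\cdot\epsilon$ because $\lparnu(\x,\vc s)=(K+1)\zeta^{K+1}(\vc s)(1+\nu)$ is $\x$-independent for feedforward networks. That is a confusion between the local Lipschitz scale being \emph{constant in $\x$} and it being \emph{applicable at all $\x$}. \Cref{lemma: robust-local-lip-fnn} only guarantees $\|\hat h(\x+\bdel)-h(\x+\bdel)\|_2\leq \lparnu(\x,\vc s)\|\hat h-h\|_{\cH^{K+1}}$ when $\|\hat h-h\|_{\cH^{K+1}}\leq \rparnu(\x,\vc s)$, and $\rparnu(\x,\vc s)$ very much depends on $\x$ (through the critical angular distances). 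For population inputs $\x$ with $\rparnu(\x,\vc s)<\epsilon$, a parameter perturbation of size $\epsilon$ can flip activation patterns, and the only bounds available are the global one $\Lparnu$ or the trivial one $b$. Having $\lparnu$ be $\x$-independent gives you a clean constant on the ``good'' set, but tells you nothing about how often the ``good'' set is visited by $\cD_\cZ$.

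The paper's proof of \Cref{thm: generalization-sll} (which this theorem instantiates) handles exactly this by splitting the expectation over the event $\textsc{Bad}(h,\z,s)=\{\rparnu(\x,s)<\epsilon\}$, bounding the loss difference by $b$ on that event, and then \emph{estimating} $\mathrm{Pr}_{\z\sim\cD_\cZ}(\textsc{Bad})$ using the unlabeled sample $\samp_U$ via uniform convergence of the empirical CDF of the local-radius random variable, with a union bound over the $\prod_k d^k$ possible layer-wise sparsity vectors (this is why the extra $\sqrt{(\sum_k d^k+\ln(2/\alpha))/m}$ term appears and is absorbed into the covering-number term). Choosing $\vc s=s^\star_{\mathrm{rob}}(\samp_T\cup\samp_U,\tfrac{1}{2m})$ then makes the empirical bad-event frequency zero, and the uniform-convergence slack controls the population version. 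This whole mechanism — the good/bad decomposition, the CDF concentration on $\samp_U$, the union bound over sparsity levels — is the real work done by the unlabeled data, and your proposal omits it. Your description of $\samp_U$'s role (``enlarging the set on which the radius constraint is imposed while leaving $\hat R_{\mathrm{rob}}$ unchanged'') only captures its effect on the choice of $s^\star_{\mathrm{rob}}$ and misses the probabilistic estimation that is required to transfer the radius guarantee from the finite sample to the population.
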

The robust sparse regularity $\mathcal{L}_{\mathrm{rob}} (h, \samp_T \cup \samp_U, \frac{1}{2m})$ for a feedforward neural network is solely determined by the sparsity levels $\vc{s}$ via $\zeta^k(\vc{s})$ 
, the worst-case operator norm of \textbf{any} reduced layer weight in $\cW^k$. 
One can tune this result to be dependent on a specific trained predictor 
(see \Cref{app: thm: nonuniformriskriskMNN-predictor-dependent}). 
We state a specific instance of such an improved result. 


\begin{corollary}
\label{corollary: nonuniformriskriskMNN}
With probability at least $(1-\alpha)$ over the choice of training data $\texttt{S}_{T}$ and unlabeled data $\texttt{S}_{U}$ each of size $m$, for any soft-margin threshold $\gamma > 0$, for any feedforward neural network $h \in \cH^{K+1}$, 
there exists layer-wise sparsity levels $\vc{s}=[s^1, \ldots, s^K]$ dependent on data $\samp_T \cup \samp_U$ so that the probability of robust misclassification is bounded, 
\begin{equation*}
R_\text{rob}^{[0/1]}(h) \leq \hat{R}^{\gamma}_\text{rob}(h)
+  \tilde{\mathcal{O}}
\Bigg(
 \sqrt{\frac{
 \mathcal{E}_{\mathcal{N}}(\mathcal{H},m)+
\ln(\frac{2}{\alpha})}{m}} 
+ 
\frac{(1+\nu)}{\gamma m}\prod_{k=1}^{K+1} \norm{\W^k}_{2,\infty}\sqrt{1+\mu_{s^k,s^{k-1}}(\W^k) 
}
\Bigg)
\end{equation*}
where $R_\text{rob}^{[0/1]}(h)$ denotes the robust risk for the zero-one loss $\ell^{[0/1]}$ (i.e. probability of robust misclassification at any input)\footnote{Formally, 
$R_\text{rob}^{[0/1]}(h) = \expect_{(\x,y) \sim \cD_\cZ} 
\bigg[
\max_{\bdel \in \mathcal{B}^{\cX}_\nu(\x)} \mathbf{1}\left\{y \neq \argmax_j h(\x+\bdel)\right\}
\bigg]
$. 
}, $\hat{R}^{\gamma}_\text{rob}(h)$ is the robust empirical risk for the margin loss $\ell^{\gamma}$. 
\end{corollary}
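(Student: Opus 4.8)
The plan is to obtain \Cref{corollary: nonuniformriskriskMNN} as a specialization of \Cref{thm: nonuniformriskriskMNN}, or rather of its predictor-dependent refinement (\Cref{app: thm: nonuniformriskriskMNN-predictor-dependent}), to the margin loss. Recall that \Cref{thm: nonuniformriskriskMNN} bounds the robust population risk of \emph{any} $b$-bounded, $\mathsf{L}_{\mathrm{loss}}$-Lipschitz loss by $\hat{R}_{\mathrm{rob}}(h)$ plus a covering-number term and a ``fast'' term proportional to $\mathsf{L}_{\mathrm{loss}}\cdot\mathcal{L}_{\mathrm{rob}}(h,\samp_T\cup\samp_U,\tfrac1{2m})/(m(K+1))$. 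The predictor-dependent version is produced by re-running the covering argument of that theorem but, in the step where the reduced operator norms appearing in $\lparnu(\x,\vc{s})$ are controlled, replacing the uniform bound $\zeta^{K+1}(\vc{s})=\prod_{n}\mathsf{M}^n_{\cW}\sqrt{1+\mathsf{M}^n_{s^n}}$ by the quantity attached to the specific trained weights $\{\W^k\}$, namely $\prod_{k=1}^{K+1}\norm{\W^k}_{2,\infty}\sqrt{1+\mu_{s^k,s^{k-1}}(\W^k)}$, which is a valid upper bound on those reduced operator norms by \Cref{lemma: bound-submatrix-norm}.

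First I would reduce the zero-one robust risk to the margin robust risk. The margin loss $\ell^\gamma$ is $1$-bounded (so $b=1$) and $\tfrac2\gamma$-Lipschitz in the predictor's output; moreover, pointwise $\ell^{[0/1]}(h,\z)\le\ell^\gamma(h,\z)$, and taking the maximum over $\bdel\in\cB^{\cX}_\nu(\mathbf0)$ and then the expectation gives $R^{[0/1]}_{\mathrm{rob}}(h)\le R^{\gamma}_{\mathrm{rob}}(h)$. Applying the predictor-dependent theorem to $\ell^\gamma$ and chaining these inequalities yields, for a fixed $\gamma$ and with probability $1-\alpha$, $R^{[0/1]}_{\mathrm{rob}}(h)\le\hat{R}^{\gamma}_{\mathrm{rob}}(h)+\mathcal{O}\big(\sqrt{(\mathcal{E}_{\mathcal{N}}(\cH,m)+\ln(2/\alpha))/m}+\tfrac2\gamma\cdot\mathcal{L}_{\mathrm{rob}}(h,\samp_T\cup\samp_U,\tfrac1{2m})/(m(K+1))\big)$. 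Next I would substitute the closed form of the robust sparse regularity for feedforward networks: by \Cref{lemma: robust-local-lip-fnn} and \Cref{eq:sparse-reg-fnn} (in its predictor-dependent form), with the layer-wise sparsity vector $\vc{s}=s^\star_{\mathrm{rob}}(\samp_T\cup\samp_U,\tfrac1{2m})$ one has $\mathcal{L}_{\mathrm{rob}}(h,\samp_T\cup\samp_U,\tfrac1{2m})=(K+1)(1+\nu)\prod_{k=1}^{K+1}\norm{\W^k}_{2,\infty}\sqrt{1+\mu_{s^k,s^{k-1}}(\W^k)}$. Plugging this into the fast term, the factor $(K+1)$ cancels against the $m(K+1)$ in the denominator, leaving $\tfrac{2(1+\nu)}{\gamma m}\prod_{k=1}^{K+1}\norm{\W^k}_{2,\infty}\sqrt{1+\mu_{s^k,s^{k-1}}(\W^k)}$, which is the claimed fast term up to the hidden constant; the covering term already matches since $b=1$.

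The only genuinely new ingredient — and the step I expect to be the main (though routine) obstacle — is upgrading the bound from a single fixed $\gamma$ to a statement valid simultaneously for all $\gamma>0$. Here I would use the standard discretization: apply the fixed-$\gamma$ bound on a geometric grid $\gamma\in\{2^{-j}:j\ge1\}$ with per-scale confidence $\alpha_j\propto\alpha/j^2$ (so that $\sum_j\alpha_j\le\alpha$) via a union bound, and for an arbitrary $\gamma$ round up to the nearest grid point $\gamma_j\in[\gamma,2\gamma]$. Monotonicity of $\gamma\mapsto\ell^{\gamma}$ lets one replace $\hat{R}^{\gamma_j}_{\mathrm{rob}}$ by $\hat{R}^{\gamma}_{\mathrm{rob}}$ (possibly after a further rounding, at the cost of a constant factor), while $1/\gamma_j\le1/\gamma$ keeps the fast term unchanged up to a constant; the resulting $\ln(1/\alpha_j)\lesssim\ln(1/\alpha)+\ln\ln(1/\gamma)$ overhead, together with any lower-order logarithmic factors from the covering term, is exactly what $\tilde{\mathcal{O}}(\cdot)$ absorbs. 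Everything else is a direct substitution into \Cref{thm: nonuniformriskriskMNN}; in particular, the dependence of $\vc{s}$ on $\samp_T\cup\samp_U$ through $s^\star_{\mathrm{rob}}$ is already accounted for inside the proof of that theorem via the unlabeled sample $\samp_U$, so no additional uniformity over sparsity levels is needed in the corollary itself.
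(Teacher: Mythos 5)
Your proposal is correct and follows essentially the same route as the paper's own proof: instantiate the predictor-dependent refinement \Cref{app: thm: nonuniformriskriskMNN-predictor-dependent} for the ramp/margin loss (which is $1$-bounded and $\tfrac{2}{\gamma}$-Lipschitz), dominate the zero-one robust risk by the margin robust risk pointwise, substitute the feedforward expression for $\mathcal{L}_{\mathrm{rob}}$ so the $(K+1)$ factors cancel, and then make the bound uniform over $\gamma>0$ by a union bound over a discretized grid. The paper's stated proof is terser — it invokes ``the same covering trick used above'' without spelling out the grid over $\gamma$ — so your write-up simply makes explicit the step the paper delegates to the reader.
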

In the above theorem $\tilde{\mathcal{O}}(g)$ denotes complexity that suppresses log factors\footnote{For functions $f$ and $g$ with the same arguments, $f = \tilde{\mathcal{O}}(g)$ if there exists a constant $C$ such that for any sequence of arguments $\{t^j\}_{j\geq \infty}$ such that $t^j \rightarrow \infty$ we have that $\limsup_{j\rightarrow \infty} \frac{f(t^j)}{g(t^j)\mathrm{polylog}(g(t^j)} \leq C$.}\cite{Bartlett2017SpectrallynormalizedMB}. We have thus characterized the robust generalization ability of predictors that are SLL w.r.t. parameters. 
\paragraph{Comparison to Related Work}
Unlike prior work in robust generalization \cite{yin2018rademacher, Awasthi2020AdversarialLG} our bound has a milder dependence on the adversarial energy, \addition{$\mathcal{O}(\frac{\nu}{m})$ rather than $\mathcal{O}(\frac{\nu}{\sqrt{m}})$.} 
The full proof found in \Cref{app: corollary: nonuniformriskMNN} 
employs logic akin to structural risk minimization inspired from \cite{Bartlett2017SpectrallynormalizedMB}.
The second term in the bound is a predictor dependent instantiation of the robust sparse regularity normalized by margin threshold,  
bearing resemblance to other spectrally normalized margin bounds \cite{ Awasthi2020AdversarialLG, Bartlett2017SpectrallynormalizedMB, Golowich2018SizeIndependentSC, khim2018adversarial,  Neyshabur2018APA, Neyshabur2015NormBasedCC, yin2018rademacher}.
The reduced babel function captures the coherence between any row and a subset of other rows (with an additional column restriction). 
Coupled with the group norms, as per \Cref{lemma: bound-submatrix-norm}, the second term in the above bound scales as the product of operator norms of reduced-linear maps, rather than the full weight matrices $\W^k$. 
The sparsity level $\vc{s}$ above is determined by the training sample $\samp_T$ and the unlabeled data $\samp_U$, as $\vc{s} = s^*_{\mathrm{rob}} (\samp_T\cup \samp_U, \frac{1}{2m})$. 
Only for a worst case choice of data distribution and trained network, the sparsity levels are trivial ($\vc{s}=\mathbf{0}$) in which case one recovers a result that only depends on the global Lipschitz constants. 

Our analysis studies robust generalization using both the favorable properties of a training data and the local sensitivity of a trained predictor, and is closest in spirit to 
results on standard (benign) generalization 
\cite{Banerjee2020DerandomizedPM, Nagarajan2019DeterministicPG,ShaweTaylor1997APA,Wei2019DatadependentSC}. 
We note that the bounds in \cite{Banerjee2020DerandomizedPM, Nagarajan2019DeterministicPG, Wei2019DatadependentSC} do not have explicit dependence on the number of parameters. \addition{In contrast, a bound on the covering number term $\mathcal{N}(\frac{1}{m(K+1)}, \cH^{K+1})$ in \Cref{corollary: nonuniformriskriskMNN} can be obtained by parameter counting (see \Cref{lemma:covermnn}), i.e. 
$\mathcal{E}_{\mathcal{N}}(\cH^{K+1}, m) \propto \mathcal{O}( \log(m) + \sum_{k=1}^{K+1} d^k d^{k-1})$.
This renders our result vacuous in the over-parameterized regime when $\sum_{k=1}^{K+1} d^k d^{k-1} \gg m$.}
We believe this is a limitation of the current proof technique rather than intrinsic to our sensitivity analysis, and conjecture that combining SLL as per \Cref{lemma: robust-local-lip-fnn} with the other learning theoretic approaches could remove this dependence on dimensions. We leave this extension to future work. 

Similar to our analysis, \cite{Nagarajan2019DeterministicPG} captures a reduced dimensionality of neural networks and
bound the generalization error of the original deterministic network by derandomizing the standard PAC-Bayesian bound. 
Their analysis weakens the exponential dependence on depth (i.e. the global Lipschitz constant) and does not require an additional unlabeled dataset.
However, their bound depends inversely on the minimum absolute pre-activation level in each layer -- which can be arbitrarily small in practice. 
The sensitivity of the predictor is quantified assuming that the original active sets $\mathcal{I}^k(\x)$ at each layer remain unchanged upon a Gaussian perturbation (equivalent to requiring $\rpar(\x, \bar{\vc{s}}(\x)) > 0$, in our notation).
This presents a rather strong condition, and our analysis moves past this limitation. Additionally, our results also hold for the robust adversarial setting.  
The analysis in \cite{Wei2019DatadependentSC}, on the other hand,  links the parameter sensitivity of predictors to generalization using an augmented loss function that encourages favorable data-dependent properties, such as low Jacobian norms. 
Their bound also avoids exponential dependence on depth but is restricted to smooth activations (and the benign, non-adversarial setting). 
The work in \cite{Banerjee2020DerandomizedPM} presents an alternative approach by studying the curvature of the loss as given by the Hessian.
While their bound avoids explicit dependencies on the global Lipschitz constant, it is unclear if all dependence on the latter is avoided in their characterization of the failure probability.


\section{Experimental Results}
\label{sec:experimental}
In this section we showcase the potential benefits of 
sparse local Lipschitz analysis. We compute sparse the certified radius and sparse regularity for feedforward neural networks trained for classification on MNIST and SVHN datasets. 
For more complex tasks, such as ImageNet, one needs additional architectural choices like convolution and pooling, 
which we regard as future extensions to our work.
\addition{
\paragraph{Training Setup}
We train feedforward networks $h$ with weights $\{\W^k\}_{k=1}^{K+1}$ where $\W^k \in \mathbb{R}^{d^k \times d^{k-1}}$ using the cross-entropy loss with stochastic gradient descent (SGD) with default hyper-parameter settings in PyTorch for 2,000 steps with a batch size of 100.
Each network is trained with \textit{Orthogonal Frame Regularization}, 
a measure suggested in \cite{pmlr-v70-cisse17a} for improving robustness that encourages normalized layer weights to be near orthogonal. 
All trained models\footnote{The extra regularization term does not increase the computational cost of training. }
$h_{\eta}$ are described as follows,
\begin{equation*}
h_{\eta} \in \argmin_{\{\W^k\}_{k=1}^{K+1}} ~~ \frac{1}{m}\sum_{i=1}^m \ell \Big(h, (\x_{i},y_{i})\Big) + \frac{\eta}{K+1}\sum_{k=1}^{K+1} \norm{I - \tilde{\W}^k(\tilde{\W}^k)^T}^2_F. 
\end{equation*}
Here $\tilde{\W}^k$ has normalized rows $\tilde{\w}^{k}_{i} := \frac{\Wvec{k}{i}}{\norm{\Wvec{k}{i}}_2}$. 
We study models trained with 4 different choices of $\eta \in \{0, 0.001, 0.01, 0.1\}$. 
For both MNIST and SVHN datasets, the official training sets are randomly split into train and validation data (55,000:5,000 for MNIST, 61,257:12,000 for SVHN). The models are optimized on the training data and the resulting measures are computed on validation data. 
}

\paragraph{Certified Radius}
For any network, at each input $\x$ there exists a true robust radius $\bar{r}(\x)$ -- the minimal energy required for a successful adversarial perturbation.
The naive lower bound for the certified radius, $r_{\mathrm{global}} (\x) :=r_{\mathrm{cert}}(\x, \vc{0})$, is computed from \Cref{corollary:cert-rob-fnn-loc} at the trivial choice of sparsity. 
The SLL certificate, $r_{\mathrm{sparse}}(\x)$, is obtained by binary search (with tolerance $10^{-6}$) using 
the optimal sparsity $\vc{s}$ is computed by 
\Cref{alg: greedy-sparse}.
The SLL certificate relies on the product of operator norms of specific reduced linear maps.
These estimates provide a lower bound $r_{\mathrm{global}}(\x) \leq  r_{\mathrm{sparse}}(\x) \leq \bar{r}(\x)$. 
While the value of $\bar{r}(\x)$ is not  computable\footnote{Indeed, this is NP hard as it involves the optimization of a non-convex loss.}, one can obtain a surrogate upper bound $r_{\text{adv}}(\x) \geq \bar{r}(\x)$ by measuring the minimal size of an adversarial example found via an ensemble of popular (and effective) adversarial attack strategies such as PGD \cite{Madry2018TowardsDL}, Carlini-Wagner \cite{carlini2017evaluating}, etc.

\paragraph{Benchmark via Security Curves}
Using a certified radius, one can compute the \textit{certified accuracy} of a collection of inputs, measured as the fraction of samples that are certified to predict faithfully against a specified size of corruption. 
The robustness of the trained networks (with 2 hidden layers of dimension 500), on both MNIST and SVHN datasets are shown in \Cref{fig:mnist_sec,fig:svhn_sec} via \textit{security curves}, which plot the obtained certified accuracy for increasing size of adversarial perturbations. We compare our ``SparseLip'' certification based on $r_{\mathrm{sparse}}$ with other state-of-the-art certification algorithms that are based on Lipschitz constant
estimation: FastLip \cite{Weng2018TowardsFC} and Recurjac \cite{Zhang2019RecurJacAE}. Both methods upper bound the Jacobian norm using entry wise bound propagation. 

The results in \Cref{fig:mnist_sec,fig:svhn_sec} demonstrate that orthogonal frame regularization results in improved robustness, as seen via the certified accuracy and the robust accuracy under attack. 
RecurJac provides the best certification for models trained without regularization, while SparseLip is comparable to FastLip for the same setting. 
For regularized models, SparseLip provides the best certified accuracy for MNIST, and is among the best for SVHN. The performance of both RecurJac and FastLip drops significantly for regularized models.
Finally, \Cref{fig:time_taken} depicts the considerable computational benefit of our approach, SparseLip.  


\begin{figure}[]
    \centering
    \subfloat[MNIST]{\includegraphics[scale=0.45]{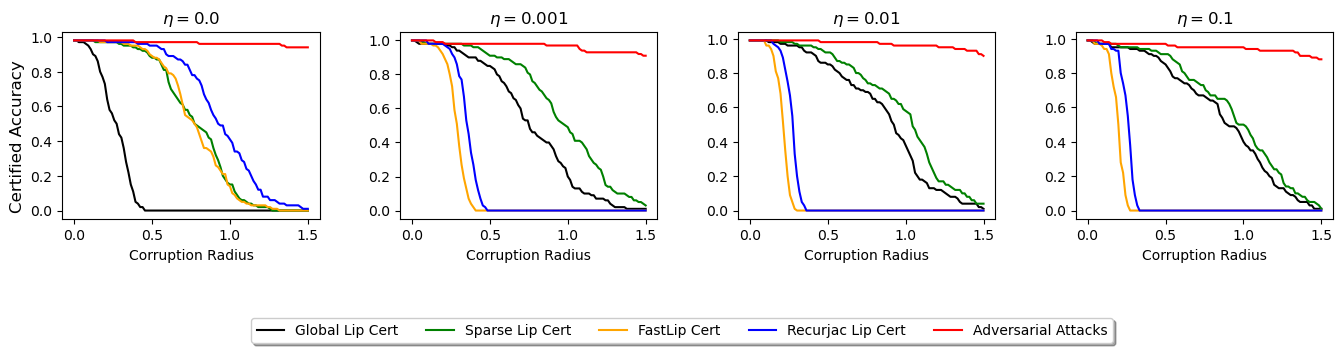}
    \label{fig:mnist_sec}}\\
    \subfloat[SVHN]{
    \includegraphics[scale=0.45]{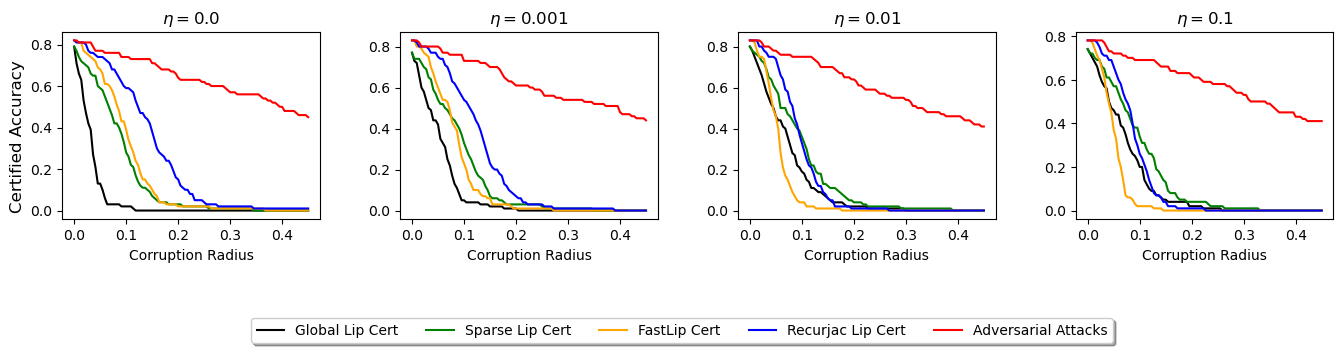}
    \label{fig:svhn_sec}}
    \caption{Security curves for feedforward neural networks with layer widths [500,500]}
    \label{fig:security_curve}
\end{figure}

\begin{figure}[]
    \centering
    \subfloat[MNIST]{\includegraphics[trim = 0 10 0 0 ,width=0.48\textwidth]{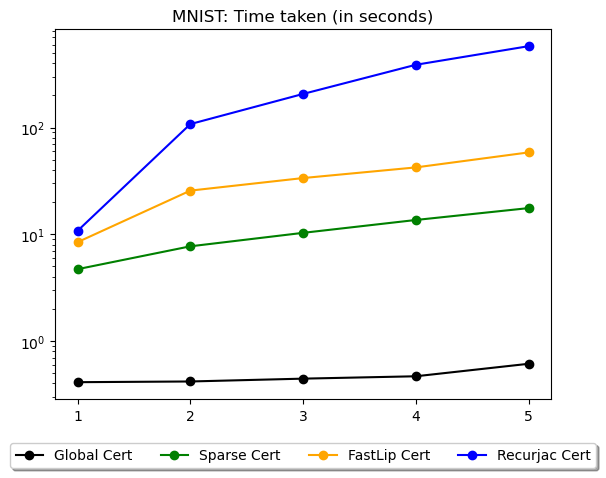}
    \label{fig:mnist_time}}
    \subfloat[SVHN]{
    \includegraphics[trim = 0 10 0 0 ,width=0.48\textwidth]{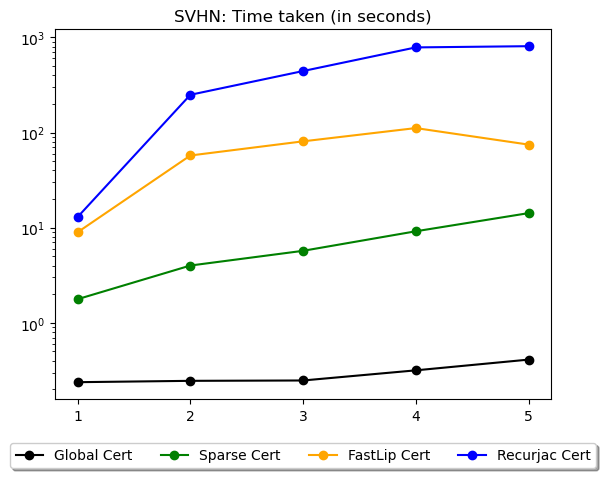}
    \label{fig:svhn_time}}
    \caption{Time taken (y-axis) for certification on batch of 100 inputs for models with layer dimensions $[100]*(K+1)$ with varying depth $K$ (x-axis).}
    \label{fig:time_taken}
\end{figure}


\begin{figure}[]
    \centering
    \subfloat[MNIST]{\includegraphics[trim = 0 10 0 0 ,width=0.48\textwidth]{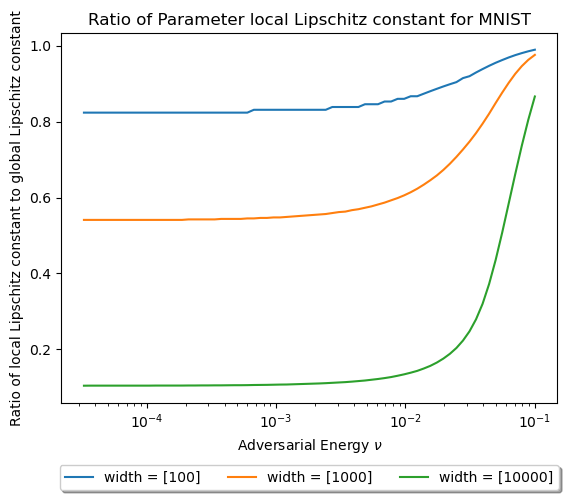}
    \label{fig:mnist_param}}
    \subfloat[SVHN]{
    \includegraphics[trim = 0 10 0 0 ,width=0.48\textwidth]{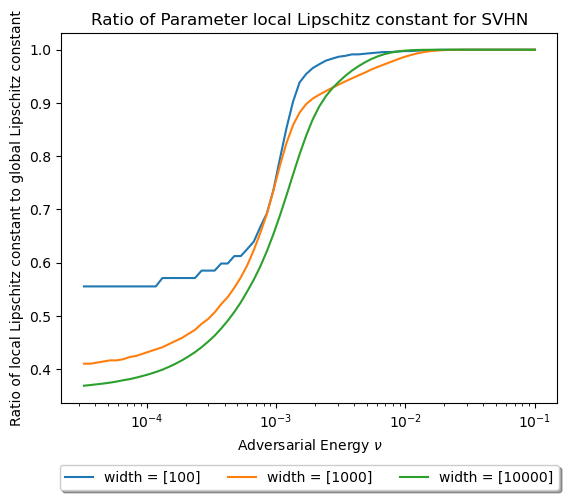}
    \label{fig:svhn_param}}
    \vspace{-10pt}
    \caption{Ratio of Lipschitz constants, $\frac{\mathcal{L}_{\mathrm{rob}}(h, \mathcal{V}, \frac{1}{|\cV|(K+1)})}{\mathsf{L}_{\mathrm{param,\nu}}}$ 
    for networks of varying widths.}
    \label{fig:param}
\end{figure}
\begin{figure}[]
    \centering
    \subfloat[MNIST]{\includegraphics[trim = 0 10 0 0 ,width=0.48\textwidth]{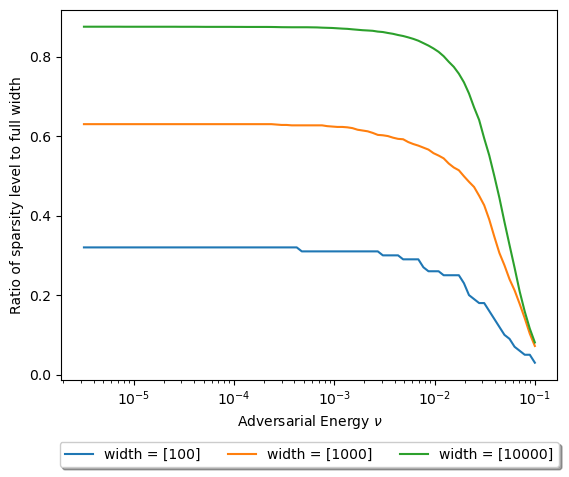}
    \label{fig:mnist_width_sparse}}
    \subfloat[SVHN]{
    \includegraphics[trim = 0 10 0 0 ,width=0.48\textwidth]{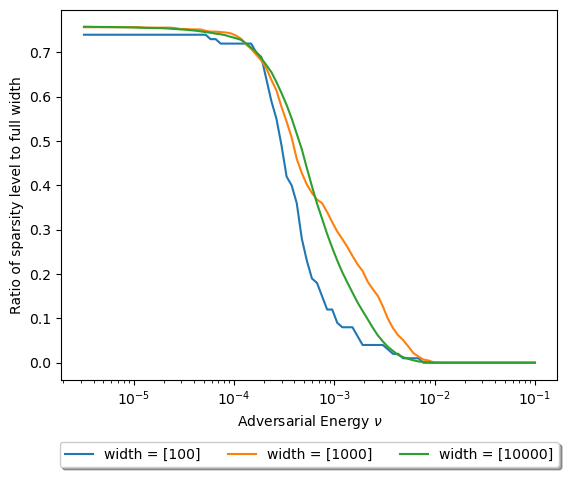}
    \label{fig:svhn_width_sparse}}
    \vspace{-10pt}
    \caption{Ratio of optimal sparsity to width $d$, $\frac{s^*_{\mathrm{rob}}(h, \mathcal{V}, \frac{1}{|\cV|(K+1)})}{d}$ 
    for networks of varying widths.}
    \label{fig:param_width}
\end{figure}
\paragraph{Sparse local Lipschitzness w.r.t Parameter}
We finally study how SLL analysis can aid the study of generalization, by considering a 1-hidden layer feedforward networks of different widths trained via SGD without regularization.
Across varying adversarial energy levels $\nu$, \Cref{fig:param,fig:param_width} plots the robust sparse regularity $\mathcal L_{\mathrm{rob}}(h,\cV,\epsilon)$ and robust optimal sparsity level $ s^*_{\mathrm{rob}}(h,\cV,\epsilon)$ w.r.t validation data $\cV$  as defined in  \Cref{eq:robustsparseregularity} at $\epsilon = \frac{1}{|\cV|(K+1)}$. 
We note that for large enough $\nu$ the SLL sensitivity is equivalent to global Lipschitz analysis (and correspondingly the optimal sparsity level approaches 0), but for moderate values of $\nu$ the robust sparse regularity can be significantly better. 
\Cref{fig:param} and \Cref{fig:param_width} demonstrate the observation in networks with large widths, and the ability of SLL analysis to capture the reduced local sensitivity. Importantly, the wider the network, the more significant the reduction in the equivalent Lipschitz scale of the model.


\section{Conclusions}
\label{sec:conclusions}
In this work we study adversarial robustness via the lens of sparse local Lipschitzness (SLL). 
We show that feedforward neural networks are SLL and  equivalent to a reduced \emph{nonlinear} mapping with decreased sensitivity in a local neighborhood around each input. 
In using Lipschitzness properties locally rather globally, and benefiting from sparse structures, our approach provides an improved certified radius at any input, and bounds on the robust generalization error  with only a mild dependence on the adversarial corruption.
Our work is a step towards producing data-dependent non-uniform bounds that leverage the favorable properties of a trained predictor on a sample data.
We believe that the ideas presented here are extensible to other hypothesis classes that encourage other structural priors, such as convolutional, attention or graph neural networks. 
The identification of the particular reduced models for each class presents an intriguing topic of future research. 

\section*{Acknowledgments}
This work was partially supported by NSF grant CCF 2007649, DARPA GARD award HR00112020004, and NSF TRIPODS CCF-1934979 via the MINDS Fellowship award. The authors thank Raman Arora for useful comments in early stages of this work.


\bibliographystyle{plain}
\bibliography{references}
\newpage
\appendix 

\section{Missing Proofs}\label{appendix:proofs}
In this section we provide detailed proofs of all our results. 
\begin{table}[H]
    \centering
    \begin{tabular}{|c|c|}
    \hline 
    Result & Proof \\ 
    \hline
    \Cref{lemma:cert-rob-gen-loc}     & \Cref{app: lemma:cert-rob-gen-loc} \\
    \Cref{lemma: sparse-local-lip-composition}     &  \Cref{app: lemma: sparse-local-lip-composition}\\
    \Cref{lemma: fnn-layer-lip} & \Cref{app: lemma: fnn-layer-lip}\\
    \Cref{corollary:cert-rob-fnn-loc} & \Cref{app: corollary:cert-rob-fnn-loc}\\
    \Cref{lemma: bound-submatrix-norm} & \Cref{app: lemma: bound-submatrix-norm}\\
    \Cref{thm: generalization-sll} & \Cref{app: thm: generalization-sll}\\
    \Cref{lemma: robust-global-lip-fnn} & \Cref{app: lemma: robust-global-lip-fnn}\\
    \Cref{lemma: robust-local-lip-fnn} & \Cref{app: fix1}\\
    \Cref{thm: nonuniformriskriskMNN} & \Cref{app: thm: nonuniformriskMNN}\\
    \Cref{corollary: nonuniformriskriskMNN} & \Cref{app: corollary: nonuniformriskMNN}\\
    \hline
    \end{tabular}
    \vspace{0.1cm}
    \caption{Proof Glossary}
    \label{tab:proof_glossary}
\end{table}

\subsection{Proof for \Cref{lemma:cert-rob-gen-loc}}\label{app: lemma:cert-rob-gen-loc}
We define the margin operator $\mathcal{M} : \cY' \times \cY \rightarrow \mathbb{R}$, $\mathcal{M}(\vt, y) := [\vt]_{y} - \max_{j \neq y} [\vt]_j$. Thus, $\rho(\x) = \mathcal{M}(h(\x),\hat{y}(\x))$. 
Observe that the predicted labels remain unchanged when
$\mathcal{M}(h(\x+\bdel), \hat{y}(\x)) \geq 0$.
The margin operator $\mathcal{M}(\cdot, j)$ is $2$-Lipschitz in $\cYd$ w.r.t. $\ell_p$ norm for any $p\geq 1$ (see \cite[Lemma A.3.]{Bartlett2017SpectrallynormalizedMB}), thus
\[
 \mathcal{M}\big(h(\x),\hat{y}(\x)\big) - \mathcal{M}\big(h(\x+\bdel), \hat{y}(\x)\big) \leq 2 \norm{h(\x+\bdel) - h(\x)}_p.
\]
Note that for $\vc{s} = (0, s_{\mathrm{out}})$, there are no sparsity constraints on the perturbed input. 
Hence,
\[
\norm{\bdel}_2 \leq \rinp(\x,\vc{s})
\implies
\norm{h(\x+\bdel)-h(\x)}_2 \leq \norm{\A}_2 \cdot \linp(\x, \vc{s}) \norm{\bdel}_2.
\]
Thus, 
\[
\rho(\x) - \mathcal{M}\big(h(\x+\bdel), \hat{y}(\x)\big)
\leq 2 \norm{\A}_2 \cdot \linp(\x, \vc{s}) \norm{\bdel}_2.
\]
The conclusion follows from rearranging the above inequality. 

\subsection{Proof for \Cref{lemma: sparse-local-lip-composition}}
\label{app: lemma: sparse-local-lip-composition}
If each intermediate representation map $\Phi^{(k)}$ is $\Linp^{(k)}$-Lipschitz, the composed representation $\Phi^{[K]}$ is $\left(\prod_{k=1}^K \Linp^{(k)}\right)$-Lipschitz. Similarly, if each representation map is locally Lipschitz at $\x$, with local radius function $\rinp^{(k)}(\x)$ and local Lipschitz scale $\linp^{(k)}(\x)$, then the composed map is also local Lipschitz with local radius and local Lipschitz scale defined as
 \begin{align}\label{eq:compose-local}
     r_{\mathrm{inp}}^{[k]}\big(\x\big) := \min_{1\leq n \leq k}  \frac{r_{\mathrm{inp}}^{(n)}\Big(\Phi^{[n-1]}(\x) \Big) }{\displaystyle l_{\mathrm{inp}}^{[n-1]} \big(\x \big)},
     ~\quad~
     l_{\mathrm{inp}}^{[k]}\big( \x \big) := \prod_{n=1}^k l_{\mathrm{inp}}^{(n)}\Big(\Phi^{[n-1]}(\x)\Big). 
\end{align}
We will prove a more general result shortly, but for a proof sketch of this claim consider a perturbation $\bdel$ in the initial input. 
It is necessary to ensure that the local radius for layer $\rinp^{[k]}$ is such that at each layer $n \leq k$ the perturbed representation does not exceed the corresponding local radius, i.e. $\norm{\Phi^{[n-1]}(\x+\bdel) - \Phi^{[n-1]}(\x) }_2\leq \rinp^{(n)}(\Phi^{[n-1]}(\x))$. Since $\norm{\Phi^{[n-1]}(\x+\bdel) - \Phi^{[n-1]}(\x) }_2\leq \linp^{[n-1]}(\x)$, the local radius $\rinp^{[k]}$ as defined above satisfies the requirement. 
If each of the intermediate representation maps are SLL w.r.t. inputs, then by appropriately weaving the sparsity levels at each layer we can show that the composed representation map is also SLL. 

For $k=1$, there are no compositions. Note that $\vc{s}^{[1]}=\vc{s}^{(1)}$ and for all input $\x$, 
    \begin{align*}
        \Phi^{[1]} (\x) = \Phi^{(1)}(\x), \quad \rinp^{[1]}\left(\x, \vc{s}^{[1]}\right) = \rinp^{(1)}\left(\x, \vc{s}^{(1)}\right), \quad \linp^{[1]}\left(\x, \vc{s}^{[1]}\right) = \linp^{(1)}\left(\x, \vc{s}^{(1)}\right).
    \end{align*}

    \textbf{Base case ($k=2$)}:
    If $\xtil, \x \in \mathbb{R}^{d_0}$ are such that there exists a common inactive index set $I^0$ of size $s^0$ and $\norm{\xtil-\x}_2 \leq  \rinp^{[1]}(\x, \;\vc{s}^{[1]})$, 
    then the distance between the intermediate representations are bounded and there exists a common inactive index set $I^1$ of size $s^1$, 
    \begin{align*}
      \norm{\Phi^{[1]}(\xtil) - \Phi^{[1]}(\x)}_2 \leq \linp^{[1]}(\x, \;\vc{s}^{[1]})\norm{\xtil - \x}_2 \\
      \mathcal{P}_{I^1}\left(\Phi^{[1]}(\xtil)\right) = \mathcal{P}_{I^1}\left(\Phi^{[1]}(\x)\right) = \mathbf{0}.
    \end{align*}
    If the distance between the initial inputs is additionally bounded as
    \[
    \norm{\xtil - \x}_2 \leq \frac{\rinp^{(2)}\big(\Phi^{[1]}(\x),\; \vc{s}^{(2)} \big)}{\linp^{[1]}(\x, \;\vc{s}^{[1]})},\]
    then the distance between the representations $\Phi^{[1]}(\xtil)$ and $\Phi^{[1]}(\x)$ is below the local radius threshold for the layer map $\Phi^{(2)}$,
    \[
    \norm{\Phi^{[1]}(\xtil) - \Phi^{[1]}(\x)}_2 \leq 
    \rinp^{(2)}\big(\Phi^{[1]}(\x),\; \vc{s}^{(2)} \big).
    \]
    Thus, there exists a common inactive index set $I^{2}$ of size $s^{2}$ and the distance between the outputs of $\Phi^{[2]}$ can be bounded using the local Lipschitz scale, 
    \begin{align*}
     \norm{\Phi^{[2]}(\xtil)- \Phi^{[2]}(\x)}_2 
     &= \norm{\Phi^{(2)}\circ\Phi^{[1]}(\xtil) - \Phi^{(2)}\circ\Phi^{[1]}(\x)}_2 \\
     &\leq \linp^{(2)}\big(\Phi^{[1]}(\x),\; \vc{s}^{(2)} \big) \cdot \norm{\Phi^{[1]}(\xtil) - \Phi^{[1]}(\x)}_2 \\
     &\leq \prod_{n=1}^2 \linp^{(n)}\big(\Phi^{[n-1]}(\x),\; \vc{s}^{(n)} \big) \cdot \norm{\xtil - \x}_2
    \end{align*}
    Thus the expressions in the Lemma statement are valid for $k=2$.
    
    \textbf{Induction Case ($k > 2$)}: Assume that the lemma statement is valid for all $1\leq n\leq k$ for some $k\geq 2$. Thus, 
    \begin{align*}
    r_{\mathrm{inp}}^{[k]}\Big(\x,\;  \vc{s}^{[k]}\Big) := \min_{1\leq n \leq k}  \frac{r_{\mathrm{inp}}^{(n)}\Big(\Phi^{[n-1]}(\x),\; \vc{s}^{(n)} \Big) }{\displaystyle l_{\mathrm{inp}}^{[n-1]} \Big(\x,\;  \vc{s}^{[n-1]} \Big)},
    ~\quad~
    l_{\mathrm{inp}}^{[k]}\Big( \x,\; \vc{s}^{[k]} \Big) := \prod_{n=1}^k l_{\mathrm{inp}}^{(n)}\Big(\Phi^{[n-1]}(\x),\; \vc{s}^{(n)}\Big). 
    \end{align*}
    If $\xtil, \x \in \mathbb{R}^{d_0}$ are such that there exists a common inactive index set $I^0$ of size $s^0$ and $\norm{\xtil-\x}_2 \leq  \rinp^{[k]}(\x, \;\vc{s}^{[k]})$, 
    then the distance between the intermediate representations are bounded and there exists a common inactive index set $I^k$ of size $s^k$, 
    \begin{align*}
      \norm{\Phi^{[k]}(\xtil) - \Phi^{[k]}(\x)}_2 \leq \linp^{[k]}(\x, \;\vc{s}^{[k]})\norm{\xtil - \x}_2 \\
      \mathcal{P}_{I^k}\left(\Phi^{[k]}(\xtil)\right) = \mathcal{P}_{I^k}\left(\Phi^{[k]}(\x)\right) = \mathbf{0}.
    \end{align*}
    If the distance between the initial inputs is additionally bounded as
    \[
    \norm{\xtil - \x}_2 \leq \frac{\rinp^{(k+1)}\big(\Phi^{[k]}(\x),\; \vc{s}^{(k+1)} \big)}{\linp^{[k]}(\x, \;\vc{s}^{[k]})},\]
    then the distance between the representations $\Phi^{[k]}(\xtil)$ and $\Phi^{[k]}(\x)$ is below the local radius threshold for the layer map $\Phi^{(k+1)}$,
    \[
    \norm{\Phi^{[k]}(\xtil) - \Phi^{[k]}(\x)}_2 \leq 
    \rinp^{(k+1)}\big(\Phi^{[k]}(\x),\; \vc{s}^{(k+1)} \big).
    \]
    Thus, there exists a common inactive index set $I^{k+1}$ of size $s^{k+1}$ and the distance between the outputs of $\Phi^{[k+1]}$ can be bounded using the local Lipschitz scale, 
    \begin{align*}
     \norm{\Phi^{[k+1]}(\xtil)- \Phi^{[k+1]}(\x)}_2 
     &= \norm{\Phi^{(k+1)}\circ\Phi^{[k]}(\xtil) - \Phi^{(k+1)}\circ\Phi^{[k]}(\x)}_2 \\
     &\leq \linp^{(k+1)}\big(\Phi^{[k]}(\x),\; \vc{s}^{(k+1)} \big) \cdot \norm{\Phi^{[k]}(\xtil) - \Phi^{[k]}(\x)}_2 \\
     &\leq \prod_{n=1}^{k+1} \linp^{(n)}\big(\Phi^{[n-1]}(\x),\; \vc{s}^{(n)} \big) \cdot \norm{\xtil - \x}_2
    \end{align*}
    Hence, we can define a local radius and local Lipschitz scale for compositions up to $k+1$ layers as follows,
    \begin{align*}
    r_{\mathrm{inp}}^{[k+1]}\Big(\x,\;  \vc{s}^{[k+1]}\Big) &:= \min\left\{r_{\mathrm{inp}}^{[k]}\Big(\x,\;  \vc{s}^{[k]}\Big),   \frac{r_{\mathrm{inp}}^{(k+1)}\Big(\Phi^{[k]}(\x),\; \vc{s}^{(k+1)} \Big) }{\displaystyle l_{\mathrm{inp}}^{[k]} \Big(\x,\;  \vc{s}^{[k]} \Big)} \right\},\\
    l_{\mathrm{inp}}^{[k+1]}\Big(\x,\;  \vc{s}^{[k+1]}\Big) &:=
    l_{\mathrm{inp}}^{[k]}\Big( \x,\; \vc{s}^{[k]} \Big) \cdot  l_{\mathrm{inp}}^{(k+1)}\Big(\Phi^{[k]}(\x),\; \vc{s}^{(k+1)}\Big). 
    \end{align*}
    Hence proved.

\subsection{Proof for \Cref{lemma: fnn-layer-lip}}
\label{app: lemma: fnn-layer-lip}
With the local radius function $\rinp^{(k)}$ and inactive set $I^k$ defined above, 
for each inactive index $i \in I^k$, $\Wvec{k}{i} \vc{t} + \vb^k_i \leq - \norm{\Wvec{k}{i}}_2\cdot \rinp^{(k)}(\vc{t}, \vc{s}^{(k)})$. For a perturbed input $\vttil : \|\vttil - \vt\|_2 \leq \rinp^{(k)}(\vc{t}, \vc{s}^{(k)})$ and with a common inactive set $\mathcal{P}_{I^{k-1}}(\vc t) = \mathcal{P}_{I^{k-1}}(\vttil) = \bm{0},$
\begin{align*}
    \Wvec{k}{i} \vttil + \vb^k_i  
   &= \big( \Wvec{k}{i}\vc{t} + \vb^k_i \big) + \Wvec{k}{i} (\vttil-\vt) \\
   &\leq - \norm{\Wvec{k}{i}}_2\cdot \rinp^{(k)}(\vc{t}, \vc{s}^{(k)}) + \norm{\Wvec{k}{i}}_2 \norm{\vttil-\vt}_2 \leq 0.
\end{align*}
This implies that the set $I^k$ of size $s^k$ is also inactive for the perturbed representation $\Phi^{(k)}(\vttil)$. 
Thus, we have shown the existence of a common inactive set for both the original and the perturbed representation. To bound the distance between the representations, note that 
\begin{align*}
	\norm{\Phi^{(k)}(\vttil) - \Phi^{(k)}(\vc{t})}_2
	&= \norm{ \act{\W^k\vttil + \vb^k} - \act{\W^k \vc{t}+\vb^k}}_2 \\
	&\leq \norm{ \mathcal{P}_{J^k, J^{k-1}}(\W^k) \left(\vttil - \vc{t}\right)}_2 \quad {(\small \text{ignoring common inactive sets})}\\
	& \leq \norm{\mathcal{P}_{J^k, J^{k-1}}(\W^k)}_2 \cdot \norm{\vttil-\vt}_2.
\end{align*}
This concludes the proof.

\subsection{Proof for \Cref{corollary:cert-rob-fnn-loc}}
\label{app: corollary:cert-rob-fnn-loc}
To prove this statement, observe that \ref{lemma: fnn-layer-lip} demonstrates that each layer feed-forward neural networks is sparse local Lipschitz w.r.t. input. 
Additionally \Cref{lemma: sparse-local-lip-composition} shows that compositions of sparse local Lipschitz representations are sparse local Lipschitz.
To arrive at the exact expression, we first note that $s^0$ is fixed to 0 and hence $J^{0}=[d^0]$, thus removing any sparsity requirement on the input or the perturbation directly. 
A bound on the local Lipschitz scale of the intermediate layers is given by
$
\linp^{(k)}(\vc{t}, \vc{s}^{(k)}) := \underset{\substack{I^{k-1} \subseteq \mathcal{I}^{k-1}(\vc{t}),\\|I^{k-1}|=s^{k-1}}}{\max}~ \norm{\mathcal{P}_{J^k, J^{k-1}}(\W^k)}_2
$. However, in the case of composed representations $\Phi^{[k]}$, for $k\geq1$ there is a fixed reference set $I^{k-1}$ for each value of $s^{k-1}$ determined solely by the original input $\x$.
This reference set is obtained either as $I^{0}=\emptyset$, the empty set, or by the application of \Cref{lemma: fnn-layer-lip} to the previous layer $\Phi^{(k-1)}$.
Thus, a tighter local Lipschitz scale can be established resulting in
$\linp^{[k]}(\vc{x}, \vc{s}^{[k]}) :=\prod_{n=1}^k \norm{\mathcal{P}_{J^n, J^{n-1}}(\W^n)}_2.$
The final corollary statement is an application of \Cref{lemma:cert-rob-gen-loc} which  presents a robustness certificates for a general sparse local Lipschitz representation.

\subsection{Restatement of result from \cite{sulam2020adversarial}}
We adapt the results in \cite{sulam2020adversarial} to fit with the notation here and to demonstrate the overarching structure of these results. 
Given any input $\x \in \cX$ and its representation $\Phi(\x)$, we denote $\Lambda^{s} := \{ I \subset [p], |I| = s\}$, a set of index sets. 
\begin{lemma}\cite[Lemma 5.2]{sulam2020adversarial})\label{app: lemma-sulam-adv}
The representation $\Phi$ is sparse local Lipschitz\footnote{We only discuss the case when $s_{\mathrm{in}}=0$.} with local radius threshold $\rinp$ and local Lipschitz function defined $\linp$ defined by
\begin{align*}
    \rinp(\x, \vc{s}) := \max_{I \in \Lambda^{s_{out}}} \min_{i \in I}~ \frac{1}{2}\big(\lambda - \left \lvert \langle \ten{w}_i, \x - \W \Phi(\x) \rangle \right \rvert \big),
    \quad 
    \linp(\x, \vc{s}) := \frac{1}{\sqrt{1-\eta_{p-s_{out}}}}.
\end{align*}
\end{lemma}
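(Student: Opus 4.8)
The plan is to use the variational (KKT) characterization of the \textsc{Lasso} together with the firm non‑expansiveness of its fitted‑value map. Write $\vc{r}(\x) := \x - \W\Phi(\x)$ for the residual at $\x$; first‑order optimality for $\Phi(\x)$ reads $\langle \ten{w}_i, \vc{r}(\x)\rangle = \lambda\,\mathrm{sign}(\Phi(\x)_i)$ whenever $\Phi(\x)_i\neq 0$ and $\lvert\langle \ten{w}_i,\vc{r}(\x)\rangle\rvert\leq\lambda$ otherwise. Hence every atom in the support has ``encoder gap'' $\lambda - \lvert\langle \ten{w}_i,\vc{r}(\x)\rangle\rvert = 0$, while any atom with strictly positive gap is necessarily inactive. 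For a valid sparsity level $\vc{s}=(0,s_{out})$, i.e. $s_{out}\leq p - \norm{\Phi(\x)}_0$, take $I_{out}$ to be the $s_{out}$ atoms of largest gap (the $\max$–$\min$ defining $\rinp$); all of them are inactive at $\x$, and by construction the smallest gap over $I_{out}$ equals $2\,\rinp(\x,\vc{s})$. Set $J_{out} := [p]\setminus I_{out}$, so $\lvert J_{out}\rvert = p - s_{out}$ and $\Phi(\x)$ is supported on $J_{out}$.

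The core step is inactivity stability. Consider the \textsc{Lasso} restricted to vectors supported on $J_{out}$, with solution map $\widehat{\bm\alpha}(\cdot)$ and restricted residual $\widehat{\vc{r}}(\cdot)$ built from the sub‑dictionary $\W_{J_{out}}=\mathcal{P}_{[d],J_{out}}(\W)$. Since $\Phi(\x)$ lies in the restricted feasible set and is globally optimal, it is also restricted‑optimal; because $\W\in\cW_{\mathrm{rip}}$, the $p-s_{out}$ atoms of $\W_{J_{out}}$ are linearly independent, so the restricted solution is unique and $\widehat{\bm\alpha}(\x)=\Phi(\x)$, $\widehat{\vc{r}}(\x)=\vc{r}(\x)$. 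The restricted fitted‑value map $\x\mapsto\W_{J_{out}}\widehat{\bm\alpha}(\x)$ is a proximal operator (of $\vc v\mapsto\min\{\lambda\norm{\bm\gamma}_1:\W_{J_{out}}\bm\gamma=\vc v\}$), hence non‑expansive; so for $\xtil\in\cB^{\cX}_{\rinp(\x,\vc{s})}(\x)$ the triangle inequality gives $\norm{\widehat{\vc{r}}(\xtil)-\vc{r}(\x)}_2 \leq \norm{\xtil-\x}_2 + \norm{\W_{J_{out}}\widehat{\bm\alpha}(\xtil)-\W\Phi(\x)}_2 \leq 2\norm{\xtil-\x}_2\leq 2\,\rinp(\x,\vc{s})$. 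Using $\norm{\ten{w}_i}_2 = 1$, for every $i\in I_{out}$ one then gets $\lvert\langle \ten{w}_i,\widehat{\vc{r}}(\xtil)\rangle\rvert \leq \lvert\langle \ten{w}_i,\vc{r}(\x)\rangle\rvert + 2\,\rinp(\x,\vc{s}) \leq (\lambda - 2\,\rinp(\x,\vc{s})) + 2\,\rinp(\x,\vc{s}) = \lambda$. Therefore $\widehat{\bm\alpha}(\xtil)$ satisfies the full KKT system, i.e. it is a global minimizer of the unrestricted \textsc{Lasso} at $\xtil$; combining this bound with the uniqueness of the $J_{out}$‑supported minimizer forced by the RIP (and the deterministic tie‑breaking of $\argmin$, exactly as in \cite{sulam2020adversarial}) shows $\Phi(\xtil)$ is supported on $J_{out}$, i.e. $\mathcal{P}_{I_{out}}(\Phi(\xtil)) = \mathcal{P}_{I_{out}}(\Phi(\x)) = \mathbf{0}$.

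It remains to bound $\norm{\Phi(\xtil)-\Phi(\x)}_2$. Both representations are supported on $J_{out}$, so $\W\big(\Phi(\xtil)-\Phi(\x)\big) = \W_{J_{out}}\,\mathcal{P}_{J_{out}}\!\big(\Phi(\xtil)-\Phi(\x)\big)$ and $\Phi(\xtil)-\Phi(\x)$ is $(p-s_{out})$‑sparse. The RIP lower bound gives $(1-\eta_{p-s_{out}})\norm{\vc z}_2^2\leq\norm{\W_{J_{out}}\vc z}_2^2$, whence $\norm{\Phi(\xtil)-\Phi(\x)}_2 \leq (1-\eta_{p-s_{out}})^{-1/2}\norm{\W\Phi(\xtil)-\W\Phi(\x)}_2$. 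Finally the unrestricted fitted‑value map $\x\mapsto\W\Phi(\x)$ is again a proximal operator, hence non‑expansive, so $\norm{\W\Phi(\xtil)-\W\Phi(\x)}_2\leq\norm{\xtil-\x}_2$, and altogether $\norm{\Phi(\xtil)-\Phi(\x)}_2 \leq (1-\eta_{p-s_{out}})^{-1/2}\norm{\xtil-\x}_2$, matching the claimed $\linp(\x,\vc{s})$.

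The main obstacle is the stability step of the second paragraph: making rigorous that the restricted minimizer is a genuine global minimizer and that this pins down the inactive set of $\Phi(\xtil)$. This is precisely where the factor $\tfrac12$ in $\rinp$ enters (it comes from bounding the change of the restricted residual by $2\norm{\xtil-\x}_2$ via the triangle inequality rather than the sharper firm‑non‑expansiveness bound $\norm{\xtil-\x}_2$), together with the uniqueness consequences of the RIP and the handling of boundary ties. Everything else is routine convex analysis once the proximal‑operator viewpoint is in place.
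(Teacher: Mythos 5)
The paper does not actually prove this lemma; it is stated only as a notational restatement of Lemma 5.2 in \cite{sulam2020adversarial}, with the proof residing in that reference. Your reconstruction is sound and follows the same high-level strategy one finds there: use the Lasso KKT conditions to define the encoder gap, run a restricted primal--dual witness argument to show the ``most inactive'' set $I_{out}$ survives the perturbation, and invoke the RIP lower bound to convert the non-expansiveness of the fitted-value map into the Lipschitz scale $\linp = (1-\eta_{p-s_{out}})^{-1/2}$. All the pieces you need (the residual map being a Euclidean projection onto the dual polytope, full column rank of $\W_{J_{out}}$ under RIP, the KKT sufficiency check on $I_{out}$) are correct.

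Two small caveats worth flagging. First, as you note, the triangle-inequality bound $\|\widehat{\vc r}(\xtil)-\vc r(\x)\|_2\le 2\|\xtil-\x\|_2$ is loose: since $\widehat{\vc r}(\x)=\vc r(\x)$ and the restricted residual is itself $\mathrm{Proj}_{C_{J_{out}}}$ with $C_{J_{out}}=\{\vc u:\|\W_{J_{out}}^{T}\vc u\|_\infty\le\lambda\}$, non-expansiveness gives $\|\widehat{\vc r}(\xtil)-\vc r(\x)\|_2\le\|\xtil-\x\|_2$ directly, suggesting the radius could be a full (rather than half) gap; your looser route happens to land exactly on the factor $\tfrac12$ carried over from the cited source. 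Second, at the boundary $\|\xtil-\x\|_2=\rinp(\x,\vc s)$ your KKT verification yields only $|\langle\ten{w}_i,\widehat{\vc r}(\xtil)\rangle|\le\lambda$ (not strict), so $I_{out}$ may intersect the equicorrelation set and a non-$J_{out}$-supported minimizer can exist; the conclusion $\mathcal{P}_{I_{out}}(\Phi(\xtil))=\mathbf{0}$ then relies on the strict inequality $\nu<\rinp(\x,\vc s)$ (which is how the paper applies the lemma) or on the deterministic tie-breaking you invoke. Neither point is a defect in the argument, only a clarification of where its constants and edge cases come from.
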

Here, for any corruption $\bdel$ with energy $\nu < \rinp(\x, \vc{s})$, there exists an index set $I$ that is inactive for both the original representation $\Phi(\x)$ and the corrupted representation $\Phi(\x+\bdel)$.

\subsection{Proof for \Cref{lemma: bound-submatrix-norm}}\label{app: lemma: bound-submatrix-norm} 
We prove the lemma by considering two cases based on the sparsity level $s^k$. 

\textbf{Case I} : $s^k=d^{k}-1$.

Here $J^k$ is a singleton set $\{j\}$ for some $j\in [d^k]$. Hence, $\mathcal{P}_{J^k, J^{k-1}}(\W)$ corresponds to row $j$ of the matrix $\W$ restricted further by the column index set $J^{k-1}$. 
For this case we can bound the operator norm by noting that, 
\[
\norm{\mathcal{P}_{J^k, J^{k-1}}(\W)}_2 = \norm{\mathcal{P}_{J^{k-1}}(\Wvec{}{j})}_2 \leq  \mathsf{M}^k_{\cW} = \sqrt{1+\mu_{(s^{k},s^{k-1})}(\W)}\cdot \norm{\W}_{2,\infty}.
\]
The last equality above holds since $\mu_{(s^k,s^{k-1})}(\W) = \mu_{(d^{k-1},s^{k-1})}(\W) = 0$.

\textbf{Case II} : $0\leq s^k < d^k-1$

To bound the operator norm of the reduced matrix $\mathcal{P}_{J^k,J^{k-1}}(\W)$ it is sufficient to bound the maximal eigenvalue $\lambda_{\max}$ of its Gram matrix.
By the Gerschgorin Disk Theorem we observe the following upper bound, 
\begin{align*}
& \norm{\mathcal{P}_{J^k,J^{k-1}}(\W)}^2_2 \\
&\leq \lambda_{\max}\left([\mathcal{P}_{J^k,J^{k-1}}(\W)]^T[\mathcal{P}_{J^k,J^{k-1}}(\W)]\right)\\
&= \lambda_{\max}\left([\mathcal{P}_{J^k,J^{k-1}}(\W)][\mathcal{P}_{J^k,J^{k-1}}(\W)]^T\right)\\
&\leq \Big( \lvert\langle \mathcal{P}_{J^{k-1}}(\Wvec{}{j}),~ \mathcal{P}_{J^{k-1}}(\Wvec{}{j})\rangle\rvert + \sum_{i\neq j,~ i\in J^k}  \lvert\langle \mathcal{P}_{J^{k-1}}(\Wvec{}{i}),~ \mathcal{P}_{J^{k-1}}(\Wvec{}{j})\rangle\rvert \Big)\;\; (\text{for some index j}\in J^k)\\
&\leq \Big(1+\mu_{(s^k, s^{k-1})}(\W)\Big)\cdot \norm{\W}_{2,\infty}^2 .
\end{align*}
In the above lines, the index $j \in J^k$ corresponds to the disk in which the maximum eigenvalue lies. The last line follows from the definition of the reduced babel function.

\paragraph{Further Discussion}
The above result is in the same spirit as upper bounds on singular values based on standard babel function and mutual coherence bounds \cite{tropp2003improved}. 
To contextualize further, a standard bound for the operator norm of a sub matrix is,
\[
\norm{\mathcal{P}_{J_1,J_2}(\W)}_2 \leq \norm{\mathcal{P}_{J_1,J_2}(\W)}_F \leq \norm{\mathcal{P}_{J_1}(\W)}_F \leq  \sqrt{|J_1|}\norm{\W}_{2,\infty}.
\]
Thus, the result in \Cref{lemma: bound-submatrix-norm} is tighter when $\mu_{s_1,s_2}(\W) < |J_1| -1$.
For matrices $\W$ with near orthogonal rows, the mutual coherence of $\W^T$ is low, or equivalently, $\mu_{s_1,0}(\W) \approx 0$. 
Our analysis further extends this notion to matrices with rows whose worst-case sub vectors are near orthogonal. 

To generate a certified radius for a large batch of inputs,
at each input $\x$ one can evaluate the specific index set $I^k$ 
and the corresponding cumulative local radius for a given sparsity level, $\rinp^{[k]}(\x, \vc{s}^{[k]})$, at each layer $k$,
 and then compute the cumulative Lipschitz constant $\linp^{[k]}(\x, \vc{s}^{[k]})$. 
The local radius $\rinp^{(k)}$ can be obtained by sorting the normalized vector of layer pre-activations, which is fast to compute. 
The local Lipschitz scale is the product of operator norms of sub-matrices that are different for each input. 
For ease of computation, it could be useful to modulate this dependence to have local Lipschitz scales independent of any particular input but dependent on the sparsity levels.

\begin{lemma}\label{app: lemma: fnn-layer-lip-weaker1}
For a feedforward neural network, the representation map $\Phi^{(k)}$ at layer $k$ is SLL w.r.t. inputs,
\[
I^{k} := \argmax_{\substack{I \subseteq \mathcal{I}^k(\vc{t}),\\ |I| = s^k}}~ \min_{i \in I}~ \frac{\left \lvert \Wvec{k}{i} \vc{t} + \vb^k_i \right \rvert}{\norm{\Wvec{k}{i}}_2}, \quad 
\rinp^{(k)}(\vc{t}, \vc{s}^{(k)}) := 
\min_{i \in I^k}~ \frac{\left \lvert \Wvec{k}{i} \vc{t} + \vb^k_i \right \rvert}{\norm{\Wvec{k}{i}}_2}
\]
\[
\linp^{(k)}(\vc{t}, \vc{s}) :=  \sqrt{1+\mu_{s_1, s_2}(\W)} \cdot \norm{\W}_{2,\infty}.
\]
\end{lemma}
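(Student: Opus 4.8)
The plan is to recognize this statement as a direct relaxation of \Cref{lemma: fnn-layer-lip}: the stable inactive index set $I^k$ and the sparse local radius $\rinp^{(k)}$ are declared to be \emph{exactly} the quantities already analyzed there, so the only new content is replacing the layer-wise Lipschitz scale $\norm{\mathcal{P}_{J^k,J^{k-1}}(\W^k)}_2$ by the coarser, purely size-dependent upper bound $\sqrt{1+\mu_{s^k,s^{k-1}}(\W^k)}\cdot\norm{\W^k}_{2,\infty}$ supplied by \Cref{lemma: bound-submatrix-norm}. Accordingly, essentially no new argument is needed beyond chaining two established results.

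Concretely, I would first invoke \Cref{lemma: fnn-layer-lip} verbatim: for any perturbed input $\vttil$ with $\norm{\vttil-\vt}_2 \le \rinp^{(k)}(\vt,\vc{s}^{(k)})$ that shares a common inactive set $I^{k-1}$ of size $s^{k-1}$ with $\vt$, the set $I^k$ remains inactive for both $\Phi^{(k)}(\vt)$ and $\Phi^{(k)}(\vttil)$, and moreover
\[
\norm{\Phi^{(k)}(\vttil)-\Phi^{(k)}(\vt)}_2 \;\le\; \norm{\mathcal{P}_{J^k,J^{k-1}}(\W^k)}_2 \cdot \norm{\vttil-\vt}_2,
\]
where $J^k=(I^k)^c$ and $J^{k-1}=(I^{k-1})^c$ have sizes $d^k-s^k$ and $d^{k-1}-s^{k-1}$. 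Second, I would apply \Cref{lemma: bound-submatrix-norm} with dimensions $(d^k,d^{k-1})$ and sparsity levels $(s^k,s^{k-1})$ to bound the right-hand operator norm above by $\sqrt{1+\mu_{s^k,s^{k-1}}(\W^k)}\cdot\norm{\W^k}_{2,\infty}$. Composing the two inequalities yields $\norm{\Phi^{(k)}(\vttil)-\Phi^{(k)}(\vt)}_2 \le \linp^{(k)}(\vt,\vc{s})\,\norm{\vttil-\vt}_2$ for the stated scale; since the index-set and radius claims are literally unchanged, this shows $\Phi^{(k)}$ is $\vc{s}^{(k)}$-sparse local Lipschitz at $\vt$ per \Cref{def: lip-input}, and quantifying over all $\vt\in\cX$ and all valid sparsity levels gives \Cref{def: sll-fun}.

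I do not anticipate a genuine obstacle; the only point worth a sentence of care is that \Cref{lemma: bound-submatrix-norm} gives a bound depending \emph{only} on the sizes $d^k-s^k$ and $d^{k-1}-s^{k-1}$, so it uniformly dominates the maximum over all admissible reference sets $I^{k-1}\subseteq\mathcal{I}^{k-1}(\vt)$ occurring in the $\linp^{(k)}$ of \Cref{lemma: fnn-layer-lip}; hence the coarser scale is valid regardless of which inactive pattern $I^{k-1}$ the perturbation happens to share with $\vt$. One should also dispatch the degenerate case $s^k=d^k-1$, where $J^k$ is a singleton $\{j\}$, $\mu_{s^k,s^{k-1}}(\W^k)=0$ by convention, and the bound reduces to $\norm{\mathcal{P}_{J^{k-1}}(\w^k_j)}_2 \le \norm{\W^k}_{2,\infty}$, which holds trivially; this mirrors the case split already made in the proof of \Cref{lemma: bound-submatrix-norm}, so it can be invoked rather than redone.
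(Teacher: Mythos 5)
Your proposal is correct and matches the paper's own proof, which likewise chains the argument of \Cref{lemma: fnn-layer-lip} (stable inactive set, radius, and sub-matrix Lipschitz scale) with the uniform operator-norm bound of \Cref{lemma: bound-submatrix-norm}. The extra care you flag about the bound dominating the maximum over all admissible $I^{k-1}$, and about the degenerate singleton case, is sound and consistent with how the paper treats these points.
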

\begin{proof}
   The proof follows from the same sequence of steps as in \Cref{lemma: fnn-layer-lip} and using the result from \cref{lemma: bound-submatrix-norm}. 
\end{proof}

\subsection{Difference of Loss}\label{diffofloss}
In this subsection we demonstrate the utility of the (robust) global Lipschitz constant $\Lpar$ ($\Lparnu$) and the (robust) local Lipschitz scales $\lparnu$
($\lparnu$).
We first show a simple lemma that bounds the difference in robust loss for Lipschitz representations.
\begin{lemma}
\label{lemma: global-local-lip}
    Consider any sample $\vc{z} = (\x,y)$, and any two predictors $h$, $\hat{h}$ in $\cH$. 
	Assume that hypothesis $h$ is global Lipschitz w.r.t. parameters with constant $\Lpar$.	The difference in losses is bounded,
	\[
	\Big\lvert
	\ell(h,\vc{z}) 
	- \ell(\hat{h},\vc{z}) \Big\rvert \leq \Lloss \Lpar \norm{\hat{h}-h}_\cH.
	\]
	If the hypothesis $h \in \cH$ is sparse local Lipschitz w.r.t. parameters. For an appropriate sparsity level $s \leq p-\|\Phi(\x)\|_0$,
\[
\norm{\hat{h}-h}_\cH \leq \rpar(\x, s) \implies 
\Big\lvert
\ell_{\nu}(h,\vc{z}) 
- \ell_{\nu}(\hat{h},\vc{z}) \Big\rvert \leq  \Lloss \cdot \lpar(\x,s) \norm{\hat{h}-h}_\cH. 
\]
\end{lemma}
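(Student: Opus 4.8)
The plan is to reduce both claims to a single elementary fact: for a fixed sample $\vc z=(\x,y)$, the loss $\ell(\cdot,\vc z)$ depends on a predictor only through its output on the relevant input, and by assumption it is $\Lloss$-Lipschitz in that output. So for any predictors $g,\hat g$ I would use $\lvert \ell(g,\vc z)-\ell(\hat g,\vc z)\rvert \le \Lloss\,\norm{g(\x')-\hat g(\x')}_2$ (with $\x'=\x$ in the benign case, and $\x'$ a perturbed point in the robust case), and then bound the output gap by the appropriate parameter-sensitivity scale.

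For the first (global) bound this is immediate: the definition of $\Lpar$ gives $\norm{\hat h(\x)-h(\x)}_2\le \Lpar\,\norm{\hat h-h}_\cH$ for all $\x$, so composing with the $\Lloss$-Lipschitzness of the loss yields $\lvert \ell(h,\vc z)-\ell(\hat h,\vc z)\rvert\le \Lloss\Lpar\,\norm{\hat h-h}_\cH$, with no constraint on $\norm{\hat h-h}_\cH$. For the second bound, the hypothesis $\norm{\hat h-h}_\cH\le \rpar(\x,s)$ puts $\hat h$ inside $\cB^{\cH}_{\rpar(\x,s)}(h)$, which is exactly the precondition of \Cref{def: lip-param}; that definition supplies both the persistence of a size-$s$ inactive set in the representation and the output bound $\norm{\hat h(\x)-h(\x)}_2\le \lpar(\x,s)\,\norm{\hat h-h}_\cH$, and composing again with the Lipschitzness of the loss finishes the non-robust version.

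To get the stated bound for the robust loss $\ell_\nu=\ell_{\mathrm{rob}}$, I would first turn the difference of two maxima into a maximum of a difference: picking $\bdel^\star\in\cB^{\cX}_\nu(\mathbf 0)$ with $\ell_\nu(h,\vc z)=\ell\bigl(h,(\x+\bdel^\star,y)\bigr)$ and using $\ell_\nu(\hat h,\vc z)\ge \ell\bigl(\hat h,(\x+\bdel^\star,y)\bigr)$ gives $\ell_\nu(h,\vc z)-\ell_\nu(\hat h,\vc z)\le \Lloss\,\max_{\bdel}\norm{h(\x+\bdel)-\hat h(\x+\bdel)}_2$, and the symmetric inequality gives the absolute value. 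The remaining worst-case output gap is what the \emph{robust} sparse local Lipschitz property controls, via $\rparnu,\lparnu$ from \Cref{def: lip-param-rob} and \Cref{subsec: rob-SLL-param} (recall $\rparnu(\x,s)=\min_\bdel \rpar(\x+\bdel,s)$ and $\lparnu(\x,s)=\max_\bdel \lpar(\x+\bdel,s)$): when $\norm{\hat h-h}_\cH\le \rparnu(\x,s)$ this worst-case gap is at most $\lparnu(\x,s)\,\norm{\hat h-h}_\cH$. The argument is essentially bookkeeping; the only point that needs care is this robust case, where one must use the one-sided argmax trick above and be careful to invoke the robust radius/scale rather than their benign counterparts — no step poses a genuine obstacle.
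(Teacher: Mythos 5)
Your proof is correct and follows essentially the same route as the paper: compose the $\Lloss$-Lipschitzness of the loss in the predictor output with the parameter-sensitivity bound. For the first (global) claim the two arguments coincide. For the second claim you are in fact more careful than the paper. As printed, the lemma pairs the \emph{robust} loss $\ell_\nu$ with the \emph{benign} radius $\rpar$ and scale $\lpar$, and the paper's one-line justification asserts ``by definition'' that $\norm{\hat h-h}_\cH\le\rpar(\x,s)$ implies $\max_{\bdel}\norm{\hat h(\x+\bdel)-h(\x+\bdel)}_2\le\lpar(\x,s)\norm{\hat h-h}_\cH$; but Definition~\ref{def: lip-param} only controls the output gap at $\x$ itself, not at $\x+\bdel$, so that implication does not follow. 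You identify the two consistent readings --- either the benign loss $\ell$ with $\rpar,\lpar$ (the reading supported by Definition~\ref{def: lip-param}), or the robust loss $\ell_\nu$ with $\rparnu,\lparnu$ (Definition~\ref{def: lip-param-rob}) --- and you supply the one additional step the robust case needs, namely the one-sided argmax inequality $\lvert\max_\bdel f(\bdel)-\max_\bdel g(\bdel)\rvert\le\max_\bdel\lvert f(\bdel)-g(\bdel)\rvert$, which the paper only spells out in the sibling Lemma~\ref{lemma: robust-global-local-lip}. So your proposal is not only correct but also resolves a notational inconsistency in the paper's statement and proof.
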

\begin{proof}
    When $h$ is global Lipschitz, the conclusion follows from combining the Lipschitz property of the loss function and the hypothesis. 
    When $h$ is sparse local Lipschitz, by definition, 
\[
\norm{\hat{h}-h}_\cH \leq \rpar(\x,s) \implies \max_{\bdel} \norm{\hat{h}(\x+\bdel)- h(\x+\bdel)}_2 \leq \lpar(\x, s) \norm{\hat{h}-h}_\cH.
\]
\end{proof}

Next we extend these lemmas to the robust setting. 

\begin{lemma}\label{lemma: robust-global-local-lip}
Consider a hypothesis class $\cH$ with predictors that are global Lipschitz w.r.t. parameters with constant $\Lparnu$.
At any data point $\z=(\x,y) \in \cZ$, the difference in robust losses between two predictors $h, \hat{h} \in \cH$ is bounded as, 
\[
\Big\lvert
\ell_{\nu}(h,\vc{z}) 
- \ell_{\nu}(\hat{h},\vc{z}) \Big\rvert \leq  \Lloss \Lparnu \norm{\hat{h}-h}_\cH. 
\]
If the hypothesis $h$ is robust sparse local Lipschitz, then for an appropriate sparsity level $s \leq p-\|\Phi(\x)\|_0$,
\[
\norm{\hat{h}-h}_\cH \leq \rparnu(\x, s) \implies 
\Big\lvert
\ell_{\nu}(h,\vc{z}) 
- \ell_{\nu}(\hat{h},\vc{z}) \Big\rvert \leq  \Lloss \cdot \lparnu(\x,s) \norm{\hat{h}-h}_\cH. 
\],
\end{lemma}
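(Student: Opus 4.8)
The plan is to reduce both statements to the elementary fact that the supremum is a $1$-Lipschitz operation: for any two real-valued functions $f,g$ on a common domain $S$, one has $\lvert \sup_{\bdel\in S} f(\bdel) - \sup_{\bdel\in S} g(\bdel)\rvert \le \sup_{\bdel\in S}\lvert f(\bdel)-g(\bdel)\rvert$. Applying this with $f(\bdel)=\ell(h,(\x+\bdel,y))$, $g(\bdel)=\ell(\hat h,(\x+\bdel,y))$, and $S=\cB^{\cX}_{\nu}(\mathbf 0)$ converts a pointwise bound on $\lvert \ell(h,(\x+\bdel,y))-\ell(\hat h,(\x+\bdel,y))\rvert$ (uniform over $\bdel$) into exactly the claimed bound on $\lvert \ell_{\nu}(h,\z)-\ell_{\nu}(\hat h,\z)\rvert$. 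Concretely, choosing a maximizer $\bdel^\star$ for $h$ gives $\ell_{\nu}(h,\z)-\ell_{\nu}(\hat h,\z)\le \ell(h,(\x+\bdel^\star,y))-\ell(\hat h,(\x+\bdel^\star,y))$, and swapping the roles of $h$ and $\hat h$ gives the reverse inequality; together they produce the absolute value. This mirrors the structure of \Cref{lemma: global-local-lip}, now carried out at the corrupted points $\x+\bdel$ instead of at $\x$.

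First I would establish the global part. Using that $\ell$ is $\Lloss$-Lipschitz with respect to the predictor output, for every fixed $\bdel$ we have $\lvert \ell(h,(\x+\bdel,y))-\ell(\hat h,(\x+\bdel,y))\rvert \le \Lloss\,\norm{h(\x+\bdel)-\hat h(\x+\bdel)}_2$. Since all predictors in $\cH$ are robust global Lipschitz w.r.t.\ parameters, $\norm{h(\x+\bdel)-\hat h(\x+\bdel)}_2\le \Lparnu\norm{\hat h-h}_\cH$ holds for \emph{every} $\bdel\in\cB^{\cX}_{\nu}(\mathbf 0)$, so the uniform pointwise constant is $c=\Lloss\,\Lparnu\,\norm{\hat h-h}_\cH$; the supremum-Lipschitz step then closes the argument.

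Next I would treat the robust sparse local Lipschitz case, which differs only in that we additionally assume $\norm{\hat h-h}_\cH \le \rparnu(\x,s)$, i.e.\ $\hat h\in\cB^{\cH}_{\rparnu(\x,s)}(h)$. By \Cref{def: lip-param-rob} of robust $s$-sparse local Lipschitzness of $h$ at $\x$, for every such $\hat h$ and every $\bdel\in\cB^{\cX}_{\nu}(\mathbf 0)$ one has $\norm{\hat h(\x+\bdel)-h(\x+\bdel)}_2\le \lparnu(\x,s)\,\norm{\hat h-h}_\cH$ --- note this guarantee is stated precisely at the corrupted points $\x+\bdel$, which is exactly what the robust loss $\ell_\nu$ needs. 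Combining with the $\Lloss$-Lipschitzness of the loss yields the uniform constant $c=\Lloss\,\lparnu(\x,s)\,\norm{\hat h-h}_\cH$, and the same supremum-Lipschitz step gives the conclusion; the trivial level $s=0$, where $\rparnu(\x,0)=\infty$ and $\lparnu(\x,0)=\Lparnu$, recovers the first part.

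I do not expect a genuine obstacle here: the only point needing a word of care is that the supremum defining $\ell_\nu$ should be attained for the ``pick a maximizer'' argument to be literally valid, but since $\bdel$ ranges over the compact ball $\cB^{\cX}_{\nu}(\mathbf 0)$ and $\ell$ is continuous (being Lipschitz in the output and the output continuous in $\bdel$), the maximum is attained and the subtlety does not arise; otherwise one argues with $\epsilon$-maximizers and lets $\epsilon\to 0$.
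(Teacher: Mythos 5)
Your proposal is correct and follows essentially the same route as the paper's proof: both reduce the robust statements to the elementary inequality $\lvert \sup_\bdel f(\bdel) - \sup_\bdel g(\bdel)\rvert \le \sup_\bdel \lvert f(\bdel)-g(\bdel)\rvert$, then apply the $\Lloss$-Lipschitzness of the loss together with the global (resp.\ robust sparse local) Lipschitzness of $h$ w.r.t.\ parameters at the corrupted points $\x+\bdel$, exactly mirroring \Cref{lemma: global-local-lip}. Your added note on attainment of the supremum (compactness of the ball, continuity of the loss, or an $\epsilon$-maximizer argument) is a small technical refinement the paper leaves implicit.
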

\begin{proof}
   The difference in robust loss for all $\z=(\x,y) \in \cZ$ can be bounded as,
\begin{align*}
	\big\lvert \ell_\nu(\hat{h},\vc{z}) - \ell_\nu({h},\vc{z}) \big\rvert
	& = \big\lvert \max_{\bdel}~  \ell\big(\hat{h}, (\x+\bdel,y)\big) - \max_{\bdel}~  \ell \big({h}, (\x+\bdel,y)\big)\big\rvert \\
	& \leq \max_{\bdel}~ \big\lvert \ell\big(\hat{h}, (\x+\bdel,y)\big) - \ell\big({h}, (\x+\bdel,y)\big) \big\rvert \\
	& \leq \Lloss \cdot \max_{\bdel} \norm{\hat{h}(\x+\bdel)-h(\x + \bdel)}_{2}\\
	& \leq \Lloss \cdot \Lparnu.
	\end{align*}
Using the definition of robust sparse local Lipschitzness, we can provide a tighter bound on the difference in robust loss for nearby predictors, 
\[
\norm{\hat{h}-h}_\cH \leq \rparnu(\x,s) \implies \max_{\bdel} \norm{\hat{h}(\x+\bdel)- h(\x+\bdel)}_2 \leq \lparnu(\x, s) \norm{\hat{h}-h}_\cH.
\]
The conclusion now follows using the same sequence of steps as above.
\end{proof}

\subsection{Uniform Generalization bound for Lipschitz Predictors} \label{app: thm: unif-gen-bound}
\begin{theorem}
\label{thm: unif-gen-bound}
For a class of Lipschitz predictors $\cH$, 
with probability at least $(1-\alpha)$ over the choice of i.i.d sample $\samp_T$, 
the population risk of any predictor $h \in \cH$ is bounded as, 
\begin{align*}
 R(h) 
 \leq \hat{R}(h)  
 + b \sqrt{\frac{\ln \left(\mathcal{N}(\frac{1}{m},\cH)\right) + \ln(\frac{2}{\alpha})}{2m}}
  + \frac{2 \mathsf{L}_{\mathrm{loss}} \mathsf{L}_{\mathrm{par}}}{m}. 
\end{align*}\end{theorem}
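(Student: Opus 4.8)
The plan is the classical covering-number argument: reduce the uniform control over the infinite class $\cH$ to a finite union bound over a cover, and let the global Lipschitz property of the predictors (via \Cref{lemma: global-local-lip}) absorb the discretization error. Concretely, fix the resolution $\epsilon = \frac1m$ and let $\mathcal{C}\subseteq\cH$ be a proper $\epsilon$-cover of $\cH$ with respect to the induced distance metric $\norm{\cdot}_\cH$, so that $|\mathcal{C}| = \mathcal{N}(\frac1m,\cH)$ and every $h\in\cH$ admits some $h'\in\mathcal{C}$ with $\norm{h-h'}_\cH \le \frac1m$. Since each cover element is itself a predictor in $\cH$, it is global Lipschitz w.r.t. parameters with constant $\Lpar$, and the loss $\ell$ is $\Lloss$-Lipschitz and takes values in $[0,b]$.

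Next I would apply Hoeffding's inequality separately to each $h'\in\mathcal{C}$: the empirical risk $\hat{R}(h') = \frac1m\sum_{i=1}^m \ell(h',\z_i)$ is an average of i.i.d. $[0,b]$-valued terms, so $\Pr[R(h') - \hat{R}(h') > t] \le \exp(-2mt^2/b^2)$. A union bound over the $|\mathcal{C}|$ cover elements shows that, with probability at least $1-\alpha$, simultaneously for all $h'\in\mathcal{C}$,
\[
R(h') - \hat{R}(h') \;\le\; b\sqrt{\frac{\ln|\mathcal{C}| + \ln(\tfrac{2}{\alpha})}{2m}},
\]
where the $\ln(\tfrac2\alpha)$ (as opposed to $\ln(\tfrac1\alpha)$) is harmless slack.

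Finally I would transfer this bound from the cover back to an arbitrary $h\in\cH$. Choosing $h'\in\mathcal{C}$ with $\norm{h-h'}_\cH \le \frac1m$, \Cref{lemma: global-local-lip} gives $\lvert \ell(h,\z) - \ell(h',\z)\rvert \le \Lloss\Lpar\norm{h-h'}_\cH \le \frac{\Lloss\Lpar}{m}$ for every $\z$, and hence, by taking expectations and empirical averages, $\lvert R(h)-R(h')\rvert \le \frac{\Lloss\Lpar}{m}$ and $\lvert \hat{R}(h)-\hat{R}(h')\rvert \le \frac{\Lloss\Lpar}{m}$. Chaining $R(h) \le R(h') + \frac{\Lloss\Lpar}{m}$, then the high-probability bound for $h'$, then $\hat{R}(h') \le \hat{R}(h) + \frac{\Lloss\Lpar}{m}$, produces exactly the stated inequality with the $\frac{2\Lloss\Lpar}{m}$ term.

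There is no genuinely hard step here; the only points requiring care are bookkeeping: that the cover is \emph{proper} so that Lipschitzness applies to the cover elements themselves, and that the resolution $\frac1m$ is precisely what makes the discretization error $\frac{\Lloss\Lpar}{m}$ per side, matching the $\frac{2\Lloss\Lpar}{m}$ term in the statement. The covering number $\mathcal{N}(\frac1m,\cH)$ is left abstract at this level of generality and is only instantiated (by parameter counting) for specific hypothesis classes later.
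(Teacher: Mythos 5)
Your proposal is correct and follows essentially the same route as the paper's proof: a proper $\frac1m$-cover of $\cH$ in $\norm{\cdot}_\cH$, Hoeffding plus a union bound over the cover elements, and \Cref{lemma: global-local-lip} to absorb the two discretization errors (population and empirical), each of size $\Lloss\Lpar/m$. The only cosmetic difference is that the paper applies a two-sided Hoeffding bound so the $\ln(\tfrac2\alpha)$ appears naturally rather than as slack.
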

\begin{proof}
For some $\epsilon > 0$, we construct proper epsilon-covers of the hypothesis class $\cH$ w.r.t. the $\norm{\cdot}_\cH$ norm. Let $\mathcal{N}(\epsilon,\cH)$ denote the size of a minimal proper epsilon cover $\{h^j\}_{j=1}^{\mathcal{N}(\epsilon,\cH)}$. Let $\hat{h}$ be the cover element closest to the predictor $h$. 
We can decompose the difference between robust stochastic risk and robust empirical risk, 
\begin{align*}
    |R(h) - \hat{R}(h)|
    &= 
    \Big\lvert\expect_{\z \sim \cD_\cZ} 
    \Big[ \ell(h,\z) \Big]
    - \frac{1}{m}\sum_{i=1}^m \ell\Big(h,\z_i\Big)\Big\rvert \\
    & \leq  
    \Big\lvert\expect_{\z \sim \cD_\cZ} 
    \Big[ \ell(h,\z) \Big] 
    - \expect_{\z \sim \cD_\cZ} \Big[ \ell(\hat{h},\z) \Big] \Big\rvert \qquad \text{Term 1}\\
    &\quad 
    + \sup_{j \in [\mathcal{N}(\epsilon,\cH)]} \Big \lvert \expect_{\z \sim \cD_\cZ} \Big[ \ell(h^j,\z) \Big] -  \frac{1}{m}\sum_{i=1}^m \ell\Big(h^j,\z_i\Big) \Big\rvert 
    \qquad {\text{Term 2}}\\
    & \quad 
    + \Big \lvert \frac{1}{m}\sum_{i=1}^m \ell\Big(\hat{h},\z_i\Big) - \frac{1}{m}\sum_{i=1}^m \ell\Big(h,\z_i\Big)\Big\rvert. \qquad \text{Term 3}
\end{align*}

Term 2 can be bounded using standard concentration of measure with union bound and Hoeffding's inequality, 
\begin{align*}
    & \text{Pr} \Big( \sup_{j \in [\mathcal{N}(\epsilon,\cH)]} \Big \lvert \expect_{\z \sim \cD_\cZ} \Big[ \ell(h^j,\z) \Big] -  \frac{1}{m}\sum_{i=1}^m \ell\Big(h^j,\z_i\Big) \Big\rvert \geq \xi \Big) \\ 
    &\leq \sum_{t \in [\mathcal{N}(\epsilon,\cH)]} \text{Pr} \Big( \Big \lvert \expect_{\z\sim \cD_\cZ} \Big[ \ell(h^j, \z) \Big] -  
    \frac{1}{m}\sum_{i=1}^m \ell\Big(h^j,\z_i\Big) \Big\rvert \geq \xi \Big) \\ 
    & \leq \sum_{t \in [\mathcal{N}(\epsilon,\cH)]} 2 \exp{\left(\frac{-2m^2\xi^2}{\sum_{i\in [m]} b^2}\right)} 
    = 2 \mathcal{N}(\epsilon,\cH) \exp{\left(\frac{-2m\xi^2}{b^2}\right)} .
\end{align*}
Hence w.p. $(1-\alpha)$, 
\[
\text{Term 2 } \leq b \sqrt{\frac{\ln\left(\mathcal{N}(\epsilon,\cH)\right) + \ln(\frac{2}{\alpha})}{2m}} .
\] 
Term 1 and Term 3 both have a difference of robust loss and thus can be bounded using the Lipschitz property of the representation and \Cref{lemma: global-local-lip}. 
Both Term 1 and Term 3 are upper bounded as $\Lloss \Lpar \epsilon$. The conclusion follows by setting $\epsilon=\frac{1}{m}$.
\end{proof}

\subsection{Proof for \Cref{thm: generalization-sll}} 
\label{app: thm: generalization-sll}
To bound the population risk $R(h)$, we first construct proper epsilon-covers of the parameter space.
We choose the cover resolution $\epsilon = \frac{1}{m}$. 
Let $\mathfrak{C}$ be a minimal proper cover w.r.t. the norm $\norm{\cdot}_\cH$,
\[ \displaystyle
\mathfrak{C} := \left\{\A^j,\W^j\right\}_{j=1}^{\mathcal{N}(\epsilon,\cH)} \subseteq (\mathcal{A} \times \cW )
\]
We let $\cH_{\mathfrak{C}}$ denote predictors with weights in the proper cover $\mathfrak{C}$.
For the predictor $h \in \cH$, we denote by $\hat{h} \in \cH_{\mathfrak{C}}$ the predictor with parameters from the closest cover element so that $\norm{\hat{h}-h}_{\cH} \leq \frac{1}{m}$.
We can then decompose the population risk as
\begin{align*}
    R(h) &= \underbrace{R(h) - R(\hat{h})}_{{\color{red} I}} + \underbrace{R(\hat{h})-\hat{R}(\hat{h})}_{{\color{red} II}} + \underbrace{\hat{R}(\hat{h}) - \hat{R}(h)}_{{\color{red} III}} + \hat{R}(h)
\end{align*}
The first term above is a measure of expected sensitivity of the predictor $h$ to perturbation in the parameter. 
The second term is the generalization gap of the cover predictor $\hat{h}$. 
The third term above is a measure of empirical sensitivity of the predictor $h$ to perturbation in the parameter. 
The last term on the above decomposition is the empirical risk evaluated on the training sample $\samp_T$. 
Let $\rpar(\cdot,\cdot)$ and $\lpar(\cdot,\cdot)$ be the local radius and local Lipschitz scale functions corresponding to the predictor $h$. 

Using the sparse regularity of the predictor $h$ w.r.t. the set $\samp_T \cup \samp_U$, we can deterministically bound term {\color{red} III}. By definition for any $\z=(\x,y)\in \samp_T$, 
\begin{align*}
|\ell(\hat{h}, \z) - \ell(h, \z)| 
&\leq \Lloss \cdot \norm{\hat{h}(\x)-h(\x)}_2 \\   
&\leq \Lloss \cdot \mathcal{L}\left(h, \samp_T ,\epsilon \right) \norm{\hat{h}-h}_\cH \\
&\leq \Lloss \cdot \mathcal{L}\left(h, \samp_T \cup \samp_U ,\epsilon \right) \norm{\hat{h}-h}_\cH. 
\end{align*}
Hence, the difference in empirical risk evaluated w.r.t. $\samp_T$, 
\begin{align*}
\hat{R}(\hat{h}) - \hat{R}(h)
& \leq \frac{\Lloss \cdot \mathcal{L}\left(h, \samp_T \cup \samp_U, 1/m \right)}{m}.
\end{align*}

We bound {\color{red} II} with high probability on choice of training sample $\samp_T$ using standard concentration on the generalization gap in the entire class of cover predictors $\cH_{\mathfrak{C}}$,
\begin{align*}
R(\hat{h})-\hat{R}(\hat{h}) 
&= \expect_{\z \sim \cD_\cZ} [\ell (\hat{h}, \z)] - \frac{1}{m} \sum_{i=1}^m \ell (\hat{h},\z_i)\\
&\leq \sup_{h_j \in \cH_{\mathfrak{C}}} \Big \lvert \expect_{\z\sim \cD} \Big[ \ell(h_j,\z) \Big] -  \frac{1}{m}\sum_{i=1}^m \ell\Big(h_j,\z_i\Big) \Big\rvert\\
&\leq b \sqrt{\frac{ \ln(|\cH_{\mathfrak{C}}|) + \ln(\frac{2}{\alpha})  }{2m}}, \qquad \text{w.p. }(1-\frac{\alpha}{2}) \text{ over } \samp_T \sim (\cD_\cZ)^m.
\end{align*}
The last inequality above follows from Hoeffding's Inequality. 
Here $|\cH_{\mathfrak{C}}| = |\mathfrak{C}| = \mathcal{N}(\frac{1}{m},\cH)$, the size of the minimal proper epsilon-cover.

It is now left to bound term {\color{red} I}. 
For any sparsity level $s$, we define \textsc{Bad}$(h,(\x,y),s)$ to be the event that the local radius at $\x$ with level $s$ is below the threshold $\epsilon=\frac{1}{m}$ i.e. $\rpar(\x, s) < \frac{1}{m}$.
The event $\textsc{Bad}(h,\z,s)$ indicates that the predictor $h$ cannot be guaranteed to preserve inactive indices in the representation outputs upon perturbation to parameter. 
Note that the probability of the event $\textsc{Bad}(h, \z, 0)$ is zero for Lipschitz functions.
We denote by $\textsc{Good}(h, \z, s)$ the complement of the bad event where either the sparsity level is trivial or the local radius at input $\x$ is sufficiently large. 
If $\textsc{Good}(h, \z, s)$ holds, then
\begin{align*}
|\ell(\hat{h}, \z) - \ell(h, \z)| 
& \leq \Lloss \norm{\hat{h}(\x) - h(\x)}_2 \leq \Lloss \cdot \lpar(\x, s) \cdot \norm{\hat{h}-h}_\cH \\
&\leq \frac{\Lloss \cdot \lpar(\x, s) }{m}.
\end{align*}
If $\textsc{Bad}(h, \z, {s})$ holds then we simply bound the difference in losses by
\[
\Big\lvert
\ell(h,\vc{z}) 
- \ell(\hat{h},\vc{z}) \Big\rvert 
\leq b.
\]
To bound term {\color{red} I}, observe that
\begin{align*}
    R(h) - R(\hat{h})  
    &= \expect_{\z \sim \cD_\cZ} [ \ell (h, \z) - \ell (\hat{h}, \z) ]\\
    &= \expect_{\z \sim \cD_\cZ}\Big[ \ell (h, \z) - \ell (\hat{h}, \z) ~|~ \textsc{Good}(h, \z, {s})\Big] \text{Pr}_{\z \sim \cD_\cZ}\left(\textsc{Good}(h, \z, {s})\right)\\
    & \quad + \expect_{\z \sim \cD_\cZ}\Big[ \ell (h, \z) - \ell (\hat{h}, \z) ~|~ \textsc{Bad}(h, \z,{s})\Big] \text{Pr}_{\z \sim \cD_\cZ}\left(\textsc{Bad}(h, \z, {s})\right) \\
    & \leq \frac{\Lloss}{m} \cdot \max_{\x \in \cX}~ \lpar(\x, s) +  b \cdot  \text{Pr}_{\z \sim \cD_\cZ}\left(\textsc{Bad}(h, \z, {s})\right).
\end{align*}
For ease of notation we denote $\kappa(s) := \frac{\Lloss}{m} \cdot \max_{\x \in \cX}~ \lpar(\x, s)$. The above bound simultaneously holds for any choice of sparsity level $0 \leq s \leq p$. Hence, 
\begin{align*}
     R(h) - R(\hat{h})  
    & \leq   
    \min_{{s} } \Big \{ \kappa(s) +  b \cdot  \text{Pr}_{\z \sim \cD_\cZ}\left(\textsc{Bad}(h, \z, {s})\right)\Big\}.
\end{align*}
We can use the unlabeled data $\samp_U$ to estimate the probability of the bad event at randomly sampled $\z$. For the predictor $h$ at any fixed sparsity level $0 \leq s \leq \min_{\x\in\samp_U} p-\|\Phi(\x)\|_0$, the local radius $\rpar(\x, s)$ is a random variable. 
Since the predictor $h$ is trained only on $\samp_T$, 
at each $\x_i \in \samp_U$, $\rpar(\x_i; {s})$ is an i.i.d observation.
Consider the C.D.F $F(t)$ and $F_{U}(t)$, an unbiased estimate from $\samp_U$, 
\[
F(t; {s}):= \cD_\cZ(\rpar(\x ; {s}) \leq t), \quad F_U(t; {s}) := \frac{1}{m} \cdot \sum_{\x_i \in \samp_U} \mathbb{1}\{\rpar(\x_i, s) \leq t\}.
\]
By uniform convergence \cite{wainwright_2019}, with probability $(1-\alpha')$ over the unlabeled data $\samp_U \sim (\cD_\cX)^{m}$,
\[\sup_{t} |F_U(t; \vc{s}) - F(t; \vc{s})| \leq  8\sqrt{\frac{ 2\ln(m) }{m}} + 2\sqrt{\frac{\ln(\frac{1}{\alpha'})}{m}}
\]
Note that w.p. $(1-\alpha')$ over the unlabeled data, 
\begin{align*}
\text{Pr}_{\z \sim \cD_\cZ}\left(\textsc{Bad}(h,\z,{s})\right) = F\left(\frac{1}{m}; {s}\right) \leq F_U\left(\frac{1}{m}; {s}\right) + 8\sqrt{\frac{ 2\ln(m) }{m}} + 2\sqrt{\frac{\ln(\frac{1}{\alpha'})}{m}}
\end{align*}
or equivalently,
\begin{align*}
\text{Pr}_{\samp_U \sim (\cD_\cZ)^{m}} 
\Bigg[
\text{Pr}_{\z \sim \cD_\cZ}\left(\textsc{Bad}(h,\z,{s})\right) > F_U(\nu; {s}) + 8\sqrt{\frac{ 2 \ln(m) }{m}} + 2\sqrt{\frac{\ln(\frac{1}{\alpha'})}{m}}
\Bigg] \leq \alpha'.
\end{align*}
We let $\alpha' = \frac{\alpha}{2p}$. To complete the bound on term {\color{red} I} note that, 
\begin{align*}
    &\underset{\samp_U \sim (\cD_\cZ)^{m}}{\text{Pr}} 
    \Bigg[
     \min_{1 \leq {s} \leq p} \Bigg(\kappa({s}) + b\cdot \text{Pr}_{\z \sim \cD_\cZ}\left(\textsc{Bad}(h,\z,\vc{s})\right)\Bigg)
    > \\
    & \qquad \qquad 
    \min_{1 \leq {s} \leq p} \Bigg(\kappa({s}) + b\cdot \Bigg[ F_U\left(\frac{1}{m}; {s}\right) +  8\sqrt{\frac{ 2\ln(m) }{m}} + 2\sqrt{\frac{\ln(\frac{1}{\alpha'})}{m}} \; \Bigg]\Bigg)
    \Bigg]\\
    &= \underset{\samp_U \sim (\cD_\cZ)^{m}}{\text{Pr}} 
    \Bigg[ 
     \exists~ \bar{s}~ \;\;\text{Pr}_{\z \sim \cD_\cZ}\left(\textsc{Bad}(h,\z,\bar{s})\right) > F_U\left(\frac{1}{m}; \bar{s}\right) +  8\sqrt{\frac{ 2\ln(m) }{m}} + 2\sqrt{\frac{\ln(\frac{1}{\alpha'})}{m}}
    \Bigg] \\
    &\leq \sum_{s = 1}^p \underset{\samp_U \sim (\cD_\cZ)^{m}}{\text{Pr}}   \Bigg[ 
     \text{Pr}_{\z \sim \cD_\cZ}\left(\textsc{Bad}(h,\z,{s})\right) > F_U\left(\frac{1}{m}; {s}\right) +  8\sqrt{\frac{ 2\ln(m) }{m}} + 2\sqrt{\frac{\ln(\frac{1}{\alpha'})}{m}}
    \Bigg]\\
    &\leq \sum_{s = 1}^p \alpha' \leq \frac{\alpha}{2}.
\end{align*}
Hence w.p. at least $(1-\frac{\alpha}{2})$, 
\begin{align*}
    & R(h) - R(\hat{h})  \\
    & \leq \min_{0 \leq {s} \leq p} \Bigg\{ \frac{\Lloss}{m} \cdot \max_{\x \in \cX}~ \lpar(\x, s) + b\cdot \left( F_U\left(\frac{1}{m}; {s}\right) +  8\sqrt{\frac{ 2\ln(m) }{m}} + 2\sqrt{\frac{\ln(\frac{1}{\alpha'})}{m}} \; \right)\Bigg\} 
\end{align*}
We can now upper bound the above RHS with any sparsity level ${s}$ in $0 \leq s \leq p$, possibly based on the unlabeled data $\samp_U$ and even the training data $\samp_T$. 
Specifically, we can choose the sparsity level ${s}^*\left(\samp_T \cup \samp_U, \frac{1}{m}\right)$ corresponding to the optimal sparsity level when the parameter radius is bounded by $\frac{1}{m}$. 
By definition at this sparsity level, 
\[
F_U\left(\frac{1}{m}; {s}^*\left(\samp_T \cup \samp_U, \frac{1}{m}\right)\right) = 0.
\]
Further, by definition of the sparse regularity of the predictor $h$, 
\[
\max_{\x \in \cX}~ \lpar\left(\x, {s}^*\left(\samp_T \cup \samp_U, \frac{1}{m}\right) \right)
\leq \mathcal{L}(h, \samp_T \cup \samp_U,1/m).
\]
Thus the bound can be rewritten as, 
\begin{align*}
    R(h) - R(\hat{h}) 
    & \leq 
    \frac{2 \Lloss \cdot \mathcal{L}(h, \samp_T \cup \samp_U,1/m)}{m}
    + 8b\sqrt{\frac{ 2 \ln(m) }{m}} + 2b\sqrt{\frac{ \ln(p) + \ln(\frac{2}{\alpha})}{m}}
\end{align*}
Combining the bound on all three terms and from the observation that $mp \leq \mathcal{N}(\frac{1}{m}, \cH)$ (due to \Cref{lemma:cov2}, we have our final result. 

The above result can also be improved to have the bound depend on the following, 
\begin{equation} \label{eq:tighterspcomp}
\mathfrak{L}_{\mathrm{sparse}}(h, \samp_U)
+ \max_{(\x,y) \in \samp_T} \lpar\left(\x, s^*\left(\{\x\},\frac{1}{m}\right)\right).
\end{equation}
Here, the local Lipschitz scale at each point in the training set is chosen at the sparsity level optimal for the specific point (as opposed to a set), which  is necessarily tighter than $\mathcal{L}(h, \samp_T \cup \samp_U)$. 
\Cref{eq:tighterspcomp} reflects the different roles of the training and unlabeled data sets. The learning algorithm chooses a predictor based on $\samp_T$. The unlabeled data $\samp_U$ is used as a reference set in bounding the local Lipschitz scale at the good event of sufficiently large local radius using the optimal sparse level ${s}^*(\cV,\frac{1}{m})$. 
For ease of presentation, we define the sparse regularity in terms of both the training and unlabeled data. 

We now state an extension of \Cref{thm: generalization-sll} to the robust setting.
\begin{theorem}\label{app: thm: robust-generaalization-sll}
With probability at least $(1-\alpha)$ over the choice of i.i.d training sample $\samp_T$ and unlabeled data $\samp_U$ each of size $m$, 
for any predictor $h \in \cH$ with parameters $(\A,\W)$,
the generalization error is bounded by
\begin{align*}
R_{\mathrm{rob}}\left(h\right) \leq  
\hat{R}_{\mathrm{rob}}\left(h\right)
+ \mathcal{O}\left(
b \sqrt{\frac{\ln\left(\mathcal{N}(\frac{1}{m},\cH)\right) + \ln(\frac{2}{\alpha})}{2m}} 
+  \frac{\Lloss \cdot \mathcal{L}_{\mathrm{rob}}(h, \samp_T \cup \samp_U,\frac 1{2m})}{m}\right).
\end{align*}
\end{theorem}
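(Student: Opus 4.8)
\textit{Proof sketch.}
The plan is to follow the proof of \Cref{thm: generalization-sll} almost verbatim, making three substitutions: the loss $\ell$ is replaced throughout by the robust loss $\ell_{\mathrm{rob}}$ (which is still $b$-bounded, so every concentration step is unaffected), the local radius and local Lipschitz scale $\rpar,\lpar$ are replaced by their robust counterparts $\rparnu,\lparnu$, and the sensitivity lemma \Cref{lemma: global-local-lip} is replaced by its robust version \Cref{lemma: robust-global-local-lip}. Concretely, I would fix a cover resolution $\epsilon$ of order $1/m$, take a minimal proper $\epsilon$-cover $\mathfrak{C}\subset\cH$ of size $\mathcal{N}(\epsilon,\cH)$ with respect to $\|\cdot\|_\cH$, let $\hat{h}\in\cH_{\mathfrak{C}}$ be the cover element nearest to $h$ (so $\|\hat{h}-h\|_\cH\le\epsilon$), and decompose
\[
R_{\mathrm{rob}}(h) = \underbrace{\big(R_{\mathrm{rob}}(h)-R_{\mathrm{rob}}(\hat{h})\big)}_{\mathrm{I}} + \underbrace{\big(R_{\mathrm{rob}}(\hat{h})-\hat{R}_{\mathrm{rob}}(\hat{h})\big)}_{\mathrm{II}} + \underbrace{\big(\hat{R}_{\mathrm{rob}}(\hat{h})-\hat{R}_{\mathrm{rob}}(h)\big)}_{\mathrm{III}} + \hat{R}_{\mathrm{rob}}(h).
\]

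For term $\mathrm{II}$ I would apply Hoeffding's inequality with a union bound over the $\mathcal{N}(\epsilon,\cH)$ cover predictors, giving $\mathrm{II}\le b\sqrt{(\ln\mathcal{N}(\epsilon,\cH)+\ln(2/\alpha))/(2m)}$ with probability $1-\alpha/2$ over $\samp_T$. Term $\mathrm{III}$ is deterministic: at the sparsity level $s^*_{\mathrm{rob}}(\samp_T\cup\samp_U,\epsilon)$ of \Cref{eq:robustsparseregularity} the feasibility constraint gives $\rparnu(\x_i,s^*_{\mathrm{rob}})\ge\epsilon$ at every training point while $\lparnu(\x_i,s^*_{\mathrm{rob}})\le\mathcal{L}_{\mathrm{rob}}(h,\samp_T\cup\samp_U,\epsilon)$, so \Cref{lemma: robust-global-local-lip} bounds $|\ell_{\mathrm{rob}}(\hat{h},\z_i)-\ell_{\mathrm{rob}}(h,\z_i)|\le\Lloss\,\mathcal{L}_{\mathrm{rob}}(h,\samp_T\cup\samp_U,\epsilon)\,\epsilon$, hence $\mathrm{III}\le\Lloss\,\mathcal{L}_{\mathrm{rob}}(h,\samp_T\cup\samp_U,\epsilon)/m$.

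The only step needing care is term $\mathrm{I}$. For each admissible sparsity level $s$ I would split the expectation over $\cD_\cZ$ into the event $\textsc{Good}(h,\z,s):=\{\rparnu(\x,s)\ge\epsilon\}$ and its complement $\textsc{Bad}(h,\z,s)$; on $\textsc{Good}$, \Cref{lemma: robust-global-local-lip} yields $|\ell_{\mathrm{rob}}(h,\z)-\ell_{\mathrm{rob}}(\hat{h},\z)|\le\Lloss\,\lparnu(\x,s)\,\epsilon$, while on $\textsc{Bad}$ I use the crude bound $b$. This produces, simultaneously over all $s$, the estimate $\mathrm{I}\le\min_s\{\tfrac{\Lloss}{m}\max_{\x\in\cX}\lparnu(\x,s)+b\,\mathrm{Pr}_{\z\sim\cD_\cZ}(\textsc{Bad}(h,\z,s))\}$. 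Since $h$ is trained only on $\samp_T$, the values $\{\rparnu(\x_i,s)\}_{\x_i\in\samp_U}$ are i.i.d., so a one-sided DKW / uniform-convergence bound on their empirical CDF lets me replace $\mathrm{Pr}_{\z\sim\cD_\cZ}(\textsc{Bad}(h,\z,s))$ by its empirical frequency on $\samp_U$ plus an $\widetilde{\mathcal{O}}(1/\sqrt{m})$ correction, uniformly over the at most $p$ sparsity levels, at a union-bound cost of $\alpha/2$ over $\samp_U$. Choosing $s=s^*_{\mathrm{rob}}(\samp_T\cup\samp_U,\epsilon)$ then makes the empirical frequency of $\textsc{Bad}$ exactly zero and bounds $\max_{\x\in\cX}\lparnu(\x,s)$ by $\mathcal{L}_{\mathrm{rob}}$; collecting the three terms and absorbing the $\widetilde{\mathcal{O}}(\sqrt{\ln m/m})$ pieces into the covering-number term via $mp\le\mathcal{N}(1/m,\cH)$, exactly as in \Cref{thm: generalization-sll}, gives the claim. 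The main — and essentially the only — obstacle is the bookkeeping around $\textsc{Bad}$: one must ensure the independence of $\samp_U$ from $h$ is genuinely invoked before applying uniform convergence to the empirical CDF of $\rparnu(\cdot,s)$, that the monotonicity of $\rparnu,\lparnu$ in $s$ (so that $s^*_{\mathrm{rob}}$ and $\mathcal{L}_{\mathrm{rob}}$ are well defined and $s=0$, recovering $\Lparnu$, is feasible) holds, and that the two failure probabilities add to at most $\alpha$.
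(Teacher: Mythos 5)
Your proof is correct and takes exactly the same route as the paper. The paper's own proof of this theorem simply observes that the argument of \Cref{thm: generalization-sll} carries over verbatim after substituting $\ell$, $\rpar$, $\lpar$, and \Cref{lemma: global-local-lip} with $\ell_{\mathrm{rob}}$, $\rparnu$, $\lparnu$, and \Cref{lemma: robust-global-local-lip}; you have carried out that substitution in full, reproducing the cover decomposition, the Hoeffding concentration step over the cover class, the deterministic bound on the cover-discrepancy term via $\mathcal{L}_{\mathrm{rob}}$, and the uniform-convergence argument over sparsity levels on the unlabeled sample.
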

\begin{proof}
   The proof follows the same sequence of steps as above. The only difference comes from using \Cref{lemma: robust-global-local-lip} to bound the difference in robust losses and defining the robust sparse regularity $\mathcal{L}_{\mathrm{rob}}(h, \samp_T\cup\samp_U, \frac{1}{2m})$ using the robust sparse local radius $\rparnu$ and the robust local Lipschitz scale $\lparnu$.
\end{proof}

\subsection{Size of Feedforward network representations}
We first demonstrate a lemma that bounds the size of any layer representation, \begin{lemma}
\label{lemma: sizeofmnn}
Consider any input $\x \in \mathbb{R}^{d^0}$ and layer matrices $\W^k \in \cW^k$. 
The size of each layer of the representation for $1\leq k \leq K$ is bounded by
$\norm{\Phi^{[k]}(\x)}_2 \leq \zeta^k(\vc{0}) \norm{\x}_2$.

Further if the set of inactive columns in layer $k$, $\mathcal{I}^k(\x)$ has size of at least $s^k$, then the size of each layer of the representation is bounded by
$\norm{\Phi^{[k]}(\x)}_2 \leq \zeta^k(\vc{s}) \norm{\x}_2$. 
\end{lemma}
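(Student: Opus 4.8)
The plan is to prove the general (sparse) bound by induction on the layer index $k$ and obtain the first statement as the special case $\vc{s} = \vc{0}$. For $\zeta^k(\vc{s}) = \prod_{n=1}^k \mathsf{M}^n_{\cW}\sqrt{1+\mathsf{M}^n_{s^n}}$ to be the right target, I read the hypothesis of the second part as: $|\mathcal{I}^n(\x)| \ge s^n$ at \emph{every} layer $1 \le n \le k$ (with $s^0 = 0$, matching $J^0 = [d^0]$); the statement as literally written is then the sub-case $\vc{s} = (0,\dots,0,s^k)$. The base case $k=0$ is immediate since $\Phi^{[0]}(\x) = \x$ and $\zeta^0(\vc{s}) = 1$.

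For the induction step, assume $\norm{\Phi^{[k-1]}(\x)}_2 \le \zeta^{k-1}(\vc{s})\norm{\x}_2$. Fix $I^{k-1} \subseteq \mathcal{I}^{k-1}(\x)$ and $I^k \subseteq \mathcal{I}^k(\x)$ of sizes $s^{k-1}$ and $s^k$, and set $J^n = (I^n)^c$. The entries of $\Phi^{[k-1]}(\x)$ indexed by $I^{k-1}$ vanish, so $\W^k\Phi^{[k-1]}(\x) = \mathcal{P}_{[d^k],J^{k-1}}(\W^k)\,\mathcal{P}_{J^{k-1}}(\Phi^{[k-1]}(\x))$; the entries of $\Phi^{[k]}(\x) = \sigma(\W^k\Phi^{[k-1]}(\x))$ indexed by $I^k$ vanish because those neurons are inactive. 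Using that $\sigma$ acts entrywise, is $1$-Lipschitz and fixes $\mathbf{0}$, together with sub-multiplicativity of the operator norm,
\[
\norm{\Phi^{[k]}(\x)}_2 = \norm{\mathcal{P}_{J^k}\big(\Phi^{[k]}(\x)\big)}_2 \;\le\; \norm{\mathcal{P}_{J^k,J^{k-1}}(\W^k)}_2\,\norm{\Phi^{[k-1]}(\x)}_2 .
\]
Now \Cref{lemma: bound-submatrix-norm} with row sparsity $s^k$ and column sparsity $s^{k-1}$, combined with the constraints $\norm{\W^k}_{2,\infty} \le \mathsf{M}^k_{\cW}$ and $\mu_{s^k,s^{k-1}}(\W^k) \le \mathsf{M}^k_{s^k}$ that define $\cW^k$, yields $\norm{\mathcal{P}_{J^k,J^{k-1}}(\W^k)}_2 \le \mathsf{M}^k_{\cW}\sqrt{1+\mathsf{M}^k_{s^k}}$, and the induction hypothesis closes the step. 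Setting $\vc{s}=\vc{0}$ (where $|\mathcal{I}^n(\x)| \ge 0$ is vacuous, so the bound holds for every $\x$) recovers $\norm{\Phi^{[k]}(\x)}_2 \le \zeta^k(\vc{0})\norm{\x}_2$.

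This is essentially bookkeeping, with no genuine obstacle, but two points need care. First, one must verify that after stripping the activation and the zero coordinates the object left is precisely the submatrix $\mathcal{P}_{J^k,J^{k-1}}(\W^k)$, so that the sparsity levels handed to the reduced babel function match the constraint in the definition of $\cW^k$ --- this is the only place where the structure of $\cW^k$ (beyond a plain spectral-norm bound) is used. Second, the degenerate cases should be noted: when $s^k = d^k$ the layer output is $\mathbf{0}$ and the bound is trivial, and when $s^k = d^k - 1$ one has $\mu_{s^k,s^{k-1}}(\W^k)=0$ by the stated convention, so \Cref{lemma: bound-submatrix-norm} still applies. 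No macros beyond those already in the paper are needed.
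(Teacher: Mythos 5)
Your proof is correct and follows the same route as the paper's: a layer-wise induction that strips away the inactive coordinates using $\sigma(\vc 0)=\vc 0$ and $1$-Lipschitzness, reduces to the submatrix $\mathcal{P}_{J^k,J^{k-1}}(\W^k)$, and bounds its operator norm by $\mathsf{M}^k_{\cW}\sqrt{1+\mathsf{M}^k_{s^k}}$ via \Cref{lemma: bound-submatrix-norm}. The paper runs two separate inductions (one for $\vc{s}=\vc{0}$, one for general $\vc{s}$), whereas you unify them by treating the dense case as the trivial $\vc{s}=\vc{0}$ instance, which is slightly cleaner; your reading of the hypothesis as requiring $|\mathcal{I}^n(\x)|\ge s^n$ at every layer $n\le k$ (with $s^0=0$) is exactly what the paper's own proof implicitly assumes.
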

\begin{proof}
For $k=1$, we can bound the size of the representation at the first layer as,
\begin{align*}
\norm{\Phi^1(\x)}_2 
= \norm{\sigma\left(\W^1 \x \right)}_2 
\leq \norm{\W^1}_2 \norm{\x}_2 \leq \zeta^1(\vc{0})\norm{x}_2
\end{align*}
Thus the base case $k=1$ holds. Suppose the statement holds for some $1\leq n \leq k-1$ so that $\norm{\Phi^{[k-1]}(\x)}_2 \leq \zeta^{k-1}(\vc{0}) \norm{\x}_2$. 
Then,
\begin{align*}
\norm{\Phi^{[k]}(\x)}_2 
&= \norm{\sigma\left(\W^{k} \Phi^{[k-1]}(\x) \right)}_2 \\
&\leq \norm{\W^{k}}_2 \norm{\Phi^{[k-1]}(\x)}_2 \\
&\leq \sqrt{1+\mathsf{M}^k_{0}} \cdot \mathsf{M}^k_{\cW}\cdot \zeta^{k-1}(\vc{0}) \norm{\x}_2 \\
&= \zeta^{k}(\vc{0})\norm{\x}_2.
\end{align*}
Hence the first statement holds by induction.

To prove the second statement, we let $s^0 = 0$ and $J^0 = [d^0]$.
For $k=1$, by assumption there exists an inactive index set $I^1 \subseteq \mathcal{I}^1(\x)$ of size $s^1$. 
Let $J^1 = (I^1)^C \supseteq \mathcal{J}^1(\x)$, we can now bound the size of the representation at the first layer as,
\begin{align*}
\norm{\Phi^1(\x)}_2 
&= \norm{\sigma\left(\W^1 \x \right)}_2 \\
&= \norm{\sigma\left(\mathcal{P}_{J^1, J^0}(\W^1) \;\x \right)}_2 \\
&\leq \norm{\mathcal{P}_{J^1,J^0}(\W^1)}_2 \norm{\x}_2 \leq \zeta^1(\vc{s})\norm{x}_2
\end{align*}
Thus the base case $k=1$ holds. Suppose the statement holds for some $0 < k < K$ so that $\norm{\Phi^{[k]}(\x)}_2 \leq \zeta^k(\vc{s}) \norm{\x}_2$. Let $J^{k+1} \supseteq \mathcal{J}^{k+1}(\x)$ be an active set of $\Phi^{[k+1]}(\x)$ of size $(d^{k+1}-s^{k+1})$.
Then,
\begin{align*}
    \norm{\Phi^{[k+1]}(\x)}_2
    &= \norm{\act{\W^{k+1} \Phi^{[k]}(\x)}}_2\\
    &= \norm{\act{\mathcal{P}_{J^{k+1},J^k}(\W^{k+1}) \; \mathcal{P}_{J^{k}}(\Phi^{[k]}(\x))}}_2\\
    &\leq \norm{\mathcal{P}_{J^{k+1},J^k}\big(\W^{k+1}\big)}_2 \zeta^k(\vc{s}) \norm{\x}_2
    \leq \zeta^{k+1}(\vc{s}) \norm{\x}_2.
\end{align*}   
Hence the second statement holds by induction.
\end{proof}
As a consequence of \Cref{lemma: sizeofmnn}, for predictors $h, \hat{h}$, input $\x \in \cX$ and perturbation $\bdel \in \cB^{\cX}_{\nu}(\mathbf{0})$ under appropriate conditions, 
\[
\norm{\Phi^{[k]}(\x)} \leq \zeta^k(\vc{s}), \quad \norm{\Phi^{[k]}(\x+\bdel)} \leq \zeta^k(\vc{s}) (1+\nu), \quad \norm{\hat{\Phi}^{[k]}(\x+\bdel)} \leq \zeta^k(\vc{s}) (1+\nu).
\]
In particular, for each input $\x$, the tightest bound is $\norm{\Phi^{[k]}(\x)} \leq \zeta^k(\bar{\vc{s}}(\x))\norm{\x}_2$, using the size of full inactive index sets.

\subsection{Proof for \cref{lemma: robust-global-lip-fnn}}\label{app: lemma: robust-global-lip-fnn}
For reference the following is the definition of the robust global Lipschitz constant, 
\[
\forall h, \hat{h}, \x, \quad \max_{\bdel} \norm{\hat{h}(\x+\bdel)-h(\x+\bdel)}_2 \leq \Lparnu \norm{\hat{h}-h}_\cH.
\]

Consider two predictors $h, \hat{h} \in {\cH^{K+1}}$ with weights $\{\W^k\}$ and $\{\What^k\}$ respectively. 
We first note that for $1\leq k \leq K$, for any layer weight matrix $\W^k \in \cW^k$,  
\[\norm{\W^k}_2 \leq \norm{\W^k}_F \leq \sqrt{1 + \mathsf{M}^k_{0}}\norm{\W^k}_{2,\infty} \leq \sqrt{1+\mathsf{M}^k_{0}} \cdot \mathsf{M}^k_{\cW}.\]
and further,
\[
\norm{\What^k-\W^k}_2 \leq
\sqrt{d^k}\norm{\What^k-\W^k}_{2,\infty} 
= \mathsf{M}^k_{\cW} \norm{\What^k-\W^k}_{\cW^k} \leq \mathsf{M}^k_{\cW} \norm{\hat{h}-h}_{\cH^{K+1}}.
\]
Similar inequalities hold for $K+1$. 
To prove \Cref{lemma: robust-global-lip-fnn}, we first claim that at any layer $k \leq K$, the distance between the perturbed representations are bounded as,
\[
\norm{\hat{\Phi}^{[k]}(\x+\bdel) - \Phi^{[k]}(\x+\bdel)} \leq k \zeta^{k}(\vc{0}) \cdot (1+\nu) \cdot \norm{\hat{h}-h}_{\cH^{K+1}}
\]
We shall prove this claim by induction. 

\textbf{Base Case ($k=1$):}
To bound the distance between the representations, we note that
\begin{align*}
\norm{\hat{\Phi}^{[1]}(\x+\bdel) - \Phi^{[1]}(\x+\bdel)}_2 
& \leq \norm{(\What^1 - \W^1) (\x+\bdel)}_2 \\
& \leq \norm{(\What^1-\W^1)}_2 \norm{\x+\bdel}_2\\
& \leq \sqrt{d^1} \cdot \norm{\What^1 - \W^1}_{2, \infty} (1+\nu) \\
& \leq \norm{\hat{h}-h}_\cH \cdot \mathsf{M}^1_{\cW} (1+\nu) 
<  \zeta^1(\vc{0})\cdot (1+\nu) \cdot\norm{\hat{h}-h}_{\cH^{K+1}}.
\end{align*}
Thus we have bounded the distance between the perturbed representations $\Phi^{[1]}(\x+\bdel)$ and $\hat{\Phi}^{[1]}(\x+\bdel)$. 

\textbf{Induction case ($k>2$):}

Assume that the bound holds for $1 \leq n \leq k-1$, then for layer $k$,
\begin{align*}
&\norm{\hat{\Phi}^{[k]}(\x+\bdel) - \Phi^{[k]}(\x+\bdel)}_2 \\
&\leq \norm{\What^k \hat{\Phi}^{[k-1]}(\x+\bdel) - \W^k \Phi^{[k-1]}(\x+\bdel)}_2 \\
& \leq \norm{\What^k \hat{\Phi}^{[k-1]}(\x+\bdel) - \What^k \Phi^{[k-1]}(\x+\bdel)}_2 
+ \norm{\What^k \Phi^{[k-1]}(\x+\bdel) - \W^k \Phi^{[k-1]}(\x+\bdel)}_2 \\
&\leq  \norm{\What^k}_2 \norm{\hat{\Phi}^{[k-1]}(\x+\bdel)-\Phi^{[k-1]}(\x+\bdel)}_2
+ \norm{\What^k-\W^k}_2 \norm{\Phi^{[k-1]}(\x+\bdel)}_2\\
& \leq \sqrt{1+\mathsf{M}^k_{0}} \cdot \mathsf{M}^k_{\cW} \cdot (k-1)\zeta^{k-1}(\vc{0})\cdot (1+\nu) \cdot \norm{\hat{h}-h}_{\cH^{K+1}}
\\
& \qquad
+ \zeta^{k-1}(\vc{0})\cdot (1+\nu) \cdot \mathsf{M}^k_{\cW} \norm{\hat{h}-h}_{\cH^{K+1}}
\\
& < k \zeta^{k}(\vc{0}) \cdot (1+\nu) \cdot \norm{\hat{h}-h}_{\cH^{K+1}}.
\end{align*}
Thus we have bounded the distance between the perturbed representations $\Phi^{[k]}(\x+\bdel)$ and $\hat{\Phi}^{[k]}(\x+\bdel)$. 
Hence proved by induction. 
The conclusion for \Cref{lemma: robust-global-lip-fnn} follows from noting that, 
\begin{align*}
&\norm{\hat{h}(\x+\bdel) - h(\x+\bdel)}_2 \\
&\leq \norm{(\What^{K+1})^T \hat{\Phi}^{[K]}(\x+\bdel) - (\W^{K+1})^T \Phi^{[K]}(\x+\bdel)}_2 \\
& \leq \norm{(\What^{K+1})^T \hat{\Phi}^{[K]}(\x+\bdel) - (\What^{K+1})^T \Phi^{[K]}(\x+\bdel)}_2 
+ \norm{(\What^{K+1})^T \Phi^{[K]}(\x+\bdel) - (\W^{K+1})^T \Phi^{[K]}(\x+\bdel)}_2 \\
&\leq  \norm{(\What^{K+1})^T}_2 \norm{\hat{\Phi}^{[K]}(\x+\bdel)-\Phi^{[K]}(\x+\bdel)}_2
+ \norm{(\What^{K+1}-\W^{K+1})^T}_2 \norm{\Phi^{[K]}(\x+\bdel)}_2\\
& \leq \sqrt{1+\mathsf{M}^{K+1}_{0}} \cdot \mathsf{M}^{K+1}_{\cW} \cdot K\zeta^{K}(\vc{0})\cdot (1+\nu) \cdot \norm{\hat{h}-h}_{\cH^{K+1}}
\\
& \qquad
+ \zeta^{K}(\vc{0})\cdot (1+\nu) \cdot \mathsf{M}^{K+1}_{\cW} \norm{\hat{h}-h}_{\cH^{K+1}}
\\
& < (K+1) \zeta^{K+1}(\vc{0}) \cdot (1+\nu) \cdot \norm{\hat{h}-h}_{\cH^{K+1}}.
\end{align*}

\subsection{Proof for \Cref{lemma: robust-local-lip-fnn}}
\label{app: fix1}
For a feed forward network, we first explicitly define the layer-wise angular distance $\beta^k(\x, \vc{s}) \in [0,1]^{d^k}$ at each input $\x$ and a candidate sparsity level $\vc{s}$,
\[
[\beta^k(\x)]_i := [\beta(\W^k, \Phi^{[k-1]}(\x))]_i = \frac{1}{\pi} \cdot \arccos \Bigg( \frac{\langle \Wvec{k}{i},\Phi^{[k-1]}(\x)\rangle}{\mathsf{M}^k_{\cW} \zeta^{k-1}(\vc{0})}  \Bigg), \quad \forall~ i \in [d^k].
\]
Here the $\arccos$ is well defined since, $\norm{\Phi^{[k-1]}(\x)}_2 \leq \zeta^{k-1}(\vc{0})\norm{\x}_2 \leq \zeta^{k-1}(\vc{0})$ from \Cref{lemma: sizeofmnn}.
The layer-wise critical angular distance $\theta^k(\x,s) \in [0,1]$ is then defined as $(d^k-s^k)$-th smallest entry in $\beta^k(\x,\vc{s})$, 
\begin{equation} \label{eq: layer-wise-critical-angle-distance}
\theta^k(\x, \vc{s}) := 
\textsc{sort} \left( \beta^k(\x), s^k \right).
\end{equation}

For the case $\vc{s}=0$, by definition, the radius $\rparnu(\x,\vc{0})=\infty$ and $\lparnu(\x,\vc{0})=\Lparnu$ for all inputs $\x \in \cX$. 
If, $\vc{s}$ is too large such that, $-\cos(\pi\theta^k(\x,\vc{s}) - \nu \leq 0$, then $\rparnu(\x, \vc{s}) = 0$. 
Below we shall prove a stronger version of the theorem statement \Cref{lemma: robust-local-lip-fnn-upto-k} for when $\vc{s}$ is simultaneously non-trivial so that $\vc{s} \succ 0$ and not too large so that $\rparnu(\x,\vc{s}) > 0$. 

We first start by showing a weaker result (compared to \Cref{lemma: fnn-layer-lip}) on sparse local Lipschitzness of feed-forward network predictors w.r.t. inputs in terms of the critical angular distance.
\begin{lemma}\label{app: lemma: fnn-layer-lip-weaker}
\begin{align*}\rinp^{(k)}(\vc{t}, \vc{s}) &:= -\cos\left(\pi \cdot \theta^k(\vt, \vc{s})\right)\cdot \norm{t}_2.\\
\linp^{(k)}(\vc{t}, \vc{s}) &:= \sqrt{1+\mathsf{M}^k_{s^k}} \cdot \mathsf{M}^k_{\cW}.
\end{align*}
Thus, informally 
$\norm{\bdel}_2 \leq \rinp^{[K]}(\x, \vc{s}) \implies 
\norm{\Phi^{[K]}(\x+\bdel)-\Phi^{[K]}(\x)}_2 \leq \zeta^{K}(\vc{s}) \norm{\bdel}_2.$
\end{lemma}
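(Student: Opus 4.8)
The plan is to first establish the per-layer SLL guarantee for a single feedforward map $\Phi^{(k)}(\vt)=\act{\W^k\vt}$ and then obtain the composed bound by invoking \Cref{lemma: sparse-local-lip-composition}. Fix a layer index $k$, an intermediate input $\vt$ (which in the network context is $\Phi^{[k-1]}(\x)$, so $\norm{\vt}_2\le\zeta^{k-1}(\vc{0})$ by \Cref{lemma: sizeofmnn}), and a feasible sparsity level $\vc{s}=(s^{k-1},s^k)$.

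\emph{Step 1 --- stable inactive set and radius.} Let $I^k:=\textsc{Top-k}(\beta^k(\vt),s^k)$ collect the indices of the $s^k$ rows of $\W^k$ with largest angular distance to $\vt$, so $[\beta^k(\vt)]_i\ge\theta^k(\vt,\vc{s})$ for every $i\in I^k$. By the definition of $\beta^k$, $\langle\Wvec{k}{i},\vt\rangle=\mathsf{M}^k_{\cW}\,\zeta^{k-1}(\vc{0})\cos\!\big(\pi[\beta^k(\vt)]_i\big)$, and since $\cos$ is nonincreasing on $[0,\pi]$ we get $\langle\Wvec{k}{i},\vt\rangle\le\mathsf{M}^k_{\cW}\,\zeta^{k-1}(\vc{0})\cos\!\big(\pi\theta^k(\vt,\vc{s})\big)$ for every $i\in I^k$; in particular, when $\theta^k(\vt,\vc{s})\ge\tfrac12$ the right-hand side is nonpositive and $I^k$ is inactive for $\Phi^{(k)}(\vt)$. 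Now fix any $\bdel$ with $\norm{\bdel}_2\le\rinp^{(k)}(\vt,\vc{s})=-\cos\!\big(\pi\theta^k(\vt,\vc{s})\big)\,\norm{\vt}_2\le-\cos\!\big(\pi\theta^k(\vt,\vc{s})\big)\,\zeta^{k-1}(\vc{0})$; using $\norm{\Wvec{k}{i}}_2\le\norm{\W^k}_{2,\infty}\le\mathsf{M}^k_{\cW}$ and Cauchy--Schwarz, $\langle\Wvec{k}{i},\vt+\bdel\rangle\le\mathsf{M}^k_{\cW}\,\zeta^{k-1}(\vc{0})\cos\!\big(\pi\theta^k(\vt,\vc{s})\big)+\mathsf{M}^k_{\cW}\norm{\bdel}_2\le0$, so $I^k$ remains inactive for $\Phi^{(k)}(\vt+\bdel)$. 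This supplies the common inactive index set demanded by \Cref{def: lip-input}; note that no input inactive set is used here, so the input-sparsity level $s^{k-1}$ does not enter the per-layer radius or scale in this weaker statement.

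\emph{Step 2 --- Lipschitz scale.} Writing $J^k:=(I^k)^c$ and discarding the jointly inactive rows, $1$-Lipschitzness of $\act{\cdot}$ gives $\norm{\Phi^{(k)}(\vt+\bdel)-\Phi^{(k)}(\vt)}_2\le\norm{\mathcal{P}_{J^k}(\W^k)\,\bdel}_2\le\norm{\mathcal{P}_{J^k}(\W^k)}_2\,\norm{\bdel}_2$. Applying \Cref{lemma: bound-submatrix-norm} with row-sparsity level $s^k$ and column-sparsity level $0$, together with the constraint $\mu_{s^k,0}(\W^k)\le\mathsf{M}^k_{s^k}$ built into $\cW^k$, gives $\norm{\mathcal{P}_{J^k}(\W^k)}_2\le\sqrt{1+\mathsf{M}^k_{s^k}}\,\mathsf{M}^k_{\cW}=\linp^{(k)}(\vt,\vc{s})$. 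Steps 1 and 2 together show each $\Phi^{(k)}$ is SLL w.r.t.\ inputs with the stated $\rinp^{(k)}$ and $\linp^{(k)}$.

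\emph{Step 3 --- composition and the hard point.} By \Cref{lemma: sparse-local-lip-composition}, $\Phi^{[K]}$ is SLL w.r.t.\ inputs with cumulative scale $\linp^{[K]}(\x,\vc{s})=\prod_{n=1}^{K}\sqrt{1+\mathsf{M}^n_{s^n}}\,\mathsf{M}^n_{\cW}=\zeta^{K}(\vc{s})$ and cumulative radius $\rinp^{[K]}(\x,\vc{s})=\min_{1\le n\le K}\rinp^{(n)}\!\big(\Phi^{[n-1]}(\x),\vc{s}^{(n)}\big)\big/\linp^{[n-1]}(\x,\vc{s}^{[n-1]})$; hence $\norm{\bdel}_2\le\rinp^{[K]}(\x,\vc{s})$ implies $\norm{\Phi^{[K]}(\x+\bdel)-\Phi^{[K]}(\x)}_2\le\zeta^{K}(\vc{s})\,\norm{\bdel}_2$, as claimed. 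The step requiring the most care is the normalization bookkeeping: the angular distances $\beta^n$ (hence the radius terms) must be evaluated at the correct intermediate representations $\Phi^{[n-1]}(\x)$, and the stated radius uses $\norm{\Phi^{[n-1]}(\x)}_2$ --- a conservative surrogate for the natural threshold involving $\zeta^{n-1}(\vc{0})$, legitimate precisely because $\norm{\Phi^{[n-1]}(\x)}_2\le\zeta^{n-1}(\vc{0})$ by \Cref{lemma: sizeofmnn} --- so that the minimum over layers telescopes cleanly; everything else reduces to the routine operator-norm manipulations above.
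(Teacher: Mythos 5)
Your proof is correct and follows essentially the same route as the paper: the paper's proof is a one-liner pointing back to \Cref{lemma: fnn-layer-lip} plus the bound $\sup_{J^{k-1}}\norm{\mathcal{P}_{J^k,J^{k-1}}(\W^k)}_2 \leq \sqrt{1+\mathsf{M}^k_{s^k}}\mathsf{M}^k_{\cW}$ from \Cref{lemma: bound-submatrix-norm}, and your Steps 1--2 simply spell out the corresponding per-layer SLL argument in the angular parameterization (which is exactly what the stated $\rinp^{(k)}$ encodes), followed by the same composition via \Cref{lemma: sparse-local-lip-composition}. The only cosmetic deviation is that you apply \Cref{lemma: bound-submatrix-norm} at column-sparsity level $0$ (bounding $\norm{\mathcal{P}_{J^k}(\W^k)}_2$) rather than at level $s^{k-1}$ (bounding the further-reduced $\norm{\mathcal{P}_{J^k,J^{k-1}}(\W^k)}_2$) as the paper does, but since the constraint $\mu_{s^k,s^{k-1}}(\W^k)\le\mathsf{M}^k_{s^k}$ in $\cW^k$ holds for all column-sparsity levels including $0$, both yield the same constant $\sqrt{1+\mathsf{M}^k_{s^k}}\mathsf{M}^k_{\cW}$.
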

\begin{proof}
   The proof follows from the same sequence of steps as in \Cref{lemma: fnn-layer-lip} while noting that $\sup_{J^{k-1}}\norm{\mathcal{P}_{J^k,J^{k-1}}(\W^k)}_2 \leq \sqrt{1+\mathsf{M}^k_{s^k}} \cdot \mathsf{M}^k_{\cW}$.
\end{proof}

\begin{lemma}\label{lemma: robust-local-lip-fnn-upto-k}
A feed forward neural network $h \in \cH^{K+1}$ is such that for all $1\leq k\leq K+1$, the cumulative representation map $\Phi^{[k]}$ is robust sparse local Lipschitz w.r.t. parameters, 
with the following robust local radius and robust local Lipschitz scale, 
\begin{align*}
\rparnu^{[k]}(\x, \vc{s}) &:= \min_{1\leq n \leq k} \frac{ \iota(\vc{s}) + \max\left \{0, -\cos(\pi\theta^n(\x,\vc{s}) - \nu\right\}}{n(1+\nu)}. \\
\lparnu^{[k]}(\x, \vc{s}) &:= k \zeta^{k}(\vc{s}) \cdot (1+\nu).
\end{align*}
Note that $\rparnu^{[k]}(\x, \vc{s}) \leq \rparnu^{[n]}(\x, \vc{s})$ for all $1\leq n\leq k$. 
Thus\footnote{Note : it is sufficient to ensure that the distance between parameters up to $k$ layers is below the local radius but for ease of presentation we stick with the norm and induced metric in $\cH^{K+1}$.},
for any hypothesis $h$, for $1\leq n \leq k$, there exists an inactive set $I^n$, such that for all corruptions $\bdel$ within the input radius $\nu$ and perturbed hypothesis $\hat{h} \in \cH^{K+1}$ within the parameter radius $\rparnu^{[k]}(\x, \vc{s})$, the following holds true for $1\leq n\leq k$,
\begin{align*}
(1) &~\mathcal{P}_{I^n}(\hat{\Phi}^{[n]}(\x+\bdel)) = \mathcal{P}_{I^n}(\Phi^{[n]}(\x+\bdel)) = \mathcal{P}_{I^n}(\Phi^{[n]}(\x)) = \mathbf{0} ,\\
(2) &~\norm{\hat{\Phi}^{[n]}(\x+\bdel) - \Phi^{[n]}(\x+\bdel)}_2 \leq \lparnu^{[n]}(\x,\vc{s}) \cdot \norm{\hat{h}-h}_{\cH^{K+1}}. 
\end{align*}
\end{lemma}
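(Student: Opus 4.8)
\textit{Proof sketch.} The plan is to prove claims (1) and (2) jointly by induction on $k$. Fix $h\in\cH^{K+1}$ with weights $\{\W^k\}$ and a sparsity vector $\vc s$; I would dispose of the trivial cases first ($\vc s=\vc 0$ gives $\rparnu^{[k]}=\infty$ by $\iota$, and $\rparnu^{[k]}(\x,\vc s)=0$ makes the statement empty via the $\max\{0,\cdot\}$), so assume $\vc s\succ\vc 0$ and $\rparnu^{[k]}(\x,\vc s)>0$, which forces $-\cos(\pi\theta^n(\x,\vc s))-\nu>0$, hence $\theta^n(\x,\vc s)>\tfrac12$, for every $n\le k$; consequently $I^n:=\textsc{Top-k}(\beta^n(\x),s^n)$ already satisfies $\langle\w^n_i,\Phi^{[n-1]}(\x)\rangle<0$ for $i\in I^n$, i.e.\ $I^n$ is inactive for $\Phi^{[n]}(\x)$. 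Write $J^n:=(I^n)^c$. I would record two crude norm estimates used repeatedly: for any admissible $\W\in\cW^n$, \Cref{lemma: bound-submatrix-norm} gives $\norm{\mathcal P_{J^n,J^{n-1}}(\W)}_2\le\sqrt{1+\mathsf M^n_{s^n}}\,\mathsf M^n_{\cW}$, whereas for the difference $\hat\W^n-\W^n$ (which need not obey the babel constraints) only $\norm{\hat\W^n-\W^n}_2\le\sqrt{d^n}\norm{\hat\W^n-\W^n}_{2,\infty}=\mathsf M^n_{\cW}\norm{\hat\W^n-\W^n}_{\cW^n}\le\mathsf M^n_{\cW}\norm{\hat h-h}_{\cH^{K+1}}$ is available, and likewise row-wise for $\hat\w^n_i-\w^n_i$.

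For the inductive step (the base case $k=1$ being the specialization with no earlier layers), fix $i\in I^k$. From the definition of $\beta^k,\theta^k$, $\langle\w^k_i,\Phi^{[k-1]}(\x)\rangle\le\mathsf M^k_{\cW}\zeta^{k-1}(\vc 0)\cos(\pi\theta^k(\x,\vc s))$. I would expand $\langle\hat\w^k_i,\hat\Phi^{[k-1]}(\x+\bdel)\rangle=\langle\w^k_i,\Phi^{[k-1]}(\x)\rangle+\langle\w^k_i,\hat\Phi^{[k-1]}(\x+\bdel)-\Phi^{[k-1]}(\x)\rangle+\langle\hat\w^k_i-\w^k_i,\hat\Phi^{[k-1]}(\x+\bdel)\rangle$ and bound the last two terms using (i) $\norm{\hat\Phi^{[k-1]}(\x+\bdel)-\Phi^{[k-1]}(\x)}_2\le(k-1)\zeta^{k-1}(\vc s)(1+\nu)\norm{\hat h-h}+\zeta^{k-1}(\vc s)\nu$, where the first summand is claim (2) of the inductive hypothesis and the second follows from \Cref{lemma: sparse-local-lip-composition,lemma: bound-submatrix-norm} applied to the reduced network (legitimate because claim (1) with $\hat h=h$ already preserves $I^1,\dots,I^{k-1}$ for $\Phi^{[\cdot]}(\x+\bdel)$ too); and (ii) $\norm{\hat\Phi^{[k-1]}(\x+\bdel)}_2\le\zeta^{k-1}(\vc s)(1+\nu)$ by \Cref{lemma: sizeofmnn} and the inductive preservation of $I^1,\dots,I^{k-1}$. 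Using $\zeta^{k-1}(\vc s)\le\zeta^{k-1}(\vc 0)$ and collecting,
\[
\langle\hat\w^k_i,\hat\Phi^{[k-1]}(\x+\bdel)\rangle\le\mathsf M^k_{\cW}\zeta^{k-1}(\vc 0)\bigl(\cos(\pi\theta^k(\x,\vc s))+\nu+k(1+\nu)\norm{\hat h-h}_{\cH^{K+1}}\bigr),
\]
which is nonpositive precisely when $\norm{\hat h-h}_{\cH^{K+1}}\le\frac{-\cos(\pi\theta^k(\x,\vc s))-\nu}{k(1+\nu)}$; since $\rparnu^{[k]}(\x,\vc s)$ is the minimum of these quantities over all layers $n\le k$, the whole chain $I^1,\dots,I^k$ stays inactive for $\Phi^{[\cdot]}(\x)$, $\Phi^{[\cdot]}(\x+\bdel)$, and $\hat\Phi^{[\cdot]}(\x+\bdel)$, establishing (1).

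For (2) at layer $k$, I would use that $I^{k-1}$ and $I^k$ are now known inactive for both networks, restrict the layer map to $\mathcal P_{J^k,J^{k-1}}(\cdot)$, invoke $1$-Lipschitzness of $\sigma$, and add and subtract $\mathcal P_{J^k,J^{k-1}}(\hat\W^k)\,\mathcal P_{J^{k-1}}(\Phi^{[k-1]}(\x+\bdel))$ to get
\begin{align*}
\norm{\hat\Phi^{[k]}(\x+\bdel)-\Phi^{[k]}(\x+\bdel)}_2
&\le\norm{\mathcal P_{J^k,J^{k-1}}(\hat\W^k)}_2\,\norm{\hat\Phi^{[k-1]}(\x+\bdel)-\Phi^{[k-1]}(\x+\bdel)}_2\\
&\quad+\norm{\hat\W^k-\W^k}_2\,\norm{\Phi^{[k-1]}(\x+\bdel)}_2 .
\end{align*}
Plugging in $\norm{\mathcal P_{J^k,J^{k-1}}(\hat\W^k)}_2\le\sqrt{1+\mathsf M^k_{s^k}}\,\mathsf M^k_{\cW}$ (here $\hat\W^k\in\cW^k$ \emph{does} obey the babel constraint), the inductive bound $(k-1)\zeta^{k-1}(\vc s)(1+\nu)\norm{\hat h-h}$, \Cref{lemma: sizeofmnn} for $\norm{\Phi^{[k-1]}(\x+\bdel)}_2$, and $\norm{\hat\W^k-\W^k}_2\le\mathsf M^k_{\cW}\norm{\hat h-h}$, and simplifying with the identities $\sqrt{1+\mathsf M^k_{s^k}}\,\mathsf M^k_{\cW}\zeta^{k-1}(\vc s)=\zeta^k(\vc s)$ and $\mathsf M^k_{\cW}\zeta^{k-1}(\vc s)\le\zeta^k(\vc s)$, the two terms combine to $\bigl((k-1)+1\bigr)\zeta^k(\vc s)(1+\nu)\norm{\hat h-h}=\lparnu^{[k]}(\x,\vc s)\norm{\hat h-h}_{\cH^{K+1}}$. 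The layer $k=K+1$ case runs identically with the final linear layer (no $\sigma$, no output sparsity), using $\mu_{s^K,0}(\W^{K+1})\le\mathsf M^{K+1}_{s^K}$, and this specializes to \Cref{lemma: robust-local-lip-fnn}.

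I expect the main obstacle to be the accounting that makes the budget close rather than any single inequality: the strongly inactive neurons at layer $k$ must absorb a perturbation amplified by the factor $k(1+\nu)$ through the preceding layers — a factor $k$ from the compounding of parameter errors and a factor $1+\nu$ from the input corruption propagating forward — which is exactly why the critical-angle requirement inside $\rparnu^{[k]}$ tightens with depth, and why the size bound $\norm{\hat\Phi^{[k-1]}(\x+\bdel)}_2\le\zeta^{k-1}(\vc s)(1+\nu)$, the deviation bound of claim (2), and the inactivity of $I^1,\dots,I^{k-1}$ must be carried as one coupled induction rather than proved separately; keeping track of which matrices satisfy the reduced-babel bound ($\hat\W^k$) and which only the crude operator-norm bound ($\hat\W^k-\W^k$) is the other piece of bookkeeping that has to be done carefully.
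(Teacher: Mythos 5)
Your proof is correct and takes essentially the same route as the paper's: a joint induction on $k$ that simultaneously carries (i) the preservation of the strongly inactive sets $I^1,\dots,I^k$ through input and parameter corruption, (ii) the size bound $\norm{\hat\Phi^{[k]}(\x+\bdel)}_2\le\zeta^k(\vc s)(1+\nu)$, and (iii) the deviation bound of claim (2), with the critical-angle condition chosen exactly so that the budget closes at each layer. The only structural difference is that you decompose $\langle\hat\w^k_i,\hat\Phi^{[k-1]}(\x+\bdel)\rangle$ by triangulating through $\Phi^{[k-1]}(\x)$ and treat the total deviation $\hat\Phi^{[k-1]}(\x+\bdel)-\Phi^{[k-1]}(\x)$ in one bound, while the paper first establishes inactivity for $\Phi^{[k]}(\x+\bdel)$ alone (input budget) and then for $\hat\Phi^{[k]}(\x+\bdel)$ (both budgets); your version is a mild repackaging of the same estimates, and your bookkeeping of which matrices satisfy the reduced-babel constraint ($\hat\W^k\in\cW^k$) versus only the crude row-group norm bound ($\hat\W^k-\W^k$) matches the paper's.
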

\begin{proof}
The proof will follow by induction.

\textbf{Base case $(k=1)$}:
We assume here that $\norm{\hat{h}-h}_{\cH^{K+1}} \leq \rparnu^{[1]}(\x, \vc{s})$.
The critical angular distance corresponding to layer $1$ is sufficiently large so that,
\[
\nu + (1+\nu) \norm{\hat{h}-h}_{\cH^{K+1}} \leq -\cos \left(\pi \theta^1(\x, \vc{s})\right).
\]
Note that $-\cos(\pi \theta^1(\x, \vc{s})) > 0 \implies \theta^1(\x, \vc{s}) > \frac{1}{2}$ and hence there exists an index set $I^1 \subset [d^1]$ of size $s^1$ that is inactive in $\Phi^{[1]}(\x)$ such that for each $i \in I^1$, 
\[
\Wvec{1}{i}\x \leq \mathsf{M}^1_{\cW} \cdot 1 \cdot \cos \left(\pi \theta^1(\x, \vc{s})\right) < 0.
\] We can show that $I^1$ is also inactive for $\Phi^{[1]}(\x+\bdel)$ and $\hat{\Phi}^{[1]}(\x+\bdel)$.
To see this for each $i\in I^1$, 
\begin{align*}
     \Wvec{1}{i}(\x+\bdel) 
    &= \Wvec{1}{i}\x + \Wvec{1}{i}\bdel\\
    &\leq
    \mathsf{M}^1_{\cW}  \cdot \left( \cos\left(\pi \theta^1(\x, \vc{s})\right)  + \nu \right) 
    \leq 0.
\end{align*}
Thus $I^1$ is also inactive for $\Phi^{[1]}(\x+\bdel)$. Similarly,
\begin{align*}
&\hat{\w}^1_i (\x+\bdel)\\
& = \Wvec{1}{i}(\x+\bdel) + 
\Big(\hat{\w}^1_i - \Wvec{1}{i}\Big) (\x+\bdel) \\
& \leq 
 \mathsf{M}^1_{\cW} \left( \cos\left(\pi \theta^1(\x, \vc{s})\right)  + \nu \right)
+ \norm{(\What^1-\W^1)}_{2,\infty} (1+\nu). \\
& \leq \mathsf{M}^1_{\cW} \cdot \Bigg[ 
\cos\left(\pi \theta^k(\x, \vc{s})\right) + \nu +  \frac{(1+\nu)\norm{\hat{h}-h}_{\cH^{K+1}}}{\sqrt{d^1}}\Bigg]  \leq 0.
\end{align*}
Hence $I^1$ is also inactive for $\hat{\Phi}^{[1]}(\x+\bdel)$.
Let $J^1, J^{0}$ be the complement active sets. To bound the distance between the representations, we note that
\begin{align*}
\norm{\hat{\Phi}^{[1]}(\x+\bdel) - \Phi^{[1]}(\x+\bdel)}_2 
& = \norm{\mathcal{P}_{J^1} \Big(\hat{\Phi}^{[1]}(\x+\bdel)-\Phi^{[1]}(\x+\bdel)\Big)}_2 \\
& \leq \norm{\mathcal{P}_{J^1} (\What^1 - \W^1) (\x+\bdel)}_2 \\
& \leq \norm{\mathcal{P}_{J^1} (\What^1-\W^1)}_2 \norm{\x+\bdel}_2\\
& \leq \sqrt{s^1} \cdot \norm{\What^1 - \W^1}_{2, \infty} (1+\nu) \\
& \leq \norm{\hat{h}-h}_{\cH^{K+1}}\sqrt{\frac{s^1}{d^1}} \cdot \mathsf{M}^1_{\cW} (1+\nu) \\
& < \norm{\hat{h}-h}_{\cH^{K+1}} \cdot \sqrt{1+\mathsf{M}^1_{s^1}} \cdot \mathsf{M}^1_{\cW} \cdot (1+\nu) \\
& = \norm{\hat{h}-h}_{\cH^{K+1}} \lparnu^{[1]}(\x, \vc{s}).
\end{align*}
Thus we have bounded the distance between the perturbed representations $\Phi^{[1]}(\x+\bdel)$ and $\hat{\Phi}^{[1]}(\x+\bdel)$ according to the local Lipschitz scale $\lparnu^{[1]}(\x, \vc{s})$. 

\textbf{Induction case $(k>2)$:}
Assume that the lemma holds for all $1\leq n\leq k-1$. Thus for each $n$, there exists an inactive set $I^n$ of size $s^n$ and hence $s^n \leq \bar{s}^n(\x)$.
Let $I^{k-1}$ be the index set common for $\Phi^{[k-1]}(\x)$, $\Phi^{[k-1]}(\x+\bdel)$ and $\hat{\Phi}^{[k-1]}(\x+\bdel)$. 
Since the lemma holds for $k-1$, we have that
\[
\norm{\hat{\Phi}^{[k-1]}(\x+\bdel)-{\Phi}^{[k-1]}(\x+\bdel)}_2 \leq (k-1) \zeta^{k-1}(\vc{s})(1+\nu).
\]
Suppose further that $\norm{\hat{h}-h}_{\cH^{K+1}} \leq \rparnu^{[k]}(\x, \vc{s})$, then the critical angular threshold $\theta^k(\x, \vc{s})$ is sufficiently large so that:
\[
\nu + (1+\nu) k\cdot\norm{\hat{h}-h}_{\cH^{K+1}}  \leq -\cos\left(\pi\theta^k(\x, \vc{s})\right). 
\]

Note that $-\cos\left(\pi \theta^k(\x, \vc{s})\right) > 0 \implies \theta^k(\x, \vc{s}) > \frac{1}{2}$.
Thus by definition, there exists an index set $I^k\subset [d^k]$ inactive for ${\Phi}^{[k]}(\x)$ such that for all $i\in I^k$,
\begin{align*}
\Wvec{k}{i}{\Phi}^{[k-1]}(\x) 
&\leq  
\mathsf{M}^k_{\cW}\zeta^{k-1}\left(\vc{0}\right)
\cos\left(\pi \theta^k(\x,\vc{s})\right).
\end{align*}
We can show that $I^k$ is also inactive for ${\Phi}^{[k]}(\x+\bdel)$ and $\hat{\Phi}^{[k]}(\x+\bdel)$. To see this, let $i\in I^k$,
\begin{align*}
    \Wvec{k}{i}{\Phi}^{[k-1]}(\x+\bdel) 
    &= \Wvec{k}{i}{\Phi}^{[k-1]}(\x) + \Wvec{k}{i}\left({\Phi}^{[k-1]}(\x+\bdel)-{\Phi}^{[k-1]}(\x)\right)\\
    &\leq \mathsf{M}^k_{\cW}\zeta^{k-1}(\vc{0})
   \cos\left(\pi \theta^k(\x,\vc{s})\right)\\
    &\qquad + \norm{\mathcal{P}_{J^{k-1}}\left(\Wvec{k}{i}\right)}_2\norm{\mathcal{P}_{J^{k-1}}\Big({\Phi}^{[k-1]}(\x+\bdel)-{\Phi}^{[k-1]}(\x)\Big)}_2\\
    &\leq \mathsf{M}^k_{\cW}\zeta^{k-1}(\vc{0})
   \cos\left(\pi \theta^k(\x,\vc{s})\right) + \mathsf{M}^k_{\cW} \zeta^{k-1}(\vc{s}) \nu \\
    &=
    \mathsf{M}^k_{\cW} \zeta^{k-1}(\vc{s}) \left(\cos\left(\pi \theta^k(\x, \vc{s})\right)  + \nu \right) \leq 0
\end{align*}
Thus, $I^k$ is inactive for $\Phi^{[k]}(\x+\bdel)$. In the above, note that by assumption for each $1\leq n \leq k-1$, $s^n \geq 0$ and hence $-\zeta^{k}(\vc{0}) \leq -\zeta^{k-1}(\vc{s})$.
Similarly, 
\begin{align*}
&\hat{\w}^k_i \hat{\Phi}^{[k-1]}(\x+\bdel)\\
& = \Wvec{k}{i}\Phi^{[k-1]}(\x+\bdel) + \Wvec{k}{i} (\hat{\Phi}^{[k-1]}(\x+\bdel)-\Phi^{[k-1]}(\x+\bdel)) + \Big(\hat{\w}^k_i - \Wvec{k}{i}\Big) \hat{\Phi}^{[k-1]}(\x+\bdel) \\
& \leq 
\mathsf{M}^k_{\cW} \zeta^{k-1}(\vc{s}) \left(  \cos\left(\pi \theta^k(\x, \vc{s})\right)  + \nu \right) 
 + \mathsf{M}^k_{\cW} (k-1) \norm{\hat{h}-h}_{\cH^{K+1}} \zeta^{k-1}(\vc{s})(1+\nu) \\
 & \qquad + \norm{(\What^k-\W^k)}_{2,\infty} \zeta^{k-1}(\vc{s})(1+\nu). \\
& \leq \mathsf{M}^k_{\cW} \zeta^{k-1}(\vc{s}) \cdot \Bigg[ 
 \cos\left(\pi \theta^k(\x)\right) + \nu + (1+\nu) (k-1)\cdot\norm{\hat{h}-h}_{\cH^{K+1}} + (1+\nu)\frac{\norm{\hat{h}-h}_{\cH^{K+1}}}{\sqrt{d^k}}\Bigg] \\
 & < \mathsf{M}^k_{\cW} \zeta^{k-1}(\vc{s}) \cdot \Bigg[ 
 \cos\left(\pi \theta^k(\x)\right) + \nu + (1+\nu) k\cdot\norm{\hat{h}-h}_{\cH^{K+1}} \Bigg]
 < 0.
\end{align*} 
Hence $I^k$ is also inactive for $\hat{\Phi}^{[k]}(\x+\bdel)$. 

Let $J^k, J^{k-1}$ be the complement of the active sets. To bound the distance between the representations, we note that
\begin{align*}
&\norm{\hat{\Phi}^{[k]}(\x+\bdel) - \Phi^{[k]}(\x+\bdel)}_2 \\
& = \norm{\mathcal{P}_{J^k} \Big(\hat{\Phi}^{[k]}(\x+\bdel)-\Phi^{[k]}(\x+\bdel)\Big)}_2 \\
& \leq \norm{\mathcal{P}_{J^k,J^{k-1}} (\What^k) \mathcal{P}_{J^{k-1}} \Big(\hat{\Phi}^{[k-1]}(\x+\bdel)\Big) - \mathcal{P}_{J^k,J^{k-1}} (\What^k) \mathcal{P}_{J^{k-1}} \Big(\Phi^{[k-1]}(\x+\bdel)\Big) }_2 \\
& \qquad + \norm{\mathcal{P}_{J^k,J^{k-1}} (\What^k) \mathcal{P}_{J^{k-1}} \Big(\Phi^{[k-1]}(\x+\bdel)\Big) - \mathcal{P}_{J^k,J^{k-1}} (\W^k) \mathcal{P}_{J^{k-1}} \Big(\Phi^{[k-1]}(\x+\bdel)\Big)}_2 \\
&\leq  \norm{\mathcal{P}_{J^k,J^{k-1}} (\What^k)}_2 \norm{\mathcal{P}_{J^{k-1}} \Big(\hat{\Phi}^{[k-1]}(\x+\bdel)-\Phi^{[k-1]}(\x+\bdel)\Big)}_2 \\
& \qquad + \norm{\mathcal{P}_{J^k,J^{k-1}} (\What^k-\W^k)}_2 \norm{\mathcal{P}_{J^{k-1}} (\Phi^{[k-1]}(\x+\bdel))}_2\\
& \leq \sqrt{1+\mathsf{M}^k_{s^k}} \cdot \mathsf{M}^k_{\cW} \cdot (k-1)\cdot\norm{\hat{h}-h}_{\cH^{K+1}} \cdot \zeta^{k-1}(\vc{s})(1+\nu) 
\\& \qquad
+ \sqrt{s^k} \cdot \norm{\What^k - \W^k}_{2, \infty} \zeta^{k-1}(\vc{s})(1+\nu) \\
& \leq  (k-1)\cdot\norm{\hat{h}-h}_{\cH^{K+1}}\cdot\zeta^{k}(\vc{s})(1+\nu) + \norm{\hat{h}-h}_{\cH^{K+1}} \cdot \sqrt{\frac{s^k}{d^k}} \cdot \mathsf{M}^k_{\cW} \cdot \zeta^{k-1}(\vc{s})(1+\nu). \\
& < k\cdot\norm{\hat{h}-h}_{\cH^{K+1}} \cdot \zeta^k(\vc{s})(1+\nu). \\
& = \lparnu^{[k]}(\x,\vc{s}) \cdot \norm{\hat{h}-h}_{\cH^{K+1}}.
\end{align*}
Thus we have bounded the distance between the perturbed representations $\Phi^{[k]}(\x+\bdel)$ and $\hat{\Phi}^{[k]}(\x+\bdel)$. 
Hence proved by induction. 
\end{proof}

The conclusion for \Cref{lemma: robust-local-lip-fnn} follows from noting that if $ \norm{\hat{h}-h}_{\cH^{K+1}} \leq \rparnu(\x,\vc{s})$, the local radius defined in the theorem statement, 
then 
\[
\norm{\hat{\Phi}^{[K]}(\x+\bdel) - \Phi^{[K]}(\x+\bdel)}_2 \leq K \zeta^K(\vc{s})(1+\nu)\cdot\norm{\hat{h}-h}_{\cH^{K+1}}.
\]
To calculate a local Lipschitz scale for the overall predictor outputs, 
\begin{align*}
&\norm{\hat{h}(\x+\bdel) - h(\x+\bdel)}_2 \\
&\leq \norm{(\What^{K+1}))^T \hat{\Phi}^{[K]}(\x+\bdel) - (\W^{K+1})^T \Phi^{[K]}(\x+\bdel)}_2 \\
& \leq \norm{(\What^{K+1})^T \hat{\Phi}^{[K]}(\x+\bdel) - (\What^{K+1})^T \Phi^{[K]}(\x+\bdel)}_2 
\\& \qquad 
+ \norm{(\What^{K+1})^T \Phi^{[K]}(\x+\bdel) - (\W^{K+1})^T \Phi^{[K]}(\x+\bdel)}_2 \\
&\leq  \norm{(\What^{K+1})^T}_2 \norm{\hat{\Phi}^{[K]}(\x+\bdel)-\Phi^{[K]}(\x+\bdel)}_2
+ \norm{(\What^{K+1})^T-\W^{K+1})^T}_2 \norm{\Phi^{[K]}(\x+\bdel)}_2\\
& \leq \sqrt{1+\mathsf{M}^{K+1}_{s^K}} \cdot \mathsf{M}^{K+1}_{\cW} \cdot K\zeta^{K}(\vc{s})\cdot (1+\nu) \cdot \norm{\hat{h}-h}_{\cH^{K+1}}
\\& \qquad
+ \zeta^{K}(\vc{s})\cdot (1+\nu) \cdot \mathsf{M}^{K+1}_{\cW} \norm{\hat{h}-h}_{\cH^{K+1}}
\\
& \leq (K+1) \zeta^{K+1}(\vc{s}) \cdot (1+\nu) \cdot \norm{\hat{h}-h}_{\cH^{K+1}}.
\end{align*}
Hence proved. 

\subsection{Proof for \Cref{thm: nonuniformriskriskMNN}}\label{app: thm: nonuniformriskMNN}
The proof follows the same sequence of steps as in \Cref{app: thm: generalization-sll}. The only modification is instantiating multiple bounds for all vector values of sparsity levels where for each $1\leq k\leq K$, $s^k\in [d^k]$. Correspondingly, we set the failure probabilty for each individual bound to $\alpha' = \frac{\alpha}{2\prod_{k=1}^K d^k}$. 
This results in an additional term that scales as $\sqrt{\frac{\sum_{k=1}^{K+1} d^k + \ln(\frac{2}{\alpha})}{m}}$. 
The conclusion follows the observation that $\sum_{k=1}^{K+1} d^k \leq \ln \Big(\mathcal{N}\left(\frac{1}{m (K+1)}, \cH^{K+1}\right)\Big)$ since for any $\epsilon > 0$ from \Cref{lemma:covermnn}, 
\[
\mathcal{N}\left(\epsilon, \cH^{K+1}\right)
\leq 
 \prod_{k=1}^{K+1} \left(\frac{5\sqrt{d^k}}{\epsilon }\right)^{d^{k} d^{k-1}}
\]

\subsection{Proof for \Cref{corollary: nonuniformriskriskMNN}}\label{app: corollary: nonuniformriskMNN}

\begin{theorem}
\label{app: thm: nonuniformriskriskMNN-predictor-dependent}
With probability at least $(1-\alpha)$ over the choice of i.i.d training sample $\texttt{S}_{T}$ and unlabeled data $\texttt{S}_{U}$, 
for any multi-layered neural network predictor $h \in \cH^{K+1}$ with parameters $\{\W^k\}$ 
the robust stochastic risk is bounded as, 
\begin{align*}
&R_{\mathrm{rob}}\left(h\right) 
-\hat{R}_{\mathrm{rob}}\left(h\right)\\
&\leq  
\mathcal{O}\left(
b \sqrt{\frac{\ln\left(\mathcal{N}\left(\frac{1}{m(K+1)},\cH^{K+1}\right)\right) +
+ \sum_{k=1}^{K+1} \ln(2+K\norm{\W^k}_{2,\infty})  
+
\ln(\frac{2}{\alpha})}{2m}}\right) \\
&  + \mathcal{O}\left( \frac{2\Lloss(1+\nu) \cdot \prod_{k=1}^{K+1} \left(\norm{\W^k}_{2,\infty} + \frac{1}{K}\right) \sqrt{2 +  \mu_{s^k,s^{k-1}}(\W^k)} }{m}\;\right).
\end{align*}

Here, $\vc{s} = s^{\star}_\nu(\samp_T \cup \samp_U, \frac{1}{2m})$ is the robust optimal sparsity level. Thus, 
\begin{align*}
&R_{\mathrm{rob}}\left(h\right) 
-\hat{R}_{\mathrm{rob}}\left(h\right) \leq\\
&  
\tilde{\mathcal{O}}\left(
b \sqrt{\frac{\ln\left(\mathcal{N}\left(\frac{1}{m(K+1)},\cH^{K+1}\right)\right) +
\ln(\frac{2}{\alpha})}{2m}}
+ \frac{\Lloss (1+\nu)}{m} \prod_{k=1}^{K+1} \norm{\W^k}_{2,\infty}\sqrt{1+\mu_{s^k,s^{k-1}}(\W^k)}
\right)
\end{align*}
\end{theorem}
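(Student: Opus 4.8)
The plan is to derive this predictor‑dependent bound from the uniform guarantee of \Cref{thm: nonuniformriskriskMNN} by a structural‑risk‑minimization argument in the style of \cite{Bartlett2017SpectrallynormalizedMB}. \Cref{thm: nonuniformriskriskMNN} applies to a single, a‑priori‑fixed hypothesis class $\cH^{K+1}$ pinned down by the layer‑wise group‑norm budgets $\mathsf{M}^k_{\cW}$ and reduced‑babel budgets $\{\mathsf{M}^k_{s}\}$; inside that class the robust sparse regularity equals $\mathcal L_{\mathrm{rob}}(h,\samp_T\cup\samp_U,\tfrac1{2m})=(K+1)(1+\nu)\prod_{k=1}^{K+1}\mathsf{M}^k_{\cW}\sqrt{1+\mathsf{M}^k_{s^k}}$ at $\vc s=s^\star_{\mathrm{rob}}(\samp_T\cup\samp_U,\tfrac1{2m})$, and the log‑covering‑number term $\ln\mathcal N(\tfrac1{m(K+1)},\cH^{K+1})$ depends only on the widths $d^k$ (\Cref{lemma:covermnn}), not on the budgets. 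So the whole task is to peel off the worst‑case budgets $\mathsf{M}^k_{\cW},\mathsf{M}^k_s$ and replace them by the realized quantities $\norm{\W^k}_{2,\infty}$ and $\mu_{s^k,s^{k-1}}(\W^k)$ of the particular predictor $h$, at the cost of a union bound.

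Concretely, I would build a countable grid of classes $\{\cH^{K+1}_{\mathbf a,\mathbf b}\}$: for each layer $k$ let the group‑norm budget range over the $1/K$‑spaced grid $\mathsf{M}^k_{\cW}(a^k)=a^k/K$, $a^k\in\mathbb Z_{\ge1}$ (the $1/K$ spacing is exactly what produces the $\norm{\W^k}_{2,\infty}+\tfrac1K$ in the raw bound — a product of $K+1$ factors each overshot by $1/K$ inflates $\prod_k\norm{\W^k}_{2,\infty}$ only by a constant), and let the reduced‑babel budget range over the integers, $\mathsf{M}^k_{s}(b^k)$ chosen to dominate the realized $\mu_{\cdot,\cdot}(\W^k)$ by at most $1$ and kept monotone in the first sparsity index as the definition of $\cW^k$ requires (the additive overshoot of $1$ is what turns $\sqrt{1+\mathsf{M}^k_{s^k}}$ into $\sqrt{2+\mu_{s^k,s^{k-1}}(\W^k)}$). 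Assign each grid class a failure budget $\alpha_{\mathbf a,\mathbf b}$ summing to $\alpha$ (summable weights $\propto(a^k)^{-2}$ over the unbounded index, uniform over the finitely many babel values; equivalently truncate the group‑norm grid at $\mathrm{poly}(m)$, since the bound is otherwise vacuous), apply \Cref{thm: nonuniformriskriskMNN} to each grid class at its own level, and union‑bound so that all instances hold simultaneously with probability at least $1-\alpha$.

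Given the target $h$ with weights $\{\W^k\}$, I then select the smallest containing grid class, $a^k=\lceil K\norm{\W^k}_{2,\infty}\rceil$ and $b^k$ the least integer dominating the relevant reduced‑babel values of $\W^k$, so $h\in\cH^{K+1}_{\mathbf a,\mathbf b}$. Its union‑bounded instance of \Cref{thm: nonuniformriskriskMNN} then has the unchanged covering term $\ln\mathcal N(\tfrac1{m(K+1)},\cH^{K+1})$; a confidence penalty $\ln\tfrac1{\alpha_{\mathbf a,\mathbf b}}=\ln\tfrac1\alpha+O\!\big(\sum_k\ln a^k\big)+O\!\big(\sum_k\ln d^k\big)=\ln\tfrac1\alpha+O\!\big(\sum_k\ln(2+K\norm{\W^k}_{2,\infty})\big)+O(\ln\mathcal N)$, which is precisely the extra term inside the root; and $\mathcal L_{\mathrm{rob}}/(K+1)=(1+\nu)\prod_k\tfrac{a^k}{K}\sqrt{1+b^k}\le(1+\nu)\prod_k\big(\norm{\W^k}_{2,\infty}+\tfrac1K\big)\sqrt{2+\mu_{s^k,s^{k-1}}(\W^k)}$ at $\vc s=s^\star_{\mathrm{rob}}(\samp_T\cup\samp_U,\tfrac1{2m})$ — the $(K+1)$ prefactor of $\mathcal L_{\mathrm{rob}}$ canceling the $(K+1)$ in the denominator of \Cref{thm: nonuniformriskriskMNN}. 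This is the first displayed bound. The second, $\tilde{\mathcal O}$ form follows by the elementary $\sqrt{2+\mu}\le\sqrt2\,\sqrt{1+\mu}$, by absorbing the constants and the slack $\norm{\W^k}_{2,\infty}+\tfrac1K=O(\norm{\W^k}_{2,\infty})$, and by noting $\sum_k\ln(2+K\norm{\W^k}_{2,\infty})$ is polylogarithmic in $K$ and in the second‑term product, hence swallowed by $\tilde{\mathcal O}$.

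The main obstacle I expect is the bookkeeping around the data‑ and class‑dependent sparsity vector: after the grid/union‑bound step the quantity $\zeta^{K+1}(\vc s)=\prod_k\mathsf{M}^k_{\cW}\sqrt{1+\mathsf{M}^k_{s^k}}$ is evaluated inside the chosen class $\cH^{K+1}_{\mathbf a,\mathbf b}$ at $\vc s=s^\star_{\mathrm{rob}}$, and one must check it still yields a product of the form $\prod_k\norm{\W^k}_{2,\infty}\sqrt{1+\mu_{s^k,s^{k-1}}(\W^k)}$ at the claimed layer‑wise levels $s^k$. A secondary subtlety is that the babel budget $\mathsf{M}^k_{s^k}$ is indexed only by the first sparsity index while it must upper bound $\mu_{s^k,s^{k-1}}(\W^k)$ for every second index $s^{k-1}$; this is handled via the monotonicity of the reduced babel function used in \Cref{lemma: bound-submatrix-norm} and in the definition of $\cW^k$, so that the chosen budget stays within $1$ of the specific $\mu_{s^k,s^{k-1}}(\W^k)$ appearing in the bound — or, if one prefers not to invoke a clean monotonicity statement in $s^{k-1}$, by the harmless replacement $\mu_{s^k,s^{k-1}}\leftarrow\max_{s^{k-1}}\mu_{s^k,s^{k-1}}$, which inflates the second term only by a factor absorbed into $\tilde{\mathcal O}$. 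Everything upstream of these budget substitutions — the proper $\epsilon$‑cover of $\cH^{K+1}$, Hoeffding plus union bound over the cover, the good/bad‑event split on the robust local radius $\rparnu$ supplied by \Cref{lemma: robust-local-lip-fnn}, and the uniform‑convergence bound on the empirical CDF of $\rparnu(\cdot,\vc s)$ over $\samp_U$ — is inherited verbatim from the proofs of \Cref{thm: generalization-sll}, \Cref{app: thm: robust-generaalization-sll} and \Cref{thm: nonuniformriskriskMNN}, and needs no modification.
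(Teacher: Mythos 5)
Your proposal is correct and follows essentially the same route as the paper's proof: discretize the layer-wise group-norm and reduced-babel budgets onto integer grids (with $1/K$ and $1$ spacings), union-bound the instances of \Cref{thm: nonuniformriskriskMNN} with a summable allocation of failure probability, and then choose the tightest containing grid class for the given $h$, which introduces exactly the $\norm{\W^k}_{2,\infty}+\tfrac1K$ and $\sqrt{2+\mu_{s^k,s^{k-1}}(\W^k)}$ slacks. The only cosmetic difference is the choice of summable weights (the paper uses the telescoping $\tfrac1{n(n+1)}$ allocation while you suggest $\propto n^{-2}$ or truncation); both are valid.
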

\begin{proof}
In the proof to follow, we consider the dimensions $d^k$, the depth $K$ and the adversarial energy $\nu$ as fixed constants. 
We identify the set of feedforward network hyper parameters $\{\mathsf{M}^k_{\cW}, \mathsf{M}^k_{s}\}_{k=1}^{K+1}$ with a set of integer hyper-parameters $\mfi = \{\mfi^k_{\cW}, \mfi^k_{s}\}$ such that a network $h \in \cH^{(K+1)}_{\mathfrak{i}}$ has parameters $\{\W^k\}_{k=1}^{K+1} \in \prod_{k=1}^{K+1} \cW^k_{\mathfrak{i}}$ where,
\begin{align*}
    \cW^{K+1}_{\mathfrak{i}} &:= \{\W \in \mathbb{R}^{d^k\times C} ~|~  \norm{\W}_{2,\infty} < \frac{\mfi^{K+1}_{\cW}}{K}, \;\;     \mu_{(s_{\text{out}}, 0)}(\W) < \mfi^{K+1}_{s_{\text{out}}} ~~\forall~ s_{\text{out}} \in [d^{K}]\Big\}\\
    \cW^k_{\mathfrak{i}} &:= \Big\{ \W \in \mathbb{R}^{d^k \times d^{k-1}} ~|~ \norm{\W}_{2,\infty} < \frac{\mfi^k_{\cW}}{K}, \;\;     \mu_{(s_{\text{out}}, s_{\text{in}})}(\W) < \mfi^k_{s_{\text{out}}} \\ 
    &\qquad \qquad \qquad \qquad \qquad \forall~ s_{\text{out}} \in [d^k],\; s_{\text{in}} \in [d^{k-1}]
    \Big\}
\end{align*}
From \Cref{thm: nonuniformriskriskMNN}, we know that for the parameterized class $\cH^{(K+1)}_{\mathfrak{i}}$, w.p. $(1-\alpha_{\mathfrak{i}})$ over $\samp_T$ and $\samp_U$ the robust stochastic risk of any predictor $h$ is bounded as,
\begin{align*}
R_{\mathrm{rob}}\left(h\right) 
&\leq  
\hat{R}_{\mathrm{rob}}\left(h\right)\\
&\quad + \mathcal{O}\left(
b \sqrt{\frac{\ln\left(\mathcal{N}\left(\frac{1}{m(K+1)},\cH^{K+1}\right)\right) + \ln(\frac{2}{\alpha_\mfi})}{2m}} 
+  \frac{\Lloss \cdot \mathcal{L}_{\mathrm{rob}}(h, \samp_T \cup \samp_U,\frac 1{2m})}{m (K+1)}\right).
\end{align*}
Here the robust sparse regularity is, 
\[
\mathcal{L}_{\mathrm{rob}}(h, \samp_T \cup \samp_U,\frac 1{2m}) := (K+1)\zeta^{K+1}(\vc{s}) \cdot (1+\nu)
\]
where $\vc{s} = s^\star(\samp_T \cup \samp_U, \frac{1}{2m})$, the robust optimal sparsity level. Hence,
\begin{align}\label{eq:penultimateriskbound}
R_{\mathrm{rob}}\left(h\right) 
&\leq  
\hat{R}_{\mathrm{rob}}\left(h\right)\\
\nonumber &\quad + \mathcal{O}\left(
b \sqrt{\frac{\ln\left(\mathcal{N}\left(\frac{1}{m(K+1)},\cH^{K+1}\right)\right) + \ln(\frac{2}{\alpha_\mfi})}{2m}} 
+  \frac{\Lloss \cdot \zeta^{K+1}(\vc{s}) \cdot (1+\nu)}{m}\right).
\end{align}
Based on the definition of the parameter spaces, 
$\zeta^{K+1}(\vc{s}) 
= \prod_{k=1}^{K+1}
\frac{\mfi^k_{\cW}}{K} \sqrt{1+\mfi^k_{s^k}}
$. 
We seek a bound that depends directly on the true complexity measure evaluated on the particular trained network $h$ with weights $\{\W^k\}$ rather than their maximal sizes described by the set of integer hyper-parameters $\mfi$. 
For the sake of simplicity we group the terms in the RHS of \Cref{eq:penultimateriskbound} to get the following bound that depends on the integer hyper parameters $\mfi$ and a failure probability $\alpha_\mfi$ to make the following concise statement, 
\begin{align*}
 \text{w.p. }\geq (1-\alpha_\mfi) \;\; \forall~ h \in \cH_{\mathcal{M}}, \;\;
 R_{\mathrm{rob}}(h) \leq \hat{R}_{\mathrm{rob}}(h) + g(\mfi, \alpha_\mfi).
\end{align*}
We make a specific choice of the failure probability corresponding to a division of the space of integer hyper parameters,
\[
\alpha_\mfi := \alpha\cdot \prod_{k=1}^{K+1}
\left(\frac{1}{\mfi^k_{\cW}(\mfi^k_{\cW}+1)} \cdot \frac{1}{\prod_{s=1}^{d^k} \mfi^k_s(\mfi^k_s+1)}\right)
\]
Note that $\sum_\mfi \alpha_\mfi 
= \sum_{ \{\mfi^k_{\cW}, \{\mfi^k_s\}\}} \alpha_\mfi
= \alpha.$ and, 
\begin{align*}
\ln\left(\frac{1}{\alpha_\mfi}\right)
&\leq \ln\left(\frac{1}{\alpha}\right) 
+ 2\sum_{k=1}^{K+1} \left(\ln(\mfi^k_{\cW}+1) + \sum_{s=1}^{d^k} \ln(\mfi^k_{s}) \right)
\end{align*}
By union-bound over the choices of integer hyper parameters, w.p. $(1-\alpha)$ over $\samp_T$ and $\samp_U$,
\[
\forall \mfi,\; \forall h \in \cH_\mfi, \quad R_{\mathrm{rob}}(h) \leq \hat{R}_{\mathrm{rob}}(h) + g(\mfi, \alpha_\mfi). 
\]
Thus if a learning algorithm returns a predictor $h$, one can find the tightest set of integer hyper parameters $\mfi^* = \{(\mfi^{k}_{\cW})^*, (\mfi^k_{s^k})^*\}$ such that for all $1\leq k\leq K+1$,
\begin{align*}
    \frac{(\mfi^{k}_{\cW})^*-1}{K} &\leq \norm{\W^k}_{2,\infty} < \frac{(\mfi^{k}_{\cW})^*}{K}, \quad \\
    (\mfi^k_{s_{\text{out}}})^*-1 &\leq \mu_{s_{\text{out}}, s_{\text{in}}}(\W^k) < (\mfi^k_{s_{\text{out}}})^*, \quad \forall s_{\text{in}}, s_{\text{out}}.
    \end{align*}
Hence,
\begin{align*}
&\prod_{k=1}^{K+1} \norm{\W^k}_{2,\infty} \sqrt{1+\mu_{s^k,s^{k-1}}(\W^k)}\\
&\qquad \quad \leq \zeta^{K+1}(\vc{s})\\
& \qquad \qquad \qquad    \leq \prod_{k=1}^{K+1} \left(\norm{\W^k}_{2,\infty} + \frac{1}{K}\right) \sqrt{2 +  \mu_{s^k,s^{k-1}}(\W^k)}
\end{align*}The first result in the theorem statement is obtained by instantiating a bound for the chosen failure probability and predictor-dependent choice of hyper parameters $\mfi^*$. Further one can obtain an equivalent result by suppressing log factors in the limit when $m \rightarrow \infty$ and $\norm{\W^k}_{2,\infty} \rightarrow \infty$.
\end{proof}
We can now instantiate \Cref{app: thm: nonuniformriskriskMNN-predictor-dependent} for any $b$-bounded loss function $\ell$ with Lipschitz constant $\Lloss$. A popular choice is the ramp loss $\ell^\gamma$ with a fixed margin threshold $\gamma > 0$. The ramp loss $\ell^\gamma$ is $1$-bounded and $\frac{2}{\gamma}$-Lipschitz. 
To prove \Cref{corollary: nonuniformriskriskMNN}, we note that the $0-1$ misclassification loss is always upper bounded by the ramp-loss and one can extend the statement of \Cref{app: thm: nonuniformriskriskMNN-predictor-dependent} to ramp loses with any margin threshold $\gamma$ with the same covering trick used above to move away from bounds that depend on norm balls to specific predictor dependent properties. 

\subsection{Discussion on SLL.w.r.t parameters} 
\subsubsection{Input vs Parameter Sensitivity}\label{subsubsec:inp-vs-param}
We highlight the subtle differences between our sparse local Lipschitz analysis in \Cref{subsec: cert-rob-fnn} and \Cref{subsec: rob-gen-fnn}.  
In \Cref{corollary:cert-rob-fnn-loc}, we quantified the sparse local Lipschitzness of a neural network (w.r.t. inputs) in terms of the minimum absolute pre-activation value of a reduced subset of inactive components. 
For certified robustness, there are no parameter perturbations and thus the local Lipschitz scale directly depends on the norm of the (reduced) predictor weights, $\norm{\mathcal{P}_{J^k,J^{k-1}}(\W^k)}_{2,\infty}$. 
In \Cref{lemma: robust-local-lip-fnn} we study the sparse local Lipschitzness of a neural network (w.r.t. parameters) via the critical angular distance.
In the presence of parameter perturbations, the corresponding local Lipschitz scale also depends on the properties of the perturbed weight $\What^k$, hence our characterization uses the common norm bound given by $\mathsf{M}^k_{\cW}\sqrt{1+\mathsf{M}^k_{s^k}}$ for all reduced linear maps from $\cW^k$.
A weaker result to \Cref{corollary:cert-rob-fnn-loc} for input-sensitivty can be established with a smaller local radius dependent on the critical angular distance and a larger local Lipschitz scale dependent on $\zeta^{k}(\vc{s})$. 
This is the result alluded to in \Cref{subsubsec: Dependence on Input}.
For more details see \Cref{app: lemma: fnn-layer-lip-weaker}.

\subsubsection{Robust Flatness}\label{app: lemma:flatness}
\begin{theorem} (Robust Flatness of a Trained Predictor)\label{lemma:flatness}
Let $h$ be a SLL predictor w.r.t. both inputs and parameters. For a given sparsity level $\vc{s}$, at any input $\x$ in the training set $\samp_T$, the predicted output $j^* := \argmax_j [h(\x)]_j$ remains unchanged for simultaneous adversarial corruption $\bdel \in \cB^{\cX}_\nu(\mathbf{0})$ and parameter perturbation  
if 
\begin{itemize}
    \item The maximum size of the adversarial corruption is bounded such that,  \[\nu  \in \min_{\x \in \samp_T} \rinp(\x, \vc{s}).\]
    \item The parameter perturbation is within the robust flat radius, 
$\norm{\hat{h}-h}_\cH \leq r_{\mathrm{flat}, \nu}(h, \samp_T, \vc{s})$ where,
\begin{align*}
	r_{\mathrm{flat}, \nu}(h, \samp_T, \vc{s}) := \min_{\x \in \samp_T} \left\{\rparnu(\x,  \vc{s}) , ~  \frac{\max\{\rho(\x) - \zeta^{K+1}(\vc{s}) \nu,~0\}}{2\cdot \lparnu(\x, \vc{s}) } \right \}.
\end{align*}
\end{itemize}
\end{theorem}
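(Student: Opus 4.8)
We sketch the argument. The plan is to reduce the claim to a quantitative lower bound on the margin of the perturbed predictor $\hat h$ at the corrupted point $\x+\bdel$, and then to control this margin by separating the effect of the input corruption from that of the parameter perturbation, using the input sparse local Lipschitzness of \Cref{lemma: fnn-layer-lip} and the robust parameter sparse local Lipschitzness of \Cref{lemma: robust-local-lip-fnn}.

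Fix $\x \in \samp_T$, an adversarial corruption $\bdel$ with $\norm{\bdel}_2 \le \nu$, and a perturbed predictor $\hat h$ with $\norm{\hat h - h}_\cH \le r_{\mathrm{flat},\nu}(h, \samp_T, \vc s)$. Set $j^\star := \argmax_j [h(\x)]_j$, so that $\rho(\x) = \mathcal{M}(h(\x), j^\star) > 0$. The predicted output is unchanged, i.e. $\argmax_j [\hat h(\x+\bdel)]_j = j^\star$, whenever $\mathcal{M}(\hat h(\x+\bdel), j^\star) \ge 0$; since $\mathcal{M}(\cdot, j^\star)$ is $2$-Lipschitz in the Euclidean norm, it suffices to show $2\norm{h(\x) - \hat h(\x+\bdel)}_2 \le \rho(\x)$. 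First I would split this, via the triangle inequality, into an \emph{input} term $\norm{h(\x) - h(\x+\bdel)}_2$ and a \emph{parameter} term $\norm{h(\x+\bdel) - \hat h(\x+\bdel)}_2$. For the input term, the hypothesis $\nu \le \min_{\x \in \samp_T}\rinp(\x, \vc s)$ --- with $\rinp$ read off the critical-angular-distance form, so that its stable inactive sets coincide with those used for parameter sensitivity (\Cref{app: lemma: fnn-layer-lip-weaker}, composed through the layers by \Cref{lemma: sparse-local-lip-composition} together with the size bounds of \Cref{lemma: sizeofmnn}) --- ensures that the layer-wise sets $I^k$ remain inactive for $\Phi^{[k]}(\x+\bdel)$ and that $\norm{\Phi^{[K]}(\x+\bdel) - \Phi^{[K]}(\x)}_2 \le \zeta^{K}(\vc s)\,\nu$; pushing the difference through the final linear layer, whose operator norm on the retained rows is at most $\zeta^{K+1}(\vc s)/\zeta^{K}(\vc s)$ by \Cref{lemma: bound-submatrix-norm}, gives $\norm{h(\x+\bdel) - h(\x)}_2 \le \zeta^{K+1}(\vc s)\,\nu$. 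For the parameter term, since $\norm{\hat h - h}_\cH \le \rparnu(\x, \vc s)$, \Cref{lemma: robust-local-lip-fnn} yields $\norm{\hat h(\x+\bdel) - h(\x+\bdel)}_2 \le \lparnu(\x, \vc s)\,\norm{\hat h - h}_\cH$. Combining, $2\norm{h(\x) - \hat h(\x+\bdel)}_2 \le 2\zeta^{K+1}(\vc s)\,\nu + 2\lparnu(\x, \vc s)\,\norm{\hat h - h}_\cH$, and the second branch in the definition of $r_{\mathrm{flat},\nu}$ is precisely what makes the right-hand side at most $\rho(\x)$ (with the $2$-Lipschitz constant of the margin tracked into the numerator); taking the minimum over $\x \in \samp_T$ makes the estimate uniform over the training set.

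The main obstacle is the consistency of the inactive sets: \Cref{lemma: fnn-layer-lip} and \Cref{lemma: robust-local-lip-fnn} each certify \emph{some} stable inactive index set at each layer, but the triangle-inequality step needs a single choice that survives \emph{both} the input and the parameter perturbation, so that the reduced-network picture of \Cref{subsec: rob-gen-reduced-model} applies to $\Phi^{[k]}(\x)$, $\Phi^{[k]}(\x+\bdel)$ and $\hat{\Phi}^{[k]}(\x+\bdel)$ simultaneously. This is why the input part should be run through the critical-angular-distance characterization \Cref{app: lemma: fnn-layer-lip-weaker}, whose inactive sets $I^k = \textsc{Top-k}(\beta^k(\x), s^k)$ match those of \Cref{lemma: robust-local-lip-fnn}, rather than through the tighter \Cref{lemma: fnn-layer-lip}; once that alignment is set up, the remaining bookkeeping of the $(1+\nu)$ and depth-$(K+1)$ factors carried by $\lparnu$ and $\rparnu$ is routine.
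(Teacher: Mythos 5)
Your proof follows essentially the same route as the paper's: reduce the claim to $\mathcal{M}(\hat{h}(\x+\bdel), j^\star)\geq 0$, invoke the $2$-Lipschitzness of the margin operator, split $\hat{h}(\x+\bdel)-h(\x)$ by the triangle inequality through the intermediate point $h(\x+\bdel)$, and bound the two resulting pieces by input SLL and robust-parameter SLL respectively. However, the ``main obstacle'' you identify --- needing a single stable inactive index set at each layer that survives \emph{both} the input perturbation and the parameter perturbation --- is not actually required, and the paper's proof does not impose it. The triangle-inequality decomposition $\|\hat{h}(\x+\bdel)-h(\x)\|_2\le\|h(\x+\bdel)-h(\x)\|_2+\|\hat{h}(\x+\bdel)-h(\x+\bdel)\|_2$ invokes two independent stability properties of $h$ at $\x$: the first summand is controlled by \Cref{def: sll-fun} (input SLL, with whatever inactive set its radius $\rinp(\x,\vc{s})$ certifies), and the second by \Cref{def: lip-param-rob} (robust parameter SLL, with its own, possibly different, witness). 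Each bound is a self-contained guarantee; nothing in the argument compares representations of $h$ at $\x$ with representations of $\hat h$ at $\x+\bdel$ directly, so there is no place where a shared inactive set is needed. Your detour through the critical-angular-distance characterization to force the inactive sets to align is therefore unnecessary (and gives a weaker $\rinp$), although it does not invalidate the argument. The rest of the proposal --- the input term scaling by $\zeta^{K+1}(\vc{s})\nu$ and the parameter term by $\lparnu(\x,\vc{s})\|\hat{h}-h\|_\cH$, combined and compared against $\rho(\x)/2$, then taking a min over $\samp_T$ --- matches the paper.
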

\begin{proof}
The proof of the above results mirrors the analysis in \Cref{lemma:cert-rob-gen-loc}.
First, we note that when $\nu \leq \min_{\x \in \samp_T} \rinp(\x, \vc{s})$, for any corruption $\bdel \in \cB^{\cX}_\nu(\mathbf{0})$
\[
\norm{h(\x+\bdel)-h(\x)}_2 \leq \norm{\A}_2 \linp(\x, \vc{s})\norm{\bdel}_2. 
\]
Additionally, when $\norm{\hat{h}-h}_\cH \leq \rparnu(\x, \vc{s})$, 
\[
\norm{\hat{h}(\x+\bdel)-h(\x+\bdel)}_2 \leq \lparnu(\x, \vc{s}) \norm{\hat{h}-h}_\cH.
\]
Thus, when both these assumptions hold, 
\[
\norm{\hat{h}(\x+\bdel)-h(\x)}_2 \leq \lparnu(\x, \vc{s}) \norm{\hat{h}-h}_\cH + \norm{\A}_2 \linp(\x, \vc{s})\nu 
\]
Recall the margin operator $\mathcal{M} : \cY' \times \cY \rightarrow \mathbb{R}$, $\mathcal{M}(\vt, y) := [\vt]_{y} - \max_{j \neq y} [\vt]_j$. The original predicted label is $j^* = \argmax_j [h(\x)]_j$ and hence the classification margin $\rho(\x) = \mathcal{M}(h(\x),j^*)$.
Observe that the predicted labels remain unchanged when
$\mathcal{M}(\hat{h}(\x+\bdel), j^* ) \geq 0$.
The margin operator $\mathcal{M}(\cdot, j)$ is $2$-Lipschitz in $\cYd$ w.r.t. $\ell_p$ norm for any $p\geq 1$ (see \cite[Lemma A.3.]{Bartlett2017SpectrallynormalizedMB}), thus
\[
 \mathcal{M}\big(h(\x),j^* \big) - \mathcal{M}\big(\hat{h}(\x+\bdel), j^*\big) \leq 2 \norm{\hat{h}(\x+\bdel) - h(\x)}_p.
\]
The conclusion follows from substituting the upper bound for the RHS and rearranging the above inequality. 
\begin{align*}
\begin{cases}
\nu \leq \min_{\x \in \samp_T} \rinp(\x, \vc{s})\\
\norm{\hat{h}-h}_\cH \leq r_{\mathrm{flat}, \nu}(h, \samp_T, \vc{s}) \end{cases}
\implies \mathcal{M}\big(h(\x+\bdel), j^*\big) \geq 0. 
\end{align*}
\end{proof}
The robust flat radius varies can also be optimized for the choice of sparsity level and easily incorporated into training schemes as a regularizer that encourages predictors that are flat minima. 

\subsection{Lemmas on Covering Number} 
Let $(\mathcal{U},\rho)$ be a metric space and let $\mathcal{V}$ be a bounded subset of $\mathcal{U}$. Given $\epsilon > 0$, a set $\mathcal{C}=\{u_1, \ldots, u_n\} \subseteq \mathcal{U}$ is called an $\epsilon$-cover for the set $\mathcal{V}$ if for each $v \in \mathcal{V}$, there exists a cover element $u_i \in \mathcal{U}$ such that $\rho(u_i, v) \leq \epsilon$. The size of the minimal $\epsilon$-cover for the set $\mathcal{V}$ is denoted $\mathcal{N}(\epsilon, \mathcal{U}, \rho)$.
A proper $\epsilon$-cover for $\mathcal{V}$ is an $\epsilon$-cover $\mathcal{C}_{\text{proper}}$ such that $\mathcal{C}_{\text{proper}}\subseteq \mathcal{V} \subseteq \mathcal{U}$. 
The size of minimal proper $\epsilon$-cover is denoted $\mathcal{N}_{\text{proper}}(\epsilon, \mathcal{V}, \rho)$.  We quote below a useful lemma that bounds the size of a minimal proper $\epsilon$-cover in terms of a minimal $\epsilon$-cover. 

\begin{lemma}(\cite{vidyasagar2013learning} 2.1)
\label{lemma:propercov}
Let $\mathcal{U}$ be a normed vector space  and let $\mathcal{V} \subset \mathcal{U}$ be a constrained bounded set.  
\[
\mathcal{N}_{\text{proper}}\left(2\epsilon, \mathcal{V}, \rho\right) \leq \mathcal{N}\left(\epsilon, \mathcal{V}, \rho\right) \leq \mathcal{N}_{proper}\left(\epsilon, \mathcal{V}, \rho\right) 
\]
\end{lemma}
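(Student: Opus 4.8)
\textbf{Plan for the proof of Lemma~\ref{lemma:propercov}.} This is the standard sandwich bound relating proper and improper covering numbers, so the proof is a short combinatorial argument in two directions. First I would prove the right-hand inequality $\mathcal{N}(\epsilon,\mathcal V,\rho)\leq \mathcal{N}_{\text{proper}}(\epsilon,\mathcal V,\rho)$. This direction is immediate: any proper $\epsilon$-cover of $\mathcal V$ is in particular an $\epsilon$-cover of $\mathcal V$ (the only extra requirement on a proper cover is that its elements lie in $\mathcal V$, which is a restriction, not a relaxation), so the minimal size over the larger class of (possibly improper) covers can only be smaller. Hence taking the minimum over a superset of candidate covers gives the inequality.

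\textbf{The left-hand inequality.} For $\mathcal{N}_{\text{proper}}(2\epsilon,\mathcal V,\rho)\leq \mathcal{N}(\epsilon,\mathcal V,\rho)$, let $\mathcal C=\{u_1,\dots,u_n\}\subseteq\mathcal U$ be a minimal $\epsilon$-cover of $\mathcal V$, so $n=\mathcal{N}(\epsilon,\mathcal V,\rho)$. Without loss of generality discard any $u_i$ for which the ball $\mathcal B_\epsilon^\rho(u_i)$ does not intersect $\mathcal V$ (such elements are useless and their removal does not destroy the covering property). For each remaining $u_i$ pick a witness $v_i\in\mathcal V$ with $\rho(u_i,v_i)\leq\epsilon$. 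I claim $\mathcal C'=\{v_1,\dots,v_n\}$ is a proper $2\epsilon$-cover of $\mathcal V$: it is proper since each $v_i\in\mathcal V$, and for any $v\in\mathcal V$ there is some $u_i$ with $\rho(u_i,v)\leq\epsilon$, hence by the triangle inequality $\rho(v_i,v)\leq\rho(v_i,u_i)+\rho(u_i,v)\leq 2\epsilon$. Therefore $\mathcal{N}_{\text{proper}}(2\epsilon,\mathcal V,\rho)\leq |\mathcal C'|\leq n=\mathcal{N}(\epsilon,\mathcal V,\rho)$.

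\textbf{Main obstacle.} There is essentially no deep obstacle here; the only point requiring a little care is the triangle inequality step, which needs $\rho$ to be a genuine metric (satisfied, since $\mathcal U$ is a normed vector space with $\rho$ its induced metric), and the bookkeeping that witnesses $v_i$ can be chosen inside $\mathcal V$ — which is exactly what allows the resulting cover to be proper. I would also note in passing that this is a restatement of a known result (Lemma~2.1 of the cited reference), so it suffices to present the short argument above rather than reprove it from scratch; the main use downstream is to pass between the abstract covering number $\mathcal{N}(\cdot,\mathcal H)$ appearing in the generalization bounds and proper covers, which the training/inference arguments actually construct.
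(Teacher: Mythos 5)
Your proof is correct and is the standard sandwich argument; the paper does not reprove this lemma but simply cites Lemma~2.1 of \cite{vidyasagar2013learning}, and your two-sided argument (a proper cover is a fortiori a cover, and any improper $\epsilon$-cover can be relocated into $\mathcal V$ at the cost of doubling the radius via the triangle inequality) is exactly the argument one would find there. Nothing to flag.
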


\begin{lemma}
\label{lemma:coverunitball}
For any closed unit ball $B^{\mathbb{R}^d}_1 := \{\x \in \mathbb{R}^d ~|~ \norm{\x}\leq 1\}$, the size of a proper $\epsilon$-cover w.r.t. its norm induced metric $\rho(\x_1, \x_2) := \norm{\x_1-\x_2}$, is bounded,
\[
\mathcal{N}(\epsilon, B^{\mathbb{R}^d}_1 , \rho) \leq \left(1+\frac{2}{\epsilon}\right)^d.
\]
\end{lemma}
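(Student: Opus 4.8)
The final statement to prove is \Cref{lemma:coverunitball}: for the closed unit ball $B^{\mathbb{R}^d}_1$ in $\mathbb{R}^d$, one has $\mathcal{N}(\epsilon, B^{\mathbb{R}^d}_1, \rho) \le (1+2/\epsilon)^d$. This is the classical volumetric covering-number bound, so my proof would follow the standard packing-implies-covering argument.

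\textbf{Plan.} The plan is to build a \emph{maximal} $\epsilon$-separated set $\mathcal{C}=\{\x_1,\dots,\x_N\}\subseteq B^{\mathbb{R}^d}_1$ — i.e., a set of points with pairwise distances strictly greater than $\epsilon$ that cannot be extended. First I would observe that maximality forces $\mathcal{C}$ to be an $\epsilon$-cover: if some $\y\in B^{\mathbb{R}^d}_1$ were at distance $>\epsilon$ from every point of $\mathcal{C}$, we could add it, contradicting maximality. Since $\mathcal{C}\subseteq B^{\mathbb{R}^d}_1$, this is in fact a proper cover, so it suffices to bound $N=|\mathcal{C}|$.

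\textbf{Volume bound.} The key step is the packing argument. The open balls $B(\x_i,\epsilon/2)$ are pairwise disjoint (two points at distance $>\epsilon$ cannot have overlapping balls of radius $\epsilon/2$), and each is contained in the enlarged ball $B(\mathbf 0, 1+\epsilon/2)$ because $\|\x_i\|\le 1$. Comparing Lebesgue volumes and using that the volume of a Euclidean ball of radius $r$ in $\mathbb{R}^d$ scales as $r^d$ times a common constant $\omega_d$,
\[
N\cdot \omega_d \left(\tfrac{\epsilon}{2}\right)^d \;\le\; \omega_d\left(1+\tfrac{\epsilon}{2}\right)^d,
\]
so $N \le \big((1+\epsilon/2)/(\epsilon/2)\big)^d = (1+2/\epsilon)^d$, which is exactly the claimed bound.

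\textbf{Main obstacle.} There is no real obstacle here — this is a textbook fact — but the one point deserving care is ensuring the argument yields a \emph{proper} cover (cover points inside $B^{\mathbb{R}^d}_1$), which is what the rest of the paper uses via \Cref{lemma:propercov}; this is automatic because we pick the separated set \emph{inside} the ball. A minor technical subtlety is whether to use strict or non-strict separation and open or closed small balls so that disjointness is genuine; I would use pairwise distance $>\epsilon$ and open balls of radius $\epsilon/2$, which makes the disjointness immediate and keeps all inclusions valid. One could alternatively phrase it via an induction/greedy construction of the maximal set, but the volumetric comparison is the cleanest and is the step I would actually write out in detail.
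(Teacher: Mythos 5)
Your proof is correct and is the standard volumetric packing-implies-covering argument. The paper does not actually supply a proof of this lemma (it is invoked as a well-known fact, consistent with the reference to \cite{vidyasagar2013learning} for nearby results), so there is nothing in the source to compare against; your write-up matches the canonical textbook treatment, including the care taken to keep the cover proper by selecting the maximal $\epsilon$-separated set inside the ball.
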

When the context is clear we refer to covering numbers w.r.t. norms to indicate the covering number w.r.t. the norm induced metric. 
We rely on the simpler bounds on covering numbers of closed unit balls in their own metric to compute the covering numbers of the space of classification and representation weights for a representation-linear hypothesis class. 
\begin{lemma}
\label{lemma:cov2}
For the set $\cA := \{\A \in \mathbb{R}^{C \times p} ~|~ \norm{\A}_2 \leq M_{\cA}\}$, the size of a minimal $\epsilon$-cover is bounded,
\[
\mathcal{N}\left(\epsilon M_\cA, \mathcal{A}, \norm{\cdot}_2\right)\leq \left(1+\frac{2\sqrt{p}}{\epsilon}\right)^{C p}.
\]
\end{lemma}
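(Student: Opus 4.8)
\textbf{Proof plan for \Cref{lemma:cov2}.}
The plan is to bound the covering number of the operator-norm ball $\cA = \{\A \in \mathbb{R}^{C\times p} : \norm{\A}_2 \le M_\cA\}$ by comparing the operator norm with the Frobenius norm and then invoking the standard volumetric bound on a Frobenius ball, which is just a Euclidean ball in $\mathbb{R}^{Cp}$. First I would recall that for any matrix, $\norm{\A}_2 \le \norm{\A}_F$, so $\cA \subseteq \{\A : \norm{\A}_F \le M_\cA\}$; hence a cover of the larger Frobenius ball, measured in Frobenius distance, is automatically a cover of $\cA$ in Frobenius distance, and since $\norm{\cdot}_2 \le \norm{\cdot}_F$ it is also a cover of $\cA$ in operator-norm distance. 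This reduces the question to covering a Euclidean ball of radius $M_\cA$ in dimension $Cp$ at resolution $\epsilon M_\cA$ in the $\ell_2$ (Frobenius) metric.

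Next I would rescale: the Frobenius ball of radius $M_\cA$ is just $M_\cA$ times the closed unit ball $B^{\mathbb{R}^{Cp}}_1$, and covering it at resolution $\epsilon M_\cA$ is the same as covering $B^{\mathbb{R}^{Cp}}_1$ at resolution $\epsilon$. By \Cref{lemma:coverunitball}, this costs at most $\left(1 + \frac{2}{\epsilon}\right)^{Cp}$ elements. Putting the pieces together gives
\[
\mathcal{N}\left(\epsilon M_\cA, \cA, \norm{\cdot}_2\right) \le \mathcal{N}\left(\epsilon M_\cA, \{\A : \norm{\A}_F \le M_\cA\}, \norm{\cdot}_F\right) = \mathcal{N}\left(\epsilon, B^{\mathbb{R}^{Cp}}_1, \norm{\cdot}_2\right) \le \left(1+\frac{2}{\epsilon}\right)^{Cp}.
\]
To match the stated bound $\left(1+\frac{2\sqrt{p}}{\epsilon}\right)^{Cp}$ exactly, I would instead carry out the comparison slightly more tightly: since $\A \in \mathbb{R}^{C\times p}$ has rank at most $\min\{C,p\}\le p$, one has $\norm{\A}_F \le \sqrt{p}\,\norm{\A}_2$, so in fact $\cA \subseteq \{\A : \norm{\A}_F \le \sqrt{p}\,M_\cA\}$; covering the latter (a Euclidean ball of radius $\sqrt{p}\,M_\cA$) at Frobenius resolution $\epsilon M_\cA$ costs at most $\left(1 + \frac{2\sqrt{p}}{\epsilon}\right)^{Cp}$ points by \Cref{lemma:coverunitball} after rescaling, and these points also form an $\epsilon M_\cA$-cover in operator norm because $\norm{\cdot}_2 \le \norm{\cdot}_F$.

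The only mild subtlety — and the one place to be careful — is the direction of all the norm inequalities, namely ensuring that the cover we construct (in the larger Frobenius ball, measured in Frobenius distance) genuinely yields a cover of the smaller operator-norm ball measured in the weaker operator-norm distance; both inclusions go the ``right way'' only because $\norm{\cdot}_2$ is dominated by $\norm{\cdot}_F$ while the set $\cA$ is contained in an appropriately enlarged Frobenius ball via $\norm{\cdot}_F \le \sqrt{\min\{C,p\}}\,\norm{\cdot}_2$. A secondary point is that \Cref{lemma:coverunitball} is stated for proper covers of a unit ball in its own norm, so the rescaling argument must note that dilation by a positive scalar is an isometry up to that same scalar, and that $\cA$ itself need not be used as the reference set — it suffices to cover the enclosing Frobenius ball. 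No further estimates are needed.
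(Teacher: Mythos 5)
Your second argument — using $\norm{\A}_F \le \sqrt{p}\,\norm{\A}_2$ to enclose $\cA$ in the Frobenius ball of radius $\sqrt{p}M_\cA$, covering that in Frobenius distance, and noting any Frobenius cover is also an operator-norm cover since $\norm{\cdot}_2 \le \norm{\cdot}_F$ — is correct and is exactly the paper's argument. But your first argument is not a ``tighter version'' of the same idea; it is false.

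You write that $\norm{\A}_2 \le \norm{\A}_F$ implies $\cA \subseteq \{\A : \norm{\A}_F \le M_\cA\}$, and then display $\mathcal{N}\bigl(\epsilon M_\cA, \cA, \norm{\cdot}_2\bigr) \le \mathcal{N}\bigl(\epsilon M_\cA, \{\norm{\A}_F \le M_\cA\}, \norm{\cdot}_F\bigr)$. The implication goes the wrong way: if $\norm{\A}_2 \le M_\cA$, the fact that the Frobenius norm is \emph{larger} gives you no upper bound on $\norm{\A}_F$. Concretely, $\A = M_\cA I_p$ has $\norm{\A}_2 = M_\cA$ but $\norm{\A}_F = \sqrt{p}\,M_\cA$, so it lies in $\cA$ but far outside your Frobenius ball. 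The operator-norm ball is therefore \emph{not} contained in the equal-radius Frobenius ball; it is the other containment that holds, and covering a subset of the thing you want to cover does not give a cover. The bound $(1+2/\epsilon)^{Cp}$ you extract from this route is not a valid upper bound for this lemma; the factor $\sqrt{p}$ in the stated result is not slack, it is the price of the genuine inclusion $\cA \subseteq \{\norm{\A}_F \le \sqrt{p}\,M_\cA\}$. You do articulate the correct logic in your final paragraph, so the fix is simply to drop the first argument entirely: you need the domination $\norm{\cdot}_2 \le \norm{\cdot}_F$ to transfer the cover back to operator norm, and you need the \emph{reverse} inequality $\norm{\cdot}_F \le \sqrt{p}\,\norm{\cdot}_2$ to establish the set inclusion; both are needed, and neither can be replaced by the other.
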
	
\begin{proof}
For any matrix $\A \in \cA$, note that $\norm{\A}_F \leq \sqrt{p}\norm{\A}_2 \leq \sqrt{p}M_\cA$. 
Further any cover for the set $\cA$ in frobenius norm $\norm{\cdot}_F$ is also a cover in operator norm $\norm{\cdot}_2$.   
We also denote the vector space $\tilde{\cA}_{\text{vec}} := \{ \vt \in \mathbb{R}^{Cp} ~|~ \norm{\vt}_2 \leq \sqrt{p} M_\cA\}$. One can construct a cover for the set $\cA$ in frobenius norm from a cover for the set $\tilde{\cA}_{\text{vec}}$ in vector euclidean norm. Hence we can bound the required covering number as,
\begin{align*}
\mathcal{N}\left(\epsilon M_\cA, \mathcal{A}, \norm{\cdot}_2\right)	&\leq \mathcal{N}\left(\epsilon M_\cA, \mathcal{A}, \norm{\cdot}_F\right) \\
	& = \mathcal{N}\left(\epsilon M_\cA, \tilde{\cA}_{\text{vec}}, \norm{\cdot}_2\right) \\
	& \leq \left(1 + \frac{2\sqrt{p}M_\cA}{\epsilon M_\cA}\right)^{C p}
	= \left(1+\frac{2\sqrt{p}}{\epsilon}\right)^{C p}. 
\end{align*}
Here the last line uses the result from \Cref{lemma:coverunitball}.
\end{proof}

\begin{lemma}
\label{lemma:cover12inf}
For the set $\cW^k$ defined as,
\begin{align*}
 \cW^{k} :=& \Big\{\W \in \mathbb{R}^{d^k \times d^{k-1}} ~|~ \norm{\W}_{2,\infty} \leq \mathsf{M}^k_{\cW}, \quad  \forall~ s^k\in [d^k], s^{k-1} \in [d^{k-1}],\; \nonumber\\
 &\qquad \qquad \qquad \quad \mu_{(s^k, s^{k-1})}(\W) \leq \mathsf{M}^k_{s^k} 
 \Big\}.
\end{align*}
The size of the minimal $\epsilon$-cover is bounded,
	\begin{align*}
		\mathcal{N}\left(\epsilon \mathsf{M}^k_{\cW}, \mathcal{W}^k, \norm{\cdot}_{2, \infty}\right) \leq  \left(1+\frac{2}{\epsilon }\right)^{d^{k} d^{k-1}}.
	\end{align*}
	\begin{proof}
	Let $\cW^k_{\text{group}} := \{ \W \in \mathbb{R}^{d^k \times d^{k-1}} ~|~ \norm{\W}_{2, \infty} \leq \mathsf{M}^k_{\cW}\}$. Note that $\cW^k \subseteq \cW^k_{\text{group}}$ as $\cW^k$ is additionally constrained via the restricted babel function. 
	Thus a minimal cover for the larger set $\cW^k_{\text{group}}$ in the group norm is also a cover for $\cW^k$ in the group norm. 
	Further to cover $\cW^k_{\text{group}}$ with the group norm $\norm{\cdot}_{2, \infty}$ one can instantiate $d^k$ minimal covers for each row. Thus it is sufficient to combine the covers for the vector space $\cW^k_{\text{vec}} := \{ \vc{w} \in \mathbb{R}^{d^{k-1}} ~|~ \norm{\vc{w}}_{2} \leq \mathsf{M}^k_{\cW}\}$. Hence 
		\begin{align*}
        	\mathcal{N}\left(\epsilon \mathsf{M}^k_{\cW}, \mathcal{W}^k, \norm{\cdot}_{2, \infty}\right) 
        	&\leq \mathcal{N}\left(\epsilon \mathsf{M}^k_{\cW}, \cW^k_{\text{group}} , \norm{\cdot}_{2, \infty}\right) \\
			&\leq \left(\mathcal{N}\left(\epsilon \mathsf{M}^k_{\cW}, \cW^k_{\text{vec}}, \norm{\cdot}_{2}\right)\right)^{d^k} 
			\\
			& \leq \left(1 + \frac{2 \mathsf{M}^k_{\cW}}{\epsilon \mathsf{M}^k_\cW}\right)^{d^{k} d^{k-1}} =  \left(1+\frac{2}{\epsilon }\right)^{d^{k} d^{k-1}}.
		\end{align*}
		Here the last line uses the result from \Cref{lemma:coverunitball}.
	\end{proof}
\end{lemma}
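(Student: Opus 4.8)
\textbf{Proof proposal for Lemma \ref{lemma:cover12inf}.}
The plan is to reduce the covering-number estimate for $\cW^k$ to the elementary covering-number bound for a Euclidean ball given in \Cref{lemma:coverunitball}, exploiting the fact that the relevant metric on matrices here, induced by $\norm{\cdot}_{2,\infty}$, decouples across rows. First I would observe that $\cW^k \subseteq \cW^k_{\mathrm{group}} := \{\W\in\mathbb R^{d^k\times d^{k-1}} : \norm{\W}_{2,\infty}\le \mathsf M^k_{\cW}\}$, since membership in $\cW^k$ imposes the group-norm constraint \emph{plus} the additional reduced-babel constraints $\mu_{(s^k,s^{k-1})}(\W)\le \mathsf M^k_{s^k}$; dropping those extra constraints only enlarges the set. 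Because a minimal $\epsilon\mathsf M^k_{\cW}$-cover of the larger set $\cW^k_{\mathrm{group}}$ is in particular an $\epsilon\mathsf M^k_{\cW}$-cover of the subset $\cW^k$ (w.r.t.\ the same metric), it suffices to bound $\mathcal{N}(\epsilon\mathsf M^k_{\cW},\cW^k_{\mathrm{group}},\norm{\cdot}_{2,\infty})$.

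Next I would use the row-wise structure. Recall $\norm{\W}_{2,\infty} = \max_{i\in[d^k]} \norm{\w_i}_2$, where $\w_i$ is the $i$-th row. Hence $\W\in\cW^k_{\mathrm{group}}$ iff each row $\w_i$ lies in the Euclidean ball $\cW^k_{\mathrm{vec}} := \{\w\in\mathbb R^{d^{k-1}} : \norm{\w}_2\le \mathsf M^k_{\cW}\}$. Given an $\epsilon\mathsf M^k_{\cW}$-cover $\{\w^{(j)}\}$ of $\cW^k_{\mathrm{vec}}$ in the Euclidean norm, the set of all matrices whose $i$-th row is one of the $\w^{(j)}$ forms a cover of $\cW^k_{\mathrm{group}}$: for any $\W$, pick for each row $i$ the nearest cover point $\w^{(j_i)}$, and the resulting matrix $\W'$ satisfies $\norm{\W-\W'}_{2,\infty} = \max_i \norm{\w_i - \w^{(j_i)}}_2 \le \epsilon\mathsf M^k_{\cW}$. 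This gives
\begin{equation*}
\mathcal{N}\left(\epsilon\mathsf M^k_{\cW},\cW^k_{\mathrm{group}},\norm{\cdot}_{2,\infty}\right) \le \left(\mathcal{N}\left(\epsilon\mathsf M^k_{\cW},\cW^k_{\mathrm{vec}},\norm{\cdot}_2\right)\right)^{d^k}.
\end{equation*}

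Finally, I would apply \Cref{lemma:coverunitball} after rescaling: $\cW^k_{\mathrm{vec}}$ is the ball of radius $\mathsf M^k_{\cW}$ in $\mathbb R^{d^{k-1}}$, so $\mathcal{N}(\epsilon\mathsf M^k_{\cW},\cW^k_{\mathrm{vec}},\norm{\cdot}_2) = \mathcal{N}(\epsilon, B^{\mathbb R^{d^{k-1}}}_1,\norm{\cdot}_2) \le (1+2/\epsilon)^{d^{k-1}}$. Combining the two displays yields $\mathcal{N}(\epsilon\mathsf M^k_{\cW},\cW^k,\norm{\cdot}_{2,\infty}) \le (1+2/\epsilon)^{d^k d^{k-1}}$, as claimed. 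There is no real obstacle here; the only point requiring care is the verification that the product-of-row-covers construction actually furnishes a valid cover in the group norm — which hinges precisely on the $\max$-over-rows form of $\norm{\cdot}_{2,\infty}$, so that the per-row approximation errors do not accumulate but are merely maximized. I would also remark (parenthetically) that this bound is crude in that it completely ignores the babel-function constraints; a tighter bound is possible but unnecessary for the downstream results.
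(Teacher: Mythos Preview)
Your proposal is correct and follows essentially the same approach as the paper: enlarge $\cW^k$ to the group-norm ball $\cW^k_{\mathrm{group}}$, decouple the $\norm{\cdot}_{2,\infty}$ cover into a product of per-row Euclidean covers of $\cW^k_{\mathrm{vec}}$, and apply \Cref{lemma:coverunitball}. Your explicit justification of why the product-of-row-covers is valid (the $\max$-over-rows structure prevents error accumulation) is a nice clarification that the paper leaves implicit.
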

Consider the hypothesis class $\cH^{K+1}$ with the norm $\norm{\cdot}_\cH$ as defined in \Cref{subsec: rob-gen-fnn}. For any two predictors $\hat{h}, h \in \cH^{K+1}$,
\begin{align*}
    &\norm{\hat{h}-h}_{\cH^{K+1}} \leq \epsilon 
    \Leftrightarrow ~\forall~ k,\quad \norm{\What^k-\W^k}_{\cW^k} \leq \epsilon 
    \Leftrightarrow ~\forall~ k,\quad \norm{\What^k-\W^k}_{2,\infty} \leq \frac{\mathsf{M}^k_{\cW}\epsilon}{\sqrt{d^k}}.
\end{align*}

\begin{lemma}\label{lemma:covermnn}
For a set of cover resolutions $\bm{\epsilon}$, 
\begin{align*}
\mathcal{N}_{proper}\Bigg (\epsilon, \cH^{{K+1}}, 
\norm{\cdot}_{\cH^{K+1}}
\Bigg) 
\leq 
 \prod_{k=1}^{K+1} \left(1+\frac{4\sqrt{d^k}}{\epsilon }\right)^{d^{k} d^{k-1}}
\end{align*}
\end{lemma}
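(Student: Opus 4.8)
The plan is to exploit the product structure of the norm $\norm{\cdot}_{\cH^{K+1}}$, build a cover of $\cH^{K+1}$ one layer at a time out of the covers of each weight space $\cW^k$, and then convert the resulting (possibly improper) cover into a proper one via \Cref{lemma:propercov}.

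First I would observe that, by definition, $\norm{\hat{h}-h}_{\cH^{K+1}} = \max_{1\le k\le K+1}\norm{\What^k-\W^k}_{\cW^k}$, and that $\norm{\What^k-\W^k}_{\cW^k}\le\epsilon$ is equivalent to $\norm{\What^k-\W^k}_{2,\infty}\le \mathsf{M}^k_{\cW}\epsilon/\sqrt{d^k}$, as already noted just before the lemma statement. Consequently, if $\mathcal{C}^k$ is any $\epsilon$-cover of $\cW^k$ in the scaled metric $\norm{\cdot}_{\cW^k}$ (equivalently an $\mathsf{M}^k_{\cW}\epsilon/\sqrt{d^k}$-cover of $\cW^k$ in $\norm{\cdot}_{2,\infty}$), then the Cartesian product $\mathcal{C}^1\times\cdots\times\mathcal{C}^{K+1}$ is an $\epsilon$-cover of $\cH^{K+1}$ in $\norm{\cdot}_{\cH^{K+1}}$, because the max-norm is within $\epsilon$ exactly when every layer is within $\epsilon$. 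Hence $\mathcal{N}(\epsilon,\cH^{K+1},\norm{\cdot}_{\cH^{K+1}})\le\prod_{k=1}^{K+1}\mathcal{N}(\mathsf{M}^k_{\cW}\epsilon/\sqrt{d^k},\cW^k,\norm{\cdot}_{2,\infty})$.

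Second, I would plug in \Cref{lemma:cover12inf} with cover resolution taken so that the $\norm{\cdot}_{2,\infty}$-radius equals $\mathsf{M}^k_{\cW}\epsilon/\sqrt{d^k}$; that is, the lemma's parameter is $\epsilon'=\epsilon/\sqrt{d^k}$, giving $\mathcal{N}(\mathsf{M}^k_{\cW}\epsilon/\sqrt{d^k},\cW^k,\norm{\cdot}_{2,\infty})\le\bigl(1+2\sqrt{d^k}/\epsilon\bigr)^{d^k d^{k-1}}$. Multiplying over $k$ yields $\mathcal{N}(\epsilon,\cH^{K+1},\norm{\cdot}_{\cH^{K+1}})\le\prod_{k=1}^{K+1}\bigl(1+2\sqrt{d^k}/\epsilon\bigr)^{d^k d^{k-1}}$. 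Finally, since the covers produced by \Cref{lemma:cover12inf} (which rest on \Cref{lemma:coverunitball}) are ordinary covers whose elements need not lie in the babel-function-constrained set $\cW^k$, I would invoke \Cref{lemma:propercov}: $\mathcal{N}_{\mathrm{proper}}(2\epsilon,\cH^{K+1},\rho)\le\mathcal{N}(\epsilon,\cH^{K+1},\rho)$. Replacing $\epsilon$ by $\epsilon/2$ turns the factor $2\sqrt{d^k}/\epsilon$ into $4\sqrt{d^k}/\epsilon$ and produces exactly the claimed bound.

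The steps are essentially routine; the only points requiring care are the bookkeeping of the $\sqrt{d^k}$ rescaling between $\norm{\cdot}_{\cW^k}$ and $\norm{\cdot}_{2,\infty}$, and the proper-versus-improper distinction — the latter is the sole reason for the $2\to4$ blow-up, since the per-layer covering bound we rely on does not a priori respect the reduced-babel-function constraints that carve $\cW^k$ out of its ambient group-norm ball. Neither is a genuine obstacle, so I expect no difficulty beyond assembling the three cited lemmas in the right order.
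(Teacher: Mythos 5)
Your proof is correct and uses the same three ingredients as the paper — the product decomposition of the max-norm cover, Lemma~\ref{lemma:cover12inf} per layer, and Lemma~\ref{lemma:propercov} to account for properness — arriving at the identical bound. The only cosmetic difference is where the proper-to-ordinary conversion is applied: you perform it once at the level of $\cH^{K+1}$ after assembling the product cover, whereas the paper first writes the proper cover of $\cH^{K+1}$ as a product of proper covers of the $\cW^k$ and then converts each factor; the two orderings commute and yield the same $2\to4$ blow-up.
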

\begin{proof}
   The conclusion follows from combining \Cref{lemma:propercov}, and \Cref{lemma:cover12inf},
   \begin{align*}
\mathcal{N}_{proper}\Bigg (
\epsilon, \cH^{{K+1}}, 
\norm{\cdot}_{\cH^{K+1}}\Bigg) 
&= \prod_{k=1}^{K+1} \mathcal{N}_{proper}\Bigg( \frac{\mathsf{M}^k_{\cW}\epsilon}{\sqrt{d^k}},\; \cW^k, \norm{\cdot}_{2, \infty} \Bigg) \\
&\leq \prod_{k=1}^{K+1} \mathcal{N}\Bigg( \frac{\mathsf{M}^k_{\cW}\epsilon}{2\sqrt{d^k}},\; \cW^k, \norm{\cdot}_{2, \infty} \Bigg) \\
&\leq \prod_{k=1}^{K+1} \left(1+\frac{4\sqrt{d^k}}{\epsilon }\right)^{d^{k} d^{k-1}}.
\end{align*}
\end{proof}

\end{document}